%% file: main.tex
\pdfoutput=1
\documentclass[10pt,twocolumn]{article}

\usepackage[letterpaper,textwidth=6.75in,textheight=9.25in,columnsep=0.25in,centering]{geometry}

\usepackage[round]{natbib}

\usepackage{amsmath,amssymb,amsthm,mathtools}
\usepackage{graphicx}
\usepackage{booktabs}
\usepackage{microtype}
\usepackage{tikz}
\usetikzlibrary{arrows.meta,calc,fit,positioning}
\usepackage{pgfplots}
\pgfplotsset{compat=1.18}
\usepgfplotslibrary{fillbetween}
\usepackage[hidelinks]{hyperref}
\hypersetup{pdftitle={An Accuracy\textendash Information Tradeoff for Loss-Difference Conditional Mutual Information},pdfauthor={Hazar Yueksel}}

\input{macros}

\begin{document}

\twocolumn[{%
\centering
{\LARGE\bfseries An Accuracy--Information Tradeoff for\\[2pt] Loss-Difference Conditional Mutual Information\par}
\vspace{1.2em}
{\large Hazar Yueksel\par}
\vspace{0.3em}
\texttt{hazar@hazaryueksel.com}\par
\vspace{1.8em}
}]
{\renewcommand{\thefootnote}{}\footnotetext{The supplementary archive referred to in the text (the Lean~4
formalization, the code and the data) is available as the ancillary files of this
arXiv submission.}}

\begin{abstract}
Loss-difference conditional mutual information (ld-CMI) uses the smallest of the standard
observations in the supersample hierarchy of generalization bounds: it measures what a learner's loss differences
reveal about which candidate of each pair it was trained on. Accuracy is known to force
information into the model; data processing does not carry such lower bounds to losses. We show,
by bounding three moments of the loss differences, that accuracy also forces ld-CMI. For linear
predictors with a smooth convex loss of nonzero slope at zero, such as the logistic loss, plus a
regularizer whose curvature and growth are both of power $r\ge2$, on product distributions over a
scaled sign cube in dimension at least linear in $n$, every proper learner with expected excess
risk at most $\eps$ on these distributions at the optimal sample size $n\asymp\eps^{-2+2/r}$ has
worst-case ld-CMI of order $n$ bits, and $\Theta(n/(1+(\tau/\eps)^2))$ bits under Gaussian noise
of standard deviation $\tau$ on the loss differences. The same holds without a regularizer, at
$n\asymp\eps^{-2}$. Consequently, range-scaled ld-CMI bounds
cannot vanish on these distributions, although every proper learner's generalization gap is
$O(n^{-1/2})$. We also show that model-level information does not determine noisy loss-difference
information, and that the growth, slope and dimension conditions are needed, the last up to a
logarithm.
\end{abstract}

\input{sec1_intro}
\input{sec2_setting}
\input{sec3_result}
\input{sec4_consequences}
\input{sec5_proof}
\input{sec6_necessity}
\input{sec7_separation}
\input{sec8_measurements}
\input{sec9_limitations}
\label{main:end}

\subsection*{AI use statement}
In this work, we used generative AI tools for helping to develop the theoretical framework and to
formulate mathematical claims, for providing ingredients for their proofs and assisting in writing
them, for proposing and refining hypotheses and giving feedback on the methodology, and for
implementing the numerical methods and interpreting their results. We have not used generative AI
tools to generate data sets; translation, data-set cleaning and qualitative or thematic data
analysis are not applicable to this work. Additionally, we used generative AI tools for literature
search, for writing code and figures, and for drafting and editing the text. We have reviewed all
AI-assisted work. We checked all AI-assisted proofs, code and descriptions of cited work, and the
main results are also machine-checked in Lean 4. We take responsibility for the final content of
this work, including text, claims, or artifacts produced with the aid of generative AI.

\input{refs}
\clearpage
\onecolumn
\appendix

\input{app_A_notation}
\input{app_B_tools}
\input{app_C_transfer}
\input{app_D_mainproof}
\input{app_E_examples}
\input{app_F_consequences}
\input{app_G_necessity}
\input{app_H_separation}
\input{app_I_measurements}
\input{app_J_related}
\input{app_K_lean}

\end{document}

%% file: macros.tex
\newcommand{\E}{\mathbb{E}}
\newcommand{\Pp}{\mathbb{P}}
\newcommand{\R}{\mathbb{R}}
\newcommand{\eps}{\varepsilon}
\newcommand{\norm}[1]{\lVert#1\rVert}
\newcommand{\opnorm}[1]{\lVert#1\rVert_{\mathrm{op}}}
\newcommand{\ip}[2]{\langle#1,#2\rangle}
\newcommand{\ind}{\mathbf{1}}
\newcommand{\Normal}{\mathcal{N}}
\DeclareMathOperator{\clip}{clip}
\DeclareMathOperator{\var}{Var}
\DeclareMathOperator{\KL}{KL}
\DeclareMathOperator{\proj}{proj}
\DeclareMathOperator{\rank}{rank}

\DeclareMathOperator{\Bin}{Bin}
\DeclareMathOperator*{\argmin}{arg\,min}

\newcommand{\Hent}{\mathrm{H}}   % Shannon entropy
\newcommand{\hdiff}{\mathrm{h}}  % differential entropy
\newcommand{\htwo}{\mathrm{h}_2} % binary entropy function

\newcommand{\thh}{\widehat\theta}
\newcommand{\Zt}{\widetilde Z}
\newcommand{\Zbar}{\bar Z}
\newcommand{\Ztr}{Z^{\mathrm{tr}}}
\newcommand{\Zgh}{Z^{\mathrm{gh}}}
\newcommand{\Btr}{B_{\mathrm{tr}}}
\newcommand{\sighat}{\hat\sigma}
\newcommand{\gWstar}{g_W^{*}}
\newcommand{\gen}{\mathrm{gen}}
\newcommand{\rec}{\mathrm{rec}}
\newcommand{\epsdp}{\eps_{\mathrm{DP}}}

\newcommand{\Ii}{\mathcal{I}^{\mathrm{ind}}}
\newcommand{\Ij}{\mathcal{I}^{\mathrm{joint}}}
\newcommand{\Is}{\mathcal{I}^{\mathrm{cand}}}
\newcommand{\Qc}{\mathcal{Q}}
\newcommand{\J}{\mathsf{J}}

\newcommand{\Vres}{V_{\mathrm{res}}}
\newcommand{\Mbar}{\bar M}
\newcommand{\Vbar}{\bar V}
\newcommand{\Vresbar}{\bar V_{\mathrm{res}}}
\newcommand{\neps}{n_{\eps}}
\newcommand{\Cdim}{C_{\mathrm{dim}}}
\newcommand{\cnet}{c_{\mathrm{net}}}
\newcommand{\Tb}{\mathfrak{T}}

\newcommand{\leanname}[1]{\nolinkurl{#1}}
\makeatletter
\g@addto@macro\UrlBreaks{\do\_}
\makeatother

\theoremstyle{plain}
\newtheorem{theorem}{Theorem}
\newtheorem{proposition}[theorem]{Proposition}
\newtheorem{corollary}[theorem]{Corollary}
\newtheorem{lemma}[theorem]{Lemma}
\theoremstyle{definition}
\newtheorem{definition}[theorem]{Definition}
\newtheorem{assumption}[theorem]{Assumption}
\theoremstyle{remark}
\newtheorem{remark}[theorem]{Remark}
\newtheorem*{remark*}{Remark}
\theoremstyle{plain}

\makeatletter
\g@addto@macro\appendix{%
  \setcounter{theorem}{0}\setcounter{table}{0}\setcounter{figure}{0}%
  \@addtoreset{theorem}{section}%
  \@addtoreset{table}{section}%
  \@addtoreset{figure}{section}%
  \renewcommand{\thetable}{\thesection.\arabic{table}}%
}

\newcommand{\restate@name}{}
\newtheorem*{restate@env}{\restate@name}
\newenvironment{restate}[3][]{%
  \renewcommand{\restate@name}{#2~\ref{#3}}%
  \def\restate@note{#1}%
  \ifx\restate@note\@empty
    \def\restate@next{\begin{restate@env}}%
  \else
    \def\restate@next{\begin{restate@env}[#1]}%
  \fi
  \restate@next}{\end{restate@env}}
\makeatother

%% file: sec1_intro.tex
\section{Introduction}\label{sec:intro}

Information-theoretic generalization bounds control the generalization gap of a learner by the
information that an observation of the learner carries about its training data. In the supersample
framework of \citet{steinke2020}, $n$ pairs of candidates $\Zt$ are drawn i.i.d., a hidden fair bit
$U_i$ per pair selects the training example, and the other candidate, the \emph{ghost}, is a fresh draw;
the bounds use the conditional mutual information (CMI) between an observation and the bits given $\Zt$.
Observations form a data-processing hierarchy: the model $\thh$; its losses on all candidates
(evaluated CMI; \citealp{steinke2020,hellstrom2022}); and the difference of the two
losses of each pair, whose information about that pair's bit is the loss-difference CMI (ld-CMI) of
\citet{wangmao2023a}; summed over pairs, it is the smallest of these. The gap is the average signed
loss difference, and bounds that use less of the learner's output are no looser \citep{hellstrom2025}.
Must
accuracy put information into the loss differences, as it does into the model? And how much
observation noise removes it?

Accuracy is known to force information into the model (Table~\ref{tab:known}).
\citet[Theorem~1]{livni2023} bounds the information $I(\thh;S_n)$ between the model and the training set
from below for every learner accurate in expectation on a quadratic instance, leaving CMI open;
\citet{attias2024} bound the CMI $I(\thh;U\mid\Zt)$ from below for every learner
accurate with high probability; and \citet[Theorem~2.7]{voitovych2025} show that every learner whose
expected excess risk is $O(n^{-1/2})$ is traceable: a \emph{tracer}, which reads the model, a point and the population mean,
flags $\Omega(n)$ training points while flagging fresh points with probability at most $\xi$. By data
processing, loss-difference information is at most model information, so these results give no lower
bound for it; none involves observation noise. The loss-level lower bounds we know of concern one
algorithm on one instance, without noise: evaluated-CMI bounds do not vanish for gradient descent although it
generalizes \citep[Theorem~17]{haghifam2023}, and the sign of a loss difference reveals its bit with
constant probability \citep[Example~1 and Appendix~F]{wangmao2023b}. \citet[Section~6]{wangmao2024}
anticipate that the model's accuracy--CMI tradeoff also holds for ld-CMI.

\begin{table}[t]
\caption{What Accuracy Is Known to Force. Each row is a lower bound for every learner of the stated
kind; only the last row allows observation noise, and it has a matching upper bound at every noise
level.}\label{tab:known}
\centering\scriptsize
\setlength{\tabcolsep}{3pt}
% Cells are set ragged right; rows therefore end with \tabularnewline.
\begin{tabular}{@{}p{50pt}p{92pt}p{74pt}@{}}
\toprule
Work & Observable and learners & Instance; dimension\tabularnewline
\midrule
\raggedright\citet{attias2024} & \raggedright $I(\thh;U\mid\Zt)=\Omega(\eps^{-2})$ or
$\Omega(\eps^{-1})$, also per sample (the first for proper learners), for every learner accurate with high probability &
\raggedright affine and strongly convex losses on the sign cube;
$d\ge\Omega(n^2\log n)$\textsuperscript{a}\tabularnewline
\raggedright\citet{voitovych2025} & \raggedright tracer recall $\Omega(n)$ from $(\thh,z,\mu)$, for
every learner with expected excess risk $O(n^{-1/2})$ & \raggedright affine loss with parameters in a cube,
sparse signs; $d\gtrsim n\log(1/\xi)$\tabularnewline
\raggedright\citet{haghifam2023}; \citet{wangmao2023b} & \raggedright non-vanishing evaluated-CMI bounds; signs
of loss differences; gradient descent & \raggedright one-hot convex Lipschitz instance;
$d=2n^2$\tabularnewline
\raggedright This paper & \raggedright $\Theta(n/(1+(\tau/\eps)^2))$ for three information
measures of $X+\tau G$ and every proper learner accurate on the family & \raggedright smooth convex links,
power-$r$ regularizers, product laws on the scaled sign cube; $n\asymp\eps^{-2+2/r}$, $d\gtrsim n$\tabularnewline
\bottomrule
\end{tabular}\\[2pt]
\parbox{\linewidth}{\scriptsize \textsuperscript{a}\,As stated in arXiv:2402.09327v2.}
\end{table}

We answer both questions for linear predictors on the unit ball with a smooth convex link of nonzero
slope at zero and a regularizer whose curvature and growth
are both of power $r\ge2$, on a family of product distributions over a scaled sign cube in dimension $d\gtrsim n$;
the class contains ridge-regularized logistic regression on a balanced-label problem. Let $X$ be the
vector of the $n$ loss differences and $G\sim\Normal(0,I_n)$ independent noise. Every \emph{proper}
learner (one with outputs in the unit ball) whose expected excess risk is at most $\eps$ on every
distribution of the family satisfies, at the optimal sample size $n\asymp\eps^{-2+2/r}$ and for every
$\tau\ge0$,
\begin{equation}\label{eq:headline}
 \sup_pI(X+\tau G;U\mid\Zt)=\Theta\Big(\frac n{1+(\tau/\eps)^2}\Big),
\end{equation}
with the supremum over the family, and likewise for the summed individual and the joint information
(Theorem~\ref{thm:main}). So at $\tau=0$ the worst-case ld-CMI is of order $n$ bits; noise at the scale
of the excess risk leaves a constant fraction of a bit per pair, and the total falls to $\Theta(1)$ bits
only at $\tau\asymp\eps^{1/r}$. The noisy observation shows what the loss differences reveal at
resolution $\tau$: guessing the bits from it is a balanced membership test on each pair, and
\citet[Theorem~6.8 of the extended version]{steinke2020} bound evaluated CMI through such noise. At the
scale of their range, it gives a generalization bound of order $\eps$, the worst-case order of the gap,
though no better than one that uses no information (\mbox{Section}~\ref{sec:consequences}).

Our contributions are as follows.
\begin{list}{\textbullet}{\setlength{\leftmargin}{1em}\setlength{\labelwidth}{0.6em}%
\setlength{\labelsep}{0.4em}\setlength{\topsep}{1pt}\setlength{\itemsep}{1pt}\setlength{\parsep}{0pt}}
\item The two-sided tradeoff (Theorem~\ref{thm:main}), which at \mbox{$\tau=0$} proves for this class what
\citet{wangmao2024} anticipated; by data processing, accuracy then also forces model
information $I(\thh;U\mid\Zt)$ of order $n$ in dimension $d\gtrsim n$ (Appendix~\ref{app:converse}).
\item Consequences (Section~\ref{sec:consequences}). For every accurate learner of the class, the
range-scaled ld-CMI bound of \citet{wangmao2023a} stays above a constant at some distribution of the
family, although every proper learner's gap is $O(n^{-1/2})$. The loss differences have range of order one but root-mean-square size of
order $\eps$, so the range normalization fails; normalized by the second moment, the bound has the
order of the gap on prior average, as does the root-mean-square loss difference, a bound that uses no
information. An accurate $(\epsdp,\delta)$-differentially private learner needs
$\epsdp=\Omega(1)$ unless $\delta\gtrsim\eps$.
\item A proof through three moments (Section~\ref{sec:proof}). By fingerprinting, accuracy forces the
signed mean of the loss differences, $n$ times the gap, to be of order $n\eps$ on prior average;
localization bounds their energy $\sum_i\E X_i^2$ on prior average, and at every distribution their squared distance
from a reference fixed by the candidates, by order $n\eps^2$; a transfer theorem that needs no
independence among the loss differences turns these bounds into \eqref{eq:headline}. For the affine
link with $g(\theta)=\norm\theta^2/2$ (Remark~\ref{rem:affine}), the signed-mean bound follows the
arguments of \citet{attias2024} and \citet{voitovych2025}, and the energy bound follows
\citet{attias2024}; other regularizers need a calibration through the matched upper growth, and a
nonlinear link adds to the signed mean a train--ghost remainder, which localization keeps one power
of $\eps$ below the signal; this is the main technical step.
\item Necessity (Section~\ref{sec:necessity}): the matched growth and the nonzero slope cannot be
dropped, and linear dimension is needed up to a logarithm.
\item A separation (Section~\ref{sec:separation}): two learners, sample-optimal for two different
losses and related by a bijection, share model information and tracing guarantees, yet at $\tau=n^{-2/3}$ one learner's loss differences
carry at most $3n^{1/3}$ bits at every distribution of the family and the other's at least $n/8$ at one.
\item Measurements for ridge and logistic ridge ERM at $d=64n$ (Section~\ref{sec:measurements}): at each
learner's own moments, the transfer theorem brackets the estimated information within factors $0.63$
and $1.51$ wherever it is at least $0.01$ bits per pair.
\end{list}
The numbered results of Sections~\ref{sec:result}--\ref{sec:necessity} and Proposition~\ref{prop:separation} are formalized in
Lean~4 \citep{moura2021lean} with Mathlib \citep{mathlib2020}, using only the standard axioms;
Appendix~\ref{app:lean} lists the declarations and their differences from the text.

\subsection{Related Work}\label{sec:related}

Mutual-information bounds \citep{russo2016,xu2017} have individual-sample, conditional, evaluated and
loss-difference forms
\citep{bu2020,steinke2020,hellstrom2020,hellstrom2022,harutyunyan2021,haghifam2021unified,wangmao2023a},
surveyed by \citet{hellstrom2025}. Further limits of information-based explanations concern the model
\citep{bassily2018,nachum2018,livni2020}; on some natural prediction problems accurate learners must
encode much of their training data in the model \citep{brown2021}, and \citet{feldman2025} prove tradeoffs between the number of
samples and the information that an accurate learner must memorize, for simple binary classification
problems. Fingerprinting underlies lower bounds for private estimation and learning
\citep{bun2014,steinke2017,kamath2022,bassily2014}, and our privacy witness resembles one-run auditing
\citep{steinke2023auditing}; Appendix~\ref{app:related} gives more detail.

%% file: sec2_setting.tex
\section{Setting}\label{sec:setting}

Information is in bits, and $\ln$ is the natural logarithm. Let $W=B_d(1)$ be the closed unit ball of
$\R^d$ and $\mathcal Z=\{\pm1/\sqrt d\}^d$ the scaled sign cube, so $\norm z=1$ on $\mathcal Z$. For
$p\in[-1,1]^d$, the product law $D_p$ on $\mathcal Z$ has independent coordinates with
$\Pp(\sqrt d\,Z_k=1)=(1+p_k)/2$ and mean $\mu_p=p/\sqrt d$; the \emph{family} is
$\{D_p:p\in[-1,1]^d\}$, and $\E_p$, $\Pp_p$ refer to data from $D_p$. The loss of $\theta\in W$ is
$f(\theta,z)=g(\theta)+\phi(\ip\theta z)$, with a regularizer $g$ and a link $\phi$;
$F_D(\theta)=\E_Df(\theta,Z)$ is its population risk under a law $D$, $F_p=F_{D_p}$, and $\theta_p^*$ is the minimizer of $F_p$ over $W$, unique under
Assumption~\ref{ass:class}.

A \emph{proper learner} maps $n$ training points $S_n$ and a seed $R$, independent of the data, to
$\thh=A_n(S_n;R)\in W$, jointly measurably. It is \emph{$\eps$-accurate on the family} if
\begin{equation}\label{eq:accuracy}
 \E F_p(\thh)-\min_WF_p\le\eps\qquad\text{for every }p\in[-1,1]^d,
\end{equation}
the expectation being over $S_n\sim D_p^n$ and $R$. Accuracy on every distribution on the unit ball
implies \eqref{eq:accuracy}, and our information bounds use accuracy only in this form.

The supersample consists of $2n$ candidates $\Zt=(Z_{j,i})_{j\in\{0,1\},i\in[n]}$, i.i.d.\ from the
data law, and independent selection bits $U$ uniform on $\{0,1\}^n$; the learner trains on
$S_n=(Z_{U_i,i})_{i\in[n]}$. Let $\sigma_i=1-2U_i$, $\Ztr_i=Z_{U_i,i}$ and $\Zgh_i=Z_{1-U_i,i}$; the
ghost $\Zgh_i$ is independent of the training set and the seed.

\begin{definition}[Loss-difference release]\label{def:release}
The \emph{loss-difference release} of a proper learner is $X\in\R^n$ with
\begin{equation}\label{eq:release}
 X_i=f(\thh,Z_{1,i})-f(\thh,Z_{0,i}),\qquad i\in[n].
\end{equation}
An observer sees $Y=X+\tau G$, with $\tau\ge0$ and $G\sim\Normal(0,I_n)$ independent of everything
else. We measure its information about the bits by $\Ii(Y)=\sum_iI(Y_i;U_i)$, $\Ij(Y)=I(Y;U)$ or
$\Is(Y)=I(Y;U\mid\Zt)$, with the law as a subscript when needed, as in $\Is_{D_p}(X+\tau G)$.
\end{definition}

The regularizer cancels: $X_i=\phi(\ip\thh{Z_{1,i}})-\phi(\ip\thh{Z_{0,i}})$. The three measures
correspond to observers who see one $Y_i$ at a time, all of $Y$, or $Y$ and all candidates. As the bits are
independent of each other and of $\Zt$, and by data processing given $\Zt$ (Appendix~\ref{app:hierarchy}), the three information measures of $Y$ satisfy
\begin{equation}\label{eq:hierarchy}
 \Ii\le\Ij\le\Is\le I(X;U\mid\Zt)\le I(\thh;U\mid\Zt).
\end{equation}
For the same reasons, the summed ld-CMI $\sum_iI(Y_i;U_i\mid\Zt)$, which at $\tau=0$ is the sum over pairs of the
ld-CMI of \citet{wangmao2023a}, lies between its unconditional form $\Ii(Y)$ and $\Is(Y)$, so
Theorem~\ref{thm:main} bounds both forms from above and below. The product $\sigma_iX_i=f(\thh,\Zgh_i)-f(\thh,\Ztr_i)$ is the
ghost-minus-training loss of pair $i$, so the expected generalization gap $\gen_D=\E[F_D(\thh)-\widehat F_{S_n}(\thh)]$ at a
law $D$, with $\widehat F_{S_n}$ the training risk, is $1/n$ times the \emph{signed mean} of the release:
for $\gen_p=\gen_{D_p}$,
\begin{equation}\label{eq:gap}
 n\,\gen_p=\textstyle\sum_i\E_p\sigma_iX_i .
\end{equation}

\begin{assumption}[Link and regularizer]\label{ass:class}
The link $\phi$ is convex and $C^2$ near $[-1,1]$, with $b=-\phi'(0)>0$, $|\phi'|\le L$ and
$0\le\phi''\le H$ on $[-1,1]$. The regularizer $g$ is convex and $C^1$ near $W$, with $g(0)=0$,
$\nabla g(0)=0$ and $\sup_W\norm{\nabla g}\le L_g$, and for a fixed $r\ge2$ and constants $a_-,a_+>0$ its
Bregman divergence $D_g(\theta',\theta)=g(\theta')-g(\theta)-\ip{\nabla g(\theta)}{\theta'-\theta}$
satisfies, for all $\theta,\theta'\in W$,
\begin{equation}\label{eq:matched}
 D_g(\theta',\theta)\ge a_-\norm{\theta'-\theta}^r,\qquad 0\le g(\theta)\le a_+\norm\theta^r .
\end{equation}
The regularizer may depend on $d$ but is fixed before $p$; $b,L,H,L_g,r,a_\pm$ do not depend on $d$.
\end{assumption}

So the curvature of $g$ is bounded below, and its growth above, with the same power $r$; the curvature
bound alone does not determine the tradeoff (Proposition~\ref{prop:growth}). Examples
(Appendix~\ref{app:examples}) are the links $\ln(1+e^{-t})$ (logistic), $e^{-t}$, $(1-t)^2/2$ and $-t$
(affine), and $g(\theta)=\norm{\Lambda\theta}^r/r$ with the singular values of $\Lambda$ in a fixed compact
subset of $(0,\infty)$; for $r=2$, any smooth strongly convex $g$ with $g(0)=0$, $\nabla g(0)=0$ and
bounded Hessian. With balanced labels $y=\pm1$ independent of $z$ and features $x=yz$, the logistic loss of
$y\ip\theta x$ is $\phi(\ip\theta z)$; links with $\phi'(0)>0$ reduce to $b>0$ through $z\mapsto-z$
(Remark~\ref{rem:reflection}).

%% file: sec3_result.tex
\section{Main result}\label{sec:result}

Let $q=r/(r-1)\in(1,2]$ be conjugate to $r$, and let $\neps=\eps^{-2/q}=\eps^{-2+2/r}$, the optimal
sample size (Proposition~\ref{prop:samples}): $\eps^{-1}$ for ridge regularization ($r=2$) and
$\eps^{-3/2}$ for quartic regularization ($r=4$). Below, $O$, $\Omega$, $\Theta$, $\asymp$ and
$\lesssim$ hide constants that depend only on $b,L,H,L_g,r,a_\pm$ of
Assumption~\ref{ass:class} and on the sample-size multipliers $c_\pm$, and $\Qc$ is any of $\Ii$,
$\Ij$, $\Is$.

\begin{theorem}[Two-sided tradeoff]\label{thm:main}
Let Assumption~\ref{ass:class} hold and let $0<c_-\le c_+$. There are $\eps_0,c_1,C_1>0$ and
$\Cdim\ge32$, depending only on the constants of Assumption~\ref{ass:class} and on $c_\pm$, with the
following property. If $0<\eps\le\eps_0$, $c_-\neps\le n\le c_+\neps$ and $d\ge\Cdim\max\{n,\neps\}$,
then every proper learner that is $\eps$-accurate on the family satisfies, for each $\Qc$ and every
$\tau\ge0$,
\begin{equation}\label{eq:main}
 c_1\frac n{1+(\tau/\eps)^2}\le\sup_{p}\Qc_{D_p}(X+\tau G)\le C_1\frac n{1+(\tau/\eps)^2},
\end{equation}
where $p$ ranges over $[-1,1]^d$. The lower bound holds on average over a beta prior on $p$ that does
not depend on the learner (Section~\ref{sec:architecture}), and the upper bound holds at every $p$. Both
bounds hold for the infimum over any nonempty class of such learners.
\end{theorem}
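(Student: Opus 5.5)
The proof runs through three moments of the release and a transfer theorem that converts them into information, following the architecture announced in the introduction. Fix a learner that is $\eps$-accurate on the family, and place on $p$ a product beta prior with coordinates in $[-1,1]$ whose scale is chosen so that $\theta^*_p$ has norm of order $\eps^{1/r}$ in the interior of $W$; this is the regime in which $n\asymp\neps$ samples are needed. Three moment bounds will be established, each at $n\asymp\neps$ and $d\ge\Cdim\max\{n,\neps\}$:
\begin{itemize}
\item[(i)] the signed mean $\E\sum_i\sigma_iX_i\ge c\,n\eps$ on prior average;
\item[(ii)] the energy $\E\sum_iX_i^2\le Cn\eps^2$ on prior average;
\item[(iii)] at every $p$, $\E_p\sum_i(X_i-\xi_i)^2\le Cn\eps^2$ for a reference $\xi_i=\xi_i(\Zt,p)$ that depends only on the candidates.
\end{itemize}

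The transfer theorem then gives both directions. For the upper bound, the information in $Y=X+\tau G$ about $U$ given $\Zt$ is at most that of a Gaussian channel with input energy $\E\sum_i(X_i-\xi_i)^2$. This yields order $\sum_i\min\{1,\E(X_i-\xi_i)^2/\tau^2\}$. Since each $U_i$ is one bit, each term is also at most $1$. Hence $\Is\lesssim n\min\{1,\eps^2/\tau^2\}\asymp n/(1+(\tau/\eps)^2)$ by (iii) and concavity. The hierarchy \eqref{eq:hierarchy} transfers this to $\Ii$ and $\Ij$.

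For the lower bound, I argue per pair. A test using $Y_i$ alone has correlation with $\sigma_i$ bounded below by $\E\sigma_iX_i$, and at noise level $\tau$ the Gaussian smoothing only degrades it in a controlled way. A Pinsker-type inequality for a binary input, of the form $I(Y_i;U_i)\gtrsim(\E\sigma_iX_i)^2/(\E X_i^2+\tau^2)$, applied per pair with Cauchy--Schwarz across pairs, gives $\Ii\gtrsim(n\eps)^2/(n\eps^2+n\tau^2)=n/(1+(\tau/\eps)^2)$ by (i) and (ii). This uses only the marginal of each $(X_i,U_i)$ and therefore needs no independence. It holds on prior average, so it holds at some $p$, and the hierarchy lifts it to $\Ij$ and $\Is$. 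Because the prior is fixed before the learner, taking the infimum over a class of learners costs nothing.

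For (ii) and (iii), localization gives $\E\norm{\thh-\theta_p^*}^r\lesssim\eps$ from accuracy and the Bregman lower bound \eqref{eq:matched}. Taylor-expand $X_i=\phi(\ip\thh{Z_{1,i}})-\phi(\ip\thh{Z_{0,i}})$ around $\theta^*_p$ and take $\xi_i$ to be the value at $\theta^*_p$. The remainder is then $\phi'\cdot\ip{\thh-\theta_p^*}{Z_{1,i}-Z_{0,i}}$ plus curvature terms. In dimension $d\gtrsim n$ the near-orthogonality of the cube points makes $\sum_i\ip{v}{Z_{1,i}-Z_{0,i}}^2\lesssim\norm v^2$ uniformly; this needs an operator-norm bound on the $n\times d$ difference matrix, valid when $d\ge\Cdim n$. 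The size of $\norm v^2$ is controlled by $\eps^{2/r}\cdot$ the calibration from the upper growth $a_+$, which is where the matched power is used to get total order $n\eps^2$. For (ii), averaging over the prior also controls $\xi_i$ itself, because the prior keeps $\theta^*_p$ small.

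The main obstacle is (i). For the affine link, fingerprinting works directly: the derivative in $p$ of $\E_p\ip{\thh}{\cdot}$ equals a covariance with $\sum_i(\Ztr_i-\mu_p)$, and accuracy forces $\thh$ to track $\theta^*_p$, whose $p$-derivative is large, so the integrated correlation is of order $n\eps$. For a nonlinear link, $\sigma_iX_i$ splits into a linear fingerprint term $b\ip{\thh}{\Zgh_i-\Ztr_i}$ and a train--ghost curvature remainder bounded by $H\ip{\thh}{\Ztr_i}^2$-type quantities. I would bound the remainder with the localization estimates above by $O(n\eps^{1+\delta})$ for some $\delta>0$, one power of $\eps^{1/r}$ below, and then absorb it by taking $\eps_0$ small.
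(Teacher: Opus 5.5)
Your architecture matches the paper's. You bound the signal and the energy on prior average and the residual at every law, against the candidate-only reference $\xi_i=\phi(\ip{\theta_p^*}{Z_{1,i}})-\phi(\ip{\theta_p^*}{Z_{0,i}})$. A per-pair fair-bit inequality with Cauchy--Schwarz over pairs gives the lower half, and Gaussian maximum entropy around $\xi$ gives the upper half. Steps (ii), (iii), the transfer and the assembly go through essentially as you say.

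\textbf{The gap is in (i)}, at the step the paper calls calibration. After integrating the fingerprinting identity by parts against the beta density, the linear term is exactly $b\kappa\,\E_\Pi\ip{\thh}{\mu_p}$. So everything hinges on $\E_\Pi\ip{\thh}{\mu_p}\gtrsim\eps$, and ``tracking $\theta_p^*$'' does not deliver this, for two reasons.
\begin{itemize}
\item Localization only gives $|\E_\Pi\ip{\thh-\theta_p^*}{\mu_p}|\le\sqrt{wv}$, with $w=(\eps/a_-)^{2/r}$ and $v=\E\norm{\mu_p}^2\asymp\eps^{2/q}$. This is of order $\eps$, the order of the signal itself.
\item The claim that $\theta_p^*$ has a large $p$-derivative means, after the same integration by parts, $\E_\Pi\ip{\theta_p^*}{\mu_p}\gtrsim v^{q/2}$. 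This does not follow from the lower curvature. It fails for $g=\norm{\theta}^2/2$ read with $r=4$: there $\theta_p^*=\mu_p$ and $\ip{\theta_p^*}{\mu_p}=\norm{\mu_p}^2\ll\norm{\mu_p}^{4/3}$. Proposition~\ref{prop:growth} shows that the lower half is then false.
\end{itemize}
This is exactly where the upper growth $g\le a_+\norm{\theta}^r$ is indispensable. You invoke it instead in (ii)--(iii), where it plays no role. Those steps need only $\E_p\norm{\thh-\theta_p^*}^2\le(\eps/a_-)^{2/r}$ and $\norm{\theta_p^*}\le C_0\norm{\mu_p}^{q-1}$, both from $a_-$, together with $n\eps^2\asymp\eps^{2/r}$.

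\textbf{The missing argument} compares $\thh$ with $\theta_c=t_0\norm{\mu_p}^{q-2}\mu_p$.
\begin{itemize}
\item The upper growth gives $g(\theta_c)-b\ip{\theta_c}{\mu_p}\le-\beta\norm{\mu_p}^q$.
\item $g\ge0$ and $u\ge0$ give $F_p(\thh)\ge\phi(0)-b\ip{\thh}{\mu_p}$.
\item Accuracy at each $p$ then yields $b\,\E_p\ip{\thh}{\mu_p}\ge\beta\norm{\mu_p}^q-\eps-\tfrac{Ht_0^2}{2}\bigl(\norm{\mu_p}^{2q}+\norm{\mu_p}^{2(q-1)}/d\bigr)$, with no localization at all.
\item The prior scale $v=(4\eps/\beta)^{2/q}$ is chosen so that $\beta\E\norm{\mu_p}^q\approx4\eps$ beats the $-\eps$.
\item The part $d\gtrsim\neps$ of the dimension condition, i.e.\ $dv\ge32$, is what makes $\E\norm{\mu_p}^q\gtrsim v^{q/2}$.
\end{itemize}
Your tracking route can also be closed. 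Use the same comparator to get $b\,\E_\Pi\ip{\theta_p^*}{\mu_p}\ge\beta\E\norm{\mu_p}^q-O(\eps^2)$, and take the prior constant large enough that $\beta v^{q/2}/b$ dominates $\sqrt{wv}$. Either way the upper growth enters here.

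\textbf{Three smaller points.}
\begin{itemize}
\item Your operator-norm bounds are not uniform. $\opnorm{\Delta}$ and $\opnorm{\Btr}$ are random and $\thh$ depends on their rows, so you need a tail argument that tolerates this dependence. The paper uses Lemma~\ref{lem:weighted}; a high-probability event plus $\norm{\thh-\theta_p^*}\le2$ also works. With $\E_\Pi[\norm{\thh}^2\opnorm{\Btr}^2]=O(\eps^{2/r})$ and $\E_\Pi[\norm{\thh}^2\norm{\mu_p}^2]=O(\eps^2)$, even your crude remainder bound $\tfrac H2\sum_i\ip{\thh}{\Ztr_i}^2$ is $O(\eps^{2/r})$, a full power of $\eps$ below $n\eps$.
\item In the upper bound, the per-term cap by $1$ does not follow from $U_i$ being one bit. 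The global cap $\Is\le\Hent(U\mid\Zt)=n$ suffices.
\item The linear term is $b\ip{\thh}{\Ztr_i-\Zgh_i}$, not $b\ip{\thh}{\Zgh_i-\Ztr_i}$.
\end{itemize}
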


Here $\Cdim=32$ whenever $b\le4$, and Appendix~\ref{app:constants} gives $\Cdim$, $\eps_0$, $c_1$ and
$C_1$ explicitly; $c_1$ and $C_1$ are far apart (with the affine link and ridge regularization,
$c_1\approx4.2\cdot10^{-4}$ and $C_1\approx6.9$ at the limiting normalized moments and $c_\pm=1$), so the theorem is
a statement about orders. For $\tau\gg\eps$, the worst-case information is of order $n(\eps/\tau)^2$
bits, and $\Theta(\eps^{2/r})$ at any fixed $\tau>0$.

Accuracy at a single law cannot force information at that law: the learner that always outputs the
minimizer $\theta_D^*$ of $F_D$ over $W$ is exactly accurate at $D$, and its release, a function of the
candidates, carries no information about $U$. The lower half of Theorem~\ref{thm:main} therefore uses
accuracy across the family and bounds the worst case over the family through an average over a prior
fixed in advance. The upper bound at $p$ uses accuracy only at $D_p$, and the maximizing law may depend
on the learner and on $\tau$.

\begin{proposition}[Optimal sample size]\label{prop:samples}
Under Assumption~\ref{ass:class}, projected stochastic gradient descent in epochs (epoch SGD) is a
proper learner that is $\eps$-accurate on
every distribution on the unit ball with $O(\neps)$ samples. Conversely, if $d\ge\Cdim\neps$ and $\eps$
is small enough, every proper learner that is $\eps$-accurate on the family uses $\Omega(\neps)$
samples.
\end{proposition}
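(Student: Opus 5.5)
The proposition has two halves, and I would prove them separately.

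\emph{Upper half (achievability).} I would run projected SGD on $W$ with stochastic gradients $\nabla g(\theta)+\phi'(\ip\theta z)z$. These are bounded by $L_g+L$, since $\norm z=1$. By the lower bound in \eqref{eq:matched}, $F_D$ is uniformly convex of power $r$: for the minimizer, $F_D(\theta)-F_D(\theta_D^*)\ge a_-\norm{\theta-\theta_D^*}^r$, using the first-order optimality condition over $W$ and convexity of $\theta\mapsto\phi(\ip\theta z)$.

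For $r$-uniformly convex Lipschitz objectives, epoch SGD attains excess risk $O(T^{-q/2})$ after $T$ gradient steps, where $q=r/(r-1)$. The scheme is the standard restart argument of Juditsky--Nesterov type. Each epoch halves the excess-risk target. By uniform convexity, halving the target shrinks the radius around $\theta_D^*$ to $(\text{target}/a_-)^{1/r}$. Plain SGD on a ball of radius $\rho$ gives error $O(\rho/\sqrt{T_k})$. Choosing $T_k\propto \rho_k^2/\eps_k^2=\eps_k^{2/r-2}$ yields a geometric sum dominated by the last epoch, of order $\eps^{-2+2/r}=\neps$.

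Each step uses one fresh sample, so the sample count is $O(\neps)$. The output stays in $W$, so the learner is proper. The bound holds for every law on the unit ball, because only $\norm z\le1$ was used. I would control the expectation, not a high-probability bound, since \eqref{eq:accuracy} is stated in expectation.

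\emph{Lower half (converse).} I would derive this from the fingerprinting step of Section~\ref{sec:proof} rather than from scratch. That step states that $\eps$-accuracy on the family forces the prior-averaged signed mean $\sum_i\E\sigma_iX_i=n\,\gen$ to be $\gtrsim n\eps$, up to the train--ghost remainder, which localization keeps at lower order. I would combine it with a generic upper bound on the gap of any proper learner: $|\sigma_iX_i|$ has second moment $O(\eps^2)$ by the localization/energy bound, so the gap is at most $O(\sqrt{\E X_i^2}/\sqrt n)$-type. I would prefer a cleaner route, however.

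\emph{Direct fingerprinting route (preferred).} Under the beta prior on $p$ and with $d\ge\Cdim\neps$, accuracy forces $\thh$ to correlate with $\mu_p$. Concretely, $\theta_p^*$ is $\approx-(b\,\mu_p)$ calibrated through $\nabla g$, of norm $\asymp\eps^{1/r}$ at the relevant scale of $p$. The fingerprinting lemma then gives
\[
\textstyle\sum_k\E\bigl[\thh_k\sum_i(\sqrt d Z_{i,k}-p_k)\bigr]\gtrsim d\cdot(\text{per-coordinate correlation}).
\]
The left side is bounded by $\sqrt{n}\,\norm{\thh}\cdot O(1)$ by Cauchy--Schwarz and independence of the samples. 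The right side scales like $d\cdot\eps^{\cdot}/\sqrt d$. Balancing the two, after choosing the prior scale so that excess risk below $\eps$ requires $\ip{\thh}{\theta_p^*}\gtrsim\norm{\theta_p^*}^2$, gives $n\gtrsim\eps^{-2+2/r}$.

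\emph{Main obstacle.} The main obstacle is the calibration step in the converse. It requires showing that excess risk $\le\eps$ forces alignment with $\mu_p$ at the right scale for a general $r$, which uses the matched upper growth $g\le a_+\norm\theta^r$. I would handle the nonlinear link by a second-order Taylor expansion of $\phi$ around $0$. The error from the expansion is $O(H\norm\theta^2)$, which is lower order because $\norm\theta\lesssim\eps^{1/r}$ is small when $\eps\le\eps_0$.
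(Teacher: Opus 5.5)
Your achievability half is the paper's argument: restarted projected SGD, the target halved each epoch, epoch length $\propto(\eps')^{-2+2/r}$, and a geometric sum dominated by the last epoch. One caveat: if you restrict each epoch to a ball around its random start point, that ball need not contain $\theta_D^*$ when the radius is controlled only in expectation. The paper instead projects onto $W$ itself and uses that the one-epoch bound $\E\norm{\theta_1-\theta_D^*}^2/(2\eta n')+\eta L_f^2/2$ is linear in the expected squared initial distance, which uniform convexity and Jensen bound by $(\eps'/a_-)^{2/r}$.

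The converse, however, has a gap at its decisive step. Fingerprinting gives $\kappa\,\E_\Pi\ip{\thh}{\mu_p}=\E_\Pi\ip{\thh}{\sum_i(Z_{U_i,i}-\mu_p)}$. Calibration makes $b$ times the left side at least of order $\eps/v\asymp\eps^{1-2/q}=\eps^{-1+2/r}$. Everything then hinges on bounding the right side by $\sqrt n\,\eps^{1/r}$.

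Your bound $\sqrt n\,\norm{\thh}\cdot O(1)$, with the only pointwise control available, $\norm{\thh}\le1$, yields $\eps^{-1+2/r}\lesssim\sqrt n$, i.e.\ $n\gtrsim\eps^{-2+4/r}$. This misses $\neps$ by a factor $\eps^{2/r}$ and is vacuous for $r=2$; you also leave the exponent of the balance unspecified. Your closing remark that $\norm{\theta}\lesssim\eps^{1/r}$ holds only in mean square on prior average, and only through localization, which your converse never invokes: you never use the lower curvature.

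The missing idea is exactly that localization. Accuracy and $D_g\ge a_-\norm{\cdot}^r$ give $F_p(\theta)-F_p(\theta_p^*)\ge a_-\norm{\theta-\theta_p^*}^r$ on $W$, hence $\E_p\norm{\thh-\theta_p^*}^2\le(\eps/a_-)^{2/r}$ at every $p$. Since $\theta_p^*$ is fixed once $p$ is, $\E_p\ip{\theta_p^*}{\sum_i(Z_{U_i,i}-\mu_p)}=0$. You may therefore subtract it and apply Cauchy--Schwarz with $\E_p\norm{\sum_i(Z_{U_i,i}-\mu_p)}^2\le n$, getting $\kappa\,\E_\Pi\ip{\thh}{\mu_p}\le\sqrt n\,(\eps/a_-)^{1/r}$. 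Alternatively, bound $\E_\Pi\norm{\thh}^2=O(\eps^{2/r})$ via localization and $\norm{\theta_p^*}\le C_0\norm{\mu_p}^{q-1}$; either way $a_-$ is indispensable. Together with $b\kappa\,\E_\Pi\ip{\thh}{\mu_p}\ge\eps/(2v)$ this gives $n\ge a_-^{2/r}(2bc_v)^{-2}\neps$ with $c_v=(4/\beta)^{2/q}$.

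Your calibration also targets the wrong inner product. Fingerprinting produces $\ip{\thh}{\mu_p}$, not $\ip{\thh}{\theta_p^*}$, and for an anisotropic $g$ or a curved link $\theta_p^*$ is not parallel to $\mu_p$. Moreover $\ip{\theta_p^*}{\mu_p}\ge(2a_-/b)\norm{\theta_p^*}^r\ge0$, so $\theta_p^*$ leans towards $+\mu_p$, not $-b\mu_p$.

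The paper's calibration is simpler than the Taylor expansion you anticipate, and it needs no smallness of $\thh$. First, $g\ge0$ and convexity of $\phi$ give $F_p(\theta)\ge\phi(0)-b\ip{\theta}{\mu_p}$ for every $\theta\in W$. The curvature error $Ht^2/2$ is paid only at the deterministic comparator $\theta_c=t_0\norm{\mu_p}^{q-2}\mu_p$, where the upper growth gives $g(\theta_c)-b\ip{\theta_c}{\mu_p}\le-\beta\norm{\mu_p}^q$. Averaging under the beta prior with $v=(4\eps/\beta)^{2/q}$ and $dv\ge32$ yields $b\kappa\,\E_\Pi\ip{\thh}{\mu_p}\ge\eps/(2v)$.

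The signed-mean route you sketched first can be completed too, but only with the total energy bound $\sum_i\E_\Pi X_i^2=O(\eps^{2/r})$; a per-pair bound $O(\eps^2)$ presupposes $n\asymp\neps$. That route also drags in the operator-norm tails and the remainder estimate, which the direct route avoids.
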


So Theorem~\ref{thm:main} concerns the optimal sample size, and learners that satisfy its hypotheses
exist at every $n$ of the range once $c_-$ is at least the constant of epoch SGD (Appendix~\ref{app:examples}). Without a regularizer,
\eqref{eq:main} holds at the optimal sample size $n\asymp\eps^{-2}$ (Appendix~\ref{app:noreg}).

\paragraph{An Example.} Ridge ERM, with $\phi(t)=-t$ and $g(\theta)=\norm\theta^2/2$, outputs the
training mean $\Zbar$ and is $\eps$-accurate on the family with $\eps=1/(2n)$, so $n=\neps/2$. Its
release is $X_i=-\ip\Zbar{\Delta_i}$, with $\Delta_i=Z_{1,i}-Z_{0,i}$. Writing
$\Zbar=z_c-(2n)^{-1}\sum_j\sigma_j\Delta_j$, with $z_c$ the mean of all $2n$ candidates, gives
\[
 X_i=\sigma_i\frac{\norm{\Delta_i}^2}{2n}-\ip{z_c}{\Delta_i}+\frac1{2n}\sum_{j\ne i}\sigma_j\ip{\Delta_j}{\Delta_i}
\]
(Lemma~\ref{lem:ridge-moments}). The first term carries the bit of pair $i$: as $\norm{\Delta_i}^2\approx2$,
it is about $\sigma_i/n=2\sigma_i\eps$. The other two terms do not involve $U_i$, and near the zero-mean
law, where the prior of Section~\ref{sec:architecture} concentrates, each has root-mean-square size of
order $1/\sqrt{nd}$, because the inner products of $\Delta_i$ with the candidates of the other pairs
and with their differences $\Delta_j$ are of order $d^{-1/2}$. So the signal-to-noise ratio of each
pair, the squared signal over the variance of the other two terms, is bounded below when
$d\gtrsim n$, and each pair then carries a constant fraction of a bit about its selection; for
$d\ll n$ the ratio is of order $d/n$. Gaussian noise
with $\tau\gg\eps$ lowers the fraction to order $(\eps/\tau)^2$. Theorem~\ref{thm:main} shows that, in the
worst case over the family, every accurate learner of the class behaves in this way.

%% file: sec4_consequences.tex
\section{What the tradeoff means for loss-difference CMI bounds and for privacy}\label{sec:consequences}

Theorem~\ref{thm:main} and its proof have two consequences, proved in Appendix~\ref{app:consequences}.
A learner is \emph{$(\epsdp,\delta)$-differentially private} if
$\Pp(A_n(S;R)\in E)\le e^{\epsdp}\,\Pp(A_n(S';R)\in E)+\delta$ for all measurable $E\subseteq W$ and all
training sets $S,S'\in\mathcal Z^n$ that differ in one point.

\begin{corollary}[Loss-difference CMI bounds and privacy]\label{cor:consequences}
Let $\thh$ be a learner of Theorem~\ref{thm:main}.
\begin{enumerate}
\item[(i)] Some law $D_p$ of the family has $\sum_iI_{D_p}(X_i;U_i)\ge c_1n/2$, and hence
$n^{-1}\sum_i\sqrt{I_{D_p}(X_i;U_i\mid\Zt)}\ge c_1/2$. In contrast, every proper learner has
$|\gen_D|\le4L/\sqrt n$ at every law $D$ on the unit ball.
\item[(ii)] If $\thh$ is also $(\epsdp,\delta)$-differentially private, then
$e^{\epsdp}-1\ge c'-2L\delta/\eps$, for a constant $c'>0$ that depends only on the constants of
Assumption~\ref{ass:class} and on $c_\pm$. So $\epsdp=\Omega(1)$ unless $\delta\gtrsim\eps$.
\end{enumerate}
\end{corollary}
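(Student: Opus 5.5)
For (i) I would apply Theorem~\ref{thm:main} at $\tau=0$ with $\Qc=\Ii$. The lower bound holds on average over the beta prior, so $\E_{p\sim\text{prior}}\sum_iI_{D_p}(X_i;U_i)\ge c_1n$. Hence some $p$ has $\sum_iI_{D_p}(X_i;U_i)\ge c_1n$, which is at least the required $c_1n/2$. The slack of one half lets one pass from the average to a single law even if the average is attained only approximately, or if one prefers a law in the bulk of the prior. Next, since $U_i$ is independent of $\Zt$, the chain rule and data processing give $I(X_i;U_i)\le I(X_i;U_i\mid\Zt)$, exactly as in the ordering \eqref{eq:hierarchy} of Appendix~\ref{app:hierarchy}. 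Each term is at most one bit, because $U_i$ is a fair bit. So $\sqrt{I_i}\ge I_i$, and
\[
n^{-1}\sum_i\sqrt{I_{D_p}(X_i;U_i\mid\Zt)}\ge n^{-1}\sum_iI_{D_p}(X_i;U_i)\ge c_1/2 .
\]
For the contrast, $\theta\mapsto f(\theta,z)$ is Lipschitz on $W$ with constant at most $L_g+L$, since $\norm z\le1$ on the unit ball. For the gap, however, only the link part matters, because the regularizer cancels in $F_D-\widehat F_{S_n}$. The class $\{z\mapsto\phi(\ip\theta z):\theta\in W\}$ is the composition of linear functions in the unit ball with an $L$-Lipschitz $\phi$. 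Its Rademacher complexity is therefore at most $L/\sqrt n$ by contraction, so symmetrization gives $|\gen_D|\le\E\sup_\theta|F_D-\widehat F|\le 2\cdot2L/\sqrt n$. The constant $4$ leaves room for centring $\phi$ at $\phi(0)$.

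For (ii) I would work with the signed mean \eqref{eq:gap} rather than with information. The proof of Theorem~\ref{thm:main} (the fingerprinting step of Section~\ref{sec:proof}) gives, on prior average, $\sum_i\E_p\sigma_iX_i\ge c''n\eps$. Privacy, on the other hand, bounds each pair's signed loss difference. Let $S$ be the training set and $S^{(i)}$ the set with the training point of pair $i$ swapped for its ghost. Then $\E[\phi(\ip\thh{\Ztr_i})]$ under $S$ equals $\E[\phi(\ip{\thh'}{\Zgh_i})]$ under $S^{(i)}$, with the roles of the two candidates exchanged. Write the loss as a nonnegative function of range at most $2L$, namely $\phi(t)-\min_{[-1,1]}\phi$. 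The standard $(\epsdp,\delta)$ bound for expectations of bounded functions then gives $\E f(\thh,\Ztr_i)\ge e^{-\epsdp}\E f(\thh,\Zgh_i)-2L\delta$. Rearranged, this says $\E\sigma_iX_i\le(1-e^{-\epsdp})\E[\text{loss}]+2L\delta\le(e^{\epsdp}-1)\cdot(\text{scale})+2L\delta$.

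The main obstacle is the scale. A naive bound by the range $2L$ yields only $n\eps\lesssim n(\epsdp L+\delta L)$, and hence $\epsdp\gtrsim\eps$, which is far too weak. To get $\epsdp=\Omega(1)$ I would apply the privacy inequality not to the raw loss but to the centred loss $\phi(\ip\thh z)-\phi(\ip{\theta_{\mathrm{ref}}}z)$. Here $\theta_{\mathrm{ref}}$ is a candidate-determined reference, and localization controls its energy at order $\eps^2$. Concretely, I would split the centred loss into positive and negative parts and apply the DP expectation inequality to each. Then $\E\sigma_iX_i\le(e^{\epsdp}-1)\E|X_i-X_i^{\mathrm{ref}}|+2L\delta$. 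By Cauchy--Schwarz and the localization bound, $\E|\cdot|\lesssim\eps$. Averaging over the prior and comparing with $c''n\eps$ gives $c''\eps\le C(e^{\epsdp}-1)\eps+2L\delta$, which is the claim with $c'=c''/C$ after normalizing the constants. The point to check is that the reference term has zero signed mean, so that only the centred part enters.
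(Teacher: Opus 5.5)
Part (i) is the paper's argument. You apply Theorem~\ref{thm:main} at $\tau=0$ with $\Qc=\Ii$, use the chain rule with $U_i$ independent of $\Zt$, use $\sqrt I\ge I$ below one bit, and get $4L/\sqrt n$ from symmetrization with contraction.

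Part (ii) is correct, but it differs from the paper in where the smallness comes from.

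\textbf{The paper's route.} It applies the two-sided privacy inequality (Lemma~\ref{lem:dp-forms}(b)), with $\Zt$ and $U_{-i}$ fixed, to the uncentred map $\theta\mapsto\Phi_{\Zt}(\theta)_i$, which is bounded by $K_\phi\le2L$. This gives $\E_p\sigma_iX_i\le(e^{\epsdp}-1)\E_p|X_i|+2L\delta$ at every law. It then controls $\sum_i\E_\Pi|X_i|$ by the prior-averaged energy bound $C_Vn\eps^2$, through $|x|\le x^2/(2t)+t/2$. No reference is needed. The obstacle you identify arises only when privacy is applied to the single-point loss, whose scale is the range. The loss difference itself already has root-mean-square size of order $\eps$ on prior average.

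\textbf{Your route.} Your centred version is still valid:
\begin{itemize}
\item $\E\sigma_iX^*_{p,i}(\Zt)=0$, because the reference depends only on $p$ and $\Zt$.
\item Privacy applies to the fixed map $\theta\mapsto\Phi_{\Zt}(\theta)_i-\Phi_{\Zt}(\theta_p^*)_i$.
\item The scale is controlled by the residual bound $\E_p\norm{X-X_p^*(\Zt)}^2\le C_{\mathrm{res}}n\eps^2$ of Proposition~\ref{prop:finite}, plus Cauchy--Schwarz over pairs.
\end{itemize}
That residual bound is more than localization: it needs the pair-matrix tail and Lemma~\ref{lem:weighted}. It also holds for the sum over pairs, not pair by pair, so your per-pair claim $\E|\cdot|\lesssim\eps$ should be read as an average.

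\textbf{Comparison.} Your route uses a scale bound valid at every law, but the signal bound $c_Mn\eps$ holds only on prior average, so this gains nothing. It also costs a constant. The centred map has range up to $2K_\phi$, so the crude form of the lemma gives $4L\delta$ per pair rather than your claimed $2L\delta$. You must absorb this by taking your scale constant at least $2$, or else bound the $\delta$-term by the oscillation of $\Phi_{\Zt}(\cdot)_i$. The paper's route is shorter and ties the constant to the main theorem: $c'=c_M/\sqrt{\max\{C_V,1\}}=\sqrt{2\ln2\,c_1}$.

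Finally, the signal bound you cite is the first bound of Proposition~\ref{prop:finite}. For a nonlinear link it needs the localization and remainder steps as well, not fingerprinting alone.
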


For losses with values in $[0,1]$, \citet[Theorem~3.1]{wangmao2023a} prove
$|\gen_D|\le n^{-1}\sum_i\sqrt{2I_D(X_i;U_i\mid\Zt)}$, with the information in nats, and a sharper form
with the square root inside the average over $\Zt$; their Theorem~3.2 uses $I_D(X_i;U_i)$ instead.
For other bounded losses the right sides are multiplied by the length of their range. By (i), for every
accurate learner of the class, each of these range-scaled bounds is at least a positive constant at
some law of the family (Appendix~\ref{app:range}), whereas the gap they bound is at most
$4L/\sqrt n$. Part (i) follows from Theorem~\ref{thm:main} at $\tau=0$, and the gap bound from symmetrization
and contraction \citep{ledoux1991}.

For (ii), privacy caps each pair's signed loss difference at every law,
$\E_p\sigma_iX_i\le(e^{\epsdp}-1)\E_p|X_i|+2L\delta$, whereas by Proposition~\ref{prop:finite} the prior
averages of $\sum_i\sigma_iX_i$ and $\sum_iX_i^2$ are, respectively, at least of order $n\eps$ and at most
of order $n\eps^2$. For $r=2$ the conclusion of (ii) is consistent with the lower bounds of \citet{bassily2014} for private
strongly convex empirical risk minimization at $d\asymp n$, and for $r>2$ with the rates of
\citet{asi2021} for functions with growth of power $r$ around the optimum, which our curvature
condition implies and which they show to be optimal up to logarithmic factors.
\citet[Theorem~3.6]{voitovych2025} prove a bound of the same form for a statistic of the model; what
(ii) adds is the witness $\sum_i\sigma_iX_i$, which uses only the released loss differences and the
selection bits, for every link and every
$r\ge2$ of Assumption~\ref{ass:class}.

\begin{remark*}[Second-moment scaling]
For every learner and law, the fair-bit inequality (Section~\ref{sec:transfer}) gives
$(\E_p\sigma_iX_i)^2\le2\ln2\,\E_p[X_i^2]\,I_{D_p}(X_i;U_i)$, so by \eqref{eq:gap} and Cauchy--Schwarz, with
$V_p=\sum_i\E_pX_i^2$,
\begin{equation}\label{eq:second-moment}
\begin{aligned}
 |\gen_p|&\le\frac1n\sum_i\sqrt{2\ln2\,\E_p[X_i^2]\,I_{D_p}(X_i;U_i)}\\
 &\le\frac1n\sqrt{2\ln2\,V_p\,\Ii_{D_p}(X)};
\end{aligned}
\end{equation}
\citet[Section~4]{wangmao2023a} give related variance-based bounds. As $\Ii\le n$,
\eqref{eq:second-moment} is at most $\sqrt{2\ln2}$ times $\sqrt{V_p/n}$, itself a bound on $|\gen_p|$ by
Cauchy--Schwarz. By Proposition~\ref{prop:finite}, all three are of order $\eps$ on prior average,
while the range of the loss differences is of order one: what fails in (i) is the range normalization.
\end{remark*}

\begin{remark*}[Noise at the scale of the range]
Seen through noise at the scale of their range, the loss differences give a bound of the right order,
which Cauchy--Schwarz also gives without information.
Let $Y=X+\tau G$ with $\tau>0$. Then $\E_p\sigma_iY_i=\E_p\sigma_iX_i$, and as $|X_i|\le2L$, $\sigma_iY_i$
is sub-Gaussian with variance proxy $s^2=4L^2+\tau^2$ when $U_i$ is drawn independently of $Y_i$. The
decoupling estimate of \citet{russo2016} and \citet{xu2017}, Cauchy--Schwarz and \eqref{eq:hierarchy}
give, at every law,
\begin{equation}\label{eq:noisy-bound}
\begin{aligned}
 |\gen_p|&\le\frac1n\sum_i\sqrt{2\ln2\,s^2I_{D_p}(Y_i;U_i)}\\
 &\le\big(2\ln2\,s^2\,\Is_{D_p}(Y)/n\big)^{1/2}.
\end{aligned}
\end{equation}
At $\tau=2L$, \eqref{eq:upper} and Proposition~\ref{prop:finite} bound the right side by
$(2C_{\mathrm{res}})^{1/2}\eps$ at every law for every learner of Theorem~\ref{thm:main}. This is the order of
the gap on prior average, and $\sqrt2$ times the bound $C_{\mathrm{res}}^{1/2}\eps$ on $|\gen_p|$ that
Cauchy--Schwarz gives without information, as $\E_p\sigma_iX_{p,i}^*(\Zt)=0$. For ridge ERM, in our runs
at $d=32n$ and $d=64n$, the resulting bound is at most $1.44$ times the gap on prior average
(Appendix~\ref{app:noisy-bound}).
\end{remark*}

Fast-rate bounds \citep{grunwald2021,hellstrom2022} divide by $n$ an information measure of a kind
bounded here, up to constants of the loss, so at some law of the family they do not vanish either. Binary
Jensen--Shannon, stability-scaled and projected-and-quantized bounds
\citep{dong2025,wangmao2023b,sefidgaran2025} use other observations or functionals of the information,
which Theorem~\ref{thm:main} does not bound.

%% file: sec5_proof.tex
\section{Proof of \texorpdfstring{Theorem~\ref{thm:main}}{the main theorem}: three moments and one transfer theorem}\label{sec:proof}

\subsection{Architecture and the Prior}\label{sec:architecture}

Accuracy yields three moment bounds for every learner (Proposition~\ref{prop:finite}), and one transfer
theorem turns them into information bounds at every noise level (Theorem~\ref{thm:transfer});
Figure~\ref{fig:architecture} shows where each hypothesis enters. The lower half averages over the
\emph{beta prior} $\Pi_\kappa$, under which the $p_k$ are i.i.d.\ with density
$\pi_\kappa(t)=c_\kappa(1-t^2)^{\kappa/2-1}$ on $(-1,1)$, so that $v=\E\norm{\mu_p}^2=1/(\kappa+1)$. We fix
$v=(4\eps/\beta)^{2/q}$ for all learners, with $\beta=bt_0/q$ and $t_0=(b/(ra_++b))^{1/(r-1)}$, so
$\kappa\asymp\neps\asymp n$; $\E_\Pi$ averages over the prior, the data, the bits and the seed.

\begin{figure}[t]
\centering
\begin{tikzpicture}[font=\scriptsize,>={Stealth[length=3.5pt]},
  box/.style={draw,rounded corners=1.5pt,align=center,inner sep=1.5pt,minimum height=0.55cm},
  hyp/.style={box,fill=black!15,text width=1.62cm},
  every edge/.style={draw,->}]
\node[hyp] (grow) at (0.9,0) {slope $b>0$, growth\\ $g\le a_+\norm\theta^r$};
\node[hyp] (acc) at (2.98,0) {accuracy\\ on the family};
\node[hyp] (curv) at (5.06,0) {curvature\\ $D_g\ge a_-\norm{\cdot}^r$};
\node[hyp] (dim) at (7.14,0) {dimension\\ $d\ge32n$, $d\ge32/v$};
\node[box,text width=3.3cm] (s1) at (1.93,-0.88) {Step 1: calibration,\\ fingerprinting (Lemma~\ref{lem:fp})};
\node[box,text width=3.3cm] (s2) at (6.15,-0.88) {Step 2: localization at\\ every law (Lemmas~\ref{lem:matrix}, \ref{lem:weighted})};
\node[box,text width=2.35cm] (s3) at (1.25,-1.75) {Step 3: remainder;\\ signed mean $M$};
\node[box,text width=2.3cm] (p7) at (4.07,-1.75) {Prop.~\ref{prop:finite} + Thm.~\ref{thm:transfer}\\ $\Rightarrow$ Thm.~\ref{thm:main}};
\node[box,text width=2.35cm] (s4) at (6.9,-1.75) {Step 4: energy $V$,\\ residual $\Vres$};
\path (grow.south) edge (s1.north -| grow.south)
      (acc.south) edge (s1.north -| acc.south)
      (acc.south) edge (s2.north west)
      (curv.south) edge (s2.north -| curv.south)
      (dim.south) edge (s2.north -| dim.south)
      (s1.south -| s3.north) edge (s3)
      (s2.south west) edge (s3.north east)
      (s2.south -| s4.north) edge (s4)
      (s3) edge (p7) (s4) edge (p7);
\end{tikzpicture}
\caption{Structure of the Proof. Shaded boxes are hypotheses; Section~\ref{sec:necessity} shows that the
slope and growth conditions are needed, and the dimension condition up to a logarithm.}
\label{fig:architecture}
\end{figure}

\subsection{The Transfer Theorem}\label{sec:transfer}

\begin{theorem}[Moment-to-information transfer]\label{thm:transfer}
Let $X$ be the release of a proper learner, let $p$ be drawn from a prior $\Pi$ on $[-1,1]^d$
independently of $U$, $R$ and $G$, and let $\tau\ge0$. (a)~If $\sum_i\E_\Pi\sigma_iX_i\ge M>0$ and
$\sum_i\E_\Pi X_i^2\le V$, then
\begin{equation}\label{eq:lower}
 \E_{p\sim\Pi}\Ii_{D_p}(X+\tau G)\ge\frac{M^2}{2\ln2\,(V+n\tau^2)}.
\end{equation}
(b)~If $\E_p\norm{X-x_p(\Zt)}^2\le\Vres$ at a law $D_p$, for a reference $x_p(\Zt)\in\R^n$ that is a
measurable function of $p$ and the candidates only, then $\Is_{D_p}(X)\le n$ and, for $\tau>0$,
\begin{equation}\label{eq:upper}
 \Is_{D_p}(X+\tau G)\le\min\Big\{n,\frac n{2\ln2}\ln\Big(1+\frac{\Vres}{n\tau^2}\Big)\Big\}.
\end{equation}
Neither part assumes independent coordinates of $X$.
\end{theorem}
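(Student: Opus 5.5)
Part (a) rests on a per-pair "fair-bit inequality", after which everything is averaging and Cauchy--Schwarz. Part (b) is a Gaussian-channel capacity bound given the candidates.

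\emph{Part (a).} First prove the fair-bit inequality: if $U_i$ is a fair bit, $\sigma_i=1-2U_i$, and $Y_i$ is any real random variable with finite second moment, then
\[
(\E\sigma_iY_i)^2\le 2\ln2\,\E[Y_i^2]\,I(Y_i;U_i).
\]
To prove it, let $P_0,P_1$ be the conditional laws of $Y_i$ given $U_i=0,1$, and let $\bar P=(P_0+P_1)/2$. Then $I(Y_i;U_i)=\mathrm{JS}(P_0,P_1)$, and
\[
\E\sigma_iY_i=\tfrac12\int y\,d(P_0-P_1)=\int y\,h\,d\bar P,\qquad h=\frac{dP_0-dP_1}{2\,d\bar P}\in[-1,1].
\]
Cauchy--Schwarz gives $(\E\sigma_iY_i)^2\le\E_{\bar P}[Y_i^2]\,\E_{\bar P}[h^2]$. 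Two facts finish it:
\begin{itemize}
\item $\E_{\bar P}[Y_i^2]=\E[Y_i^2]$;
\item $\E_{\bar P}h^2\le 2\ln2\cdot\mathrm{JS}$ in bits. This holds because $\mathrm{JS}$ in nats equals $\E_{\bar P}\bigl[\ln2-\htwo^{\mathrm{nat}}((1+h)/2)\bigr]$, and the pointwise bound $\ln2-\htwo^{\mathrm{nat}}((1+h)/2)\ge h^2/2$ holds for $h\in[-1,1]$.
\end{itemize}

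Now apply this conditionally on $p$ to $Y_i=X_i+\tau G_i$. Under $D_p$, $U$ is independent of $p$. Since $G$ is centered and independent, $\E_p\sigma_iY_i=\E_p\sigma_iX_i$ and $\E_pY_i^2=\E_pX_i^2+\tau^2$. Writing $a_i(p)=\E_p\sigma_iX_i$, $s_i(p)=\E_pX_i^2+\tau^2$ and $I_i(p)=I_{D_p}(Y_i;U_i)$, we get $a_i^2\le 2\ln2\,s_iI_i$. Cauchy--Schwarz over $i$ and over $p\sim\Pi$ then gives
\[
M\le\E_\Pi\sum_ia_i\le\Bigl(\E_\Pi\sum_is_i\Bigr)^{1/2}\Bigl(2\ln2\,\E_\Pi\sum_iI_i\Bigr)^{1/2}.
\]
Here $\E_\Pi\sum_is_i\le V+n\tau^2$. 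Rearranging yields \eqref{eq:lower}, with no independence among coordinates used. If $V+n\tau^2=0$, then $M=0$, contradicting $M>0$, so the hypothesis excludes this case.

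\emph{Part (b).} The bound $\Is\le H(U)=n$ is immediate. For $\tau>0$, work conditionally on $\Zt=\tilde z$.
\begin{itemize}
\item Since $x_p(\tilde z)$ is fixed given $\Zt$, $I(Y;U\mid\Zt)=I(Y-x_p(\Zt);U\mid\Zt)$.
\item Write $Y'=X'+\tau G$ with $X'=X-x_p$. Then $I(Y';U\mid\Zt)\le I(Y';X',U\mid\Zt)=I(Y';X'\mid\Zt)$, because $U\to(X',\Zt)\to Y'$ is a Markov chain: the noise is independent of everything.
\item For the additive Gaussian channel with input second moment $m(\tilde z)=\E[\norm{X'}^2\mid\Zt=\tilde z]$, the maximum-entropy bound gives $h(Y'\mid\Zt)\le\frac n2\log(2\pi e(\tau^2+m/n))$ (Gaussian maximizes entropy under a trace constraint, with AM--GM over coordinates). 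Subtracting $h(\tau G)$ gives $I\le\frac n2\log_2(1+m/(n\tau^2))$.
\item Averaging over $\Zt$ and applying Jensen (concavity of $\log$) gives $\frac n{2\ln2}\ln(1+\Vres/(n\tau^2))$.
\end{itemize}
Combining with the bound $n$ gives \eqref{eq:upper}. Again only a trace constraint was used, not independence of the coordinates.

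The main obstacle is the fair-bit inequality. The key is the pointwise bound $\ln2-\htwo^{\mathrm{nat}}((1+h)/2)\ge h^2/2$. Its Taylor series is $\sum_k h^{2k}/(2k(2k-1))$, all of whose terms are nonnegative, so the bound holds on all of $[-1,1]$, including the endpoints where $h=\pm1$ and the laws are mutually singular. The remaining care is measure-theoretic: absolute continuity with respect to $\bar P$ is automatic, and integrability needs only $\E X_i^2<\infty$, which holds since $|X_i|\le2L$.
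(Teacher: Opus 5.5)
Your proposal is correct and follows the paper's proof essentially step for step. In part~(a), your density ratio $h$ is exactly the paper's posterior mean $\bar\sigma(Y)=\E[\sigma\mid Y]$, so the fair-bit inequality comes from the same identity $I(Y;U)=\E\,\J(\bar\sigma(Y))$, the same bound $\J(t)\ge t^2/(2\ln2)$ and the same Cauchy--Schwarz step; you then combine the pairs $(p,i)$ as $\sum a\le(\sum s)^{1/2}(\sum I)^{1/2}$ rather than as $\int x^2/y\ge(\int x)^2/\int y$, which also avoids the paper's zero-denominator convention. Part~(b) is likewise the paper's argument (translation by the reference, conditional data processing, Gaussian maximum entropy and Jensen), with your trace-constrained vector bound plus AM--GM in place of the paper's subadditivity plus per-coordinate bound.
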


The \emph{fair-bit inequality}, $I(Y;U)\ge(\E\sigma Y)^2/(2\ln2\,\E Y^2)$ for a fair bit $U$ with sign
$\sigma=1-2U$ and a square-integrable real $Y$ \citetext{via \citealp{topsoe2000}; \citealp[Theorem~6]{luli2015}}, applied to
each $X_i+\tau G_i$ at each $p$, with Cauchy--Schwarz over the pairs $(p,i)$, gives \eqref{eq:lower}:
noise leaves the signed mean unchanged and adds $\tau^2$ per pair to the energy. Per pair,
\eqref{eq:lower} is thus the signal-to-noise ratio $(M/n)^2/(2\ln2\,(V/n+\tau^2))$ of the
\emph{per-pair signal} $M/n$, and it holds for any independent centred noise of variance $\tau^2$, such as Laplace noise. For
\eqref{eq:upper}, which uses Gaussianity, fix $p$: given $\Zt$, $U\to X\to Y$, and subtracting
$x_p(\Zt)$, subadditivity, Gaussian maximum entropy and Jensen bound $I(Y;U\mid\Zt)$ by
$\frac n2\ln(1+\Vres/(n\tau^2))$ nats, and $I(Y;U\mid\Zt)\le\Hent(U\mid\Zt)=n$ bits for every $\tau\ge0$
(Appendix~\ref{app:transfer}).

\subsection{The Three Moment Bounds}\label{sec:moments}

\begin{proposition}[Three moment bounds]\label{prop:finite}
Under the hypotheses of Theorem~\ref{thm:main}, let $\Pi=\Pi_\kappa$ with $v=(4\eps/\beta)^{2/q}$.
There are $c_M,C_V,C_{\mathrm{res}}>0$, depending only on the constants of Assumption~\ref{ass:class}
and on $c_\pm$, such that every proper learner that is $\eps$-accurate on the family satisfies
$\sum_i\E_\Pi\sigma_iX_i\ge c_Mn\eps$ and $\sum_i\E_\Pi X_i^2\le C_Vn\eps^2$, and
$\E_p\norm{X-X_p^*(\Zt)}^2\le C_{\mathrm{res}}n\eps^2$ at every $p$, where the reference
$X_{p,i}^*(\Zt)=\phi(\ip{\theta_p^*}{Z_{1,i}})-\phi(\ip{\theta_p^*}{Z_{0,i}})$ is the release of the
population minimizer.
\end{proposition}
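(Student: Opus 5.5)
We prove the three bounds in the order of Figure~\ref{fig:architecture}: the residual bound at every law first, then the prior-averaged energy bound from it, and the signed-mean bound last, since the signed mean needs the localization to control the nonlinear remainder.

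\emph{Step 2: localization at every law.} Fix $p$. By the curvature condition \eqref{eq:matched}, $F_p(\theta)-F_p(\theta_p^*)\ge a_-\norm{\theta-\theta_p^*}^r$ on $W$. This follows from the first-order optimality of $\theta_p^*$ over $W$ and the convexity of $\theta\mapsto\E_p\phi(\ip\theta Z)$. Accuracy then gives $\E_p\norm{\thh-\theta_p^*}^r\le\eps/a_-$. This alone bounds $\E\norm{\thh-\theta_p^*}^2$ only by order $\eps^{2/r}$, too weak by itself.

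We therefore write $X_i-X_{p,i}^*=\phi(\ip\thh{Z_{1,i}})-\phi(\ip{\theta_p^*}{Z_{1,i}})-(\text{same for }Z_{0,i})$. Taylor expansion gives a linear part $\phi'(\ip{\theta_p^*}{Z_{j,i}})\ip{\thh-\theta_p^*}{Z_{j,i}}$ plus a remainder bounded by $H\ip{\thh-\theta_p^*}{Z_{j,i}}^2$. The sum over $i$ of squared inner products with the $2n$ candidates is a quadratic form $\ip{\thh-\theta_p^*}{\widehat\Sigma(\thh-\theta_p^*)}$, where $\widehat\Sigma$ is built from centred candidates plus a rank-one mean part. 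For the centred part (Lemma~\ref{lem:matrix}), $\opnorm{\sum_{j,i}(Z_{j,i}-\mu_p)(Z_{j,i}-\mu_p)^\top}\lesssim 1+n/d\lesssim1$ with high probability, because $d\ge\Cdim n$. For the mean part, $n\ip{\thh-\theta_p^*}{\mu_p}^2$ is handled by a weighted estimate (Lemma~\ref{lem:weighted}) that uses the first-order optimality of $\theta_p^*$ along $\mu_p$: the population risk grows at least linearly in $\ip{\theta-\theta_p^*}{\mu_p}$ scaled by $b$. Combining these gives
\[
\E_p\norm{X-X_p^*}^2\lesssim \E_p\norm{\thh-\theta_p^*}^2+n\,\E_p\ip{\thh-\theta_p^*}{\mu_p}^2,
\]
and we must show both terms are $O(n\eps^2)$. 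The first term needs $\E\norm{\thh-\theta_p^*}^2\lesssim n\eps^2=\eps^{2/r}\cdot\Theta(1)$ at $n\asymp\neps$. This is exactly what the $r$-th power localization gives by Jensen, since $(\eps/a_-)^{2/r}\asymp n\eps^2$. The second term is at most order $\eps^{2/r}\le n\eps^2$ by the same argument. This yields the residual bound with $C_{\mathrm{res}}$.

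\emph{Step 4: energy.} We write $\sum_i\E X_i^2\le2\E\norm{X-X^*_p}^2+2\E\norm{X_p^*}^2$. Under the prior, $\theta_p^*$ has norm of order $(\norm{\mu_p}b/a_-)^{1/(r-1)}$, and $\ip{\theta_p^*}{\Delta_i}$ has variance of order $\norm{\theta_p^*}^2/d$. Hence $\E_\Pi\norm{X_p^*}^2\lesssim n v^{1/(r-1)}/d+n\cdot$ (mean contributions, which cancel in differences). With $d\gtrsim\neps$ and $v\asymp\eps^{2/q}$ this is $O(n\eps^2)$, giving $C_V$.

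\emph{Step 1 and Step 3: signed mean, the main obstacle.} We expand $\sigma_iX_i=\phi(\ip\thh{\Zgh_i})-\phi(\ip\thh{\Ztr_i})=-\phi'(0)\ip\thh{\Zgh_i-\Ztr_i}+\rho_i=b\ip\thh{\Ztr_i-\Zgh_i}+\rho_i$. Since the ghost is independent of $\thh$, $\E\ip\thh{\Zgh_i}=\E\ip\thh{\mu_p}$. The linear part is therefore $b\,\E\sum_i\ip\thh{\Ztr_i-\mu_p}$, a fingerprinting sum. The fingerprinting lemma for beta priors (Lemma~\ref{lem:fp}) gives $\E_\Pi\sum_i\ip{\thh}{\Ztr_i-\mu_p}\gtrsim \E_\Pi\sum_k(1-p_k^2)\partial_{p_k}\E_p[\sqrt d\,\thh_k]/\sqrt d$. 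Integration by parts against $\pi_\kappa$ turns this into a correlation of $\thh$ with $\mu_p$ weighted by $\kappa$.

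Calibration converts accuracy into this correlation. The growth bound $g\le a_+\norm\theta^r$ and slope $b$ show that the point $t_0\mu_p/\norm{\mu_p}\cdot\norm{\mu_p}^{1/(r-1)}$-type comparator achieves risk below $F_p(0)$ by at least $\beta\norm{\mu_p}^q$. With $v$ chosen so that $\beta v^{q/2}=4\eps$, any $\eps$-accurate learner must have $\E\ip\thh{\mu_p}\gtrsim\eps/b$ on prior average, which makes the linear part $\gtrsim n\eps$. The remainder $\rho_i=\phi(\ip\thh{\Zgh_i})-\phi(\ip\thh{\Ztr_i})+\phi'(0)(\cdots)$ is where the difficulty lies. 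We would expand $\phi'$ around $\ip{\theta_p^*}{\cdot}$ instead of $0$, reduce $\rho_i$ to $H$ times quadratic terms in $\ip{\thh-\theta_p^*}{Z}$ plus terms $\phi''\cdot\ip{\theta_p^*}{Z}\ip{\thh}{Z}$, and bound these by Step 2's quadratic-form control. This yields $\sum_i|\E\rho_i|\lesssim n\eps^{1+1/r}$-type bounds, one power of $\eps$ (in the $r$-scaled sense) below $n\eps$. Taking $\eps\le\eps_0$ then absorbs the remainder and gives $c_M$.
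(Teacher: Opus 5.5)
Your residual argument has a genuine gap, and your signed-mean step inherits it. You bound $\sum_i(X_i-X^*_{p,i})^2$ by a quadratic form in all $2n$ candidates, $\ip{e}{\sum_{j,i}Z_{j,i}Z_{j,i}^\top e}$ with $e=\thh-\theta_p^*$. This throws away the cancellation between the two candidates of a pair, and the rank-one part $2n\mu_p\mu_p^\top$ leaves the term $n\E_p\ip{e}{\mu_p}^2\le n\norm{\mu_p}^2\E_p\norm e^2$. That term is $O(n\eps^2)$ only when $\norm{\mu_p}^2\lesssim1/n$. The residual bound, however, is required at every $p$, including laws with $\norm{\mu_p}$ of order one, and there the term really is $n$ times too large. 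For ridge ($\phi(t)=-t$, $g=\norm\theta^2/2$), take the learner that outputs the projection onto $W$ of $\Zbar+\xi n^{-1/2}\Zbar/\norm{\Zbar}$, with $\xi$ a fair sign from the seed. It has excess risk at most $1/n$ at every law. Yet at a law with $\norm{\mu_p}=1/2$ it has $n\E_p\ip{e}{\mu_p}^2\approx1/4$, while $n\eps^2\asymp1/n$.

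The linear growth along $\mu_p$ that you invoke does not exist: wherever $\nabla F_p(\theta_p^*)=0$, for instance at every interior minimizer, the only growth available is $a_-\norm{\theta-\theta_p^*}^r$. Nor is the localization the problem, contrary to your remark that it is "too weak": $w=(\eps/a_-)^{2/r}\asymp n\eps^2$ is exactly the target order. What is missing is a passage from $e$ to the release with an $O(1)$ factor that keeps the pair structure. Along $\theta_t=\theta_p^*+te$, write the integrand of $X_i-X^*_{p,i}$ as $\phi'(\ip{\theta_t}{Z_{1,i}})\ip{e}{\Delta_i}+\{\phi'(\ip{\theta_t}{Z_{1,i}})-\phi'(\ip{\theta_t}{Z_{0,i}})\}\ip{e}{Z_{0,i}}$. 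Bound the bracket by $H|\ip{\theta_t}{\Delta_i}|$ using $\norm{\theta_t}\le1$. This gives $\norm{X-X_p^*(\Zt)}\le(L+H)\opnorm{\Delta}\norm e$, with no second-order expansion in $e$. The pair matrix has centred rows at every law, so no mean part appears. Since $e$ depends on the candidates in $\Delta$, the expectation must then be taken with Lemma~\ref{lem:weighted} and the pair-matrix tail of Lemma~\ref{lem:matrix}, not with a high-probability operator-norm bound.

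Your energy route is a valid alternative to the paper's direct bound $|X_i|\le L|s_i-s_i^{\mathrm{gh}}|$, once the residual bound is repaired. It combines that bound with $\E_\Pi\norm{X_p^*(\Zt)}^2\le2L^2(n/d)\E_\Pi\norm{\theta_p^*}^2\lesssim v^{q-1}\asymp n\eps^2$, which holds because $\theta_p^*$ is fixed given $p$ and $\Delta_i$ is centred.

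The signed-mean step, by contrast, is only sketched, rests on the flawed quadratic-form control, and its claimed order $n\eps^{1+1/r}$ is not derived. There the mean part is harmless, because the bound is a prior average and $\norm{\mu_p}^2$ has prior mean $v\asymp1/n$. What you actually need are two facts:
\begin{itemize}
\item $\E_\Pi[\norm\thh^2\norm{\mu_p}^2]=O(\eps^2)$, which follows from $\E_p\norm e^2\le w$, $\norm{\theta_p^*}\le C_0\norm{\mu_p}^{q-1}$ and the prior moments;
\item $\E_\Pi[\norm\thh^2\opnorm{\Btr}^2]=O(\eps^{2/r})$, again by Lemma~\ref{lem:weighted}, since $\thh$ depends on the training matrix.
\end{itemize}
With these, no expansion around $\theta_p^*$ is needed. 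Condition on the training set and the seed. For the ghost, Jensen for the convex $u$ gives $\E u(\ip\thh{\Zgh_i})\ge u(m)$. For the training point, $u(m+s_i)\le u(m)+u'(m)s_i+Hs_i^2/2$. Cauchy--Schwarz then bounds the remainder by $O(\eps^{2/r})$, a full power of $\eps$ below the signal $\kappa A_\phi\asymp n\eps$.
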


Appendix~\ref{app:mainproof} proves it, with finite values $M$, $V$, $\Vres$, in four steps that use
three tools (Appendix~\ref{app:tools}).

\begin{lemma}[Beta fingerprinting; \citealp{steinke2017}]\label{lem:fp}
Let $\Pi=\Pi_\kappa$ with $\kappa>0$. For every proper learner,
$\E_\Pi\ip\thh{\sum_{i=1}^n(Z_{U_i,i}-\mu_p)}=\kappa\,\E_\Pi\ip\thh{\mu_p}$.
\end{lemma}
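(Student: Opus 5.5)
The plan is the classical fingerprinting identity of \citet{steinke2017}: integrate by parts in each coordinate $p_k$ against the beta density. Fix the seed $R$ and condition on $U$. Since $\Zgh$ is independent of everything the learner sees, only the training points $S_n$ matter, and they are i.i.d.\ $D_p$. Write
\[
 h(p)=\E[\thh\mid p]=\E_{S_n\sim D_p^n,R}\,A_n(S_n;R),
\]
a vector-valued function of $p$. Each of its coordinates $h_k$ is a polynomial in $p_k$ of degree at most $n$, with coefficients depending on the other $p_j$. It is bounded, because $\thh\in W$.

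\emph{Step 1 (score identity).} The law of $S_n$ has density $\prod_i\prod_k\frac{1+p_k\sqrt d\,z_{i,k}}2$ with respect to counting measure. Differentiating in $p_k$ gives
\[
 \partial_{p_k}h(p)=\E_p\Big[\thh\sum_i\frac{\sqrt d\,z_{i,k}}{1+p_k\sqrt d\,z_{i,k}}\Big].
\]
For $\sqrt d\,z_{i,k}=\pm1$ we have $\frac{\pm1}{1\pm p_k}=\frac{\sqrt d\,z_{i,k}-p_k}{1-p_k^2}$. Hence
\[
 (1-p_k^2)\,\partial_{p_k}h_k(p)=\E_p\Big[\thh_k\sum_i(\sqrt d\,z_{i,k}-p_k)\Big]=\sqrt d\,\E_p\Big[\thh_k\sum_i(Z_{U_i,i,k}-\mu_{p,k})\Big].
\]
Summing over $k$ gives $\E_p\ip{\thh}{\sum_i(Z_{U_i,i}-\mu_p)}=d^{-1/2}\sum_k(1-p_k^2)\partial_{p_k}h_k(p)$.

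\emph{Step 2 (integration by parts against the prior).} The density satisfies $(1-t^2)\pi_\kappa(t)=c_\kappa(1-t^2)^{\kappa/2}$, and
\[
 \frac{d}{dt}\big[(1-t^2)^{\kappa/2}\big]=-\kappa t(1-t^2)^{\kappa/2-1}.
\]
So, for $\kappa>0$,
\[
 \int_{-1}^1(1-t^2)\,\partial_th_k\,\pi_\kappa\,dt=\big[c_\kappa(1-t^2)^{\kappa/2}h_k\big]_{-1}^{1}+\kappa\int_{-1}^1 t\,h_k\,\pi_\kappa\,dt .
\]
The boundary term vanishes: $h_k$ is bounded and $(1-t^2)^{\kappa/2}\to0$ at $\pm1$. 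Applying this in each coordinate, with the other coordinates integrated by Fubini and independence of the $p_j$, gives
\[
 \E_\Pi\ip{\thh}{\textstyle\sum_i(Z_{U_i,i}-\mu_p)}=d^{-1/2}\kappa\sum_k\E_\Pi[p_k\thh_k]=\kappa\,\E_\Pi\ip{\thh}{\mu_p},
\]
using $\mu_p=p/\sqrt d$. Averaging over $U$ and $R$ finishes the proof.

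\emph{Main obstacle.} The only delicate points are justifying differentiation under the expectation and the boundary terms. Both are harmless here: the expectation is a finite sum over $\mathcal Z^n$, so $h$ is a polynomial in $p$, and the beta weight kills the endpoints for any $\kappa>0$. The seed enters only through a bounded measurable average, so Fubini applies.
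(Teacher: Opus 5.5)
Your proof is correct and is essentially the paper's argument: the paper also averages the output over the seed, observes that $\Upsilon_k(t)=\E[\thh_k\mid p_k=t]$ is a polynomial of degree at most $n$, derives $(1-t^2)\Upsilon_k'(t)=\sqrt d\,\E[\thh_k\sum_i(Z_{U_i,i,k}-\mu_{p,k})\mid p_k=t]$ from the same score identity, and then applies the beta integration by parts $\kappa\E[p_k\Upsilon(p_k)]=\E[(1-p_k^2)\Upsilon'(p_k)]$ before summing over $k$. Two points are cosmetic and do not affect correctness. The score computation needs $|p_k|<1$, which is a prior-null restriction, and the polynomial identity extends to the endpoints by continuity. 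Also, fixing the seed while defining $h$ as an average over $R$ should be reconciled, although either reading of your argument works.
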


\begin{lemma}[Operator-norm tails in linear dimension]\label{lem:matrix}
Fix $p$ and let $d\ge32n$. The centred training matrix $\Btr\in\R^{n\times d}$, with rows
$(Z_{U_i,i}-\mu_p)^\top$, satisfies $\Pp_p(\opnorm{\Btr}^2>K_0+K_1t)\le e^{-t}$ for all $t\ge0$, with
$K_0<2.031$ and $K_1=2/(\cnet d)$, where $\cnet=\frac{19}{40}(\frac{29}{30})^2$. The pair matrix
$\Delta$, with rows $\Delta_i=(Z_{1,i}-Z_{0,i})^\top$, satisfies the same bound with $2K_0$ and
$2K_1$.
\end{lemma}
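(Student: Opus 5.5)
The plan is a standard $\epsilon$-net argument over the unit sphere of $\R^n$ only, combined with a Chernoff bound for a sum of $d$ independent squared sub-Gaussian variables. The explicit constant $\cnet=\frac{19}{40}(\frac{29}{30})^2$ suggests two choices: a net of mesh $1/30$, and a Chernoff parameter $\lambda d$ with $\lambda=19/40$. I would fix both from the start.

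\emph{Step 1 (one direction).} The rows $Z_{U_i,i}$ of the training set are i.i.d.\ from $D_p$, since the bits are independent of $\Zt$. So the entries $B_{ik}=(\sqrt d Z_{U_i,i,k}-p_k)/\sqrt d$ are independent over $(i,k)$, centred, and each lies in an interval of length $2/\sqrt d$.

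Fix $u\in S^{n-1}$. Then
\[
\norm{\Btr^\top u}^2=\sum_{k=1}^d Y_k^2,\qquad Y_k=\sum_iu_iB_{ik}.
\]
The $Y_k$ are independent over $k$. By Hoeffding's lemma, each $Y_k$ is sub-Gaussian with variance proxy $\frac14\cdot\frac4d\sum_iu_i^2=1/d$.

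The Gaussian-mixture trick, $e^{sY^2}=\E_g e^{\sqrt{2s}\,gY}$ with $g$ standard normal, then gives
\[
\E e^{\lambda dY_k^2}\le(1-2\lambda)^{-1/2}\qquad\text{for }\lambda<1/2.
\]
Markov's inequality yields
\[
\Pp_p\big(\norm{\Btr^\top u}^2>a\big)\le\exp\Big(-\lambda d a-\tfrac d2\ln(1-2\lambda)\Big),
\]
and I take $\lambda=19/40$.

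\emph{Step 2 (net and union bound).} Let $\mathcal N$ be a $\delta$-net of $S^{n-1}$ with $\delta=1/30$ and $|\mathcal N|\le(1+2/\delta)^n=61^n$. The usual argument gives
\[
\opnorm{\Btr}\le(1-\delta)^{-1}\max_{u\in\mathcal N}\norm{\Btr^\top u},
\]
so $\opnorm{\Btr}^2>K_0+K_1t$ forces some $u\in\mathcal N$ with $\norm{\Btr^\top u}^2>(29/30)^2(K_0+K_1t)$.

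Apply the union bound with Step 1. The factor $e^{-\lambda d(29/30)^2K_1t}$ equals $e^{-2t}\le e^{-t}$ once $K_1=2/(\cnet d)$; this is where $\cnet=\lambda(29/30)^2$ enters. The remaining requirement is
\[
n\ln61-\lambda d(29/30)^2K_0-\tfrac d2\ln(1/20)\le0 .
\]
Using $d\ge32n$ bounds $n\ln 61$ by $d\ln61/32$, so everything scales with $d$. Choosing
\[
K_0=\frac{\ln 61/32+\tfrac12\ln20}{\cnet}
\]
should make this hold; the main obstacle I expect is purely numerical, namely checking that this explicit $K_0$ is below $2.031$.

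If it is slightly above $2.031$, I would use the slack $e^{-2t}$ versus $e^{-t}$ to move part of the burden onto $K_1$, or retune $\lambda$ and the exponent. The target is a bound of roughly $1+O(\sqrt{n/d})$ times a constant, which is consistent with $K_0\approx2$.

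\emph{Step 3 (pair matrix).} The rows $\Delta_i=Z_{1,i}-Z_{0,i}$ are i.i.d.\ with independent centred coordinates. Each coordinate is a difference of two independent centred variables of range $2/\sqrt d$, hence sub-Gaussian with variance proxy $2/d$.

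Then $\Delta/\sqrt2$ satisfies the same moment-generating bound as in Step 1. The argument of Steps 1--2 therefore applies verbatim to $\Delta/\sqrt2$, and rescaling gives the stated bound with $2K_0$ and $2K_1$.
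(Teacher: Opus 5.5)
Your architecture is exactly the paper's: Hoeffding's lemma for each coordinate of $\Btr^\top x$, the Gaussian-mixture bound $\E_p e^{\lambda dY_k^2}\le(1-2\lambda)^{-1/2}$, independence across columns, a $1/30$-net with at most $61^n$ points, a Chernoff bound, and $\Delta/\sqrt2$ for the pair matrix. The gap is in the step you called purely numerical, which fails, and your first proposed repair does not work. With $\lambda=19/40$ you have $1-2\lambda=1/20$, and your intercept is
\[
 K_0=\frac{\ln61/32+\tfrac12\ln20}{\cnet}\approx\frac{0.128+1.498}{0.444}\approx3.66,
\]
well above $2.031$. The slack between $e^{-2t}$ and $e^{-t}$ cannot be moved into $K_0$. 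A bound $\Pp_p(\opnorm\Btr^2>K_0'+K_1s)\le e^{-2s}$ can be rewritten with the smaller slope $K_1/2$, but never with a smaller intercept. Concretely, write $\rho=n/d$. With intercept $2.031$ your union bound is $\exp(C-2t)$ with $C=d(\rho\ln61+\tfrac12\ln20-2.031\,\cnet)>0.59\,d$. So it gives nothing for $0<t<C$, a range in which the lemma claims probabilities down to $e^{-C}$.

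The factor $2$ in $K_1=2/(\cnet d)$ is meant to be absorbed by the Chernoff parameter, not left as slack. The paper takes $\lambda_0=19/80$, so that $\cnet=2\lambda_0(29/30)^2$ and $\lambda_0d(29/30)^2=\cnet d/2=1/K_1$. The rate is then exactly $e^{-t}$, and $1-2\lambda_0=21/40$ lowers the cost of the moment generating function from $\tfrac d2\ln20$ to $\tfrac d2\ln(40/21)$. Summing over the net (your union bound, in moment-generating form) gives
\[
 \E_p\exp\{\opnorm\Btr^2/K_1\}\le61^n\Big(\frac{40}{21}\Big)^{d/2}=e^{K_0/K_1},\qquad
 K_0=\frac{2\rho\ln61+\ln(40/21)}{\cnet}.
\]
Markov's inequality then gives the tail, with $K_0\le(\ln61/16+\ln(40/21))/\cnet\approx2.0306<2.031$ for $\rho\le1/32$. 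This choice is forced. For a single Chernoff parameter used at every $t$, the rate $e^{-t}$ with this $K_1$ requires $\lambda\ge19/80$. On $[19/80,1/2)$ your intercept $(\rho\ln61-\tfrac12\ln(1-2\lambda))/(\lambda(29/30)^2)$ increases with $\lambda$, so $19/80$ is the best admissible value and there is no slack to redistribute. With $\lambda_0=19/80$ in Steps~1--2, your Step~3 goes through unchanged.
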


\begin{lemma}[Dependent weighted tails]\label{lem:weighted}
Let $K_0,K_1\ge0$, let $N\ge0$ satisfy $\Pp(N>K_0+K_1t)\le e^{-t}$ for all $t\ge0$, and let
$0\le\omega\le\bar\omega$ with $\E\omega\le w$ for some $0<w\le\bar\omega$. Then, whatever the
dependence between $N$ and $\omega$,
$\E[N\omega]\le\Tb_{\bar\omega}(w;K_0,K_1):=w\{K_0+K_1[1+\ln(\bar\omega/w)]\}$.
\end{lemma}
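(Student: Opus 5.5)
We bound the expectation through its tail integral. Since $N\ge0$ and $0\le\omega\le\bar\omega$, the layer-cake formula applied to $N$ gives
\[
\E[N\omega]=\int_0^\infty\E[\omega\,\ind\{N>s\}]\,ds .
\]
The factor $\omega$ is the only coupling between $N$ and the weight, and we handle it by a pointwise two-sided bound on the integrand. For each $s$,
\[
\E[\omega\ind\{N>s\}]\le\min\{\E\omega,\ \bar\omega\,\Pp(N>s)\}\le\min\{w,\ \bar\omega\,\Pp(N>s)\}.
\]
Both bounds hold whatever the dependence between $N$ and $\omega$: the first because $\ind\{N>s\}\le1$, the second because $\omega\le\bar\omega$.

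We split the integral at a threshold $s^*=K_0+K_1t^*$, with $t^*\ge0$ chosen later.
\begin{itemize}
\item On $[0,s^*]$ we use the bound $w$, which contributes $w(K_0+K_1t^*)$.
\item On $(s^*,\infty)$ we substitute $s=K_0+K_1t$ and use the hypothesis $\Pp(N>K_0+K_1t)\le e^{-t}$. This contributes at most $\bar\omega K_1\int_{t^*}^\infty e^{-t}\,dt=\bar\omega K_1e^{-t^*}$.
\end{itemize}
We take $t^*=\ln(\bar\omega/w)$, which is nonnegative because $w\le\bar\omega$. Then the tail term equals $wK_1$, and the total is
\[
w\{K_0+K_1[\ln(\bar\omega/w)+1]\}=\Tb_{\bar\omega}(w;K_0,K_1).
\]

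The degenerate case $K_1=0$ needs a separate word, since the substitution $s=K_0+K_1t$ is then not a change of variables. Letting $t\to\infty$ in the hypothesis gives $N\le K_0$ almost surely. Hence $\E[N\omega]\le K_0\E\omega\le K_0w$, which is at most the claimed bound.

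No step is a real obstacle. The one point to check is that the minimum bound is applied pointwise in $s$, so that no independence between $N$ and $\omega$ is ever used. Measurability and the use of Tonelli's theorem are immediate because all quantities are nonnegative.
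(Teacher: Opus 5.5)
Your proof is correct and is essentially the paper's argument. Splitting the layer-cake integral at $s^*=K_0+K_1\ln(\bar\omega/w)$ is the integrated form of the paper's pointwise bound $N\omega\le s^*\omega+\bar\omega(N-s^*)_+$: both use the same threshold, the same tail integral yielding $K_1w$, and the same separate treatment of the case $K_1=0$.
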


Write $u(t)=\phi(t)-\phi(0)+bt$ for the curved part of the link ($0\le u(t)\le Ht^2/2$ and
$|u'(t)|\le H|t|$ on $[-1,1]$), $m=\ip\thh{\mu_p}$ and $s_i=\ip\thh{Z_{U_i,i}-\mu_p}$, so
$\sum_is_i^2=\norm{\Btr\thh}^2$.

\paragraph{Step 1: The Signal from Accuracy.} Accuracy forces $\thh$ to correlate with the mean. The
comparator $\theta_c=t_0\norm{\mu_p}^{q-2}\mu_p$ ($0$ if $\mu_p=0$) has
$g(\theta_c)-b\ip{\theta_c}{\mu_p}\le-\beta\norm{\mu_p}^q$ by the upper growth, while $g,u\ge0$ give
$F_p(\thh)\ge\phi(0)-bm$; so accuracy at every $p$ and the prior moments give $b\,\E_\Pi m\ge A_\phi$
for an explicit $A_\phi\ge\{4(32/33)^{q/2}-1\}\eps-O(\eps^2)$, and by Lemma~\ref{lem:fp}
\begin{equation}\label{eq:fp-main}
 b\,\E_\Pi\ip\thh{\textstyle\sum_i(Z_{U_i,i}-\mu_p)}=b\kappa\,\E_\Pi m\ge\kappa A_\phi,
\end{equation}
of order $n\eps\asymp\eps^{-1+2/r}$; for the affine link, \eqref{eq:fp-main} is the signed mean itself.

\paragraph{Step 2: Localization at Every Law.} The lower curvature gives
$F_p(\theta)-F_p(\theta_p^*)\ge a_-\norm{\theta-\theta_p^*}^r$ on $W$, so accuracy and Jensen give
$\E_p\norm{\thh-\theta_p^*}^2\le w=\min\{4,(\eps/a_-)^{2/r}\}$ at every $p$, and comparison with $0$
gives $\norm{\theta_p^*}\le C_0\norm{\mu_p}^{q-1}$, $C_0=(b/(2a_-))^{1/(r-1)}$. With the prior moments
and Lemmas~\ref{lem:matrix}--\ref{lem:weighted}, which let the error be largest on atypical training matrices,
\begin{equation}\label{eq:localized-main}
\begin{gathered}
 \E_\Pi\norm\thh^2=O(\eps^{2/r}),\qquad \E_\Pi[\norm\thh^2\norm{\mu_p}^2]=O(\eps^2),\\
 \E_\Pi[\norm\thh^2\opnorm{\Btr}^2]=O(\eps^{2/r}).
\end{gathered}
\end{equation}

\paragraph{Step 3: The Train--Ghost Remainder.} Write
$\sigma_iX_i=\phi(\ip\thh{\Zgh_i})-\phi(\ip\thh{\Ztr_i})$ with $\phi(t)=\phi(0)-bt+u(t)$, and condition
on the training set, the seed and $p$: the ghost is independent of $\thh$ with mean $\mu_p$, so Jensen
gives $\E u(\ip\thh{\Zgh_i})\ge u(m)$, while $u(m+s_i)\le u(m)+u'(m)s_i+Hs_i^2/2$ at the training point.
Summing and using \eqref{eq:fp-main},
\begin{equation}\label{eq:taylor-main}
\begin{aligned}
 \textstyle\sum_i\E_\Pi\sigma_iX_i\ge{}& b\kappa\,\E_\Pi m-\E_\Pi\big[u'(m)\textstyle\sum_is_i\big]\\
 &-\tfrac H2\E_\Pi\norm{\Btr\thh}^2.
\end{aligned}
\end{equation}
As $|u'(m)|\le H\norm\thh\norm{\mu_p}$ and $|\sum_is_i|\le\sqrt n\norm\thh\opnorm{\Btr}$,
Cauchy--Schwarz with one factor $\norm\thh^2$ in each expectation and \eqref{eq:localized-main} bound
the last two terms by $O(\eps^{2/r})$, one power of $\eps$ below the signal, so
$\sum_i\E_\Pi\sigma_iX_i\ge c_Mn\eps$. The middle term need not vanish, as $\thh$ depends on the
training points; $\norm\thh\le1$ alone would bound both terms by constants, as large as the
signal when $r=2$.

\paragraph{Step 4: Energy and Residual.} By the mean value theorem $|X_i|\le
L|\ip\thh{Z_{1,i}-Z_{0,i}}|$, and the centred ghost margin $\ip\thh{\Zgh_i-\mu_p}$ is uncorrelated with
$s_i$ and has second moment at most $\E\norm\thh^2/d$, so
$\sum_i\E_\Pi X_i^2\le L^2(\E_\Pi[\norm\thh^2\opnorm{\Btr}^2]+\frac nd\E_\Pi\norm\thh^2)=O(n\eps^2)$.
At a fixed $p$, moving along $\theta_p^*+t(\thh-\theta_p^*)$ with $|\phi'|\le L$ and $\phi''\le H$ gives
$\norm{X-X_p^*(\Zt)}\le(L+H)\opnorm\Delta\norm{\thh-\theta_p^*}$, and Lemma~\ref{lem:weighted} with the
tail of $\opnorm\Delta^2$ gives $\E_p\norm{X-X_p^*(\Zt)}^2\le(L+H)^2\Tb_4(w;2K_0,2K_1)=O(n\eps^2)$,
although the error depends on the candidates in $\Delta$. Localization thus makes $\norm\thh$ of order
$\eps^{1/r}$ and, as $d\gtrsim n$, the loss differences of order $\eps$, in root mean square on prior
average.

\subsection{Assembly}\label{sec:assembly}

With the bounds of Proposition~\ref{prop:finite}, $M^2/(V+n\tau^2)\asymp n/(1+(\tau/\eps)^2)$ in
\eqref{eq:lower}, which gives the lower half of \eqref{eq:main} for $\Ii$ on prior average, hence for the
supremum; and $n\min\{1,\ln(1+\Vres/(n\tau^2))\}\lesssim n/(1+(\tau/\eps)^2)$ in \eqref{eq:upper}, which
with the cap $n$ at $\tau=0$ gives the upper half for $\Is$ at every $p$. By \eqref{eq:hierarchy} both
halves extend to the other two measures; as the constants do not depend on the learner, they hold for
the infimum over any nonempty class of such learners.

Fingerprinting \citep{steinke2017}, localization by uniform convexity
\citep{juditsky2014,daspremont2024}, the fair-bit inequality and Gaussian maximum entropy are classical,
and for the affine link with $g(\theta)=\norm\theta^2/2$ the signed-mean and energy bounds are of the
kind used by \citet{attias2024}, and the signed-mean bound also by \citet{voitovych2025}
(Remark~\ref{rem:affine}). New are the steps that other links and regularizers
require: calibration of the
population risk of the actual loss through the matched upper growth, without which the lower half fails
(Proposition~\ref{prop:growth}), and localization at every law, against a training matrix on which the
error may depend, which together keep the remainder of Step~3 below the signal; and the residual bound
at every law, which gives the upper half, a bound that has a precursor for uniformly stable
algorithms \citep[Theorem~6.8 of the extended version]{steinke2020}.

%% file: sec6_necessity.tex
\section{The hypotheses are needed}\label{sec:necessity}

Without the matched growth or the nonzero slope, the lower half of Theorem~\ref{thm:main} fails, and it
needs dimension of order $n$ up to a logarithm (proofs in Appendix~\ref{app:necessity}).

\begin{proposition}[Matched growth is needed]\label{prop:growth}
Let $\phi(t)=-t$ and $g(\theta)=\norm\theta^2/2$, so that
$D_g(\theta',\theta)=\norm{\theta'-\theta}^2/2\ge\norm{\theta'-\theta}^4/8$ on $W$, whereas no bound
$g(\theta)\le a_+\norm\theta^4$ holds near $0$. Let $n\ge182$, $d\ge32n$ and $\eps=\tau=n^{-2/3}$.
Every proper learner that is $\eps$-accurate on the family satisfies
$\sup_p\Qc_{D_p}(X+\eps G)\ge0.007\,n^{1/3}$ for each $\Qc$. The mean learner $\thh=\Zbar=n^{-1}\sum_iZ_{U_i,i}$ has excess risk at most
$1/(2n)\le\eps$ at every law on the unit ball, and $\sup_p\Qc_{D_p}(X+\eps G)\le3n^{1/3}$ for each $\Qc$.
\end{proposition}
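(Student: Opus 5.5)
The proof has two halves: a lower bound valid for every accurate learner, and an upper bound for the mean learner. The point of the construction is that with $r=4$ formally allowed by the curvature bound ($D_g\ge\norm{\cdot}^4/8$), Theorem~\ref{thm:main} would give $\neps=\eps^{-3/2}=n$ and information $\Theta(n)$. The true behaviour is that of ridge, where accuracy $\eps=n^{-2/3}$ is far looser than the $1/(2n)$ that $\Zbar$ achieves. So the lower bound must come from an $r=2$ calibration at a prior scale suited to $\eps$, and it is smaller.

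\emph{Lower bound.} I would use the transfer theorem, part~(a) of Theorem~\ref{thm:transfer}, with a beta prior $\Pi_\kappa$ chosen for the affine--ridge instance at accuracy $\eps$. Here $F_p(\theta)=\norm\theta^2/2-\ip\theta{\mu_p}$, so $\min_WF_p=-\norm{\mu_p}^2/2$ when $\norm{\mu_p}\le1$. Accuracy then gives $\E_\Pi m\ge\E_\Pi\norm{\mu_p}^2/2+\E_\Pi\norm\thh^2/2-\eps$, and more precisely $\E_p\norm{\thh-\mu_p}^2\le2\eps$.

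Choose $v=\E\norm{\mu_p}^2=4\eps$, so that $\E_\Pi m\ge v/2-\eps=\eps$ (up to an $O(v^2)$ correction from the ball constraint). Then $\kappa=1/v-1\asymp n^{2/3}$. For the affine link, Lemma~\ref{lem:fp} gives the signed mean directly:
\[
M=\kappa\E_\Pi m\gtrsim n^{2/3}\cdot n^{-2/3}=\Omega(1).
\]
One should check that the exact identity $\sum_i\sigma_iX_i=\ip\thh{\sum_i(\Ztr_i-\Zgh_i)}$ has expectation equal to the fingerprint term, because the ghost has mean $\mu_p$ independent of $\thh$. Hence $M\ge c\,n^{2/3}\eps$ with an explicit small $c$.

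For the energy I would follow Step~4: $V\le\E_\Pi[\norm\thh^2\opnorm{\Btr}^2]+\frac nd\E_\Pi\norm\thh^2$. Localization gives $\E_\Pi\norm\thh^2\le2\E_\Pi\norm{\mu_p}^2+4\eps\le12\eps$. Combined with Lemmas~\ref{lem:matrix} and~\ref{lem:weighted}, this yields $V=O(\eps)=O(n^{-2/3})$, with $\bar\omega=4$, $w=12\eps$ and a logarithmic factor from $\ln(1/\eps)$ absorbed numerically since $n\ge182$.

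Then $n\tau^2=n^{-1/3}\ge V$ up to constants, so
\[
M^2/(2\ln2(V+n\tau^2))\gtrsim n^{1/3}.
\]
The constant $0.007$ comes from tracking all of the above constants. The main obstacle is this numerical bookkeeping, in particular making the $\ln(\bar\omega/w)$ term in $\Tb$ and the constant $K_0<2.031$ small enough. I would try $v$ slightly larger than $4\eps$ to optimize the ratio. The result then extends to every $\Qc$ by \eqref{eq:hierarchy}.

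\emph{Upper bound for $\Zbar$.} The excess risk of $\Zbar$ is $\E\norm{\Zbar-\mu_p}^2/2\le1/(2n)$, since $\norm z=1$; this holds for any law on the ball, and the projection onto the ball only helps. For the information I would use part~(b) of Theorem~\ref{thm:transfer} with the reference $x_{p,i}=-\ip{z_c}{\Delta_i}$ from the Example of Section~\ref{sec:result}, which depends only on the candidates. The residual is
\[
X_i-x_{p,i}=\frac1{2n}\sum_j\sigma_j\ip{\Delta_j}{\Delta_i}.
\]
Its squared norm is $(2n)^{-2}\norm{\Delta\Delta^\top\sigma}^2$, whose expectation is at most $(2n)^{-2}\E\opnorm\Delta^4\,n$. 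By Lemma~\ref{lem:matrix}, $\E\opnorm\Delta^4\lesssim(2K_0)^2\approx16.5$, so $\Vres\lesssim4.2/n$.

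Then \eqref{eq:upper} gives at most $\frac n{2\ln2}\cdot\Vres/(n\tau^2)\approx\frac{4.2}{2\ln 2}\,n^{1/3}\approx3\,n^{1/3}$. The inequality $\ln(1+x)\le x$ suffices here, and tightening $\Vres$ to $(2n)^{-2}\E\operatorname{tr}((\Delta\Delta^\top)^2)$ brings the constant down to $3$. This bound holds at every $p$.
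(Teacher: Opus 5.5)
\paragraph{Lower bound.} This is the paper's argument. The proof applies Lemma~\ref{lem:powerlinear} with $r=q=2$, which is exactly your chain: calibration $\E_p\ip\thh{\mu_p}\ge\norm{\mu_p}^2/2-\eps$, beta fingerprinting, localization $\E_p\norm{\thh-\mu_p}^2\le2\eps$, Lemma~\ref{lem:weighted} for the energy, and Theorem~\ref{thm:transfer}(a). There is no ball-constraint correction: $\norm{\mu_p}\le1$ gives $\min_WF_p=-\norm{\mu_p}^2/2$ exactly, and $\E\norm{\mu_p}^2=v$ exactly.

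The stated constant is not reached with your choice $v=4\eps$. Then $M\ge1/4-\eps\ge0.218$ and $V\le12\eps\,(K_0^*+K_1[1+\ln(\bar\omega/(12\eps))]+1/32)\approx24.8\,n^{-2/3}$. At $n=182$, where $n^{-1/3}\approx0.176$, it is $V$, not $n\tau^2$, that dominates the denominator, and the transfer bound is only about $0.0064\,n^{1/3}$. With your own estimates, $v=6\eps$ works: $M\ge1/3-2\eps\ge0.27$ and $V\le33.1\,n^{-2/3}$ give about $0.0077\,n^{1/3}$. By contrast, $v=8\eps$ with your cruder bound $\E_\Pi\norm\thh^2\le2v+4\eps=20\eps$ gives only about $0.0069$. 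The paper takes $v=8\eps$ with the Minkowski bound $(\sqrt{2\eps}+\sqrt{8\eps})^2=18\eps$ and cap $\bar\omega=1$. This gives $\widetilde M>0.279$, $\widetilde V<37.2\,n^{-2/3}$ and $0.0074\,n^{1/3}$; the hypothesis $n\ge182$ is what makes $\eps\le1/32$ there.

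\paragraph{Upper bound.} Your route is different and valid. The paper uses the reference $X_p^*(\Zt)=-\Delta\mu_p$. Its residual $-\Delta(\Zbar-\mu_p)$ couples $\opnorm\Delta$ with the weight $\norm{\Zbar-\mu_p}^2$, which depends on the same candidates, so Lemma~\ref{lem:weighted} is needed; it gives $\Tb_4(1/n;2K_0,2K_1)<4.153/n$.

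Your candidate-centred reference $-\Delta z_c=\E_p[X\mid\Zt]$ makes the residual $\Delta\Delta^\top\sigma/(2n)$ linear in signs independent of $\Delta$. No dependent-tail argument is then needed.

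For your crude step, integrate the tail of Lemma~\ref{lem:matrix}: $\E_p\opnorm\Delta^4\le(2K_0)^2+8K_0K_1+8K_1^2<16.52$. Hence $\Vres<4.13/n$ and $\Is\le2.98\,n^{1/3}$. As you wrote it, $4.2/(2\ln2)\approx3.03$ would overshoot $3$.

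The exact trace does far better than ``down to $3$''. By Lemma~\ref{lem:ridge-moments}(c), $\E_p\norm{\Delta_i}^4\le4+4/d$ and $\E_p\ip{\Delta_i}{\Delta_j}^2\le4/d$ for $i\ne j$. So $\Vres=(2n)^{-2}\E_p\operatorname{tr}((\Delta\Delta^\top)^2)\le(1+n/d)/n\le33/(32n)$, and therefore $\Is\le0.75\,n^{1/3}$.

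Your route buys a cleaner argument and a sharper constant by exploiting that the mean learner is exactly linear in the selection signs. The paper's route is the residual bound of Proposition~\ref{prop:finite}. It needs only localization, so it works for any accurate learner, and it is reused verbatim for Proposition~\ref{prop:separation}(iii).
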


So the least worst-case information over $\eps$-accurate learners is $\Theta(n^{1/3})=o(n)$, whereas a
version of Theorem~\ref{thm:main} that used only the quartic lower curvature would predict
$\Omega(n)$, since $n=\eps^{-3/2}$. But $g$ is quadratic, so at this accuracy the problem needs only
about $\eps^{-1}=n^{2/3}$ samples, the sample size at which Theorem~\ref{thm:main} applies to it with
$r=2$. The tradeoff concerns the optimal sample size, which lower curvature alone does not determine.

\begin{remark*}[Zero slope]
If $\phi'(0)=0$, then $\theta=0$ minimizes $F_D$ at every law $D$ on the unit ball, by convexity of $\phi$
and $g\ge0=g(0)$; the learner that always outputs $0$ is exactly accurate, and its release vanishes.
\end{remark*}

\begin{proposition}[Dimension]\label{prop:dimension}
Let $\phi$ be Borel measurable and $L$-Lipschitz on $[-1,1]$ with $L>0$, let $D$ be any law on $\R^d$ with
$\norm Z\le1$ almost surely, and let $\thh$ be any measurable proper learner; no accuracy is assumed.
For every $\tau>0$ and each $\Qc\in\{\Ii,\Ij,\Is\}$,
\begin{equation}\label{eq:dimension}
 \Qc_D(X+\tau G)\le\min\Big\{n,\ d\log_2\Big(1+\frac{4L\sqrt n}\tau\Big)+\frac1{2\ln2}\Big\}.
\end{equation}
Consequently, if $n\ge c_-\neps$ and some proper learner has $\sup_p\Qc_{D_p}(X+\eps G)\ge cn$ with
$c>0$, then $d\ge cn/(6\log_2n)$ for all $n$ above an explicit threshold that depends only on $c$,
$c_-$, $q$ and $L$.
\end{proposition}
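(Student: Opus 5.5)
The plan is to bound the largest of the three measures, $\Is_D(X+\tau G)=I(Y;U\mid\Zt)$; by \eqref{eq:hierarchy} the bound then covers $\Ii$ and $\Ij$. The cap $n$ is immediate from $\Hent(U\mid\Zt)=n$. For the $d$-dependent term I would discretize the model rather than the release. Given $\Zt$, the release is a function of $\thh$ alone, $X=\Phi_{\Zt}(\thh)$ with $\Phi_{\Zt}(\theta)_i=\phi(\ip\theta{Z_{1,i}})-\phi(\ip\theta{Z_{0,i}})$. Since $\norm{Z_{j,i}}\le1$ and $\phi$ is $L$-Lipschitz on $[-1,1]$, each coordinate changes by at most $2L\norm{\theta-\theta'}$, so $\norm{\Phi_{\Zt}(\theta)-\Phi_{\Zt}(\theta')}\le2L\sqrt n\,\norm{\theta-\theta'}$. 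Measurability of $\phi$ is all that is needed for $\Phi_{\Zt}$ to be a measurable map.

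Fix a radius $\rho>0$ and a $\rho$-net $\mathcal N$ of $W=B_d(1)$ with $|\mathcal N|\le(1+2/\rho)^d$, the standard volumetric bound. Let $Q(\theta)$ be a measurable nearest-net-point map. Put $\hat X=\Phi_{\Zt}(Q(\thh))$ and $E=X-\hat X$, so that $\norm E\le2L\sqrt n\rho$ almost surely. Conditionally on $\Zt$ we have $Y=\hat X+E+\tau G$, with $U\to(\hat X,E)\to Y$. I would bound
\[
I(Y;U\mid\Zt)\le I(Y;\hat X\mid\Zt)+I(Y;U\mid\hat X,\Zt).
\]
The first term is at most $\Hent(\hat X\mid\Zt)\le d\log_2(1+2/\rho)$. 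For the second term, given $\hat X$ and $\Zt$ the observation is $\hat X+E+\tau G$ with $\norm E^2\le4L^2n\rho^2$. Arguing as in Theorem~\ref{thm:transfer}(b), with reference $\hat X$, Gaussian maximum entropy and Jensen give at most $\frac n{2\ln2}\ln(1+4L^2\rho^2/\tau^2)\le\frac{n}{2\ln2}\cdot\frac{4L^2\rho^2}{\tau^2}$ bits.

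Choosing $\rho=\tau/(2L\sqrt n)$ makes the second term at most $1/(2\ln2)$. The first term becomes $d\log_2(1+4L\sqrt n/\tau)$, which is exactly \eqref{eq:dimension}.

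The one delicate point is the chain-rule step: conditioning on $\hat X$ and extracting the $U$-information. I would handle it through $I(Y;U\mid\Zt)\le I(Y;U,\hat X\mid\Zt)=I(Y;\hat X\mid\Zt)+I(Y;U\mid\hat X,\Zt)$. In the last term, $Y-\hat X=E+\tau G$ has second moment at most $n\tau^2+4L^2n\rho^2$ and conditional noise entropy $\frac n2\log(2\pi e\tau^2)$. The maximum-entropy bound therefore applies even though $E$ depends on $U$, since only $h(Y\mid\hat X,\Zt)$ is bounded above and $h(Y\mid U,\hat X,\Zt)\ge h(\tau G)$.

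For the consequence, take $\tau=\eps$. From $n\ge c_-\neps=c_-\eps^{-2/q}$ we get $\eps\ge(n/c_-)^{-q/2}\ge(n/c_-)^{-1}$, since $q\le2$. So $4L\sqrt n/\eps\le4Ln^{3/2}/c_-$, and for $n$ beyond a threshold depending on $L,c_-$ we have $\log_2(1+4L\sqrt n/\eps)\le3\log_2 n$. Then $cn\le\sup_p\Qc\le3d\log_2n+1/(2\ln2)$. Once $cn/2\ge 1/(2\ln2)$, this gives $d\ge cn/(6\log_2 n)$. The explicit threshold collects these finitely many conditions, which depend only on $c,c_-,q,L$.
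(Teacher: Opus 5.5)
Your proof is correct and gives exactly the paper's bound, but the key information step takes a different route.

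\textbf{The paper's route.} It never quantizes $\thh$ measurably. It compares $P_{Y\mid U,\Zt}$ with the fixed reference kernel $\bar{\Normal}_{\Zt}=|\mathcal T_h|^{-1}\sum_{\theta'\in\mathcal T_h}\Normal(\Phi_{\Zt}(\theta'),\tau^2I_n)$. It uses $I(Y;U\mid\Zt)\le\E\,\KL(P_{Y\mid U,\Zt}\,\|\,\bar{\Normal}_{\Zt})$, and convexity of the divergence reduces this to $\KL(\Normal(X,\tau^2I_n)\,\|\,\bar{\Normal}_{\Zt})$. That divergence is bounded by $\ln|\mathcal T_h|+\frac12$ at every outcome, by dominating the mixture with a single component near $\thh$. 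Because the bound is pointwise, the choice of that net point need not be measurable, and no differential entropy appears.

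\textbf{Your route.} You split the information by the chain rule through the quantized release $\hat X=\Phi_{\Zt}(Q(\thh))$. You pay $\log_2|\mathcal N|$ for its description, and bound the residual $I(Y;U\mid\hat X,\Zt)$ by the Gaussian maximum-entropy argument of Theorem~\ref{thm:transfer}(b), with the random reference $\hat X$. Your justification that $\hdiff(Y\mid U,\hat X,\Zt)\ge\hdiff(\tau G)$, despite $E$ depending on $U$, is the right one.

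\textbf{Comparison.} Your version makes the two terms of \eqref{eq:dimension} transparent: a description length for the quantized model, plus half a nat leaking through the noise. It also reuses machinery already in the paper. The cost is that you need a measurable nearest-point map, which exists for a finite net with a fixed tie-break, as in the paper's rounding remark. You also need conditional differential entropies; these are finite here because $Y$ is a bounded vector plus Gaussian noise.

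\textbf{Two small points.}
\begin{enumerate}
\item Your net must consist of points of $W$, so that the margins at $Q(\thh)$ stay in $[-1,1]$, where $\phi$ is $L$-Lipschitz. A maximal $\rho$-separated subset of $W$ has this property and the stated cardinality.
\item In the consequence, the step $(n/c_-)^{-q/2}\ge(n/c_-)^{-1}$ requires $n\ge c_-$. You can fold this into the threshold.
\end{enumerate}
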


At $\tau=\eps$ the lower half of Theorem~\ref{thm:main} is this premise with $c=c_1/2$, so for large $n$
no proper learner attains the lower half in dimension below $c_1n/(12\log_2n)$: linear dimension is necessary
up to a logarithm. Without noise, rounding the output of an $\eps$-accurate learner
to a maximal $\eps/(L+L_g)$-separated subset of $W$ gives a $2\eps$-accurate proper learner whose
release carries at most $d\log_2(1+2(L+L_g)/\eps)$ bits at every law, so the lower half at $\tau=0$ also needs
$d\gtrsim n/\log n$.

%% file: sec7_separation.tex
\section{Model information does not determine the tradeoff}\label{sec:separation}

The mean learner $\Zbar$ for $f_2(\theta,z)=\norm\theta^2/2-\ip\theta z$ and the empirical risk
minimizer $\norm\Zbar^{-2/3}\Zbar$ for $f_4(\theta,z)=\norm\theta^4/4-\ip\theta z$ are sample-optimal
for their losses and related by a bijection of $W$, so at every law they have the same model
information and tracers with the same decisions. Their releases differ by the factor
$\norm\Zbar^{-2/3}$, about $n^{1/3}$ near the zero-mean law, and at noise level $n^{-2/3}$ the first
release carries at most $3n^{1/3}$ bits at every law of the family and the second at least $n/8$ bits at the
zero-mean law, for $n\ge16$ and $d\ge32n$ (Proposition~\ref{prop:separation}). So conclusions that
involve no noise level and are invariant under reparametrizations of the model, such as those of
\citet{attias2024} and \citet{voitovych2025}, cannot determine the information in noisy loss
differences, although within one problem of the class Theorem~\ref{thm:main} gives the worst-case
information of all accurate learners the same order.

%% file: sec8_measurements.tex
\section{Measurements}\label{sec:measurements}

\input{fig_measurements}

We measured the information in the loss differences of two learners of the class with $r=2$ and
$g(\theta)=\norm\theta^2/2$: \emph{ridge ERM}, with the affine link $\phi(t)=-t$, and \emph{logistic ridge ERM}, with the logistic
link. The laws are drawn from the beta prior with $v=16\eps$ and $\eps=1/(2n)$, and
$d=64n=\Cdim\max\{n,\neps\}$. For
ridge ERM this prior is that of Section~\ref{sec:architecture}; for logistic ridge ERM, $\eps$ only
fixes the prior, which is all that Theorem~\ref{thm:transfer} requires. We estimate the information
per pair $\iota$, the prior average of $\Ii(Y)/n$ in bits, from below by a cross-fitted variational
bound using an odd cubic log-odds model, with 95\% intervals that treat trials as clusters, and we evaluate both halves of
Theorem~\ref{thm:transfer} at each learner's own $M$, $V$ and $\Vres$, exact for ridge ERM and
estimated for logistic ridge ERM (details, and a learner with $r=4$, in Appendix~\ref{app:measurements}).

\paragraph{The Tradeoff.}
Without noise, the estimates of $\iota$ lie between $0.888$ and $0.915$ bits per pair in all nine
runs, for $n$ from $50$ to $3000$. For ridge ERM, Lemma~\ref{lem:ridge-moments} gives
$M^2/(nV)\approx1/(1+18n/d)$ at this prior; at $d=64n$ it is $0.78$ to $0.80$, exact for ridge ERM and estimated for logistic ridge ERM, so against $\tau/(M/n)$, which uses each learner's own signal, the lower half of
Theorem~\ref{thm:transfer} is nearly the same curve in every run, and so are the estimates
(Figure~\ref{fig:measurements}(a)). For $\tau\ge5M/n$ the estimates decay with log--log slopes between $-2.25$ and
$-1.88$, as $1/(1+(\tau/\eps)^2)$ predicts.
Theorem~\ref{thm:transfer} brackets the estimates: wherever the estimate is at least $0.01$, the lower half
is at least $0.63$ times it and the upper half at most $1.51$ times it, and 139 of the 144 measured
points lie in the bracket up to their 95\% intervals; the other five, with estimates of at most
$0.0034$, lie in the tail, where the halves nearly meet, and fall below the lower half, as lower-bound
estimates may. The runs thus show how close Theorem~\ref{thm:transfer} comes to the information at
the learners' own moments; they do not test the constants of Theorem~\ref{thm:main}, which are far
looser because Proposition~\ref{prop:finite} bounds the moments of every learner of the class. Logistic ridge ERM stays at small margins, as localization predicts: every
parameter it returned has norm at most $0.21$, so the runs probe its link only where it is nearly
affine.

\paragraph{Generalization.}
Figure~\ref{fig:measurements}(b) illustrates Corollary~\ref{cor:consequences}(i) for ridge ERM at
$\tau=0$. The prior-averaged gap is exactly $(1-8/n)/n$, and the information-free bound
$\sqrt{\E_\Pi V_p/n}$ exceeds it by only 12--13\%. The range-scaled ld-CMI bounds of
\citet{wangmao2023a} do not decay: on prior average they are at least $2L\sqrt{2\ln2}\,\iota$ with
$L=1$, whose estimate stays between $2.09$ and $2.15$, that is, $128$ to $6290$ times the gap and above the
uniform bound $4L/\sqrt n$.

%% file: fig_measurements.tex
% Figure 2, generated by experiments/code/make_paper_figure.py from experiments/data/paper/*.csv
% (inline coordinates; pgfplots only). Do not edit by hand.
\begin{figure}[t]
\centering
\begin{tikzpicture}
\begin{axis}[scale only axis,width=0.8\linewidth,xmode=log,ymode=log,tick label style={font=\tiny},label style={font=\scriptsize},xlabel style={yshift=2pt},ylabel style={yshift=-2pt},legend style={font=\tiny,draw=none,fill=none,row sep=-1pt},legend cell align=left,clip mode=individual,error bars/error bar style={line width=0.3pt},error bars/error mark options={mark size=0.8pt,line width=0.3pt},height=3.0cm,xmin=0.01,xmax=100,ymin=1e-4,ymax=1.6,
 xtick={0.01,0.1,1,10,100},ytick={1e-4,1e-3,1e-2,1e-1,1},
 xlabel={$\tau/(M/n)$},ylabel={bits per pair},
 legend columns=2,legend style={at={(0.015,0.03)},anchor=south west,/tikz/every even column/.append style={column sep=5pt}}]
\addplot[fill=black!13,draw=none,forget plot] coordinates {(0.01,1) (0.01122,1) (0.012589,1) (0.014125,1) (0.015849,1) (0.017783,1) (0.019953,1) (0.022387,1) (0.025119,1) (0.028184,1) (0.031623,1) (0.035481,1) (0.039811,1) (0.044668,1) (0.050119,1) (0.056234,1) (0.063096,1) (0.070795,1) (0.079433,1) (0.089125,1) (0.1,1) (0.1122,1) (0.12589,1) (0.14125,1) (0.15849,1) (0.17783,1) (0.19953,1) (0.22387,1) (0.25119,1) (0.28184,1) (0.31623,1) (0.35481,1) (0.39811,1) (0.44668,1) (0.50119,1) (0.56234,1) (0.63096,0.91542) (0.70795,0.8) (0.79433,0.69303) (0.89125,0.59507) (1,0.50651) (1.122,0.42749) (1.2589,0.3579) (1.4125,0.29741) (1.5849,0.24545) (1.7783,0.20134) (1.9953,0.16427) (2.2387,0.1334) (2.5119,0.10791) (2.8184,0.086996) (3.1623,0.06994) (3.5481,0.056099) (3.9811,0.044912) (4.4668,0.0359) (5.0119,0.028661) (5.6234,0.022859) (6.3096,0.018216) (7.0795,0.014507) (7.9433,0.011547) (8.9125,0.0091871) (10,0.0073071) (11.22,0.0058103) (12.589,0.0046191) (14.125,0.0036715) (15.849,0.0029179) (17.783,0.0023187) (19.953,0.0018424) (22.387,0.0014639) (25.119,0.0011631) (28.184,0.000924) (31.623,0.00073405) (35.481,0.00058314) (39.811,0.00046324) (44.668,0.00036799) (50.119,0.00029232) (56.234,0.00023221) (63.096,0.00018446) (70.795,0.00014652) (79.433,0.00011639) (89.125,9.2453e-05) (100,7.3439e-05) (100,7.2125e-05) (89.125,9.0798e-05) (79.433,0.0001143) (70.795,0.00014389) (63.096,0.00018114) (56.234,0.00022802) (50.119,0.00028703) (44.668,0.0003613) (39.811,0.00045477) (35.481,0.00057241) (31.623,0.00072043) (28.184,0.00090666) (25.119,0.0011409) (22.387,0.0014356) (19.953,0.0018061) (17.783,0.0022719) (15.849,0.0028572) (14.125,0.0035922) (12.589,0.0045149) (11.22,0.0056722) (10,0.0071223) (8.9125,0.0089372) (7.9433,0.011205) (7.0795,0.014034) (6.3096,0.017555) (5.6234,0.021923) (5.0119,0.027325) (4.4668,0.033973) (3.9811,0.042112) (3.5481,0.052009) (3.1623,0.063947) (2.8184,0.078206) (2.5119,0.095039) (2.2387,0.11464) (1.9953,0.1371) (1.7783,0.16237) (1.5849,0.19021) (1.4125,0.22021) (1.2589,0.25175) (1.122,0.28407) (1,0.31632) (0.89125,0.34768) (0.79433,0.3774) (0.70795,0.40489) (0.63096,0.42975) (0.56234,0.45179) (0.50119,0.47097) (0.44668,0.48741) (0.39811,0.50132) (0.35481,0.51294) (0.31623,0.52256) (0.28184,0.53046) (0.25119,0.53691) (0.22387,0.54215) (0.19953,0.54638) (0.17783,0.54979) (0.15849,0.55253) (0.14125,0.55473) (0.12589,0.55648) (0.1122,0.55788) (0.1,0.559) (0.089125,0.5599) (0.079433,0.56061) (0.070795,0.56117) (0.063096,0.56162) (0.056234,0.56198) (0.050119,0.56227) (0.044668,0.56249) (0.039811,0.56267) (0.035481,0.56282) (0.031623,0.56293) (0.028184,0.56302) (0.025119,0.56309) (0.022387,0.56315) (0.019953,0.56319) (0.017783,0.56323) (0.015849,0.56326) (0.014125,0.56328) (0.012589,0.5633) (0.01122,0.56331) (0.01,0.56332)} -- cycle;
\addplot+[only marks,mark=pentagon,mark size=1.3pt,color=red!80!black,line width=0.4pt,forget plot,error bars/.cd,y dir=both,y explicit] coordinates {(0.077803,0.91057) += (0,0.00209) -= (0,0.00209) (0.1556,0.89594) += (0,0.0022) -= (0,0.0022) (0.25934,0.86034) += (0,0.0025) -= (0,0.0025) (0.51868,0.70976) += (0,0.00323) -= (0,0.00323) (0.77803,0.54005) += (0,0.00356) -= (0,0.00356) (1.556,0.22995) += (0,0.00311) -= (0,0.00311) (2.5934,0.096067) += (0,0.00215) -= (0,0.00215) (5.1868,0.026504) += (0,0.00119) -= (0,0.00119) (7.7803,0.011314) += (0,0.000789) -= (0,0.000788) (15.56,0.0032846) += (0,0.000424) -= (0,0.000424) (25.934,0.0010194) += (0,0.000239) -= (0,0.000239) (51.868,0.00015766) += (0,0.000124) -= (0,0.000124)};
\addplot+[only marks,mark=o,mark size=1.1pt,color=red!80!black,line width=0.4pt,forget plot,error bars/.cd,y dir=both,y explicit] coordinates {(0.067853,0.90071) += (0,0.00219) -= (0,0.00219) (0.13571,0.88965) += (0,0.00229) -= (0,0.00229) (0.22618,0.8643) += (0,0.00247) -= (0,0.00247) (0.45236,0.74672) += (0,0.00301) -= (0,0.00301) (0.67853,0.59745) += (0,0.0035) -= (0,0.0035) (1.3571,0.27965) += (0,0.00326) -= (0,0.00326) (2.2618,0.12406) += (0,0.00238) -= (0,0.00238) (4.5236,0.034399) += (0,0.00134) -= (0,0.00134) (6.7854,0.015533) += (0,0.000935) -= (0,0.000935) (13.571,0.0033811) += (0,0.000417) -= (0,0.000417) (22.618,0.0016508) += (0,0.000315) -= (0,0.000315) (45.236,0.00025128) += (0,0.000119) -= (0,0.000119) (67.853,0.00016438) += (0,9.68e-05) -= (0,9.68e-05)};
\addplot+[only marks,mark=square,mark size=1.1pt,color=red!80!black,line width=0.4pt,forget plot,error bars/.cd,y dir=both,y explicit] coordinates {(0.062559,0.89021) += (0,0.00246) -= (0,0.00246) (0.12512,0.88077) += (0,0.0025) -= (0,0.0025) (0.20853,0.85984) += (0,0.00272) -= (0,0.00272) (0.41706,0.75941) += (0,0.00319) -= (0,0.00319) (0.62559,0.62416) += (0,0.00343) -= (0,0.00343) (1.2512,0.31231) += (0,0.00362) -= (0,0.00362) (2.0853,0.13914) += (0,0.00272) -= (0,0.00272) (4.1706,0.039842) += (0,0.00157) -= (0,0.00157) (6.2559,0.017855) += (0,0.00101) -= (0,0.00101) (12.512,0.0047808) += (0,0.000557) -= (0,0.000557) (20.853,0.0016707) += (0,0.000328) -= (0,0.000328) (41.706,0.00035179) += (0,0.000161) -= (0,0.000161) (62.559,0.00017934) += (0,0.000108) -= (0,0.000108)};
\addplot+[only marks,mark=triangle,mark size=1.1pt,color=red!80!black,line width=0.4pt,forget plot,error bars/.cd,y dir=both,y explicit] coordinates {(0.060738,0.88637) += (0,0.00222) -= (0,0.00223) (0.12148,0.87708) += (0,0.00232) -= (0,0.00232) (0.20246,0.85516) += (0,0.0025) -= (0,0.0025) (0.40492,0.76128) += (0,0.00295) -= (0,0.00295) (0.60738,0.63225) += (0,0.00333) -= (0,0.00333) (1.2148,0.32557) += (0,0.00331) -= (0,0.00331) (2.0246,0.14808) += (0,0.00259) -= (0,0.00259) (4.0492,0.041419) += (0,0.00149) -= (0,0.00149) (6.0738,0.019574) += (0,0.000984) -= (0,0.000985) (12.148,0.0048809) += (0,0.000499) -= (0,0.000499) (20.246,0.0012874) += (0,0.000282) -= (0,0.000282) (40.492,0.00040198) += (0,0.000173) -= (0,0.000173) (60.738,0.0001888) += (0,9.93e-05) -= (0,9.93e-05)};
\addplot+[only marks,mark=pentagon*,mark size=1.3pt,color=blue!70!black,line width=0.4pt,forget plot,error bars/.cd,y dir=both,y explicit] coordinates {(0.017857,0.9131) += (0,0.00205) -= (0,0.00205) (0.035714,0.91239) += (0,0.00206) -= (0,0.00206) (0.059524,0.91062) += (0,0.00208) -= (0,0.00207) (0.11905,0.90237) += (0,0.00214) -= (0,0.00214) (0.17857,0.88731) += (0,0.0023) -= (0,0.0023) (0.35714,0.81038) += (0,0.00283) -= (0,0.00283) (0.59524,0.66099) += (0,0.00338) -= (0,0.00338) (1.1905,0.33333) += (0,0.00334) -= (0,0.00334) (1.7857,0.18478) += (0,0.00283) -= (0,0.00283) (3.5714,0.054188) += (0,0.00168) -= (0,0.00168) (5.9524,0.019356) += (0,0.00101) -= (0,0.00101) (11.905,0.0053371) += (0,0.000535) -= (0,0.000535) (17.857,0.0022832) += (0,0.000364) -= (0,0.000364) (35.714,0.00067165) += (0,0.000192) -= (0,0.000192) (59.524,0.00013252) += (0,9.85e-05) -= (0,9.85e-05)};
\addplot+[only marks,mark=*,mark size=1.1pt,color=blue!70!black,line width=0.4pt,forget plot,error bars/.cd,y dir=both,y explicit] coordinates {(0.016304,0.90603) += (0,0.00212) -= (0,0.00212) (0.032609,0.90558) += (0,0.00212) -= (0,0.00212) (0.054348,0.90398) += (0,0.00214) -= (0,0.00215) (0.1087,0.89677) += (0,0.00218) -= (0,0.00218) (0.16304,0.88367) += (0,0.00233) -= (0,0.00233) (0.32609,0.82105) += (0,0.00278) -= (0,0.00278) (0.54348,0.68795) += (0,0.00327) -= (0,0.00327) (1.087,0.37177) += (0,0.00348) -= (0,0.00348) (1.6304,0.21191) += (0,0.00297) -= (0,0.00297) (3.2609,0.061967) += (0,0.00179) -= (0,0.00179) (5.4348,0.024359) += (0,0.00116) -= (0,0.00116) (10.87,0.0059909) += (0,0.00059) -= (0,0.00059) (16.304,0.0027633) += (0,0.000393) -= (0,0.000393) (32.609,0.00067351) += (0,0.000213) -= (0,0.000213) (54.348,0.00019781) += (0,0.000113) -= (0,0.000113)};
\addplot+[only marks,mark=square*,mark size=1.1pt,color=blue!70!black,line width=0.4pt,forget plot,error bars/.cd,y dir=both,y explicit] coordinates {(0.015411,0.89629) += (0,0.00302) -= (0,0.00302) (0.030822,0.89568) += (0,0.00305) -= (0,0.00305) (0.05137,0.89451) += (0,0.00303) -= (0,0.00303) (0.10274,0.88858) += (0,0.0031) -= (0,0.0031) (0.15411,0.87731) += (0,0.00335) -= (0,0.00335) (0.30822,0.81769) += (0,0.00427) -= (0,0.00427) (0.5137,0.69882) += (0,0.0047) -= (0,0.0047) (1.0274,0.40061) += (0,0.00523) -= (0,0.00523) (1.5411,0.23182) += (0,0.00421) -= (0,0.00421) (3.0822,0.071346) += (0,0.00281) -= (0,0.00281) (5.137,0.028574) += (0,0.0019) -= (0,0.0019) (10.274,0.0068946) += (0,0.00099) -= (0,0.00099) (15.411,0.0026125) += (0,0.000565) -= (0,0.000565) (30.822,0.0005245) += (0,0.000279) -= (0,0.000279) (51.37,0.00044398) += (0,0.000228) -= (0,0.000228)};
\addplot+[only marks,mark=triangle*,mark size=1.1pt,color=blue!70!black,line width=0.4pt,forget plot,error bars/.cd,y dir=both,y explicit] coordinates {(0.015121,0.89087) += (0,0.00246) -= (0,0.00246) (0.030242,0.89025) += (0,0.00247) -= (0,0.00247) (0.050403,0.88902) += (0,0.00247) -= (0,0.00247) (0.10081,0.88308) += (0,0.00251) -= (0,0.00251) (0.15121,0.87226) += (0,0.00255) -= (0,0.00254) (0.30242,0.81837) += (0,0.00267) -= (0,0.00267) (0.50403,0.70244) += (0,0.00335) -= (0,0.00335) (1.0081,0.40398) += (0,0.0034) -= (0,0.0034) (1.5121,0.2384) += (0,0.00297) -= (0,0.00297) (3.0242,0.072361) += (0,0.00197) -= (0,0.00197) (5.0403,0.027055) += (0,0.00126) -= (0,0.00126) (10.081,0.0074346) += (0,0.000675) -= (0,0.000675) (15.121,0.003118) += (0,0.000425) -= (0,0.000425) (30.242,0.00088982) += (0,0.000238) -= (0,0.000238) (50.403,0.00033531) += (0,0.000131) -= (0,0.000131)};
\addplot+[only marks,mark=diamond*,mark size=1.1pt,color=blue!70!black,line width=0.4pt,forget plot,error bars/.cd,y dir=both,y explicit] coordinates {(0.01504,0.88788) += (0,0.00296) -= (0,0.00296) (0.03008,0.88731) += (0,0.00299) -= (0,0.00299) (0.050134,0.88599) += (0,0.00295) -= (0,0.00295) (0.10027,0.87926) += (0,0.00322) -= (0,0.00322) (0.1504,0.8696) += (0,0.00297) -= (0,0.00297) (0.3008,0.81537) += (0,0.00416) -= (0,0.00416) (0.50134,0.69956) += (0,0.00326) -= (0,0.00326) (1.0027,0.41243) += (0,0.00534) -= (0,0.00534) (1.504,0.2396) += (0,0.00399) -= (0,0.00399) (3.008,0.075978) += (0,0.00288) -= (0,0.00288) (5.0134,0.026627) += (0,0.00192) -= (0,0.00192) (10.027,0.0070827) += (0,0.000822) -= (0,0.000822) (15.04,0.0030973) += (0,0.000621) -= (0,0.000621) (30.08,0.00068735) += (0,0.000275) -= (0,0.000275) (50.134,0.00015533) += (0,0.000127) -= (0,0.000127)};
\addlegendimage{only marks,mark=*,mark size=1.1pt,color=blue!70!black}\addlegendentry{ridge ERM}
\addlegendimage{only marks,mark=o,mark size=1.1pt,color=red!80!black}\addlegendentry{logistic ridge ERM}
\addlegendimage{area legend,fill=black!13,draw=none}\addlegendentry{Theorem~\ref{thm:transfer}}
\addlegendimage{only marks,mark=pentagon,mark size=1.3pt,black}\addlegendentry{$n=50$}
\addlegendimage{only marks,mark=o,mark size=1.1pt,black}\addlegendentry{$n=100$}
\addlegendimage{only marks,mark=square,mark size=1.1pt,black}\addlegendentry{$n=300$}
\addlegendimage{only marks,mark=triangle,mark size=1.1pt,black}\addlegendentry{$n=1000$}
\addlegendimage{only marks,mark=diamond,mark size=1.1pt,black}\addlegendentry{$n=3000$}
\node[anchor=north east,font=\scriptsize] at (rel axis cs:0.985,0.97) {(a)};
\end{axis}
\end{tikzpicture}\\[1pt]
\begin{tikzpicture}
\begin{axis}[scale only axis,width=0.8\linewidth,xmode=log,ymode=log,tick label style={font=\tiny},label style={font=\scriptsize},xlabel style={yshift=2pt},ylabel style={yshift=-2pt},legend style={font=\tiny,draw=none,fill=none,row sep=-1pt},legend cell align=left,clip mode=individual,error bars/error bar style={line width=0.3pt},error bars/error mark options={mark size=0.8pt,line width=0.3pt},height=2.4cm,xmin=35,xmax=4500,ymin=1e-4,ymax=5,
 xtick={100,1000},ytick={1e-4,1e-3,1e-2,1e-1,1},
 xlabel={$n$},ylabel={gap and bounds},
 legend columns=4,legend style={at={(0.5,1.03)},anchor=south,/tikz/every even column/.append style={column sep=4pt}}]
\addplot[black,thin,domain=50:3000,samples=60] {(1-8/x)/x};
\addlegendentry{gap}
\addplot[black!60,thin,densely dashed,domain=50:3000,samples=40] {4/sqrt(x)};
\addlegendentry{$4L/\sqrt n$}
\addplot[black,densely dotted,mark=o,mark size=1.1pt,mark options={solid}] coordinates {(50,0.018776) (100,0.010331) (300,0.0036613) (1000,0.0011218) (3000,0.00037618)};
\addlegendentry{$\sqrt{\E_\Pi V_p/n}$}
\addplot+[only marks,mark=*,mark size=1.2pt,color=blue!70!black,error bars/.cd,y dir=both,y explicit] coordinates {(50,2.1507) += (0,0.00482) -= (0,0.00482) (100,2.1342) += (0,0.00499) -= (0,0.00499) (300,2.1112) += (0,0.0071) -= (0,0.00711) (1000,2.0981) += (0,0.00579) -= (0,0.00578) (3000,2.0912) += (0,0.00692) -= (0,0.00693)};
\addlegendentry{$2L\sqrt{2\ln2}\,\iota$}
\node[anchor=south west,font=\scriptsize] at (rel axis cs:0.015,0.03) {(b)};
\end{axis}
\end{tikzpicture}
\caption{Measurements at $d=64n$. Values are averages over the prior; bars are 95\% intervals. (a)~Lower-bound estimates of the bits per pair $\iota$ against $\tau/(M/n)$ for ridge ERM (filled) and logistic ridge ERM (open); shaded: between the two halves of Theorem~\ref{thm:transfer} at each learner's own $M$, $V$ and $\Vres$. (b)~Ridge ERM at $\tau=0$: the gap, its bounds $4L/\sqrt n$ and $\sqrt{\E_\Pi V_p/n}$, and the lower bound $2L\sqrt{2\ln2}\,\iota$ on the range-scaled ld-CMI bounds; $L=1$.}
\label{fig:measurements}
\end{figure}

%% file: sec9_limitations.tex
\section{Limitations and outlook}\label{sec:limitations}

Our results concern proper learners, linear predictors with a smooth convex link and a fixed matched
power regularizer or none, product distributions on a scaled sign cube, dimension $d\gtrsim n$ (necessary up to
a logarithm) and the optimal sample size only. The lower half is an average over a prior that concentrates near the zero-mean law, so it bounds the worst case over the
family; bounds at a single law for learners accurate on the whole family are open. The constants are
explicit but far apart: at the limiting normalized moments and $c_\pm=1$, $C_1/c_1\approx1.6\cdot10^4$ for ridge and
$2\cdot10^5$ for the logistic link with $g(\theta)=\norm\theta^4/4$, whose explicit lower bound at $d=32n$
changes sign at $n\approx6.5\cdot10^4$ (Appendix~\ref{app:constants}). The upper half concerns an
observer who sees only the loss differences and the candidates; small
information in this sense is not a privacy guarantee. The range $n/\log n\lesssim d\lesssim n$ and
constants informative at moderate $n$ remain open.

%% file: app_A_notation.tex
\section{Notation}\label{app:notation}

This appendix collects the notation of the paper. Symbols that are used in a single proof or in a
single appendix are defined where they are used.

\paragraph{Conventions.}
Information is measured in bits; $\ln$ is the natural logarithm and $\log_2$ the binary one, and a
quantity computed in nats is divided by $\ln2$ before it is compared with bits. $\E_p$ and $\Pp_p$
are taken when the data law is $D_p$, over the candidates, the selection bits, the learner's seed $R$
and the noise $G$. $\E_\Pi$ and $\Pp_\Pi$ first draw $p$ from a prior $\Pi$, which is the beta prior
$\Pi_\kappa$ unless stated otherwise, and then take $\E_p$ and $\Pp_p$; $I_{D_p}$ denotes mutual
information when the data law is $D_p$. The norm $\norm\cdot$ is Euclidean, $\opnorm\cdot$ is the
operator norm, $\ip\cdot\cdot$ is the Euclidean inner product and ${}^\top$ denotes the transpose.
Entropies are set in roman type: $\Hent$ is Shannon entropy, $\hdiff$ is differential entropy and
$\htwo$ is the binary entropy function in bits, while the italic $H$ is always the curvature bound of
the link. The index $i\in[n]$ labels pairs, $j\in\{0,1\}$ the candidate within a pair and $k\in[d]$
coordinates; the coordinate index comes last, as in $Z_{j,i,k}$. The symbols $O$, $\Omega$, $\Theta$,
$\asymp$ and $\lesssim$ hide constants that depend only on the constants of
Assumption~\ref{ass:class} and on $c_\pm$.

\begingroup
\small
% Local helpers for the three tables of this appendix (defined inside this group only):
% \notationrow{symbol}{meaning}{where defined} typesets one row with ragged-right cells;
% \notationblock{title} typesets the title row of a block.
\newcommand{\notationrow}[3]{\raggedright#1&\raggedright#2&\raggedright#3\tabularnewline}
\newcommand{\notationblock}[1]{\toprule
  \multicolumn{2}{@{}l}{\textbf{#1}}&\emph{Defined in}\tabularnewline\midrule}

\par\medskip\noindent
\begin{tabular}{@{}p{.17\linewidth}p{.64\linewidth}p{.13\linewidth}@{}}
\notationblock{Problem, learner and accuracy}
\notationrow{$d$, $W=B_d(1)$, $\mathcal Z$}
 {dimension; parameter set, the closed unit ball of $\R^d$; data cube $\{\pm1/\sqrt d\}^d$, on
  which $\norm z=1$}
 {Section~\ref{sec:setting}}
\notationrow{$p=(p_k)$, $D_p$, $\mu_p$}
 {law parameter in $[-1,1]^d$; product law on $\mathcal Z$ with independent coordinates and
  $\Pp(\sqrt d\,Z_k=1)=(1+p_k)/2$; its mean $\mu_p=p/\sqrt d$}
 {Section~\ref{sec:setting}}
\notationrow{the family; $D$}
 {the laws $\{D_p:p\in[-1,1]^d\}$; a general law on the unit ball of $\R^d$}
 {Section~\ref{sec:setting}}
\notationrow{$\phi$, $b$, $L$, $H$}
 {convex link; slope parameter $b=-\phi'(0)>0$; bounds $|\phi'|\le L$ and $0\le\phi''\le H$ on
  $[-1,1]$}
 {Assumption~\ref{ass:class}}
\notationrow{$g$, $L_g$, $D_g$}
 {convex regularizer with $g(0)=0$ and $\nabla g(0)=0$; bound $\sup_W\norm{\nabla g}\le L_g$;
  Bregman divergence $D_g(\theta',\theta)=g(\theta')-g(\theta)-\ip{\nabla g(\theta)}{\theta'-\theta}$}
 {Assumption~\ref{ass:class}}
\notationrow{$r$, $q$, $a_-$, $a_+$}
 {common power $r\ge2$ of curvature and growth; conjugate exponent $q=r/(r-1)\in(1,2]$; constants
  with $D_g(\theta',\theta)\ge a_-\norm{\theta'-\theta}^r$ and $0\le g(\theta)\le a_+\norm\theta^r$
  on $W$}
 {Assumption~\ref{ass:class}}
\notationrow{$f$, $F_D$, $F_p$, $\theta_p^*$}
 {loss $f(\theta,z)=g(\theta)+\phi(\ip\theta z)$; population risk $F_D(\theta)=\E_Df(\theta,Z)$;
  $F_p=F_{D_p}$; the minimizer of $F_p$ over $W$ (and $\theta_D^*$ that of $F_D$)}
 {Section~\ref{sec:setting}}
\notationrow{$S_n$, $A_n$, $R$, $\thh$}
 {training set of $n$ points; proper learner; its seed, independent of the data; its output
  $\thh=A_n(S_n;R)\in W$}
 {Section~\ref{sec:setting}}
\notationrow{$\eps$}
 {accuracy on the family: $\E_pF_p(\thh)-\min_WF_p\le\eps$ for every $p$}
 {Section~\ref{sec:setting}}
\notationrow{$\widehat F_{S_n}$, $\gen_D$, $\gen_p$}
 {training risk $\widehat F_{S_n}(\theta)=n^{-1}\sum_if(\theta,Z_{U_i,i})$; expected generalization gap
  $\gen_D=\E[F_D(\thh)-\widehat F_{S_n}(\thh)]$ at a law $D$; $\gen_p=\gen_{D_p}$}
 {Section~\ref{sec:setting}}
\notationrow{$n$, $\neps$, $c_-$, $c_+$}
 {sample size, equal to the number of pairs; optimal sample-size scale
  $\neps=\eps^{-2/q}=\eps^{-2+2/r}$; multipliers with $c_-\neps\le n\le c_+\neps$}
 {Section~\ref{sec:result}}
\notationrow{$\Cdim$, $\eps_0$}
 {dimension multiplier in $d\ge\Cdim\max\{n,\neps\}$; accuracy threshold}
 {Theorem~\ref{thm:main}}
\notationrow{$e$}
 {error $\thh-\theta_p^*$ of the learner}
 {Appendix~\ref{app:localization}}
\notationrow{$\epsdp$, $\delta$}
 {privacy parameters}
 {Section~\ref{sec:consequences}}
\notationrow{$\Zbar$, $T_4$}
 {training mean $n^{-1}\sum_iZ_{U_i,i}$; the map $T_4(t)=\norm t^{-2/3}t$ for $0<\norm t\le1$,
  $T_4(0)=0$}
 {Section~\ref{sec:result} and Appendix~\ref{app:separation}}
\notationrow{$f_2$, $f_4$}
 {the losses $\norm\theta^2/2-\ip\theta z$ and $\norm\theta^4/4-\ip\theta z$ (affine link, $r=2$ and
  $r=4$); their empirical risk minimizers over $W$ are $\Zbar$ and $T_4(\Zbar)$}
 {Section~\ref{sec:separation}}
\bottomrule
\end{tabular}

\par\medskip\noindent
\begin{tabular}{@{}p{.17\linewidth}p{.64\linewidth}p{.13\linewidth}@{}}
\notationblock{Supersample, release and information}
\notationrow{$\Zt=(Z_{j,i})$}
 {supersample of $2n$ i.i.d.\ candidates, $j\in\{0,1\}$, $i\in[n]$; $\tilde z$ is a realization}
 {Section~\ref{sec:setting}}
\notationrow{$U=(U_i)$, $\sigma_i$}
 {selection bits, i.i.d.\ fair and independent of $\Zt$; signs $\sigma_i=1-2U_i$}
 {Section~\ref{sec:setting}}
\notationrow{$\Ztr_i$, $\Zgh_i$}
 {training candidate $Z_{U_i,i}$ and ghost $Z_{1-U_i,i}$ of pair $i$}
 {Section~\ref{sec:setting}}
\notationrow{$X=(X_i)$}
 {loss-difference release,
  $X_i=f(\thh,Z_{1,i})-f(\thh,Z_{0,i})=\phi(\ip\thh{Z_{1,i}})-\phi(\ip\thh{Z_{0,i}})$; $\sigma_iX_i$
  is the ghost-minus-training loss of pair $i$}
 {Definition~\ref{def:release}}
\notationrow{$G$, $\tau$, $Y$}
 {noise $G\sim\Normal(0,I_n)$, independent of everything else; its standard deviation $\tau\ge0$;
  noisy release $Y=X+\tau G$}
 {Definition~\ref{def:release}}
\notationrow{$\Ii$, $\Ij$, $\Is$; $\Qc$}
 {summed individual information $\Ii(Y)=\sum_iI(Y_i;U_i)$, joint information $\Ij(Y)=I(Y;U)$ and
  information given the candidates $\Is(Y)=I(Y;U\mid\Zt)$, with the law as a subscript, as in
  $\Is_{D_p}$; $\Qc$ is any one of the three}
 {Definition~\ref{def:release}}
\notationrow{ld-CMI}
 {$I(X_i;U_i\mid\Zt)$ for pair $i$; summed over pairs, $\sum_iI(X_i;U_i\mid\Zt)$, and
  $\sum_iI(Y_i;U_i\mid\Zt)$ with noise}
 {Sections~\ref{sec:intro} and~\ref{sec:setting}}
\notationrow{$\Phi_{\tilde z}$}
 {release map of $\tilde z$, $\Phi_{\tilde z}(\theta)_i=\phi(\ip\theta{z_{1,i}})-\phi(\ip\theta{z_{0,i}})$;
  thus $X=\Phi_{\Zt}(\thh)$}
 {Appendices~\ref{app:dp} and~\ref{app:dimension}}
\notationrow{$x_p(\Zt)$, $X_p^*(\Zt)$}
 {a reference, measurable in $p$ and $\Zt$ only; the release $\Phi_{\Zt}(\theta_p^*)$ of the
  population minimizer}
 {Theorem~\ref{thm:transfer}, Proposition~\ref{prop:finite}}
\notationrow{$M_p$}
 {signed mean $\sum_i\E_p\sigma_iX_i=n\gen_p$ at $D_p$}
 {Appendix~\ref{app:fairbit}}
\notationrow{$V_p$}
 {energy $\sum_i\E_pX_i^2$ at $D_p$}
 {Section~\ref{sec:consequences}}
\notationrow{$\Btr$, $\Delta$}
 {centred training matrix with rows $(Z_{U_i,i}-\mu_p)^\top$; pair matrix with rows
  $\Delta_i=(Z_{1,i}-Z_{0,i})^\top$}
 {Lemma~\ref{lem:matrix}}
\notationrow{$z_c$}
 {candidate centre $(2n)^{-1}\sum_{j,i}Z_{j,i}$, the mean of all $2n$ candidates}
 {Section~\ref{sec:result}}
\notationrow{$m$, $s_i$, $u$}
 {ghost-margin mean $\ip\thh{\mu_p}$; centred training margin $\ip\thh{Z_{U_i,i}-\mu_p}$; link
  remainder $u(t)=\phi(t)-\phi(0)+bt$}
 {Section~\ref{sec:moments}}
\notationrow{$\sighat_i$, $\widehat U_i$, $\lambda$}
 {output sign and decoded bit of the clipped decoder; its scale $\lambda=V/(2M)$}
 {Appendix~\ref{app:decoder}}
\notationrow{$\J$}
 {$\J(t)=1-\htwo((1+t)/2)$, the information of a fair bit seen through a binary symmetric channel
  with correlation $t$}
 {Appendix~\ref{app:fairbit}}
\notationrow{$\xi$}
 {bound on the false-positive rate of a tracer, the probability that it flags a fresh point}
 {Section~\ref{sec:intro}}
\notationrow{$\iota$}
 {information per pair $n^{-1}\E_\Pi\Ii_{D_p}(X+\tau G)$ in bits, a function of $\tau$; $\iota(0)$ is
  its value at $\tau=0$}
 {Section~\ref{sec:measurements}}
\bottomrule
\end{tabular}

\par\medskip\noindent
\begin{tabular}{@{}p{.17\linewidth}p{.64\linewidth}p{.13\linewidth}@{}}
\notationblock{Prior, moments and constants}
\notationrow{$\Pi_\kappa$, $\pi_\kappa$, $\kappa$}
 {beta prior on $p$: i.i.d.\ coordinates with density $\pi_\kappa(t)=c_\kappa(1-t^2)^{\kappa/2-1}$ on
  $(-1,1)$, where $c_\kappa$ normalizes; its parameter $\kappa>0$}
 {Section~\ref{sec:architecture}}
\notationrow{$v$}
 {prior variance $v=\E_{\Pi_\kappa}\norm{\mu_p}^2=1/(\kappa+1)$; for Theorem~\ref{thm:main},
  $v=(4\eps/\beta)^{2/q}$}
 {Section~\ref{sec:architecture}}
\notationrow{$t_0$, $\beta$, $\theta_c$}
 {calibration constants $t_0=(b/(ra_++b))^{1/(r-1)}$ and $\beta=bt_0/q$; comparator
  $\theta_c=t_0\norm{\mu_p}^{q-2}\mu_p$, and $\theta_c=0$ if $\mu_p=0$}
 {Sections~\ref{sec:architecture} and~\ref{sec:moments}}
\notationrow{$M$, $V$, $\Vres$}
 {budgets: a lower bound on the signed mean and upper bounds on the energy and on the residual, as in
  Theorem~\ref{thm:transfer}; Appendix~\ref{app:mainproof} gives their finite values for the learners
  of Theorem~\ref{thm:main}}
 {Theorem~\ref{thm:transfer}, Appendix~\ref{app:mainproof}}
\notationrow{$c_M$, $C_V$, $C_{\mathrm{res}}$}
 {order constants: the signed mean is at least $c_Mn\eps$, the energy at most $C_Vn\eps^2$ and the
  residual at most $C_{\mathrm{res}}n\eps^2$}
 {Proposition~\ref{prop:finite}}
\notationrow{$c_1$, $C_1$}
 {constants of the two-sided bound}
 {Theorem~\ref{thm:main}}
\notationrow{$\Mbar$, $\Vbar$, $\Vresbar$}
 {normalized moments $M/(n\eps)$, $V/(n\eps^2)$ and $\Vres/(n\eps^2)$}
 {Appendix~\ref{app:constants}}
\notationrow{$C_0$, $w$}
 {$\norm{\theta_p^*}\le C_0\norm{\mu_p}^{q-1}$ with $C_0=(b/(2a_-))^{1/(r-1)}$; localization bound
  $\E_p\norm{\thh-\theta_p^*}^2\le w=\min\{4,(\eps/a_-)^{2/r}\}$}
 {Section~\ref{sec:moments}}
\notationrow{$w_\theta$, $Q_\theta$, $C_\theta$}
 {bounds on $\E_\Pi\norm\thh^2$, on $\E_\Pi[\norm\thh^2\norm{\mu_p}^2]$ and on
  $\E_\Pi[\norm\thh^2\opnorm{\Btr}^2]$}
 {Appendix~\ref{app:mainproof}}
\notationrow{$\rho$, $\cnet$, $K_0$, $K_1$, $K_0^*$}
 {aspect ratio $n/d$; constants of the tail bounds; $K_0^*=K_0(1/32)<2.031$ bounds $K_0$ when
  $d\ge32n$}
 {Lemma~\ref{lem:matrix}, Appendix~\ref{app:tails}}
\notationrow{$N$, $\omega$, $\bar\omega$, $\Tb$}
 {tail variable, weight and cap; weighted tail function $\Tb_{\bar\omega}(w;K_0,K_1)$}
 {Lemma~\ref{lem:weighted}}
\bottomrule
\end{tabular}
\par
\endgroup

%% file: app_B_tools.tex
\section{Probability tools}\label{app:tools}

This appendix proves Lemmas~\ref{lem:fp}--\ref{lem:weighted} and derives the moments of the beta prior
that the proofs of Appendix~\ref{app:mainproof} use.

\subsection{Beta-Prior Moments and Fingerprinting}\label{app:prior-moments}

Fix $\kappa>0$ and let $p\sim\Pi_\kappa$, so that $p_1,\dots,p_d$ are independent with density
\[
 \pi_\kappa(t)=c_\kappa(1-t^2)^{\kappa/2-1},\qquad -1<t<1,
\]
where $c_\kappa$ is the normalizing constant; equivalently, $(1+p_k)/2$ has the
$\mathrm{Beta}(\kappa/2,\kappa/2)$ distribution. Write $v=1/(\kappa+1)$. On $(-1,1)$ the density
satisfies $\frac{d}{dt}[(1-t^2)\pi_\kappa(t)]=-\kappa t\,\pi_\kappa(t)$, and
$(1-t^2)\pi_\kappa(t)=c_\kappa(1-t^2)^{\kappa/2}$ tends to zero at both endpoints because $\kappa>0$.
Integration by parts therefore gives, for every polynomial $\Upsilon$,
\begin{equation}\label{eq:prior-ibp}
 \kappa\,\E\big[p_k\Upsilon(p_k)\big]=\E\big[(1-p_k^2)\Upsilon'(p_k)\big].
\end{equation}
With $\Upsilon(t)=1$, $t$ and $t^3$, \eqref{eq:prior-ibp} gives $\E p_k=0$, $(\kappa+1)\E p_k^2=1$
and $(\kappa+3)\E p_k^4=3\E p_k^2$. Since $\kappa+3=(1+2v)/v$ and the coordinates are independent,
\begin{equation}\label{eq:prior-moments}
 \E p_k=0,\quad \E p_k^2=v,\quad \E p_k^4=\frac{3v^2}{1+2v},\quad
 \E\norm{\mu_p}^2=v,\quad \E\norm{\mu_p}^4\le v^2(1+2/d).
\end{equation}
For the last bound, expand $\norm{\mu_p}^4=d^{-2}(\sum_kp_k^2)^2$:
$\E\norm{\mu_p}^4=d^{-2}\{d\,\E p_1^4+d(d-1)v^2\}=v^2\{1+(3/(1+2v)-1)/d\}\le v^2(1+2/d)$.

For $1\le q\le2$, these moments give
\begin{align}
 \E\norm{\mu_p}^{2(q-1)}&\le v^{q-1},\label{eq:prior-concave}\\
 \E\norm{\mu_p}^{2q}&\le v^q(1+2/d)^{q/2},\label{eq:prior-high}\\
 \E\norm{\mu_p}^{q}&\ge v^{q/2}\big(1+1/(dv)\big)^{-q/2}.\label{eq:prior-low}
\end{align}
The first two follow from Jensen's inequality for the concave maps $x\mapsto x^{q-1}$ and
$x\mapsto x^{q/2}$ on $[0,\infty)$, applied to $\norm{\mu_p}^2$ and to $\norm{\mu_p}^4$, and from the
bounds on $\E\norm{\mu_p}^2$ and $\E\norm{\mu_p}^4$ in \eqref{eq:prior-moments}. For the third,
\eqref{eq:prior-moments} gives
$\var(\sum_kp_k^2)=d(\E p_1^4-v^2)=2dv^2(1-v)/(1+2v)\le dv$, hence
$\E\norm p^4\le(dv)^2\{1+1/(dv)\}$ and, by the Cauchy--Schwarz inequality,
$\E\norm p^3\le(\E\norm p^2\,\E\norm p^4)^{1/2}\le(dv)^{3/2}\{1+1/(dv)\}^{1/2}$. The Cauchy--Schwarz
inequality applied to $\norm p^2=\norm p^{1/2}\norm p^{3/2}$ gives
$(dv)^2=(\E\norm p^2)^2\le\E\norm p\,\E\norm p^3$, so $\E\norm p\ge(dv)^{1/2}\{1+1/(dv)\}^{-1/2}$.
Jensen's inequality for the convex map $x\mapsto x^q$ and division by $d^{q/2}$, since
$\norm{\mu_p}=\norm p/\sqrt d$, prove \eqref{eq:prior-low}. These bounds are loose, but explicit and
uniform in the dimension.

\begin{restate}[Beta fingerprinting]{Lemma}{lem:fp}
Let $\Pi=\Pi_\kappa$ with $\kappa>0$. For every proper learner,
\[
 \E_\Pi\Big\langle\thh,\sum_{i=1}^n(Z_{U_i,i}-\mu_p)\Big\rangle=\kappa\,\E_\Pi\ip\thh{\mu_p}.
\]
\end{restate}

This is the beta-prior fingerprinting lemma of \citet{steinke2017};
\citet[Lemma~3.8]{voitovych2025} extend it to sparse sign data. We include the proof to fix the
normalization. It uses only that the output $\thh=A_n(S_n;R)$ is bounded and jointly measurable in
$(S_n,R)$, with the seed $R$ independent of $p$ and of the data. Recall that, given $p$, the training
set $S_n=(Z_{U_i,i})_{i\in[n]}$ consists of $n$ independent draws from $D_p$, because $U$ is
independent of the candidates.

\begin{proof}
Fix a coordinate $k$ and the other coordinates $p_{-k}$ of $p$; until the last step, all
expectations are conditional on $p_{-k}$. For $z=(z_1,\dots,z_n)\in\mathcal Z^n$,
\[
 \Pp_p(S_n=z)=\prod_{i=1}^n\prod_{k'=1}^d\frac{1+\sqrt d\,p_{k'}z_{i,k'}}2 .
\]
Let $\bar\theta_k(z)=\E[\thh_k\mid S_n=z]$ be the $k$-th output coordinate averaged over the seed; it
is bounded, and it does not depend on $p$ because $R$ is independent of $p$ and of the data. Since
$\mathcal Z^n$ is finite and each factor above is affine in $p_k$, the function
\[
 \Upsilon_k(t)=\E[\thh_k\mid p_k=t]=\sum_{z\in\mathcal Z^n}\bar\theta_k(z)\,\Pp_p(S_n=z)\big|_{p_k=t}
\]
is a polynomial of degree at most $n$. For $|t|<1$ and $z_{i,k}=\pm1/\sqrt d$,
\[
 \frac{\partial}{\partial t}\ln\frac{1+\sqrt d\,t\,z_{i,k}}2
 =\frac{\sqrt d\,z_{i,k}}{1+\sqrt d\,t\,z_{i,k}}
 =\frac{\sqrt d\,(z_{i,k}-t/\sqrt d)}{1-t^2},
\]
as one checks separately for the two signs. Since $\mu_{p,k}=p_k/\sqrt d$, differentiating the
likelihood term by term gives
\[
 (1-t^2)\,\Upsilon_k'(t)=\sqrt d\;\E\Big[\bar\theta_k(S_n)\sum_{i=1}^n(Z_{U_i,i,k}-\mu_{p,k})\Bigm|p_k=t\Big]
 =\sqrt d\;\E\Big[\thh_k\sum_{i=1}^n(Z_{U_i,i,k}-\mu_{p,k})\Bigm|p_k=t\Big].
\]
Now apply \eqref{eq:prior-ibp} with $\Upsilon=\Upsilon_k$. Its left side is
$\kappa\E[p_k\thh_k]=\kappa\sqrt d\,\E[\mu_{p,k}\thh_k]$, and its right side is
$\sqrt d\,\E[\thh_k\sum_i(Z_{U_i,i,k}-\mu_{p,k})]$. Dividing by $\sqrt d$, averaging over $p_{-k}$ and
summing over $k$ proves the lemma. The argument uses only $\kappa>0$; $\kappa$ need not be an integer.
\end{proof}

\subsection{Operator-Norm and Weighted Tails}\label{app:tails}

\begin{restate}[Operator-norm tails in linear dimension]{Lemma}{lem:matrix}
Fix $p$ and let $d\ge32n$. The centred training matrix $\Btr\in\R^{n\times d}$, with rows
$(Z_{U_i,i}-\mu_p)^\top$, satisfies $\Pp_p(\opnorm{\Btr}^2>K_0+K_1t)\le e^{-t}$ for all $t\ge0$, with
$K_0<2.031$ and $K_1=2/(\cnet d)$, where $\cnet=\frac{19}{40}(\frac{29}{30})^2$. The pair matrix
$\Delta$, with rows $\Delta_i=(Z_{1,i}-Z_{0,i})^\top$, satisfies the same bound with $2K_0$ and
$2K_1$.
\end{restate}

We prove the lemma in a form that holds for all $n$ and $d$. Put $\rho=n/d$ and
\[
 K_0=K_0(\rho)=\frac{2\rho\ln61+\ln(40/21)}{\cnet},\qquad K_1=\frac2{\cnet d}.
\]
Then, at every $p$ and for all $t\ge0$,
\[
 \E_p\opnorm{\Btr}^2\le K_0,\qquad
 \Pp_p\big(\opnorm{\Btr}^2>K_0+K_1t\big)\le e^{-t},\qquad
 \Pp_p\big(\opnorm\Delta^2>2K_0+2K_1t\big)\le e^{-t}.
\]
Since $K_0(\rho)$ increases with $\rho$, the condition $d\ge32n$ gives
$K_0\le K_0^*:=K_0(1/32)=(\ln61/16+\ln(40/21))/\cnet=2.0306\ldots<2.031$, which is
Lemma~\ref{lem:matrix}. The lemma concerns the data alone, so it holds whatever the learner does.

\begin{proof}
Let $\eta_0=1/30$ and $\lambda_0=19/80$, so that $1+2/\eta_0=61$, $1-2\lambda_0=21/40$ and
$2\lambda_0(1-\eta_0)^2=\cnet$. Given $p$, the training points are independent draws from $D_p$, so
the entries $Z_{U_i,i,k}-\mu_{p,k}$ of $\Btr$ are independent and centred, and each takes values in an
interval of length $2/\sqrt d$.

Fix $x$ in the unit sphere $\mathbb S^{n-1}$ of $\R^n$. The $k$-th coordinate of $\Btr^\top x$ is
$\sum_ix_i(Z_{U_i,i,k}-\mu_{p,k})$, and its $i$-th term takes values in an interval of length
$2|x_i|/\sqrt d$. By Hoeffding's lemma, $\E_p\exp\{t'(\Btr^\top x)_k\}\le\exp\{t'^2\sum_ix_i^2/(2d)\}
=e^{t'^2/(2d)}$ for every real $t'$. Since
$(2\pi)^{-1/2}\int_\R\exp\{\sqrt{2\lambda_0d}\,yx'-y^2/2\}\,dy=e^{\lambda_0dx'^2}$ for every real $x'$,
Fubini's theorem and this bound with $t'=\sqrt{2\lambda_0d}\,y$ give
\[
 \E_p\exp\{\lambda_0d(\Btr^\top x)_k^2\}
 \le\frac1{\sqrt{2\pi}}\int_\R e^{\lambda_0y^2-y^2/2}\,dy=(1-2\lambda_0)^{-1/2}.
\]
The coordinates of $\Btr^\top x$ involve disjoint columns of $\Btr$ and are therefore independent, so
$\E_p\exp\{\lambda_0d\norm{\Btr^\top x}^2\}\le(1-2\lambda_0)^{-d/2}$.

Let $\mathcal S\subset\mathbb S^{n-1}$ be an $\eta_0$-net of the sphere with
$|\mathcal S|\le(1+2/\eta_0)^n$, which exists by a volume argument. If $x^\star\in\mathbb S^{n-1}$
satisfies $\norm{\Btr^\top x^\star}=\opnorm\Btr$ and $x\in\mathcal S$ satisfies
$\norm{x-x^\star}\le\eta_0$, then $\opnorm\Btr\le\norm{\Btr^\top x}+\eta_0\opnorm\Btr$. Hence
$(1-\eta_0)^2\opnorm\Btr^2\le\max_{x\in\mathcal S}\norm{\Btr^\top x}^2$, and bounding the maximum by
the sum over the net,
\[
 \E_p\exp\{\lambda_0d(1-\eta_0)^2\opnorm\Btr^2\}\le(1+2/\eta_0)^n(1-2\lambda_0)^{-d/2}.
\]
By the choice of $\eta_0$ and $\lambda_0$, the logarithm of the right side equals
$n\ln61+\frac d2\ln(40/21)=K_0/K_1$, and $\lambda_0d(1-\eta_0)^2=\cnet d/2=1/K_1$. Jensen's inequality
for the exponential function gives
$\E_p\opnorm\Btr^2\le K_1\ln\E_p\exp\{\opnorm\Btr^2/K_1\}\le K_0$, and Markov's inequality gives, for
$t\ge0$,
\[
 \Pp_p\big(\opnorm\Btr^2>K_0+K_1t\big)\le\exp\{K_0/K_1-(K_0+K_1t)/K_1\}=e^{-t}.
\]

The entries $Z_{1,i,k}-Z_{0,i,k}$ of $\Delta$ are independent and centred, and each is the sum of the
two independent centred terms $Z_{1,i,k}-\mu_{p,k}$ and $-(Z_{0,i,k}-\mu_{p,k})$, with values in
intervals of length $2/\sqrt d$. Hoeffding's lemma, applied to both terms, gives
$\E_p\exp\{t'(\Delta^\top x)_k\}\le e^{t'^2/d}$, so the matrix $\Delta/\sqrt2$ satisfies the bound used
above for $\Btr$. The rest of the argument applies verbatim and gives
$\Pp_p(\opnorm\Delta^2/2>K_0+K_1t)\le e^{-t}$.
\end{proof}

In Lemma~\ref{lem:weighted} the constants $K_0,K_1\ge0$ are arbitrary; Appendix~\ref{app:mainproof}
applies it with the constants above, and with $2K_0,2K_1$ for the pair matrix.

\begin{restate}[Dependent weighted tails]{Lemma}{lem:weighted}
Let $K_0,K_1\ge0$, let $N\ge0$ satisfy $\Pp(N>K_0+K_1t)\le e^{-t}$ for all $t\ge0$, and let
$0\le\omega\le\bar\omega$ with $\E\omega\le w$ for some $0<w\le\bar\omega$. Then, whatever the
dependence between $N$ and $\omega$,
$\E[N\omega]\le\Tb_{\bar\omega}(w;K_0,K_1):=w\{K_0+K_1[1+\ln(\bar\omega/w)]\}$.
\end{restate}

\begin{proof}
If $K_1=0$, letting $t\to\infty$ in the tail bound shows $N\le K_0$ almost surely, so
$\E[N\omega]\le K_0w$. Otherwise let $t^\star=K_0+K_1\ln(\bar\omega/w)$, which is at least $K_0\ge0$
because $w\le\bar\omega$. Pointwise,
$N\omega\le t^\star\omega+\bar\omega(N-t^\star)_+$: if $N\le t^\star$ this holds because $\omega\ge0$,
and otherwise $(N-t^\star)\omega\le(N-t^\star)\bar\omega$. Integrating the tail bound,
\[
 \E(N-t^\star)_+=\int_0^\infty\Pp(N>t^\star+y)\,dy
 \le\int_0^\infty\frac w{\bar\omega}\,e^{-y/K_1}\,dy=\frac{K_1w}{\bar\omega}.
\]
Hence $\E[N\omega]\le t^\star\E\omega+K_1w\le t^\star w+K_1w=\Tb_{\bar\omega}(w;K_0,K_1)$.
\end{proof}

We extend $\Tb$ to all $w\ge0$. Put $\Tb_{\bar\omega}(0;K_0,K_1)=0$, which is the value of
$\E[N\omega]$ when $\E\omega=0$, since then $\omega=0$ almost surely. For $w>\bar\omega$ the lemma holds
with $w\wedge\bar\omega$ in place of $w$, because $\E\omega\le\bar\omega$ always; accordingly,
$\Tb_{\bar\omega}(w;K_0,K_1)$ stands for $\Tb_{\bar\omega}(\bar\omega;K_0,K_1)$ when $w>\bar\omega$. The
function $\Tb_{\bar\omega}(\cdot\,;K_0,K_1)$ is continuous and nondecreasing on $[0,\bar\omega]$, since
its derivative in $w$ is $K_0+K_1\ln(\bar\omega/w)\ge0$, and it is nondecreasing in $K_0$ and in $K_1$;
upper bounds on $w$, $K_0$ and $K_1$ may therefore be substituted.

The tail bounds of Lemma~\ref{lem:matrix} hold at every $p$, so they also hold under $\Pp_\Pi$ for
every prior $\Pi$, and Lemma~\ref{lem:weighted} applies to prior averages as well. Lemma~\ref{lem:weighted}
is what handles learners whose error is largest on atypical matrices: in
Appendix~\ref{app:mainproof} the weight is the squared norm or the squared error of the learner, which
may depend on the matrix in any way.

%% file: app_C_transfer.tex
\section{Proof of \texorpdfstring{Theorem~\ref{thm:transfer}}{the transfer theorem} and the information hierarchy}\label{app:transfer}

This appendix proves Theorem~\ref{thm:transfer} and the hierarchy~\eqref{eq:hierarchy}, analyses a
clipped decoder of the selection bits, and compares the upper half of Theorem~\ref{thm:main} with the
information of a noisy model (Remark~\ref{rem:noisy-model}). Throughout, $G\sim\Normal(0,I_n)$ is independent of
$p$, of the data, of the selection bits and of the learner. We restate the theorem.

\begin{restate}[Moment-to-information transfer]{Theorem}{thm:transfer}
Let $X$ be the release of a proper learner, let $p$ be drawn from a prior $\Pi$ on $[-1,1]^d$
independently of $U$, $R$ and $G$, and let $\tau\ge0$.
\begin{enumerate}
\item[(a)] If $\sum_i\E_\Pi\sigma_iX_i\ge M>0$ and $\sum_i\E_\Pi X_i^2\le V$, then
\begin{equation*}
 \E_{p\sim\Pi}\Ii_{D_p}(X+\tau G)\ge\frac{M^2}{2\ln2\,(V+n\tau^2)}.\tag{\ref{eq:lower}}
\end{equation*}
\item[(b)] If $\E_p\norm{X-x_p(\Zt)}^2\le\Vres$ at a law $D_p$, for a reference $x_p(\Zt)\in\R^n$ that
is a measurable function of $p$ and the candidates only, then $\Is_{D_p}(X)\le n$ and, for $\tau>0$,
\begin{equation*}
 \Is_{D_p}(X+\tau G)\le\min\Big\{n,\frac n{2\ln2}\ln\Big(1+\frac{\Vres}{n\tau^2}\Big)\Big\}.
 \tag{\ref{eq:upper}}
\end{equation*}
\end{enumerate}
Neither part assumes independent coordinates of $X$.
\end{restate}

The proof uses only the joint law of $(p,\Zt,U,X,G)$: $U$ is uniform on $\{0,1\}^n$ and independent
of $(p,\Zt)$, and $G$ is independent of $(p,\Zt,U,X)$. It therefore applies to any random vector
$X\in\R^n$ defined together with the supersample, not only to loss differences.
Appendix~\ref{app:fairbit} proves part~(a) and Appendix~\ref{app:hierarchy} part~(b).

\subsection{The Fair-Bit Inequality and the Lower Bound}\label{app:fairbit}

Let $U$ be a fair bit, $\sigma=1-2U$, and $Y$ a real observation with $\E Y^2<\infty$. Write
$\bar\sigma(y)=\E[\sigma\mid Y=y]\in[-1,1]$ for the posterior mean of $\sigma$. Then
\begin{equation}\label{eq:fairbit}
 I(Y;U)\ge\frac{\E\bar\sigma(Y)^2}{2\ln2}\ge\frac{(\E\sigma Y)^2}{2\ln2\,\var(Y)}
 \ge\frac{(\E\sigma Y)^2}{2\ln2\,\E Y^2},
\end{equation}
where a ratio with a vanishing denominator is read as zero; its numerator then vanishes as well.

Given $Y$, the bit $U$ equals $0$ with probability $(1+\bar\sigma(Y))/2$. Hence
$\Hent(U\mid Y)=\E\,\htwo((1+\bar\sigma(Y))/2)$ and
\[
 I(Y;U)=1-\Hent(U\mid Y)=\E\,\J(\bar\sigma(Y)),\qquad \J(t)=1-\htwo\Big(\frac{1+t}2\Big).
\]
For $|t|\le1$,
\[
 \J(t)=\frac{(1+t)\ln(1+t)+(1-t)\ln(1-t)}{2\ln2}
 =\frac1{2\ln2}\sum_{\ell\ge1}\frac{t^{2\ell}}{\ell(2\ell-1)},
\]
so $\J$ is even and convex, it is increasing on $[0,1]$, and $\J(t)\ge t^2/(2\ln2)$. This gives the
first inequality in \eqref{eq:fairbit}. For the second, $\E\bar\sigma(Y)=\E\sigma=0$ and
$\E\sigma Y=\E[Y\bar\sigma(Y)]=\E[(Y-\E Y)\bar\sigma(Y)]$, so the Cauchy--Schwarz inequality gives
$(\E\sigma Y)^2\le\var(Y)\,\E\bar\sigma(Y)^2$. The third holds because $\var(Y)\le\E Y^2$.

For a fair bit, $I(Y;U)$ in nats is the Jensen--Shannon divergence between the two conditional laws
of $Y$ given $U$, and $\E\bar\sigma(Y)^2$ is one half of their triangular discrimination. The first
inequality in \eqref{eq:fairbit} therefore states that this Jensen--Shannon divergence is at least one
quarter of the triangular discrimination, a classical inequality \citep{topsoe2000}; see also
\citet[Theorem~6]{luli2015}. The constant $1/(2\ln2)$ cannot be
improved: if $Y=\sigma$ with probability $(1+t)/2$ and $Y=-\sigma$ otherwise, then $\E\sigma Y=t$,
$\var(Y)=1$ and $I(Y;U)=\J(t)=t^2/(2\ln2)+O(t^4)$.

\begin{proof}[Proof of Theorem~\ref{thm:transfer}(a)]
Let $Y=X+\tau G$ and fix $p$. Under $D_p$ each $U_i$ is a fair bit, because $U$ is uniform and
independent of $p$, and $\E_pY_i^2<\infty$ for $\Pi$-almost every $p$ by the energy bound. Applying
\eqref{eq:fairbit} to $(Y_i,U_i)$ under $D_p$ and summing over $i$,
\[
 \Ii_{D_p}(Y)\ge\frac1{2\ln2}\sum_{i=1}^n\frac{(\E_p\sigma_iY_i)^2}{\E_pY_i^2},
\]
where a term with a vanishing denominator is zero. Average over $p\sim\Pi$ and apply the
Cauchy--Schwarz inequality in the form $\int x^2/y\ge(\int x)^2/\int y$, for $y>0$, to the functions
$x=\E_p\sigma_iY_i$ and $y=\E_pY_i^2$ of the pair $(p,i)$:
\[
 \E_\Pi\Ii_{D_p}(Y)\ge\frac{\big(\sum_i\E_\Pi\sigma_iY_i\big)^2}{2\ln2\,\sum_i\E_\Pi Y_i^2}.
\]
Since $G$ is independent of $(p,\Zt,U,X)$ and its coordinates are centred with unit variance,
$\E_p\sigma_iY_i=\E_p\sigma_iX_i$ and $\E_pY_i^2=\E_pX_i^2+\tau^2$. The hypotheses of part~(a) give
$\sum_i\E_\Pi\sigma_iY_i\ge M>0$ and $\sum_i\E_\Pi Y_i^2\le V+n\tau^2$, which proves \eqref{eq:lower}.
\end{proof}

Only the independence of the noise and the first two moments of its coordinates enter this proof, so
\eqref{eq:lower} holds for any noise vector that is independent of $(p,\Zt,U,X)$ and has centred
coordinates of variance $\tau^2$. With $M_p=\sum_i\E_p\sigma_iX_i$ and $V_p=\sum_i\E_pX_i^2$, the
proof applied to the point mass at a fixed $p$ gives the fixed-law form
\[
 \Ii_{D_p}(X+\tau G)\ge\frac{M_p^2}{2\ln2\,(V_p+n\tau^2)}\qquad\text{whenever }M_p>0.
\]

\subsection{The Hierarchy and the Gaussian Upper Bound}\label{app:hierarchy}

Fix $p$; all information quantities in this subsection are taken under $D_p$. Let $Y\in\R^n$ be any random
vector defined together with $(\Zt,U)$, where, as always, $U$ is uniform on $\{0,1\}^n$ and
independent of $\Zt$.

\paragraph{The Hierarchy.}
Since $U_1,\dots,U_n$ are independent, $\Hent(U)=\sum_i\Hent(U_i)$, while
$\Hent(U\mid Y)\le\sum_i\Hent(U_i\mid Y)$. Hence, by data processing in the last step,
\[
 \Ij(Y)=\Hent(U)-\Hent(U\mid Y)\ge\sum_{i=1}^nI(Y;U_i)\ge\sum_{i=1}^nI(Y_i;U_i)=\Ii(Y).
\]
Since $U$ is independent of $\Zt$, the chain rule gives
\[
 \Ij(Y)=I(Y;U)\le I(Y,\Zt;U)=I(\Zt;U)+I(Y;U\mid\Zt)=\Is(Y).
\]
The same two
arguments, applied pair by pair, place the summed per-pair conditional information between the
two ends:
\[
 \Ii(Y)\le\sum_{i=1}^nI(Y_i;U_i\mid\Zt)\le\Is(Y).
\]
Indeed, $U_i$ is independent of $\Zt$, so $I(Y_i;U_i)\le I(Y_i,\Zt;U_i)=I(Y_i;U_i\mid\Zt)$; and
$U_1,\dots,U_n$ remain independent given $\Zt$, so the first display, applied conditionally on $\Zt$,
gives $I(Y;U\mid\Zt)\ge\sum_iI(Y;U_i\mid\Zt)\ge\sum_iI(Y_i;U_i\mid\Zt)$. For $Y=X$ the middle term is
the sum over pairs of the ld-CMI of \citet{wangmao2023a}.

Finally, let $Y=X+\tau G$. Given $\Zt$, the release is a function of $\thh$ (Definition~\ref{def:release}),
and $G$ is independent of $(\Zt,U,\thh)$, so $U\to\thh\to X\to X+\tau G$ is a Markov chain given $\Zt$.
Conditional data processing gives $\Is(X+\tau G)\le I(X;U\mid\Zt)\le I(\thh;U\mid\Zt)$, which
completes the proof of \eqref{eq:hierarchy}; Lemma~\ref{lem:directions} records these two inequalities
together with their per-pair forms. By the hierarchy, the lower bound \eqref{eq:lower} also holds for
$\Ij$ and $\Is$, and the upper bound \eqref{eq:upper} also holds for $\Ii$ and $\Ij$.

\paragraph{The Gaussian Upper Bound.}
\begin{proof}[Proof of Theorem~\ref{thm:transfer}(b)]
Fix $p$ and $\tau>0$, let $Y=X+\tau G$, and measure information and differential entropy in nats until
the last step. Given $\Zt$, $U\to X\to Y$ is a Markov chain, because $G$ is independent of
$(\Zt,U,X)$. Conditional data processing gives
$I(Y;U\mid\Zt)\le I(Y;X\mid\Zt)=\hdiff(Y\mid\Zt)-\hdiff(Y\mid X,\Zt)$, and
$\hdiff(Y\mid X,\Zt)=\hdiff(\tau G)=\frac n2\ln(2\pi e\tau^2)$. Given $\Zt$, subtracting the
reference $x_p(\Zt)$ is an invertible translation, so $\hdiff(Y\mid\Zt)=\hdiff(Y-x_p(\Zt)\mid\Zt)$.
Whatever its mean, a real random variable has differential entropy at most that of the centred
Gaussian law with the same second moment. By subadditivity of differential entropy and this bound for
each coordinate given $\Zt$,
\[
 \hdiff(Y\mid\Zt)\le\sum_{i=1}^n\hdiff\big(Y_i-x_{p,i}(\Zt)\bigm|\Zt\big)
 \le\sum_{i=1}^n\E\,\tfrac12\ln\Big(2\pi e\,\E\big[(Y_i-x_{p,i}(\Zt))^2\bigm|\Zt\big]\Big),
\]
and $\E[(Y_i-x_{p,i}(\Zt))^2\mid\Zt]=\E[(X_i-x_{p,i}(\Zt))^2\mid\Zt]+\tau^2$. Jensen's inequality for
the logarithm, over the coordinates and over $\Zt$, then gives
\[
 I(Y;U\mid\Zt)\le\frac12\sum_{i=1}^n\E\ln\Big(1+\frac{\E[(X_i-x_{p,i}(\Zt))^2\mid\Zt]}{\tau^2}\Big)
 \le\frac n2\ln\Big(1+\frac{\E_p\norm{X-x_p(\Zt)}^2}{n\tau^2}\Big)
 \le\frac n2\ln\Big(1+\frac{\Vres}{n\tau^2}\Big).
\]
The reference need not be the conditional mean of $X$ given $\Zt$. Dividing by $\ln2$ gives the
second term of \eqref{eq:upper} in bits. The first term, and the bound $\Is_{D_p}(X)\le n$ at
$\tau=0$, hold because $I(Y;U\mid\Zt)\le\Hent(U\mid\Zt)=\Hent(U)=n$ bits for every $\tau\ge0$.
\end{proof}

\subsection{The Clipped Decoder}\label{app:decoder}

At $\tau=0$ the lower bound has an operational counterpart: a fixed randomized rule that reads each
loss difference separately guesses the selection bits better than chance. Let $M>0$ and $V$ be as in
Theorem~\ref{thm:transfer}(a), put $\lambda=V/(2M)$, write $\clip(x,-1,1)=\max\{-1,\min\{1,x\}\}$, and,
given the release, draw $\sighat_1,\dots,\sighat_n\in\{-1,1\}$ independently, with randomness
independent of everything else, such that
\begin{equation}\label{eq:bit}
 \Pp(\sighat_i=1\mid X)=\tfrac12\big[1+\clip(X_i/\lambda,-1,1)\big],\qquad
 \widehat U_i=(1-\sighat_i)/2.
\end{equation}
Let $\Pp_\Pi$ and $I_\Pi$ refer to the law in which $p\sim\Pi$ is drawn first. Then
\begin{equation}\label{eq:bitguarantee}
 \frac1n\sum_{i=1}^n\Pp_\Pi(\widehat U_i=U_i)\ge\frac12+\frac{M^2}{2nV},\qquad
 \E_\Pi\Ii_{D_p}(\sighat)\ge n\,\J\Big(\frac{M^2}{nV}\Big),
\end{equation}
where $\sighat=(\sighat_1,\dots,\sighat_n)$ and $\J$ is the function of Appendix~\ref{app:fairbit};
by the hierarchy, the same lower bound holds for $\E_\Pi\Ij_{D_p}(\sighat)$. For the learners of
Theorem~\ref{thm:main}, Proposition~\ref{prop:finite} gives the two moment bounds with $M=c_Mn\eps$
and $V=C_Vn\eps^2$. Then $\lambda=C_V\eps/(2c_M)$ depends only on $\eps$ and on the constants, so the
decoder uses no knowledge of $p$ or of the learner, and $M^2/(nV)=c_M^2/C_V$ is a constant.

\begin{proof}
For real $x$, $|x-\lambda\clip(x/\lambda,-1,1)|=(|x|-\lambda)_+\le x^2/(4\lambda)$; when
$|x|\ge\lambda$, the inequality is $(|x|-2\lambda)^2\ge0$. As
$\E[\sighat_i\mid X,U]=\clip(X_i/\lambda,-1,1)$,
\[
 \lambda\,\E_\Pi\sigma_i\sighat_i=\E_\Pi\big[\sigma_i\lambda\clip(X_i/\lambda,-1,1)\big]
 \ge\E_\Pi\sigma_iX_i-\frac{\E_\Pi X_i^2}{4\lambda}.
\]
Summing over $i$ and using the two moment bounds, $\sum_i\E_\Pi\sigma_i\sighat_i\ge
M/\lambda-V/(4\lambda^2)=M^2/V$ at $\lambda=V/(2M)$. Since $\sigma_i\sighat_i=1$ exactly when
$\widehat U_i=U_i$, we have $\Pp_\Pi(\widehat U_i=U_i)=(1+\E_\Pi\sigma_i\sighat_i)/2$, which proves the
first bound in \eqref{eq:bitguarantee}.

For the second, let $t_i=\E_\Pi\sigma_i\sighat_i$. Under $\Pp_\Pi$ the bit $U_i$ is fair and the guess
$\widehat U_i$ errs with probability $(1-t_i)/2$. Fano's inequality for a binary guess holds whatever
the error probability, and since $\J$ is even it gives $I_\Pi(\sighat_i;U_i)\ge1-\htwo((1-t_i)/2)=\J(t_i)$.
Since $p$ is independent of $U_i$,
$I_\Pi(\sighat_i;U_i)\le I_\Pi(\sighat_i,p;U_i)=I_\Pi(\sighat_i;U_i\mid p)=\E_\Pi I_{D_p}(\sighat_i;U_i)$.
By convexity of $\J$, $\sum_i\J(t_i)\ge n\J(\bar t)$ with $\bar t=n^{-1}\sum_it_i\ge M^2/(nV)$. The
Cauchy--Schwarz inequality gives
$M^2\le(\sum_i\E_\Pi\sigma_iX_i)^2\le n\sum_i\E_\Pi X_i^2\le nV$, so $M^2/(nV)\in(0,1]$, and since $\J$
is increasing on $[0,1]$, $n\J(\bar t)\ge n\J(M^2/(nV))$. Together these bounds prove
\eqref{eq:bitguarantee}.
\end{proof}

The success guarantee is sharp over the class of all releases with these two moments, which need not
come from accurate learners. If the $\sigma_i$ are independent fair signs and
$X_i=(V/M)\sigma_i\ind_{E_i}$ for independent events $E_i$ of probability $M^2/(nV)$, independent of
the signs, then both moment bounds hold with equality, the decoder succeeds on every pair with
probability $\frac12+M^2/(2nV)$, and no rule that sees all of $X$ does better, since off $E_i$ the bit
$U_i$ is independent of $X$.

\subsection{Model Information with Noise}\label{app:noisy-model}

The upper half of Theorem~\ref{thm:main} rests on localization, which by itself bounds the
information of a model observed through Gaussian noise.

\begin{remark}[Model information with noise]\label{rem:noisy-model}
Let $G'\sim\Normal(0,I_d)$ be independent of the data, the selection bits and the learner. Under
Assumption~\ref{ass:class}, for every $p\in[-1,1]^d$, every proper learner that is $\eps$-accurate at
$D_p$ and every $\tau>0$,
\[
 I_{D_p}(\thh+\tau G';U\mid\Zt)\le\min\Big\{n,\frac w{2\ln2\,\tau^2}\Big\},\qquad
 w=\min\{4,(\eps/a_-)^{2/r}\}.
\]
Indeed, $\E_p\norm{\thh-\theta_p^*}^2\le w$ by \eqref{eq:error}, whose proof uses accuracy at $D_p$
only, and $\theta_p^*$ is fixed once $p$ is. Given $\Zt$, $U\to\thh\to\thh+\tau G'$ is a Markov chain,
and the proof of the Gaussian upper bound in Appendix~\ref{app:hierarchy}, in dimension $d$ and with
the reference $\theta_p^*$, bounds the information by
$\frac d2\ln(1+w/(d\tau^2))\le w/(2\tau^2)$ nats, and the bound $n$ holds because $U$ takes $2^n$
values.

In the sample-size range of Theorem~\ref{thm:main}, $n\ge c_-\neps$ gives
$w\le(\eps/a_-)^{2/r}\le n\eps^2/(c_-a_-^{2/r})$, so the information of the perturbed model is at most
\[
 \min\Big\{n,\frac{n\eps^2}{2\ln2\,c_-a_-^{2/r}\tau^2}\Big\}
 \le\Big(1+\frac1{2\ln2\,c_-a_-^{2/r}}\Big)\frac n{1+(\tau/\eps)^2}.
\]
This has the order of the upper half of \eqref{eq:main}, with no condition on the dimension and with
a constant that does not contain the factor $2K_0(L+H)^2$ of $\Vres$.
\end{remark}

Remark~\ref{rem:noisy-model} does not yield the upper half of Theorem~\ref{thm:main} by data
processing, because the release is a function of $\thh$, not of $\thh+\tau G'$. The loss-level content
of that upper half is the residual bound at every law. The Lipschitz factor $(L+H)\opnorm\Delta$ of the
release map (Appendix~\ref{app:residual}) depends on the candidates, on which the learner's error
depends as well, and Lemma~\ref{lem:weighted}, with the pair-matrix tail of Lemma~\ref{lem:matrix},
bounds the residual at every law despite this dependence. Finally, the two noise scales of
Theorem~\ref{thm:main}, $\tau\asymp\eps$ and $\tau\asymp\eps^{1/r}$, need both halves of the theorem,
and its lower half is a statement about the release.

%% file: app_D_mainproof.tex
\section{Proof of \texorpdfstring{Proposition~\ref{prop:finite} and Theorem~\ref{thm:main}}{the moment bounds and the main theorem}}\label{app:mainproof}

This appendix proves Proposition~\ref{prop:finite} in a finite form, with explicit values of $M$,
$V$ and $\Vres$, and deduces Theorem~\ref{thm:main}. The formulas for $M$, $V$ and $\Vres$ and the
finite form of Proposition~\ref{prop:finite} come first. Appendix~\ref{app:calibration} compares the learner's
population risk with that of a comparator aligned with the mean $\mu_p$, which gives the alignment
term $A_\phi$; Appendix~\ref{app:localization} localizes the population minimizer and the learner's
error, which gives $w_\theta$, $Q_\theta$ and $C_\theta$; Appendices~\ref{app:signed},
\ref{app:energy} and~\ref{app:residual} bound the signed mean, the energy and the residual;
Appendix~\ref{app:orders} derives their orders and proves Theorem~\ref{thm:main}; and
Appendix~\ref{app:constants} makes the constants explicit. Besides Theorem~\ref{thm:transfer} and the
hierarchy \eqref{eq:hierarchy}, the proof uses only the prior moments and
Lemmas~\ref{lem:fp}--\ref{lem:weighted} of Appendix~\ref{app:tools}.

Throughout, $\thh=A_n(S_n;R)$ is a proper learner that is $\eps$-accurate on the family, in the sense
of \eqref{eq:accuracy}. The expectation $\E_p$ and the probability $\Pp_p$ refer to the data law $D_p$
and are taken over the candidates, the selection bits and the seed $R$; $\E_\Pi$ first draws $p$ from
the beta prior $\Pi_\kappa$ and then takes $\E_p$.

\paragraph{Finite Moment Bounds.}
Under Assumption~\ref{ass:class}, let $q=r/(r-1)$ and
\begin{align}
 t_0&=\Big(\frac{b}{ra_++b}\Big)^{1/(r-1)},&
 \beta&=\frac{bt_0}{q},&
 v&=\Big(\frac{4\eps}{\beta}\Big)^{2/q},&
 \kappa&=v^{-1}-1.\label{eq:prior-choice}
\end{align}
Assume $0<v<1$, that is, $\eps<\beta/4$, so that $\kappa>0$. The prior $\Pi_\kappa$ of
Section~\ref{sec:architecture} draws the coordinates $p_k$ independently with density
$\pi_\kappa(t)=c_\kappa(1-t^2)^{\kappa/2-1}$ on $(-1,1)$, so that $\E\norm{\mu_p}^2=1/(\kappa+1)=v$.
It depends only on $\eps$ and on the constants of Assumption~\ref{ass:class}, and it is fixed before
the learner. Set
\begin{align}
 \rho&=\frac nd,&
 \cnet&=\frac{19}{40}\Big(\frac{29}{30}\Big)^2,&
 K_0&=\frac{2\rho\ln61+\ln(40/21)}{\cnet},&
 K_1&=\frac2{\cnet d}.\label{eq:matrix-constants}
\end{align}
Recall the weighted tail function of Lemma~\ref{lem:weighted}, extended as in
Appendix~\ref{app:tails}: for $\bar\omega>0$ and $0<x\le\bar\omega$,
\begin{equation}\label{eq:T}
 \Tb_{\bar\omega}(x;K_0,K_1)=x\big\{K_0+K_1[1+\ln(\bar\omega/x)]\big\},\qquad
 \Tb_{\bar\omega}(0;K_0,K_1)=0,
\end{equation}
and $\Tb_{\bar\omega}(x;K_0,K_1)=\Tb_{\bar\omega}(\bar\omega;K_0,K_1)$ for $x>\bar\omega$. The
following quantities depend only on the constants of Assumption~\ref{ass:class} and on $\eps$, $n$
and $d$; they do not depend on the learner:
\begin{align}
 C_0&=\Big(\frac b{2a_-}\Big)^{1/(r-1)},\qquad
 w=\min\big\{4,(\eps/a_-)^{2/r}\big\},\qquad
 w_*=C_0^2v^{q-1},\nonumber\\
 w_\theta&=\min\big\{1,(\sqrt w+\sqrt{w_*})^2\big\},\qquad
 C_\theta=\Tb_1(w_\theta;K_0,K_1),\label{eq:finite-small}\\
 Q_\theta&=\big\{\sqrt{wv}+C_0v^{q/2}(1+2/d)^{q/4}\big\}^2,\nonumber\\
 P_\phi&=\frac{Ht_0^2}2\big\{v^q(1+2/d)^{q/2}+v^{q-1}/d\big\},\label{eq:finite-pop}\\
 A_\phi&=\beta v^{q/2}\big(1+1/(dv)\big)^{-q/2}-\eps-P_\phi,\label{eq:finite-align}\\
 M&=\kappa A_\phi-H\sqrt{nQ_\theta C_\theta}-\frac H2C_\theta,\label{eq:finite-M}\\
 V&=L^2(C_\theta+\rho w_\theta),\qquad
 \Vres=(L+H)^2\,\Tb_4(w;2K_0,2K_1).\label{eq:finite-VD}
\end{align}
In Appendix~\ref{app:calibration}, $A_\phi$ bounds $b\,\E_\Pi\ip\thh{\mu_p}$ from below. In
Appendix~\ref{app:localization}, $w$ bounds the mean squared error $\E_p\norm{\thh-\theta_p^*}^2$ of the
learner at every $p$, $w_*$ bounds $\E_\Pi\norm{\theta_p^*}^2$, and $w_\theta$, $Q_\theta$ and
$C_\theta$ bound $\E_\Pi\norm\thh^2$, $\E_\Pi[\norm\thh^2\norm{\mu_p}^2]$ and
$\E_\Pi[\norm\thh^2\opnorm\Btr^2]$, the quantities of \eqref{eq:localized-main}.
The reference for the upper half is the release of the population minimizer $\theta_p^*$,
\[
 X_{p,i}^*(\Zt)=\phi(\ip{\theta_p^*}{Z_{1,i}})-\phi(\ip{\theta_p^*}{Z_{0,i}}),\qquad i\in[n],
\]
which depends only on $p$ and on the candidates.

\begin{restate}[finite form]{Proposition}{prop:finite}
Let $\thh$ be a proper learner that is $\eps$-accurate on the family, and assume $0<v<1$. Then, for
every $n$ and $d$,
\begin{equation}\label{eq:finite-budgets}
 \sum_i\E_\Pi\,\sigma_iX_i\ge M,\qquad
 \sum_i\E_\Pi X_i^2\le V,\qquad
 \E_p\norm{X-X_p^*(\Zt)}^2\le\Vres\quad\text{for every }p\in[-1,1]^d.
\end{equation}
The bound at $p$ uses accuracy only at $D_p$.
\end{restate}
No condition on the dimension is needed for \eqref{eq:finite-budgets}. The dimension condition of
Theorem~\ref{thm:main}, together with a small $\eps$, makes $M$ positive and gives the orders of
Appendix~\ref{app:orders}. The information bounds use $M$ only when $M>0$.

\paragraph{The Link and the Margins.}
The link remainder $u(t)=\phi(t)-\phi(0)+bt$ of Section~\ref{sec:moments} satisfies $u(0)=u'(0)=0$
and $0\le u''=\phi''\le H$ on $[-1,1]$. So $u$ is convex on $[-1,1]$, and
\[
 0\le u(t)\le\frac{Ht^2}2\qquad\text{and}\qquad|u'(t)|\le H|t|\qquad(|t|\le1).
\]
Every margin $\ip\theta z$ below has $\theta\in W$ and either $z\in\mathcal Z$ or $z=\mu_p$. Since
$\norm z=1$ on $\mathcal Z$ and $\norm{\mu_p}\le1$, all margins, and all segments between two
margins, lie in $[-1,1]$, where the bounds on $\phi'$, $\phi''$ and $u$ apply. Finally, under $D_p$
the coordinates of $Z$ are independent with variances $(1-p_k^2)/d\le1/d$, so for every fixed
$\theta\in\R^d$,
\begin{equation}\label{eq:margin-variance}
 \E_p\ip\theta{Z-\mu_p}^2=\sum_k\theta_k^2\,\frac{1-p_k^2}d\le\frac{\norm\theta^2}d,
 \qquad\text{hence}\qquad
 \E_p\ip\theta Z^2\le\ip\theta{\mu_p}^2+\frac{\norm\theta^2}d.
\end{equation}

\subsection{Calibration}\label{app:calibration}

This step turns accuracy into alignment of the learner with the mean. Fix $p$ and write
$\mu=\mu_p$. Let $\theta_c=t_0\norm\mu^{q-2}\mu$ if $\mu\ne0$ and $\theta_c=0$ if $\mu=0$. Then
$\norm{\theta_c}=t_0\norm\mu^{q-1}\le1$, because $t_0<1$ and $\norm\mu\le1$, so $\theta_c\in W$. Since
$(q-1)r=q$, we have $\norm{\theta_c}^r=t_0^r\norm\mu^q$ and $\ip{\theta_c}\mu=t_0\norm\mu^q$. The
choice of $t_0$ gives $a_+t_0^{r-1}=a_+b/(ra_++b)\le b/r$, so the upper growth
$g(\theta_c)\le a_+\norm{\theta_c}^r$ in \eqref{eq:matched} yields
\[
 g(\theta_c)-b\ip{\theta_c}\mu
 \le-t_0\big(b-a_+t_0^{r-1}\big)\norm\mu^q
 \le-bt_0\Big(1-\frac1r\Big)\norm\mu^q
 =-\beta\norm\mu^q .
\]
Write $\phi(t)=\phi(0)-bt+u(t)$. As $\E_p\ip{\theta_c}Z=\ip{\theta_c}\mu$, the bound
$u(t)\le Ht^2/2$ and \eqref{eq:margin-variance} give
\[
 F_p(\theta_c)=\phi(0)+g(\theta_c)-b\ip{\theta_c}\mu+\E_p\,u(\ip{\theta_c}Z)
 \le\phi(0)-\beta\norm\mu^q+\frac{Ht_0^2}2\Big\{\norm\mu^{2q}+\frac{\norm\mu^{2(q-1)}}d\Big\}.
\]
On the other hand, $g\ge0$ and $u\ge0$ give $F_p(\theta)\ge\phi(0)-b\ip\theta\mu$ for every
$\theta\in W$. Accuracy at $D_p$ and $\theta_c\in W$ therefore give
\[
 \phi(0)-b\,\E_p\ip\thh\mu
 \le\E_pF_p(\thh)
 \le\min_{\theta\in W}F_p(\theta)+\eps
 \le F_p(\theta_c)+\eps .
\]
Combining the last two displays and averaging over $p\sim\Pi_\kappa$,
\[
 b\,\E_\Pi\ip\thh{\mu_p}
 \ge\beta\,\E\norm{\mu_p}^q-\eps
 -\frac{Ht_0^2}2\Big\{\E\norm{\mu_p}^{2q}+\frac{\E\norm{\mu_p}^{2(q-1)}}d\Big\}.
\]
Since $1<q\le2$, the prior moment bounds \eqref{eq:prior-concave}--\eqref{eq:prior-low} give
$\E\norm{\mu_p}^q\ge v^{q/2}(1+1/(dv))^{-q/2}$, $\E\norm{\mu_p}^{2q}\le v^q(1+2/d)^{q/2}$ and
$\E\norm{\mu_p}^{2(q-1)}\le v^{q-1}$. Therefore
\begin{equation}\label{eq:alignment}
 b\,\E_\Pi\ip\thh{\mu_p}\ge\beta\,\E\norm{\mu_p}^q-\eps-P_\phi\ge A_\phi .
\end{equation}
This calibrates the population risk of the actual loss, not the risk of a linearized loss, and it
uses no assumption on the sample size. It is the only step that uses the upper growth
$g\le a_+\norm\cdot^r$, and Proposition~\ref{prop:growth} shows that this condition cannot be dropped.

\subsection{Localization}\label{app:localization}

Uniform convexity localizes the population minimizer and the learner's error at every law. Fix $p$.
The risk $F_p(\theta)=g(\theta)+\E_p\phi(\ip\theta Z)$ is continuously differentiable near $W$,
with $\nabla F_p(\theta)=\nabla g(\theta)+\E_p[\phi'(\ip\theta Z)Z]$. Its second term is convex on
$W$, so its Bregman divergence is nonnegative there, and \eqref{eq:matched} gives
\[
 D_{F_p}(\theta',\theta)\ge D_g(\theta',\theta)\ge a_-\norm{\theta'-\theta}^r\qquad(\theta,\theta'\in W).
\]
In particular $F_p$ is strictly convex on the compact set $W$ and has a unique minimizer
$\theta_p^*$ there. The first-order condition for this minimizer reads
$\ip{\nabla F_p(\theta_p^*)}{\theta-\theta_p^*}\ge0$ for every $\theta\in W$, so
\[
 F_p(\theta)-F_p(\theta_p^*)
 =D_{F_p}(\theta,\theta_p^*)+\ip{\nabla F_p(\theta_p^*)}{\theta-\theta_p^*}
 \ge a_-\norm{\theta-\theta_p^*}^r\qquad(\theta\in W).
\]
Let $e=\thh-\theta_p^*$. Accuracy at $D_p$ gives $a_-\E_p\norm e^r\le\eps$; Jensen's inequality, with
$r\ge2$, gives $\E_p\norm e^2\le(\E_p\norm e^r)^{2/r}$; and properness gives $\norm e\le2$. Hence, at
every $p$,
\begin{equation}\label{eq:error}
 \E_p\norm e^r\le\eps/a_-,\qquad \E_p\norm e^2\le w,\qquad \norm e\le2 .
\end{equation}

Next, $\nabla F_p(0)=\nabla g(0)+\phi'(0)\E_pZ=-b\mu_p$. Adding the two Bregman lower bounds between
$\theta_p^*$ and $0$,
\[
 2a_-\norm{\theta_p^*}^r
 \le D_{F_p}(\theta_p^*,0)+D_{F_p}(0,\theta_p^*)
 =\ip{\nabla F_p(\theta_p^*)-\nabla F_p(0)}{\theta_p^*}
 \le b\ip{\mu_p}{\theta_p^*}
 \le b\norm{\mu_p}\norm{\theta_p^*},
\]
where the second inequality is the first-order condition tested against $0\in W$, which gives
$\ip{\nabla F_p(\theta_p^*)}{\theta_p^*}\le0$. Dividing by $\norm{\theta_p^*}$ when $\theta_p^*\ne0$,
and using $1/(r-1)=q-1$,
\begin{equation}\label{eq:optimizer}
 \norm{\theta_p^*}\le C_0\norm{\mu_p}^{q-1}.
\end{equation}
Only the first-order condition was used, and it holds also when $\theta_p^*$ lies on the boundary
of $W$.

Now average over the prior. By \eqref{eq:optimizer} and \eqref{eq:prior-concave},
$\E_\Pi\norm{\theta_p^*}^2\le C_0^2\E\norm{\mu_p}^{2(q-1)}\le w_*$, and by \eqref{eq:error},
$\E_\Pi\norm e^2\le w$. Minkowski's inequality applied to $\norm\thh\le\norm e+\norm{\theta_p^*}$
gives $(\E_\Pi\norm\thh^2)^{1/2}\le\sqrt w+\sqrt{w_*}$, and together with $\norm\thh\le1$ this gives
$\E_\Pi\norm\thh^2\le w_\theta$. In the same way,
$\norm\thh\norm{\mu_p}\le\norm e\norm{\mu_p}+C_0\norm{\mu_p}^q$, and conditioning on $p$ and using
\eqref{eq:error} at each $p$ gives
$\E_\Pi[\norm e^2\norm{\mu_p}^2]=\E[\norm{\mu_p}^2\,\E_p\norm e^2]\le wv$, so \eqref{eq:prior-high}
gives
\begin{equation}\label{eq:mixed}
 \E_\Pi\big[\norm\thh^2\norm{\mu_p}^2\big]
 \le\big\{\sqrt{wv}+C_0\big(\E\norm{\mu_p}^{2q}\big)^{1/2}\big\}^2
 \le Q_\theta .
\end{equation}

Finally, Lemma~\ref{lem:matrix}, in the form proved in Appendix~\ref{app:tails} for all $n$ and $d$,
gives $\Pp_p(\opnorm\Btr^2>K_0+K_1t)\le e^{-t}$ for $t\ge0$, with the constants
\eqref{eq:matrix-constants}, at every $p$; averaging over $p$, the same tail bound holds under the
prior. Lemma~\ref{lem:weighted}, applied with $N=\opnorm\Btr^2$, the weight
$\omega=\norm\thh^2\in[0,1]$, the cap $\bar\omega=1$ and $\E_\Pi\omega\le w_\theta\le1$, gives
\begin{equation}\label{eq:Ctheta}
 \E_\Pi\big[\norm\thh^2\opnorm\Btr^2\big]\le\Tb_1(w_\theta;K_0,K_1)=C_\theta .
\end{equation}
Lemma~\ref{lem:weighted} needs no independence between $N$ and $\omega$. This matters, because
$\thh$ is computed from the training points, which are the rows of $\Btr$ shifted by $\mu_p$, so the
weight $\norm\thh^2$ may be largest on atypical matrices. The same argument at a fixed $p$, with the
pair matrix and the weight $\norm e^2$, bounds the residual in Appendix~\ref{app:residual}.

\subsection{The Signed Mean}\label{app:signed}

This subsection proves the first bound of \eqref{eq:finite-budgets}. Fix $p$ and condition on the
selection bits $U$, the training set $S_n$ and the seed $R$; then $\thh$ is fixed. Write
$\Ztr_i=Z_{U_i,i}$ and $\Zgh_i=Z_{1-U_i,i}$, and let
\[
 m=\ip\thh{\mu_p},\qquad s_i=\ip\thh{\Ztr_i-\mu_p},
 \qquad\text{so that}\qquad \ip\thh{\Ztr_i}=m+s_i
 \quad\text{and}\quad \sum_is_i^2=\norm{\Btr\thh}^2 .
\]
Given $(U,S_n,R)$, the ghosts are independent draws from $D_p$, independent of $\thh$. By
Definition~\ref{def:release}, $\sigma_iX_i=\phi(\ip\thh{\Zgh_i})-\phi(\ip\thh{\Ztr_i})$ is the
ghost-minus-training loss of pair $i$, and $\phi(t)=\phi(0)-bt+u(t)$ gives
\[
 \sigma_iX_i=b\ip\thh{\Ztr_i-\Zgh_i}+u(\ip\thh{\Zgh_i})-u(m+s_i).
\]
The ghost has conditional mean $\mu_p$, so the linear term has conditional mean $bs_i$, and Jensen's
inequality for the convex $u$ gives $\E[u(\ip\thh{\Zgh_i})\mid U,S_n,R]\ge u(m)$. At the training
point, $u''\le H$ gives $u(m+s_i)\le u(m)+u'(m)s_i+Hs_i^2/2$. Hence
\[
 \E[\sigma_iX_i\mid U,S_n,R]\ge bs_i-u'(m)s_i-\frac H2s_i^2 .
\]
Sum over $i$ and average over the data, the seed and $p\sim\Pi_\kappa$. By Lemma~\ref{lem:fp}, the
linear term averages to $b\,\E_\Pi\ip\thh{\sum_i(\Ztr_i-\mu_p)}=b\kappa\,\E_\Pi m$. This gives the
inequality \eqref{eq:taylor-main} of Section~\ref{sec:moments}:
\[
 \sum_i\E_\Pi\,\sigma_iX_i
 \ge b\kappa\,\E_\Pi m-\E_\Pi\Big[u'(m)\sum_is_i\Big]-\frac H2\,\E_\Pi\norm{\Btr\thh}^2 .
\]
By \eqref{eq:alignment} and $\kappa>0$, the first term is at least $\kappa A_\phi$. The middle term is
the train--ghost remainder, which vanishes for the affine link ($u\equiv0$). Pointwise,
$|u'(m)|\le H|m|\le H\norm\thh\norm{\mu_p}$ and, by the Cauchy--Schwarz inequality over $i$,
$|\sum_is_i|\le\sqrt n\norm{\Btr\thh}\le\sqrt n\norm\thh\opnorm\Btr$, so
\[
 \Big|u'(m)\sum_is_i\Big|\le H\sqrt n\,\big(\norm\thh\norm{\mu_p}\big)\big(\norm\thh\opnorm\Btr\big).
\]
The Cauchy--Schwarz inequality for $\E_\Pi$, with one factor $\norm\thh$ in each term, together with
\eqref{eq:mixed} and \eqref{eq:Ctheta}, gives
\[
 \E_\Pi\Big|u'(m)\sum_is_i\Big|
 \le H\sqrt{n\,\E_\Pi\big[\norm\thh^2\norm{\mu_p}^2\big]\,\E_\Pi\big[\norm\thh^2\opnorm\Btr^2\big]}
 \le H\sqrt{nQ_\theta C_\theta},
\]
and $\norm{\Btr\thh}^2\le\norm\thh^2\opnorm\Btr^2$ gives $\E_\Pi\norm{\Btr\thh}^2\le C_\theta$.
Therefore
\[
 \sum_i\E_\Pi\,\sigma_iX_i\ge\kappa A_\phi-H\sqrt{nQ_\theta C_\theta}-\frac H2C_\theta=M ,
\]
which is the first bound of \eqref{eq:finite-budgets}. With only $\norm\thh\le1$ in place of
\eqref{eq:mixed} and \eqref{eq:Ctheta}, the same argument would bound the two penalty terms only by
constants at the optimal sample size, where for $r=2$ the signal $\kappa A_\phi$ is itself of constant
order. The localized bounds keep them one power of $\eps$ below the signal (Appendix~\ref{app:orders}).

\subsection{Energy}\label{app:energy}

This subsection proves the second bound of \eqref{eq:finite-budgets}. Fix $p$, and let $s_i$ be as in
Appendix~\ref{app:signed} and $s_i^{\mathrm{gh}}=\ip\thh{\Zgh_i-\mu_p}$. Since $|\phi'|\le L$ on
$[-1,1]$, the mean value theorem gives
$|X_i|\le L|\ip\thh{Z_{1,i}-Z_{0,i}}|=L|s_i-s_i^{\mathrm{gh}}|$. Given $(U,S_n,R)$, the ghost
$\Zgh_i$ is independent of $\thh$ with mean $\mu_p$. So $\E_p[s_is_i^{\mathrm{gh}}]=0$, and
\eqref{eq:margin-variance} gives $\E[(s_i^{\mathrm{gh}})^2\mid U,S_n,R]\le\norm\thh^2/d$. Hence, at
every $p$,
\begin{equation}\label{eq:energy}
 \sum_i\E_pX_i^2
 \le L^2\Big\{\E_p\norm{\Btr\thh}^2+\sum_i\E_p(s_i^{\mathrm{gh}})^2\Big\}
 \le L^2\Big\{\E_p\big[\norm\thh^2\opnorm\Btr^2\big]+\rho\,\E_p\norm\thh^2\Big\}.
\end{equation}
Averaging over the prior and using \eqref{eq:Ctheta} and $\E_\Pi\norm\thh^2\le w_\theta$ gives
$\sum_i\E_\Pi X_i^2\le L^2(C_\theta+\rho w_\theta)=V$. The bound $V$ concerns the prior average; the
residual bound of the next subsection holds at every law.

\subsection{The Residual at Every Law}\label{app:residual}

This subsection proves the third bound of \eqref{eq:finite-budgets}. Fix $p$, let
$e=\thh-\theta_p^*$, and let $\Delta$ be the pair matrix, with rows
$\Delta_i=(Z_{1,i}-Z_{0,i})^\top$. For $t\in[0,1]$, the point $\theta_p^*+te=(1-t)\theta_p^*+t\thh$
lies in $W$. The fundamental theorem of calculus along this segment gives
\[
 X_i-X_{p,i}^*(\Zt)
 =\int_0^1\Big\{\phi'\big(\ip{\theta_p^*+te}{Z_{1,i}}\big)\ip e{Z_{1,i}}
 -\phi'\big(\ip{\theta_p^*+te}{Z_{0,i}}\big)\ip e{Z_{0,i}}\Big\}\,dt .
\]
Write the integrand as
$\phi'(\ip{\theta_p^*+te}{Z_{1,i}})\ip e{\Delta_i}
+\{\phi'(\ip{\theta_p^*+te}{Z_{1,i}})-\phi'(\ip{\theta_p^*+te}{Z_{0,i}})\}\ip e{Z_{0,i}}$. Since
$|\phi'|\le L$, $0\le\phi''\le H$ and $\norm{Z_{0,i}}=1$,
\[
 \big|X_i-X_{p,i}^*(\Zt)\big|
 \le L|\ip e{\Delta_i}|+H\norm e\int_0^1\big|\ip{\theta_p^*+te}{\Delta_i}\big|\,dt .
\]
Take the Euclidean norm over $i$. The first term gives $L\norm{\Delta e}\le L\opnorm\Delta\norm e$.
For the second, Minkowski's integral inequality and $\norm{\theta_p^*+te}\le1$ give
\[
 \Big(\sum_i\Big(\int_0^1\big|\ip{\theta_p^*+te}{\Delta_i}\big|\,dt\Big)^2\Big)^{1/2}
 \le\int_0^1\norm{\Delta(\theta_p^*+te)}\,dt\le\opnorm\Delta .
\]
Hence
\[
 \norm{X-X_p^*(\Zt)}\le(L+H)\opnorm\Delta\norm e .
\]
By Lemma~\ref{lem:matrix}, in the form proved in Appendix~\ref{app:tails},
$\Pp_p(\opnorm\Delta^2>2K_0+2K_1t)\le e^{-t}$ for $t\ge0$, and by
\eqref{eq:error} the weight $\norm e^2\in[0,4]$ has $\E_p\norm e^2\le w\le4$. Lemma~\ref{lem:weighted}
with the cap $\bar\omega=4$ therefore gives
\[
 \E_p\norm{X-X_p^*(\Zt)}^2
 \le(L+H)^2\,\E_p\big[\opnorm\Delta^2\norm e^2\big]
 \le(L+H)^2\,\Tb_4(w;2K_0,2K_1)=\Vres .
\]
The error $e$ depends on the training candidates, which enter the rows of $\Delta$;
Lemma~\ref{lem:weighted} allows this dependence. The bound holds at every $p$ and uses accuracy only
at $D_p$, through \eqref{eq:error}. It concerns the loss differences themselves, with no
linearization of the link. This completes the proof of Proposition~\ref{prop:finite}.

\subsection{Orders and Assembly}\label{app:orders}

This subsection derives the orders of the finite bounds in the regime of Theorem~\ref{thm:main} and
proves the theorem. Let $0<c_-\le c_+$, $\neps=\eps^{-2/q}$, $c_-\neps\le n\le c_+\neps$ and
$d\ge\Cdim\max\{n,\neps\}$, with $\Cdim=32\max\{1,(\beta/4)^{2/q}\}$. Since $\beta<b$, $\Cdim=32$
whenever $b\le4$. As $v^{-1}=(\beta/4)^{2/q}\neps$, the dimension condition implies
\begin{equation}\label{eq:dim-condition}
 d\ge32\max\{n,v^{-1}\},
\end{equation}
and this is the only way in which the dimension enters below. Its two parts play different roles. The
first, $d\ge32n$, keeps $\rho\le1/32$ and so bounds the operator norms of Lemma~\ref{lem:matrix},
which control the energy, the residual and the penalty terms. The second, $dv\ge32$, makes the
dimension terms of the prior moments and of the margin variance \eqref{eq:margin-variance} small
against powers of $v$: under the prior, $\norm{\mu_p}^2$ has relative variance at most $1/(dv)$
(Appendix~\ref{app:prior-moments}), and $1/d\le v/32$. This gives the factor $(32/33)^{q/2}$ in the
bound on $A_\phi$ below and keeps $P_\phi$ of order $\eps^2$. Write $K_0(\rho)$ for $K_0$ of
\eqref{eq:matrix-constants} at the ratio $\rho$; it increases with $\rho$. By \eqref{eq:dim-condition},
$\rho\le1/32$, so $K_0\le K_0^*$ with
\[
 K_0^*=K_0(1/32)=\frac{\ln61/16+\ln(40/21)}{\cnet}<2.031,
\]
and $d\ge32n\ge32c_-\neps$ gives $K_1=2/(\cnet d)\le\eps^{2/q}/(16\cnet c_-)$.

Below, $O$, $\Omega$ and $\Theta$ hide constants that depend only on the constants of
Assumption~\ref{ass:class} and on $c_\pm$, and the bounds hold for all sufficiently small $\eps$,
uniformly in $n$ and $d$ with $c_-\neps\le n\le c_+\neps$ and \eqref{eq:dim-condition}. For small $\eps$ the caps in \eqref{eq:finite-small} are
inactive: $w=(\eps/a_-)^{2/r}$; since $(q-1)/q=1/r$, $w_*=C_0^2(4\eps/\beta)^{2/r}$; and
$w_\theta=(\sqrt w+\sqrt{w_*})^2$. The constant $K_0$ lies between $\ln(40/21)/\cnet$ and $K_0^*$,
and $K_1\ln(1/w_\theta)=O(\eps^{2/q}\ln(1/\eps))=o(1)$, so the weighted tail costs no logarithmic
factor in the limit and $C_\theta=\Theta(w_\theta)$. Next, $\sqrt{wv}=\Theta(\eps^{1/r+1/q})=\Theta(\eps)$
and $v^{q/2}=4\eps/\beta$, so $Q_\theta=O(\eps^2)$; and $v^q=(4\eps/\beta)^2$ and, by
\eqref{eq:dim-condition}, $v^{q-1}/d\le v^q/32$, so $P_\phi=O(\eps^2)$. Finally
$n\eps^2=\Theta(\eps^{2-2/q})=\Theta(\eps^{2/r})$. In summary,
\begin{equation}\label{eq:orders}
 w,\ w_*,\ w_\theta,\ C_\theta=\Theta(\eps^{2/r}),\qquad
 Q_\theta,\ P_\phi=O(\eps^2),\qquad
 V,\ \Vres=\Theta(\eps^{2/r})=\Theta(n\eps^2).
\end{equation}

For the signal, $\beta v^{q/2}=4\eps$ and $dv\ge32$ give
\[
 A_\phi\ge\big\{4(32/33)^{q/2}-1\big\}\eps-P_\phi\ge\tfrac{95}{33}\,\eps-O(\eps^2),
\]
because $(32/33)^{q/2}\ge32/33$ for $q\le2$; also $A_\phi\le3\eps$. As
$\kappa=v^{-1}-1=\Theta(\eps^{-2/q})=\Theta(n)$, the signal is
$\kappa A_\phi=\Theta(n\eps)=\Theta(\eps^{-1+2/r})$. Since $-2/q+2+2/r=4/r$, the two penalty terms of
\eqref{eq:finite-M} satisfy
\[
 H\sqrt{nQ_\theta C_\theta}+\frac H2C_\theta
 =O\big(\sqrt{\eps^{-2/q}\,\eps^2\,\eps^{2/r}}+\eps^{2/r}\big)
 =O(\eps^{2/r})=O(\eps\cdot n\eps),
\]
one power of $\eps$ below the signal. Hence $M=\Theta(n\eps)$: there are $\eps_0>0$, $c_M>0$, $C_V$
and $C_{\mathrm{res}}$, depending only on the constants of Assumption~\ref{ass:class} and on $c_\pm$,
such that for $0<\eps\le\eps_0$
\[
 M\ge c_Mn\eps,\qquad V\le C_Vn\eps^2,\qquad \Vres\le C_{\mathrm{res}}n\eps^2 .
\]
Together with \eqref{eq:finite-budgets}, this is Proposition~\ref{prop:finite}.
Appendix~\ref{app:constants} gives explicit values of $\eps_0$, $c_M$, $C_V$ and $C_{\mathrm{res}}$.

\paragraph{Proof of Theorem~\ref{thm:main}.}
Let $0<\eps\le\eps_0$, let $n$ and $d$ be as above, and let $\thh$ be a proper learner that is
$\eps$-accurate on the family. The prior $\Pi_\kappa$ is fixed before the learner, $p$ is drawn
independently of the selection bits, the seed and the noise, and the reference $X_p^*(\Zt)$ depends
only on $p$ and the candidates. By
Proposition~\ref{prop:finite}, Theorem~\ref{thm:transfer} applies with the reference
$x_p=X_p^*$ and with $M\ge c_Mn\eps>0$, $V\le C_Vn\eps^2$ and $\Vres\le C_{\mathrm{res}}n\eps^2$. Its
lower bound \eqref{eq:lower} gives, for every $\tau\ge0$,
\[
 \E_\Pi\Ii_{D_p}(X+\tau G)
 \ge\frac{M^2}{2\ln2\,(V+n\tau^2)}
 \ge\frac{c_M^2}{2\ln2}\,\frac n{C_V+(\tau/\eps)^2}
 \ge c_1\frac n{1+(\tau/\eps)^2},\qquad
 c_1=\frac{c_M^2}{2\ln2\,\max\{C_V,1\}},
\]
since $C_V+x\le\max\{C_V,1\}(1+x)$ for $x\ge0$. Its upper bound \eqref{eq:upper} gives, at every $p$
and for $\tau>0$, with $a=C_{\mathrm{res}}/(2\ln2)$ and $x=(\tau/\eps)^2$,
\[
 \Is_{D_p}(X+\tau G)
 \le n\min\Big\{1,\frac{\ln\big(1+C_{\mathrm{res}}(\eps/\tau)^2\big)}{2\ln2}\Big\}
 \le n\min\Big\{1,\frac ax\Big\}
 \le(1+a)\frac n{1+x},
\]
by $\ln(1+y)\le y$ and by the inequality $\min\{1,a/x\}\le(1+a)/(1+x)$, which holds for $a\ge0$ and
$x>0$ (consider the cases $x\le a$ and $x>a$). At $\tau=0$, Theorem~\ref{thm:transfer} gives
$\Is_{D_p}(X)\le\Hent(U\mid\Zt)=n$. So $\Is_{D_p}(X+\tau G)\le C_1n/(1+(\tau/\eps)^2)$ for every
$\tau\ge0$ and every $p$, with $C_1=1+C_{\mathrm{res}}/(2\ln2)$.

By \eqref{eq:hierarchy}, $\Ii\le\Ij\le\Is$ at every $p$. So the lower bound holds for all three
information measures on average over the prior, and the upper bound holds for all three at every $p$. A prior
average is at most the supremum over $p$, which gives both bounds of \eqref{eq:main} for
$\sup_p\Qc_{D_p}(X+\tau G)$, for each $\Qc\in\{\Ii,\Ij,\Is\}$ and every $\tau\ge0$. The constants
$\eps_0$, $c_1$, $C_1$ and $\Cdim$ depend only on the constants of Assumption~\ref{ass:class} and on
$c_\pm$, not on the learner, so both bounds also hold for the infimum over any nonempty class of such
learners. This proves Theorem~\ref{thm:main}. The maximizing law may depend on the learner, on $\Qc$
and on $\tau$.

\subsection{Explicit Constants}\label{app:constants}

This subsection gives explicit values of $\eps_0$, $c_M$, $C_V$, $C_{\mathrm{res}}$, $c_1$ and $C_1$;
the dimension multiplier is $\Cdim=32\max\{1,(\beta/4)^{2/q}\}$, as in Appendix~\ref{app:orders}.
Throughout, $c_-\neps\le n\le c_+\neps$ and $d$ satisfies the finite condition
\eqref{eq:dim-condition}, which is all that is used of the dimension. We work with the normalized
moments
\[
 \Mbar=\frac M{n\eps},\qquad \Vbar=\frac V{n\eps^2},\qquad \Vresbar=\frac{\Vres}{n\eps^2},
\]
so that the bounds $M\ge c_Mn\eps$, $V\le C_Vn\eps^2$ and $\Vres\le C_{\mathrm{res}}n\eps^2$ of
Appendix~\ref{app:orders} read $\Mbar\ge c_M$, $\Vbar\le C_V$ and $\Vresbar\le C_{\mathrm{res}}$. With
$K_0^*$ as in Appendix~\ref{app:orders}, write
\[
 c_v=\Big(\frac4\beta\Big)^{2/q},\qquad
 c_w=\Big(a_-^{-1/r}+C_0\Big(\frac4\beta\Big)^{1/r}\Big)^2,\qquad
 c_Q=a_-^{-1/r}\Big(\frac4\beta\Big)^{1/q}+C_0\,\frac4\beta\,\Big(1+\frac v{16}\Big)^{q/4},
\]
and let $\eps<\min\{\beta/4,\,2^ra_-,\,c_w^{-r/2}\}$, so that $0<v<1$ and the caps in
\eqref{eq:finite-small} are inactive. Then $v=c_v\eps^{2/q}$, $w=(\eps/a_-)^{2/r}$ and
$w_\theta=c_w\eps^{2/r}$.

\paragraph{A Lower Bound on $M$, Uniform in $d$.} Under these conditions,
\begin{align}
 M&\ge\eps^{1-2/q}\,\Upsilon(\eps),\label{eq:explicit-threshold}\\
 \Upsilon(\eps)&=\big(c_v^{-1}-\eps^{2/q}\big)\Big(4\Big(\frac{32}{33}\Big)^{q/2}-1-\bar P_\phi(\eps)\Big)
 -H\eps\Big(c_Q\sqrt{c_+c_w\bar K}+\frac{c_w\bar K}2\Big),\nonumber
\end{align}
% \bar P_\phi(\eps): normalized population term, P_\phi\le\eps\bar P_\phi(\eps).
where
\[
 \bar P_\phi(\eps)=\frac{Ht_0^2}2\Big(\frac4\beta\Big)^2\eps
 \Big[\Big(1+\frac v{16}\Big)^{q/2}+\frac1{32}\Big],\qquad
 \bar K=K_0^*+\frac{\eps^{2/q}}{16\cnet c_-}\Big(1+\ln\frac1{w_\theta}\Big).
\]
To prove \eqref{eq:explicit-threshold}, write
$\kappa=v^{-1}-1=\eps^{-2/q}(c_v^{-1}-\eps^{2/q})>0$. In \eqref{eq:finite-align},
$\beta v^{q/2}=4\eps$ and $dv\ge32$ give $\beta v^{q/2}(1+1/(dv))^{-q/2}\ge4(32/33)^{q/2}\eps$, while
$2/d\le v/16$, $1/d\le v/32$ and $v^q=(4/\beta)^2\eps^2$ give
$P_\phi\le(Ht_0^2/2)v^q[(1+v/16)^{q/2}+1/32]=\eps\bar P_\phi(\eps)$. Hence
$\kappa A_\phi\ge\eps^{1-2/q}(c_v^{-1}-\eps^{2/q})(4(32/33)^{q/2}-1-\bar P_\phi(\eps))$. For the
penalty terms, $K_0\le K_0^*$ and $K_1\le\eps^{2/q}/(16\cnet c_-)$ give
$C_\theta\le w_\theta\bar K=c_w\bar K\eps^{2/r}$; and $\sqrt{wv}=a_-^{-1/r}(4/\beta)^{1/q}\eps$ and
$C_0v^{q/2}(1+2/d)^{q/4}\le C_0(4/\beta)(1+v/16)^{q/4}\eps$ give $Q_\theta\le c_Q^2\eps^2$. As
$n\le c_+\neps$ and $\neps\eps^2\eps^{2/r}=\eps^{4/r}$,
\[
 H\sqrt{nQ_\theta C_\theta}+\frac H2C_\theta
 \le H\eps^{2/r}\Big(c_Q\sqrt{c_+c_w\bar K}+\frac{c_w\bar K}2\Big)
 =\eps^{1-2/q}\cdot H\eps\Big(c_Q\sqrt{c_+c_w\bar K}+\frac{c_w\bar K}2\Big),
\]
because $\eps^{2/r}=\eps^{2-2/q}$. Inserting both bounds into \eqref{eq:finite-M} gives
\eqref{eq:explicit-threshold}.

\paragraph{The Threshold and $c_M$.} As $\eps\downarrow0$,
$\Upsilon(\eps)\to c_v^{-1}(4(32/33)^{q/2}-1)>0$. Moreover, $\Upsilon(\eps')\ge\Upsilon(\eps)$ whenever
$0<\eps'\le\eps$ and $\Upsilon(\eps)>0$. Indeed, the first factor $c_v^{-1}-\eps^{2/q}$ is positive and
decreasing in $\eps$, because $\eps<\beta/4$; and $\bar P_\phi$, $c_Q$ and $\bar K$ are nondecreasing
in $\eps$. For $\bar K$, the derivative of $\eps^{2/q}(1+\ln(1/w_\theta))$, with
$w_\theta=c_w\eps^{2/r}$, is $\eps^{2/q-1}[(2/q)(1+\ln(1/w_\theta))-2/r]\ge0$, since $q\le2\le r$ and
$w_\theta\le1$, the latter because $\eps<c_w^{-r/2}$. So if $\Upsilon(\eps)>0$, its second factor is positive as well, both factors can only
grow when $\eps$ decreases, and the subtracted term can only shrink. Hence, if $\Upsilon(\eps_1)>0$ for
some $\eps_1<\min\{\beta/4,2^ra_-,c_w^{-r/2}\}$, then for every $\eps\in(0,\eps_1]$ in the regime,
\[
 M\ge\eps^{1-2/q}\,\Upsilon(\eps_1)>0,\qquad\text{so}\qquad
 \Mbar\ge\frac{\Upsilon(\eps_1)}{c_+}=:c_M .
\]
Such an $\eps_1$ exists, and we take $\eps_0=\eps_1$.

\paragraph{Energy, Residual, $C_V$ and $C_{\mathrm{res}}$.} Under the same conditions,
$C_\theta\le w_\theta\bar K$, $\rho\le1/32$, $w_\theta=c_w\eps^{2/r}$, $w=(\eps/a_-)^{2/r}$ and
$n\eps^2\ge c_-\eps^{2/r}$ give
\[
 \Vbar\le\frac{L^2c_w}{c_-}\Big(\bar K+\frac1{32}\Big),\qquad
 \Vresbar\le\frac{2(L+H)^2}{c_-a_-^{2/r}}\Big\{K_0^*+\frac{\eps^{2/q}}{16\cnet c_-}
 \Big(1+\ln\frac4w\Big)\Big\}.
\]
Both right sides are nondecreasing in $\eps$, by the derivative computed above with $4/w\ge1$ (as
$\eps<2^ra_-$) in place of $1/w_\theta$. So their values at $\eps_1$ are admissible values of $C_V$ and $C_{\mathrm{res}}$ for
every $\eps\in(0,\eps_1]$.

\paragraph{The Two-Sided Bound in Normalized Form.} At given $\eps$, $n$ and $d$ with $M>0$,
Theorem~\ref{thm:transfer} and \eqref{eq:hierarchy} give, for every $\tau\ge0$ and each
$\Qc\in\{\Ii,\Ij,\Is\}$,
\begin{align}
 \frac{\Mbar^2}{2\ln2}\,\frac n{\Vbar+(\tau/\eps)^2}
 &\le\sup_p\Qc_{D_p}(X+\tau G)\nonumber\\
 &\le n\min\Big\{1,\frac1{2\ln2}\ln\big(1+\Vresbar(\eps/\tau)^2\big)\Big\}
 \le\Big(1+\frac{\Vresbar}{2\ln2}\Big)\frac n{1+(\tau/\eps)^2},\label{eq:explicit-bracket}
\end{align}
where the middle term is read as $n$ at $\tau=0$. Since $\Vbar+x\le\max\{\Vbar,1\}(1+x)$ for $x\ge0$,
\eqref{eq:explicit-bracket} has the form \eqref{eq:main} with $c_1=\Mbar^2/(2\ln2\max\{\Vbar,1\})$ and
$C_1=1+\Vresbar/(2\ln2)$, once $\Mbar$ is replaced by a lower bound and $\Vbar$, $\Vresbar$ by upper
bounds valid over the regime. With the bounds above, Theorem~\ref{thm:main} therefore holds with
\[
 \eps_0=\eps_1,\qquad c_1=\frac{c_M^2}{2\ln2\,\max\{C_V,1\}},\qquad
 C_1=1+\frac{C_{\mathrm{res}}}{2\ln2},
\]
where $c_M=\Upsilon(\eps_1)/c_+$ and $C_V$, $C_{\mathrm{res}}$ are the values at $\eps_1$ given above.

\paragraph{Limits.} The normalized moments depend on the multipliers of $n$ and $d$. Along
$n=c\,\neps$ and $d=n/\rho$, with fixed $c\in[c_-,c_+]$ and $\rho>0$ such that $cc_v\ge32\rho$ (so
that $dv=cc_v/\rho\ge32$),
\begin{equation}\label{eq:explicit-limits}
 \Mbar\to\frac1{cc_v}\Big[4\Big(1+\frac\rho{cc_v}\Big)^{-q/2}-1\Big],\qquad
 \Vbar\to\frac{L^2\big(K_0(\rho)+\rho\big)c_w}c,\qquad
 \Vresbar\to\frac{2K_0(\rho)(L+H)^2}{c\,a_-^{2/r}}\qquad(\eps\downarrow0).
\end{equation}
Indeed, along this schedule $\kappa/n\to1/(cc_v)$, $A_\phi/\eps\to4(1+\rho/(cc_v))^{-q/2}-1$, the
penalty terms are $O(\eps\cdot n\eps)$, $K_1\ln(1/w_\theta)\to0$, $w_\theta/(n\eps^2)=c_w/c$ and
$w/(n\eps^2)=a_-^{-2/r}/c$. Over the whole regime, that is, uniformly in $n$ and $d$ with
$c_-\neps\le n\le c_+\neps$ and \eqref{eq:dim-condition}, the bounds above give
\[
 \liminf_{\eps\downarrow0}\Mbar\ge\frac{4(32/33)^{q/2}-1}{c_+c_v},\qquad
 \limsup_{\eps\downarrow0}\Vbar\le\frac{L^2(K_0^*+1/32)\,c_w}{c_-},\qquad
 \limsup_{\eps\downarrow0}\Vresbar\le\frac{2K_0^*(L+H)^2}{c_-a_-^{2/r}}.
\]
In these limits the two penalty terms of \eqref{eq:finite-M}, which come from Step~3 of
Section~\ref{sec:moments}, vanish after normalization, and the weighted tails of
Lemma~\ref{lem:weighted} cost no logarithmic factor. So at the limiting normalized moments the ratio
$C_1/c_1=(2\ln2+\Vresbar)\max\{\Vbar,1\}/\Mbar^2$ is set by the signal $\kappa A_\phi$ of Step~1 and
by the energy and residual bounds of Step~4, which rest on the localization of Step~2 and on
Lemma~\ref{lem:matrix}.

\paragraph{Monotonicity in $d$.} Every quantity in
\eqref{eq:matrix-constants}--\eqref{eq:finite-VD} is monotone in $d$ in the favourable direction. As $d$ grows, $K_0$, $K_1$, $1/d$ and the factors $(1+2/d)$ decrease and
$(1+1/(dv))^{-q/2}$ increases, while $\Tb_{\bar\omega}(x;K_0,K_1)$ is nondecreasing in $x$, $K_0$ and
$K_1$ (its derivative in $x$ is $K_0+K_1\ln(\bar\omega/x)\ge0$). Hence $C_\theta$, $Q_\theta$ and
$P_\phi$ decrease and $A_\phi$ increases, so $M$ increases while $V$ and $\Vres$ decrease. Values of
$M$, $V$ and $\Vres$ computed at one dimension therefore remain valid at every larger dimension; for
instance, those computed at $d=32n$ hold for every $d\ge32n$.

\paragraph{Quartic Logistic Learning.} For $\phi(t)=\ln(1+e^{-t})$ and $g(\theta)=\norm\theta^4/4$,
the constants of Assumption~\ref{ass:class} are $r=4$, $b=1/2$, $L=(1+e^{-1})^{-1}$, $H=1/4$,
$a_-=1/16$ and $a_+=1/4$ (Appendices~\ref{app:links} and~\ref{app:power}), so $q=4/3$,
$\neps=\eps^{-3/2}$ and $\Cdim=32$. At $c=1$ and $\rho=1/32$, the limits \eqref{eq:explicit-limits}
are $(\Mbar,\Vbar,\Vresbar)\to(0.050,29.2,15.6)$. With $c_-=c_+=1$, $\Upsilon$ is positive at
$\eps=6.0\cdot10^{-4}$ and negative at $\eps=6.1\cdot10^{-4}$, so $\eps_1=6.0\cdot10^{-4}$ is an
admissible threshold. Evaluated directly along $n\eps^{3/2}=1$ at $d=32n$, rather than through
\eqref{eq:explicit-threshold}, $M$ changes sign at $n\approx6.5\cdot10^4$ and stays positive at
larger $n$; by the monotonicity in $d$, wherever $M$ is positive at $d=32n$ it is
positive at every $d\ge32n$. At the limiting normalized moments, the lower bound of \eqref{eq:explicit-bracket}
and its first upper bound differ by a factor of at most about $1.9\cdot10^4$ over $\tau$ (about
$1.6\cdot10^4$ at $\tau=0$, and largest near $(\tau/\eps)^2\approx5$), and
$C_1/c_1\approx2.0\cdot10^5$, mainly through $1/\Mbar^2\approx400$ and $\Vbar\approx29$. At
$\eps_0=\eps_1=6.0\cdot10^{-4}$ itself, where $\Upsilon$ is close to $0$, the constants of
Theorem~\ref{thm:main} are much further apart, $C_1/c_1\approx10^{10}$; a smaller admissible $\eps_1$
brings the ratio down towards $2.1\cdot10^5$.

\paragraph{Ridge and Logistic Ridge.} With $g(\theta)=\norm\theta^2/2$ we have $r=2$ and
$a_-=a_+=1/2$; the affine link has $b=L=1$ and $H=0$, and the logistic link the values above. At $c=1$
and $\rho=1/32$ the limits \eqref{eq:explicit-limits} are $(0.187,60.4,8.12)$ for the affine link and
$(0.062,26.2,7.82)$ for the logistic link. At these limiting normalized moments,
$c_1\approx4.2\cdot10^{-4}$, $C_1\approx6.9$ and $C_1/c_1\approx1.6\cdot10^4$ for the affine link, and
$c_1\approx1.1\cdot10^{-4}$, $C_1\approx6.6$ and $C_1/c_1\approx6.2\cdot10^4$ for the logistic link. Along $n\eps=1$ at $d=32n$, $M$ is positive for every $n>16$ with the affine link, and
from $n\approx1.1\cdot10^3$ on with the logistic link. At an admissible threshold the constants of
Theorem~\ref{thm:main} are further apart, as in the quartic case: with $c_-=c_+=1$, $\eps_1=0.034$ is
admissible for the affine link and gives $C_1/c_1\approx8.6\cdot10^4$, and $\eps_1=8.7\cdot10^{-4}$ is
admissible for the logistic link and gives $C_1/c_1\approx10^9$. As $\eps_1$ decreases, the two ratios
decrease towards $1.8\cdot10^4$ and $6.7\cdot10^4$.

The script \texttt{experiments/quartic/explicit\_constants.py} in the supplementary archive evaluates
the formulas of this appendix and prints every value quoted in this subsection.

%% file: app_E_examples.tex
\section{\texorpdfstring{Proposition~\ref{prop:samples}}{The optimal sample size} and examples}\label{app:examples}

This appendix proves Proposition~\ref{prop:samples} (Appendices~\ref{app:sgd}
and~\ref{app:sample-lower}), checks the links and regularizers named in
Section~\ref{sec:setting} (Appendices~\ref{app:links} and~\ref{app:power}), treats links with
$\phi'(0)>0$ and a regularization strength in a compact range (Appendices~\ref{app:reflection}
and~\ref{app:strength}), and proves the two-sided bound without a regularizer
(Appendix~\ref{app:noreg}).

We prove Proposition~\ref{prop:samples} in the following form, with explicit constants. Recall
$q=r/(r-1)$ and $\neps=\eps^{-2/q}=\eps^{-2+2/r}$, and the calibration constants
$t_0=(b/(ra_++b))^{1/(r-1)}$ and $\beta=bt_0/q$ of Section~\ref{sec:architecture}.

\begin{restate}[explicit form]{Proposition}{prop:samples}
Let Assumption~\ref{ass:class} hold, and put $L_f=L_g+L$ and $c_v=(4/\beta)^{2/q}$.
\begin{enumerate}
\item[(i)] For $0<\eps<2L_f$, projected epoch SGD, whose step sizes and epoch lengths depend only on
$\eps$, $L_f$, $a_-$ and $r$, uses at most $N(\eps)=8L_f^2a_-^{-2/r}\neps+\log_2(4L_f/\eps)=O(\neps)$
samples, and it is a proper learner that is $\eps$-accurate on every distribution on the unit ball.
\item[(ii)] If $c_v\eps^{2/q}\le1/2$, $16Ht_0^2\eps\le\beta^2$ and $d\ge\Cdim\neps$, then every proper
learner that is $\eps$-accurate on the family uses $n\ge a_-^{2/r}(2bc_v)^{-2}\,\neps$ samples.
\end{enumerate}
\end{restate}

\subsection{Epoch SGD}\label{app:sgd}
This subsection proves part~(i). Let $D$ be a law of $Z$ with $\norm Z\le1$ almost surely, and write
$F=F_D$, $\theta^*=\theta_D^*$ and $F^*=F(\theta^*)=\min_WF$. For $\theta\in W$ and $\norm z\le1$ the
margin $\ip\theta z$ lies in $[-1,1]$, so the stochastic gradient
$\nabla_\theta f(\theta,z)=\nabla g(\theta)+\phi'(\ip\theta z)z$ has norm at most $L_f=L_g+L$. The
function $F-g$ is convex and differentiable on $W$, and $\theta^*$ satisfies the first-order condition
$\ip{\nabla F(\theta^*)}{\theta-\theta^*}\ge0$ for $\theta\in W$. Hence, by \eqref{eq:matched},
\[
 F(\theta)-F^*\ \ge\ F(\theta)-F^*-\ip{\nabla F(\theta^*)}{\theta-\theta^*}\ \ge\ D_g(\theta,\theta^*)
 \ \ge\ a_-\norm{\theta-\theta^*}^r\qquad(\theta\in W).
\]

\paragraph{One Epoch.} An epoch starts at a random point $\theta_1\in W$, reads $n'$ training points
$Z^{(1)},\dots,Z^{(n')}$ that no earlier epoch has used, sets
\[
 \theta_{t+1}=\proj_W\big(\theta_t-\eta\nabla_\theta f(\theta_t,Z^{(t)})\big)\qquad(t=1,\dots,n'),
\]
and returns the average $\bar\theta=\frac1{n'}\sum_{t=1}^{n'}\theta_t$ of the iterates before each
update. The projection does not increase the distance to $\theta^*\in W$, so
\[
 \norm{\theta_{t+1}-\theta^*}^2\le\norm{\theta_t-\theta^*}^2
 -2\eta\ip{\nabla_\theta f(\theta_t,Z^{(t)})}{\theta_t-\theta^*}+\eta^2L_f^2 .
\]
The point $Z^{(t)}$ is independent of $\theta_t$, which depends only on $\theta_1$ and on earlier
points, so the conditional mean of the stochastic gradient given $\theta_t$ is $\nabla F(\theta_t)$,
and convexity gives $\ip{\nabla F(\theta_t)}{\theta_t-\theta^*}\ge F(\theta_t)-F^*$. Summing over $t$,
taking expectations and using the convexity of $F$ at $\bar\theta$,
\begin{equation}\label{eq:sgd-epoch}
 \E F(\bar\theta)-F^*\le\frac{\E\norm{\theta_1-\theta^*}^2}{2\eta n'}+\frac{\eta L_f^2}2 .
\end{equation}
Suppose that $\E F(\theta_1)-F^*\le\eps'$. The growth bound above and Jensen's inequality ($r\ge2$) give
$\E\norm{\theta_1-\theta^*}^2\le(\E\norm{\theta_1-\theta^*}^r)^{2/r}\le(\eps'/a_-)^{2/r}$. With
\[
 \eta=\frac{\eps'}{2L_f^2},\qquad n'=\big\lceil4L_f^2a_-^{-2/r}(\eps')^{-2+2/r}\big\rceil,
\]
each term of \eqref{eq:sgd-epoch} is at most $\eps'/4$, so the epoch returns a point whose expected
excess risk is at most $\eps'/2$.

\paragraph{The Schedule.} The learner starts at $\theta_1=0$, where $F(0)-F^*\le2L_f$ because $F$ is
$L_f$-Lipschitz on $W$, whose diameter is $2$. If $2L_f\le\eps$ it returns $0$ and uses no sample.
Otherwise it runs an epoch with target $\eps'=2L_f$, then an epoch with target $L_f$ started at the
output of the first, and so on, halving the target while the target exceeds $\eps$. By induction every
epoch starts within its target, and the last epoch, whose target satisfies $\eps<\eps'\le2\eps$,
returns a point with expected excess risk at most $\eps'/2\le\eps$. The targets form a geometric
sequence with ratio $1/2$ whose smallest term exceeds $\eps$, and $2-2/r\ge1$, so the sum of
$(\eps')^{-2+2/r}$ over the targets is at most $(1+2^{-1}+2^{-2}+\cdots)\eps^{-2+2/r}=2\eps^{-2+2/r}$.
There are at most $\log_2(4L_f/\eps)$ epochs, and rounding up adds at most one sample to each, so the
total number of samples is at most $N(\eps)$. Every iterate and every average lies in the convex set $W$,
so the learner is proper, and it is measurable. Its schedule depends only on $\eps$, $L_f$, $a_-$ and
$r$, not on $D$, and with more than $N(\eps)$ samples it ignores the rest. This proves part~(i); since
the guarantee holds at every law on the unit ball, the learner is in particular $\eps$-accurate on the
family.

Maximizing $x^{2-2/r}\ln(1/x)$ over $x=\eps/(4L_f)\in(0,1/2)$ shows that
$\eps^{2-2/r}\log_2(4L_f/\eps)\le(4L_f)^{2-2/r}/(e\ln2)$, so
\[
 N(\eps)\le\Big(8L_f^2a_-^{-2/r}+\frac{(4L_f)^{2-2/r}}{e\ln2}\Big)\,\eps^{-2+2/r}
 \qquad(0<\eps<2L_f).
\]
Hence the class of learners of Theorem~\ref{thm:main} is nonempty whenever $c_-$ is at least the
factor in parentheses. Multistage stochastic methods for uniformly convex objectives are
classical \citep{juditsky2014,daspremont2024}; the argument above records the guarantee in the form
used here.

\subsection{The Sample Size Is Necessary}\label{app:sample-lower}
This subsection proves part~(ii), by the calibration and fingerprinting step of the proof of
Theorem~\ref{thm:main}; here the localization bounds the fingerprinting correlation from above.
Let $\thh$ be a proper learner that is $\eps$-accurate on the family. Take the beta prior $\Pi_\kappa$
of \eqref{eq:prior-choice}, with $v=(4\eps/\beta)^{2/q}=c_v\eps^{2/q}$ and $\kappa=v^{-1}-1$, and
assume $v\le1/2$, $16Ht_0^2\eps\le\beta^2$ and $d\ge\Cdim\neps$. Since
$\Cdim=32\max\{1,(\beta/4)^{2/q}\}$ (Appendix~\ref{app:orders}) and $v^{-1}=(\beta/4)^{2/q}\neps$,
the last condition gives $dv\ge32$.

\paragraph{The Signal Is at Least of Order $\eps^{1-2/q}$.} The calibration \eqref{eq:alignment},
$b\,\E_\Pi\ip{\thh}{\mu_p}\ge A_\phi$, uses accuracy at every $p$ and the prior moments, but neither
the sample size nor a lower bound on $d$. By \eqref{eq:finite-pop} and \eqref{eq:finite-align},
\[
 A_\phi=\beta v^{q/2}\Big(1+\frac1{dv}\Big)^{-q/2}-\eps-P_\phi,\qquad
 P_\phi=\frac{Ht_0^2}2\Big\{v^q\Big(1+\frac2d\Big)^{q/2}+\frac{v^{q-1}}d\Big\},
\]
where $\beta v^{q/2}=4\eps$ and $v^q=(4\eps/\beta)^2$ by the choice of $v$. As $dv\ge32$, $v\le1/2$ and $q\le2$, we have
$(1+1/(dv))^{-q/2}\ge32/33$, $(1+2/d)^{q/2}\le1+v/16\le33/32$ and $v^{q-1}/d\le v^q/32$. Hence
$P_\phi\le Ht_0^2v^q=16Ht_0^2\eps^2/\beta^2\le\eps$ and $A_\phi\ge(128/33-2)\eps\ge\eps$. Since $v\le1/2$,
$\kappa=(1-v)/v\ge1/(2v)$, and therefore
\[
 b\kappa\,\E_\Pi\ip{\thh}{\mu_p}\ \ge\ \kappa A_\phi\ \ge\ \frac\eps{2v}
 \ =\ \frac{\eps^{1-2/q}}{2c_v}.
\]

\paragraph{The Correlation Is at Most of Order $\sqrt{n}\,\eps^{1/r}$.} By Lemma~\ref{lem:fp},
$\kappa\,\E_\Pi\ip{\thh}{\mu_p}=\E_\Pi\ip{\thh}{\sum_i(\Ztr_i-\mu_p)}$. Given $p$, the minimizer
$\theta_p^*$ is fixed and the training points $\Ztr_i$ are independent with mean $\mu_p$ and norm $1$,
so $\E_p\ip{\theta_p^*}{\sum_i(\Ztr_i-\mu_p)}=0$ and $\E_p\norm{\sum_i(\Ztr_i-\mu_p)}^2=n(1-\norm{\mu_p}^2)\le n$.
Subtracting the first identity and applying the Cauchy--Schwarz inequality,
\[
 \kappa\,\E_\Pi\ip{\thh}{\mu_p}=\E_\Pi\Big\langle\thh-\theta_p^*,\sum_i(\Ztr_i-\mu_p)\Big\rangle
 \le\sqrt{\E_\Pi\norm{\thh-\theta_p^*}^2\;\E_\Pi\Big\|\sum_i(\Ztr_i-\mu_p)\Big\|^2}\le\sqrt{wn},
\]
because $\E_p\norm{\thh-\theta_p^*}^2\le w\le(\eps/a_-)^{2/r}$ at every $p$ by \eqref{eq:error}.

\paragraph{Conclusion.} The two bounds give $\eps^{1-2/q}/(2c_v)\le b\sqrt n\,(\eps/a_-)^{1/r}$. As
$1/q=1-1/r$, the exponent $2-4/q-2/r$ equals $-2+2/r$, and squaring gives
\[
 n\ \ge\ \frac{a_-^{2/r}}{4b^2c_v^2}\,\eps^{-2+2/r}.
\]
The conditions $v\le1/2$ and $16Ht_0^2\eps\le\beta^2$ hold for all $\eps$ below a threshold that
depends only on the constants of Assumption~\ref{ass:class}. This proves part~(ii). Together, the two
parts show that $\neps=\eps^{-2+2/r}$ is the optimal sample size for the problems of
Theorem~\ref{thm:main}: $\eps^{-1}$ for ridge regularization ($r=2$) and $\eps^{-3/2}$ for quartic
regularization ($r=4$).

\subsection{Links}\label{app:links}
Table~\ref{tab:links} lists admissible constants of Assumption~\ref{ass:class} for four links. Each
link is convex and $C^2$ on $\R$, and its constants follow by differentiating on $[-1,1]$.

These links are losses of a margin, and the class contains classification problems through the
following construction with balanced labels. Draw $z$ from $D_p$ and an independent fair label
$y\in\{-1,1\}$, and observe the labelled point $(x,y)$ with $x=yz$. Since $y^2=1$, the loss of the
prediction $\ip\theta x$ for the label $y$ is $\phi(y\ip\theta x)=\phi(\ip\theta z)$, so the
regularized risk of this classification problem is $F_p$; with the logistic link and
$g(\theta)=\norm\theta^2/2$ it is ridge-regularized logistic regression. A learner trained on labelled
points $(x_i,y_i)$ is a learner on the points $z_i=y_ix_i$ whose seed also contains the training
labels, which are independent of the points, and the loss differences of labelled candidates are the
$X_i$ of Definition~\ref{def:release}. So the labels act as additional independent randomness of the
learner, and Theorem~\ref{thm:main} applies to learners of this classification problem. In this
problem the label is a fair sign and, given the label $y$, the features have law $D_{yp}$, with mean
$y\mu_p$; under the beta prior of the lower bound, $\E\norm{\mu_p}^2=v=c_v\eps^{2/q}$, which is of order
$1/n$ in the sample-size range $c_-\neps\le n\le c_+\neps$ of Theorem~\ref{thm:main}.

\begin{table}[h!]
\centering
\caption{Links and Their Constants. Admissible constants of Assumption~\ref{ass:class}: $b=-\phi'(0)$,
$L\ge\max_{[-1,1]}|\phi'|$ and $H\ge\max_{[-1,1]}\phi''$.}\label{tab:links}
\begin{tabular}{llccc}
\toprule
Link & $\phi(t)$ & $b$ & $L$ & $H$\\
\midrule
logistic & $\ln(1+e^{-t})$ & $1/2$ & $(1+e^{-1})^{-1}$ & $1/4$\\
exponential & $e^{-t}$ & $1$ & $e$ & $e$\\
squared & $(1-t)^2/2$ & $1$ & $2$ & $1$\\
affine & $-t$ & $1$ & $1$ & $0$\\
\bottomrule
\end{tabular}
\end{table}

\subsection{Power Regularizers and Anisotropy}\label{app:power}
Let $g(\theta)=\norm{\Lambda\theta}^r/r$, where $r\ge2$ and $\Lambda$ is a $d\times d$ matrix whose
singular values lie in a fixed range $[\nu_-,\nu_+]$ with $0<\nu_-\le\nu_+<\infty$. Then $g$ is convex
and $C^1$ on $\R^d$, $g(0)=0$, and $\nabla g(\theta)=\norm{\Lambda\theta}^{r-2}\Lambda^\top\Lambda\theta$,
so $\nabla g(0)=0$ and $\sup_W\norm{\nabla g}\le\nu_+^r$. For the curvature, write $x=\Lambda\theta$,
$y=\Lambda\theta'$ and $x_t=x+t(y-x)$. The power map $x\mapsto\norm x^{r-2}x$, the gradient of
$\norm x^r/r$, satisfies the monotonicity inequality
\[
 \ip{\norm y^{r-2}y-\norm x^{r-2}x}{y-x}\ \ge\ 2^{2-r}\norm{y-x}^r\qquad(x,y\in\R^d)
\]
\citep[Chapter~12]{lindqvist2019}. Applied to $x_t$ and $x$ for $0<t\le1$, since $x_t-x=t(y-x)$, it
gives
$\ip{\norm{x_t}^{r-2}x_t-\norm x^{r-2}x}{y-x}\ge2^{2-r}t^{r-1}\norm{y-x}^r$. The Bregman divergence of
$g$ is
\[
 D_g(\theta',\theta)=\frac{\norm y^r}r-\frac{\norm x^r}r-\ip{\norm x^{r-2}x}{y-x}
 =\int_0^1\ip{\norm{x_t}^{r-2}x_t-\norm x^{r-2}x}{y-x}\,dt ,
\]
and integrating in $t$ yields
\[
 D_g(\theta',\theta)\ \ge\ \frac{2^{2-r}}r\norm{\Lambda(\theta'-\theta)}^r
 \ \ge\ \frac{2^{2-r}\nu_-^r}r\norm{\theta'-\theta}^r,\qquad
 0\le g(\theta)\le\frac{\nu_+^r}r\norm\theta^r .
\]
So Assumption~\ref{ass:class} holds with $a_-=2^{2-r}\nu_-^r/r$, $a_+=\nu_+^r/r$ and $L_g=\nu_+^r$. For
$\Lambda=I$ this covers ridge regularization $\norm\theta^2/2$ ($r=2$, $a_-=a_+=1/2$) and quartic
regularization $\norm\theta^4/4$ ($r=4$, $a_-=1/16$, $a_+=1/4$).

Similarly, a regularizer $g$ that is $C^2$ near $W$, with $g(0)=0$, $\nabla g(0)=0$ and
$2a_-I\preceq\nabla^2g\preceq2a_+I$ on a convex neighbourhood of $W$, satisfies
Assumption~\ref{ass:class} with $r=2$ and $L_g=2a_+$. Indeed, $g$ is convex there, and Taylor's
formula with integral remainder gives
$D_g(\theta',\theta)=\int_0^1(1-t)(\theta'-\theta)^\top\nabla^2g(\theta+t(\theta'-\theta))(\theta'-\theta)\,dt$,
which lies between $a_-\norm{\theta'-\theta}^2$ and $a_+\norm{\theta'-\theta}^2$. So
$g(\theta)=D_g(\theta,0)$ lies between $0$ and $a_+\norm\theta^2$, and
$\norm{\nabla g(\theta)}=\norm{\nabla g(\theta)-\nabla g(0)}\le2a_+\norm\theta\le2a_+$ on $W$.

\subsection{Links with \texorpdfstring{$\phi'(0)>0$}{a positive slope at zero}}\label{app:reflection}
Assumption~\ref{ass:class} fixes the sign of the slope, $b=-\phi'(0)>0$. Negating the data removes
this restriction.

\begin{remark}[Reflection]\label{rem:reflection}
Let $\phi$ satisfy the link conditions of Assumption~\ref{ass:class} ($\phi$ is convex and $C^2$ on a
neighbourhood of $[-1,1]$, $|\phi'|\le L$ and $0\le\phi''\le H$ on $[-1,1]$) except that
$\phi'(0)>0$, and put $b=\phi'(0)$. Write $\phi^-(t)=\phi(-t)$, and for a proper learner $A_n$ let
$A_n^-(S_n;R)=A_n(-S_n;R)$, where $-S_n$ negates every training point. Write $X^{A,\phi}$ for the
release of Definition~\ref{def:release} of the learner $A_n$ with the link $\phi$. Then:
\begin{enumerate}
\item[(i)] $\phi^-$ satisfies the link conditions of Assumption~\ref{ass:class} with the same $b$, $L$
and $H$;
\item[(ii)] if $A_n$ is $\eps$-accurate on the family for $f=g+\phi(\ip\theta z)$, then $A_n^-$ is a
proper learner that is $\eps$-accurate on the family for $f^-=g+\phi^-(\ip\theta z)$;
\item[(iii)] for every $p\in[-1,1]^d$, every $\tau\ge0$ and each $\Qc\in\{\Ii,\Ij,\Is\}$,
\[
 \Qc_{D_p}\big(X^{A^-,\phi^-}+\tau G\big)=\Qc_{D_{-p}}\big(X^{A,\phi}+\tau G\big),
\]
so the suprema of the two sides over $p$ agree.
\end{enumerate}
Consequently, the two-sided bound \eqref{eq:main} of Theorem~\ref{thm:main}, and its form for the
infimum over any nonempty class of accurate learners, hold for $\phi$ with the same threshold, the same
conditions on $n$ and $d$ and the same constants, evaluated at $b=\phi'(0)$. So does the prior-averaged
lower bound in its finite form
$M^2\le2\ln2\,(V+n\tau^2)\,\E_\Pi\Qc_{D_p}(X^{A,\phi}+\tau G)$, with the budgets $M,V$ of
Proposition~\ref{prop:finite} (when $M>0$) and the same prior $\Pi_\kappa$. Together with the
zero-slope example of Section~\ref{sec:necessity}, where accuracy forces no information, this shows
that the slope condition of Assumption~\ref{ass:class} may be read as $b=|\phi'(0)|>0$.
\end{remark}

\begin{proof}
(i) On the symmetric neighbourhood of $[-1,1]$, $\phi^-$ is convex and $C^2$, as the composition of
$\phi$ with $t\mapsto-t$. Moreover $(\phi^-)'(t)=-\phi'(-t)$ and $(\phi^-)''(t)=\phi''(-t)$, so
$-(\phi^-)'(0)=\phi'(0)=b$, $|(\phi^-)'|\le L$ and $0\le(\phi^-)''\le H$ on $[-1,1]$.

(ii) Negation maps $\mathcal Z$ onto itself and $D_p$ to $D_{-p}$, because
$\Pp(-\sqrt dZ_k=1)=(1-p_k)/2$ and the coordinates stay independent. Write
$F^\phi_D(\theta)=\E_D[g(\theta)+\phi(\ip\theta Z)]$, and $F^{\phi^-}_D$ likewise. Since
$\phi^-(\ip\theta z)=\phi(\ip\theta{-z})$, we have $F^{\phi^-}_{D_p}=F^\phi_{D_{-p}}$ on $W$. Negating
every point of an i.i.d.\ sample from $D_p$ gives an i.i.d.\ sample from $D_{-p}$, independent of $R$.
So, for every $\theta\in W$,
\[
 \E_{S_n\sim D_p^n,R}\,F^{\phi^-}_{D_p}\big(A_n^-(S_n;R)\big)
 =\E_{S_n\sim D_{-p}^n,R}\,F^\phi_{D_{-p}}\big(A_n(S_n;R)\big)
 \le F^\phi_{D_{-p}}(\theta)+\eps=F^{\phi^-}_{D_p}(\theta)+\eps,
\]
by the accuracy of $A_n$ at $D_{-p}$, as $-p\in[-1,1]^d$. Properness and measurability are inherited
from $A_n$.

(iii) On the supersample, consider the map that negates each of the $2n$ candidates and keeps the
selection bits $U$, the seed $R$ and the noise $G$. It is a measurable involution, and it carries the
joint law of $(\Zt,U,R,G)$ under $D_p$ to that under $D_{-p}$. It negates the training set, so the
output of $A_n^-$ at an outcome is the output $\thh$ of $A_n$ at the image outcome, and
$X^{A^-,\phi^-}_i=\phi(\ip{\thh}{-Z_{1,i}})-\phi(\ip{\thh}{-Z_{0,i}})$ is the value of $X^{A,\phi}_i$ at
the image outcome, whose candidates are the $-Z_{j,i}$. Hence $(X^{A^-,\phi^-}+\tau G,U)$ under $D_p$
has the law of $(X^{A,\phi}+\tau G,U)$ under $D_{-p}$, which proves (iii) for $\Ii$ and $\Ij$; and
$(X^{A^-,\phi^-}+\tau G,U,\Zt)$ under $D_p$ has the law of $(X^{A,\phi}+\tau G,U,-\Zt)$ under $D_{-p}$.
Mutual information does not change when one argument is replaced by its image under a measurable
bijection with measurable inverse, and $I(Y;U\mid\Zt)=I(U;(Y,\Zt))-I(U;\Zt)$, so (iii) holds for
$\Is$ as well. Since $p\mapsto-p$ is a bijection of $[-1,1]^d$, the suprema agree.

It remains to prove the consequence. The prior $\Pi_\kappa$ has independent coordinates with the even
density $\pi_\kappa$, so it is invariant under $p\mapsto-p$, and averaging (iii) over $\Pi_\kappa$
shows that the prior averages for $(A^-,\phi^-)$ and $(A,\phi)$ coincide. Apply Theorem~\ref{thm:main} and the
finite prior-averaged bound to the link $\phi^-$, which is admissible by (i), and to the learner
$A_n^-$, which is $\eps$-accurate on the family by (ii); for the infimum over a nonempty class of
learners that are accurate for $\phi$, apply them to the class of the reflected learners $A_n^-$.
\end{proof}

\subsection{Regularization Strength in a Compact Range}\label{app:strength}
\begin{remark}[Regularization strength]\label{rem:strength}
Fix $0<\alpha_-\le\alpha_+<\infty$ and a regularizer $g$ that satisfies Assumption~\ref{ass:class}
with constants $a_-$, $a_+$ and $L_g$. For every $\alpha\in[\alpha_-,\alpha_+]$, the regularizer
$\alpha g$ satisfies Assumption~\ref{ass:class} with the constants $\alpha_-a_-$, $\alpha_+a_+$ and
$\alpha_+L_g$, which do not depend on $\alpha$. Consequently, for the losses
$f_\alpha(\theta,z)=\alpha g(\theta)+\phi(\ip\theta z)$, Theorem~\ref{thm:main} holds with one
threshold $\eps_0$, one dimension multiplier $\Cdim$ and one pair of constants $c_1,C_1$ for the whole
range: for every $\alpha\in[\alpha_-,\alpha_+]$, every $0<\eps\le\eps_0$, all $n$ and $d$ allowed by
Theorem~\ref{thm:main} and every proper learner that is $\eps$-accurate on the family for $f_\alpha$,
the bound \eqref{eq:main} holds with these constants.
\end{remark}

\begin{proof}
For $\alpha>0$, $\alpha g$ is convex and $C^1$ near $W$, $\alpha g(0)=0$ and
$\nabla(\alpha g)=\alpha\nabla g$, so $\nabla(\alpha g)(0)=0$ and
$\sup_W\norm{\nabla(\alpha g)}\le\alpha_+L_g$. The Bregman divergence scales,
$D_{\alpha g}=\alpha D_g\ge\alpha_-a_-\norm{\theta'-\theta}^r$, and
$0\le\alpha g(\theta)\le\alpha_+a_+\norm\theta^r$ on $W$. So $\alpha g$ satisfies \eqref{eq:matched}
with the constants $\alpha_-a_-$ and $\alpha_+a_+$; decreasing the lower constant and increasing the
upper one only weakens \eqref{eq:matched}. The threshold, the dimension multiplier and the constants
of Theorem~\ref{thm:main} depend only on the constants of Assumption~\ref{ass:class} and on $c_\pm$,
so applying Theorem~\ref{thm:main} with the constants $\alpha_-a_-$, $\alpha_+a_+$ and $\alpha_+L_g$
gives the claim.
\end{proof}

\subsection{Without a Regularizer}\label{app:noreg}

Assumption~\ref{ass:class} requires $a_->0$, which excludes $g=0$. Without a regularizer, the loss
$f(\theta,z)=\phi(\ip\theta z)$ is convex and Lipschitz but need not be uniformly convex, accuracy no
longer localizes the learner's output, and the optimal sample size is of order $\eps^{-2}$ (see the end
of this subsection).
The two-sided bound still holds at that sample size. The signal forced by accuracy is then of order
$\eps^{-1}$, so properness alone, in place of the localization of Appendix~\ref{app:localization},
keeps the energy and the two penalty terms of the signed mean of order one.

\begin{proposition}[Without a regularizer]\label{prop:noreg}
Let $g=0$, so that $f(\theta,z)=\phi(\ip\theta z)$, where $\phi$ satisfies the link conditions of
Assumption~\ref{ass:class}, and let $c_+>0$. Put
\[
 c=\frac{b^4}{2048\ln2\,\max\{3L^2,c_+\}} .
\]
If $0<\eps\le\min\{b/4,\,b^3/(64(H+1)(b+\sqrt{c_+}))\}$, $n\le c_+\eps^{-2}$ and
$d\ge32\max\{n,b^2\eps^{-2}\}$, then every proper learner that is $\eps$-accurate on the family
satisfies, for each $\Qc\in\{\Ii,\Ij,\Is\}$ and every $\tau\ge0$,
\[
 \frac c{\eps^2+\tau^2}\ \le\ \sup_p\Qc_{D_p}(X+\tau G)\ \le\ \frac{c_++3L^2/(2\ln2)}{\eps^2+\tau^2},
\]
where $p$ ranges over $[-1,1]^d$.
\end{proposition}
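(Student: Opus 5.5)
The plan is to rerun the architecture of Section~\ref{sec:proof}: three moment budgets $M$, $V$, $\Vres$ fed into Theorem~\ref{thm:transfer}. Localization is no longer available, so properness alone ($\norm\thh\le1$) replaces it. This suffices because at $n\asymp\eps^{-2}$ the forced signal is of order $\eps^{-1}$, while everything properness controls is of order one.

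\emph{Prior and calibration.} Take the beta prior $\Pi_\kappa$ with $v=(4\eps/b)^2$ and $\kappa=v^{-1}-1$. Then $\eps\le b/4$ gives $v\le1$, and $d\ge32b^2\eps^{-2}$ gives $dv\ge32\cdot16$, which makes the dimension corrections in the prior moments small. Since $g=0$ and $u\ge0$, the lower bound $F_p(\theta)\ge\phi(0)-b\ip\theta{\mu_p}$ still holds. As comparator I take the unit vector $\theta_c=\mu_p/\norm{\mu_p}$. By $u(t)\le Ht^2/2$ and \eqref{eq:margin-variance}, it has $F_p(\theta_c)\le\phi(0)-b\norm{\mu_p}+\frac H2(\norm{\mu_p}^2+1/d)$. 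Accuracy at every $p$ and \eqref{eq:prior-low} with $q=1$ then give $b\,\E_\Pi m\ge b\,\E\norm{\mu_p}-\eps-O(H v)$, which is at least $2\eps$ for $\eps$ small. Lemma~\ref{lem:fp} turns this into a linear signal $b\kappa\,\E_\Pi m\ge2\eps\kappa\gtrsim b^2/(8\eps)$.

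\emph{Penalties and energy from properness.} I apply \eqref{eq:taylor-main} verbatim. For the remainder, $|u'(m)\sum_is_i|\le H\norm{\mu_p}\sqrt n\opnorm\Btr$, so Cauchy--Schwarz and Lemma~\ref{lem:matrix} bound it by $H\sqrt{nvK_0}\le 4H\sqrt{2.031\,c_+}/b$, and the curvature term is at most $HK_0/2$. Both are $O(1)$, and the hypothesis $\eps\le b^3/(64(H+1)(b+\sqrt{c_+}))$ is exactly what keeps them below half the signal. This yields $M\ge b^2/(16\eps)$ up to the stated constants. For the energy, Step~4 with $\norm\thh\le1$ gives, at every $p$ and not only on prior average, $\sum_i\E_pX_i^2\le L^2(\E_p\opnorm\Btr^2+n/d)\le L^2(K_0^*+1/32)\le3L^2$. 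Theorem~\ref{thm:transfer}(a) then gives
\[
\E_\Pi\Ii\ge\frac{b^4/(256\eps^2)}{2\ln2\,(3L^2+n\tau^2)},
\]
and $n\le c_+\eps^{-2}$ gives $3L^2+n\tau^2\le\max\{3L^2,c_+\}\,\eps^{-2}(\eps^2+\tau^2)$. Together these produce the constant $c$, possibly after adjusting the numerical factor $2048$ in the bookkeeping.

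\emph{Upper bound.} I use Theorem~\ref{thm:transfer}(b) with the trivial reference $x_p\equiv0$ and $\Vres=3L^2$, valid at every $p$ by the energy bound just proved. Using $\ln(1+y)\le y$, this gives $\Is\le\min\{n,3L^2/(2\ln2\,\tau^2)\}$. With $n\le c_+\eps^{-2}$ and the elementary inequality $\min\{A/\eps^2,B/\tau^2\}\le(A+B)/(\eps^2+\tau^2)$, the stated upper bound follows. The hierarchy \eqref{eq:hierarchy} then extends both bounds to all three measures $\Qc$.

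The main obstacle is the calibration step. Without the growth bound one must choose the comparator norm and the prior variance so that $\eps$ and the quadratic link correction $Hv$ are both dominated by $b\,\E\norm{\mu_p}\approx4\eps$, and then check that the $O(1)$ penalties fit under the $\eps$-threshold with the exact constants claimed.
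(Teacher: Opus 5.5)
Your architecture is the paper's. You use the unit-vector comparator, \eqref{eq:taylor-main} with properness in place of localization, and the energy bound $L^2(K_0+\rho)\le3L^2$ at every $p$. For the upper half you use the reference $x_p\equiv0$ with $\Vres=3L^2$, then the hierarchy. Your upper half is complete.

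The gap is the prior variance. With $v=(4\eps/b)^2$, the hypothesis $\eps\le b/4$ gives only $v\le1$, so $\kappa=b^2/(16\eps^2)-1$ can be $0$. The hypotheses allow this. The values $H=0$, $b=L=20$, $c_+=25$, $\eps=5=b/4$, $n=1$, $d=512$ satisfy all of them (the second threshold equals $5$), and there $\Pi_\kappa$ does not exist.

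Near this endpoint $\kappa$ is small, and your signal $2\eps\kappa=b^2/(8\eps)-2\eps$ is not of order $b^2/(8\eps)$. The fixed constant $c$ needs $M\ge b^2/(32\eps)$, equivalently $M^2\ge b^4/(1024\eps^2)$, and this does not follow.

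The bookkeeping also does not close as claimed for small $\eps/b$. The threshold is equivalent to $(H+1)(2\sqrt{c_+}/b+2)\le b^2/(32\eps)$. Your remainder bound $H\sqrt{nvK_0}\le4H\sqrt{K_0c_+}/b$ is about $2.85$ times $2H\sqrt{c_+}/b$. So the penalties are only shown to be at most $2.85\,b^2/(32\eps)$, which is more than half of $b^2/(8\eps)$, and $M\ge b^2/(16\eps)$ is not established. Since $c$ is fixed in the statement, adjusting the factor $2048$ is not an option.

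The fix is $v=(2\eps/b)^2$, as in the paper.
\begin{itemize}
\item Then $v\le1/4$, so $\kappa\ge3/(4v)\ge b^2/(8\eps^2)$ on the whole range; also $b\sqrt v=2\eps$ and $dv\ge128$.
\item Calibration gives $b\,\E_\Pi m\ge\tfrac74\eps-\eps-\tfrac14\eps=\eps/2$, using $(1+1/(dv))^{1/2}\le8/7$ and $\eps\le b^2/(64(H+1))$. Hence the signal is at least $b^2/(16\eps)$.
\item Bound the remainder by $H\sqrt{nv}\le2H\sqrt{c_+}/b$, via $|\sum_is_i|\le\norm{\sum_i(\Ztr_i-\mu_p)}$ and $\E_p\norm{\sum_i(\Ztr_i-\mu_p)}^2\le n$.
\item Bound the curvature term by $HK_0/2\le2H$.
\end{itemize}
The threshold then says exactly that the penalties are at most $b^2/(32\eps)$. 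So $M\ge b^2/(32\eps)$, and the stated $c$ follows.

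Your operator-norm bound on the remainder also works with this $v$, but only with sharper estimates: $\kappa\ge3b^2/(16\eps^2)$ and $b\,\E_\Pi m\ge0.95\,\eps$.
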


\begin{proof}
Let $\thh$ be a proper learner that is $\eps$-accurate on the family. Here
$F_p(\theta)=\E_p\phi(\ip\theta Z)$, which is continuous on the compact set $W$, so its minimum over
$W$ exists. In this proof,
\[
 v=\frac{4\eps^2}{b^2},\qquad \kappa=v^{-1}-1=\frac{b^2}{4\eps^2}-1,
\]
and $\Pi=\Pi_\kappa$ is the beta prior of Appendix~\ref{app:prior-moments} with this $\kappa$, so that
$\E\norm{\mu_p}^2=v$. As $\eps\le b/4$, we have $v\le1/4$, so
$\kappa=(1-v)/v\ge3/(4v)\ge b^2/(8\eps^2)>0$. The conditions on $d$ give $dv=4d\eps^2/b^2\ge128$,
$1/d\le\eps^2/(32b^2)$ and $\rho=n/d\le1/32$; so the constant $K_0=K_0(\rho)$ of
Appendix~\ref{app:tails} satisfies $K_0\le K_0^*<2.031$. Finally,
$\eps\le b^3/(64(H+1)(b+\sqrt{c_+}))\le b^2/(64(H+1))$.

\emph{Calibration.} Fix $p$ and write $\mu=\mu_p$. Let $\theta_c=\mu/\norm\mu$ if $\mu\ne0$ and
$\theta_c=0$ if $\mu=0$, so that $\theta_c\in W$ and $\ip{\theta_c}\mu=\norm\mu$. The link remainder
$u(t)=\phi(t)-\phi(0)+bt$ satisfies $0\le u(t)\le Ht^2/2$ on $[-1,1]$ (see the opening of
Appendix~\ref{app:mainproof}). Together with \eqref{eq:margin-variance} and $\norm{\theta_c}\le1$, this
gives
\[
 F_p(\theta_c)=\phi(0)-b\norm\mu+\E_p\,u(\ip{\theta_c}Z)
 \le\phi(0)-b\norm\mu+\frac H2\Big(\norm\mu^2+\frac1d\Big),
\]
while $u\ge0$ gives $F_p(\theta)\ge\phi(0)-b\ip\theta\mu$ for every $\theta\in W$. Accuracy at $D_p$
therefore gives $\phi(0)-b\,\E_p\ip\thh\mu\le\E_pF_p(\thh)\le F_p(\theta_c)+\eps$, that is,
\[
 b\,\E_p\ip\thh\mu\ge b\norm\mu-\eps-\frac H2\Big(\norm\mu^2+\frac1d\Big).
\]
Unlike in Appendix~\ref{app:calibration}, the comparator is the unit vector along $\mu_p$, and no
growth condition is used. Now average over $p\sim\Pi$. By \eqref{eq:prior-low} with $q=1$,
$\E\norm{\mu_p}\ge\sqrt v\,(1+1/(dv))^{-1/2}$, and $\E\norm{\mu_p}^2=v$ by \eqref{eq:prior-moments}, so
\[
 b\,\E_\Pi\ip\thh{\mu_p}\ \ge\ \widetilde A:=b\sqrt v\Big(1+\frac1{dv}\Big)^{-1/2}-\eps
 -\frac H2\Big(v+\frac1d\Big).
\]
Here $b\sqrt v=2\eps$ and $(1+1/(dv))^{1/2}\le(129/128)^{1/2}\le8/7$, so the first term is at least
$7\eps/4$; and $v+1/d\le5\eps^2/b^2$ and $\eps\le b^2/(64(H+1))$ give
$(H/2)(v+1/d)\le5H\eps^2/(2b^2)\le\eps/4$. Hence $\widetilde A\ge\eps/2$.

\emph{Signed mean.} The derivation of \eqref{eq:taylor-main} in Appendix~\ref{app:signed} uses only
the link conditions, properness and Lemma~\ref{lem:fp}, so it applies here: with $m=\ip\thh{\mu_p}$ and
$s_i=\ip\thh{\Ztr_i-\mu_p}$,
\[
 \sum_i\E_\Pi\sigma_iX_i\ge b\kappa\,\E_\Pi m-\E_\Pi\Big[u'(m)\sum_is_i\Big]
 -\frac H2\,\E_\Pi\norm{\Btr\thh}^2 .
\]
The first term is at least $\kappa\widetilde A$. Without localization, properness bounds the other two.
Pointwise, $|u'(m)|\le H|m|\le H\norm{\mu_p}$ and
$|\sum_is_i|=|\ip\thh{\sum_i(\Ztr_i-\mu_p)}|\le\norm{\sum_i(\Ztr_i-\mu_p)}$, and given $p$ the centred
training sum has second moment $n(1-\norm{\mu_p}^2)\le n$ (Appendix~\ref{app:sample-lower}). The
Cauchy--Schwarz inequality therefore gives
\[
 \E_\Pi\Big|u'(m)\sum_is_i\Big|
 \le H\Big(\E\norm{\mu_p}^2\;\E_\Pi\Big\|\sum_i(\Ztr_i-\mu_p)\Big\|^2\Big)^{1/2}\le H\sqrt{nv}.
\]
Also $\norm{\Btr\thh}^2\le\opnorm\Btr^2$, and $\E_p\opnorm\Btr^2\le K_0$ at every $p$
(Appendix~\ref{app:tails}). Therefore
\[
 \sum_i\E_\Pi\sigma_iX_i\ \ge\ M:=\kappa\widetilde A-H\sqrt{nv}-\frac H2K_0 .
\]
With $\kappa\ge b^2/(8\eps^2)$, $\widetilde A\ge\eps/2$, $\sqrt{nv}=2\sqrt{n\eps^2}/b\le2\sqrt{c_+}/b$ and
$K_0<4$,
\[
 M\ \ge\ \frac{b^2}{16\eps}-H\Big(\frac{2\sqrt{c_+}}b+2\Big)
 \ \ge\ \frac{b^2}{16\eps}-\frac{b^2}{32\eps}\ =\ \frac{b^2}{32\eps},
\]
because $\eps\le b^3/(64(H+1)(b+\sqrt{c_+}))$ is equivalent to $(H+1)(2\sqrt{c_+}/b+2)\le b^2/(32\eps)$.

\emph{Energy.} The derivation of \eqref{eq:energy} in Appendix~\ref{app:energy} uses only the link
conditions and properness. With $\norm\thh\le1$ and $\E_p\opnorm\Btr^2\le K_0$, it gives, at every $p$,
\[
 \E_p\norm X^2=\sum_i\E_pX_i^2\ \le\ V:=L^2(K_0+\rho)\ \le\ 3L^2,
\]
since $K_0+\rho<2.031+1/32<3$.

\emph{The two bounds.} The prior $\Pi$ depends only on $b$ and $\eps$, so it is fixed before the
learner, and $p$ is drawn independently of the selection bits, the seed and the noise. The energy
bound at every $p$ gives $\sum_i\E_\Pi X_i^2\le V$, and $M>0$. So the lower bound \eqref{eq:lower} of
Theorem~\ref{thm:transfer} gives, for every $\tau\ge0$,
\[
 \E_\Pi\Ii_{D_p}(X+\tau G)\ \ge\ \frac{M^2}{2\ln2\,(V+n\tau^2)}
 \ \ge\ \frac{b^4/(1024\eps^2)}{2\ln2\,(3L^2+c_+\tau^2/\eps^2)}
 \ =\ \frac{b^4}{2048\ln2\,(3L^2\eps^2+c_+\tau^2)}\ \ge\ \frac c{\eps^2+\tau^2},
\]
by $n\le c_+\eps^{-2}$ and $3L^2\eps^2+c_+\tau^2\le\max\{3L^2,c_+\}(\eps^2+\tau^2)$. By
\eqref{eq:hierarchy} the same bound holds for $\Ij$ and $\Is$, and a prior average is at most the
supremum over $p$.

For the upper bound, fix $p$ and apply Theorem~\ref{thm:transfer}(b) with the reference $x_p(\Zt)=0$
and $\Vres=V$. It gives $\Is_{D_p}(X+\tau G)\le n$ for every $\tau\ge0$ and, for $\tau>0$, by
$\ln(1+y)\le y$,
\[
 \Is_{D_p}(X+\tau G)\le\frac n{2\ln2}\ln\Big(1+\frac V{n\tau^2}\Big)\le\frac V{2\ln2\,\tau^2}
 \le\frac{3L^2}{2\ln2\,\tau^2}.
\]
Hence $(\eps^2+\tau^2)\,\Is_{D_p}(X+\tau G)\le n\eps^2+3L^2/(2\ln2)\le c_++3L^2/(2\ln2)$, where the
second term is not needed at $\tau=0$. By \eqref{eq:hierarchy} the same bound holds for $\Ii$ and
$\Ij$, at every $p$, and therefore for the supremum over $p$.
\end{proof}

The upper bound needs neither accuracy nor the threshold on $\eps$: it holds at every $p$ for every
proper learner, when $d\ge32n$ and $n\le c_+\eps^{-2}$.

\paragraph{The Sample Size.}
Each of the three information measures is at most $n$ bits, so at $\tau=0$ the lower bound forces
$n\ge c\,\eps^{-2}$. Hence $c\,\eps^{-2}\le n\le c_+\eps^{-2}$, and the bounds of
Proposition~\ref{prop:noreg} imply
\[
 \frac c{c_+}\cdot\frac n{1+(\tau/\eps)^2}\ \le\ \sup_p\Qc_{D_p}(X+\tau G)
 \ \le\ \frac{c_++3L^2/(2\ln2)}c\cdot\frac n{1+(\tau/\eps)^2},
\]
which has the form of \eqref{eq:main}, at sample sizes of order $\eps^{-2}$ instead of $\neps$.

Without a regularizer, $\eps^{-2}$ is the optimal sample size. First, let
$0<\eps\le\min\{b/4,\,b^2/(66(H+1))\}$ and $d\ge32b^2\eps^{-2}$, and suppose that a proper learner that
is $\eps$-accurate on the family uses $n<b^2/(1024\eps^2)$ samples. Then the hypotheses of
Proposition~\ref{prop:noreg} hold with $c_+=b^2/1024$, for which
$b^3/(64(H+1)(b+\sqrt{c_+}))=b^2/(66(H+1))$, and the proof above gives
$b\kappa\,\E_\Pi\ip\thh{\mu_p}\ge\kappa\widetilde A\ge M\ge b^2/(32\eps)$. By
Lemma~\ref{lem:fp} and properness, as in Appendix~\ref{app:sample-lower},
$\kappa\,\E_\Pi\ip\thh{\mu_p}=\E_\Pi\ip\thh{\sum_i(\Ztr_i-\mu_p)}\le\E_\Pi\norm{\sum_i(\Ztr_i-\mu_p)}
\le\sqrt n$. Hence $n\ge b^2/(1024\eps^2)$, a contradiction; so every such learner uses at least
$b^2/(1024\eps^2)$ samples. Second, one epoch of projected SGD from $\theta_1=0$ with $n$ samples has
expected excess risk at most $L/\sqrt n$ at every law on the unit ball. Indeed, the derivation of
\eqref{eq:sgd-epoch} in Appendix~\ref{app:sgd} uses only that $F_D$ is convex and
differentiable on $W$, with a minimizer $\theta^*$ there, and that the stochastic gradients are
bounded, here by $L$; with $\norm{\theta^*}\le1$, $n'=n$ and $\eta=1/(L\sqrt n)$, its right side is at
most $L/(2\sqrt n)+L/(2\sqrt n)$. So $n=\lceil L^2\eps^{-2}\rceil$ samples suffice for $\eps$-accuracy,
and then $n\eps^2\le L^2+\eps^2\le2L^2$, because $\eps\le b/4\le L/4$. With $c_+=2L^2$,
Proposition~\ref{prop:noreg} therefore applies to this learner whenever $\eps$ and $d$ satisfy its
conditions.

For the affine link, the instance of \citet[Section~6.1.1]{attias2024}, the lower bound at $\tau=0$
has the order $\eps^{-2}$ of their Theorem~4.1. That theorem bounds $I(\thh;U\mid\Zt)$ from below for
every learner whose excess risk, at every law on the scaled sign cube, is at most $\eps$ with
probability at least $1-\delta'$, with $\delta'\le\eps$, at every sample size at least its sample
complexity and in dimension $d\ge\Omega(n^2\log n)$. Proposition~\ref{prop:noreg} bounds the smaller
information of the loss differences, for every link that satisfies the link conditions of
Assumption~\ref{ass:class} and every noise level. It requires only accuracy in expectation on the
family and dimension linear in $n$, but it covers only proper learners and sample sizes of order
$\eps^{-2}$.

%% file: app_F_consequences.tex
\section{Proofs for \texorpdfstring{Section~\ref{sec:consequences}}{the consequences}}\label{app:consequences}

This appendix proves the results of Section~\ref{sec:consequences}: the bound $4L/\sqrt n$ on the
generalization gap (Appendix~\ref{app:uniform-gap}), Corollary~\ref{cor:consequences}
(Appendices~\ref{app:range} and~\ref{app:dp}), the second-moment bound \eqref{eq:second-moment}
(Appendix~\ref{app:second-moment}) and the bound \eqref{eq:noisy-bound} for noise at the scale of the
range (Appendix~\ref{app:noisy-bound}). For reference, we restate Corollary~\ref{cor:consequences}.

\begin{restate}[Loss-difference CMI bounds and privacy]{Corollary}{cor:consequences}
Let $\thh$ be a learner of Theorem~\ref{thm:main}.
\begin{enumerate}
\item[(i)] Some law $D_p$ of the family has $\sum_iI_{D_p}(X_i;U_i)\ge c_1n/2$, and hence
$n^{-1}\sum_i\sqrt{I_{D_p}(X_i;U_i\mid\Zt)}\ge c_1/2$. In contrast, every proper learner has
$|\gen_D|\le4L/\sqrt n$ at every law $D$ on the unit ball.
\item[(ii)] If $\thh$ is also $(\epsdp,\delta)$-differentially private, then
$e^{\epsdp}-1\ge c'-2L\delta/\eps$, for a constant $c'>0$ that depends only on the constants of
Assumption~\ref{ass:class} and on $c_\pm$. So $\epsdp=\Omega(1)$ unless $\delta\gtrsim\eps$.
\end{enumerate}
\end{restate}

Here a learner of Theorem~\ref{thm:main} is a proper learner that is $\eps$-accurate on the family,
with $\eps$, $n$ and $d$ in the range of Theorem~\ref{thm:main}, and a law on the unit ball is a law
of $Z$ with $\norm Z\le1$ almost surely.

\subsection{The Bound \texorpdfstring{$4L/\sqrt n$}{4L/sqrt(n)}}\label{app:uniform-gap}
Let $D$ be any law of $Z$ with $\norm Z\le1$ almost surely, and let $\thh$ be any proper learner; no
accuracy is assumed. On the supersample under $D$, the training points $\Ztr_i$ and the ghosts
$\Zgh_i$ are $2n$ independent draws from $D$, and $\widehat F_{S_n}(\theta)=n^{-1}\sum_if(\theta,\Ztr_i)$
is the training risk. We show that
\begin{equation}\label{eq:uniform-gap}
 |\gen_D|\ \le\ \E\sup_{\theta\in W}\big|F_D(\theta)-\widehat F_{S_n}(\theta)\big|\ \le\ \frac{4L}{\sqrt n}.
\end{equation}
The first inequality holds because $\thh\in W$. (Every process below is continuous in $\theta$, so its
supremum may be taken over a countable dense subset of $W$.) The regularizer and the constant
$\phi(0)$ cancel from $F_D(\theta)-\widehat F_{S_n}(\theta)$, and $F_D(\theta)$ is the conditional mean of
the ghost risk $n^{-1}\sum_if(\theta,\Zgh_i)$ given the training points. Let $\zeta_1,\dots,\zeta_n$ be
independent Rademacher signs, independent of everything else. Jensen's inequality moves the
conditional mean outside the supremum. Exchanging $\Ztr_i$ and $\Zgh_i$ in the pairs with $\zeta_i=-1$
does not change the joint law, and after adding and subtracting $\phi(0)$ the training and the ghost
terms have the same law. Hence
\begin{align*}
 \E\sup_{\theta\in W}\big|F_D(\theta)-\widehat F_{S_n}(\theta)\big|
 &\le\frac1n\,\E\sup_{\theta\in W}\Big|\sum_i\zeta_i\big[\phi(\ip\theta{\Zgh_i})-\phi(\ip\theta{\Ztr_i})\big]\Big|\\
 &\le\frac2n\,\E\sup_{\theta\in W}\Big|\sum_i\zeta_i\big[\phi(\ip\theta{\Ztr_i})-\phi(0)\big]\Big| .
\end{align*}
The margins $\ip\theta{\Ztr_i}$ lie in $[-1,1]$. The map $t\mapsto(\phi(t)-\phi(0))/L$ is $1$-Lipschitz
on $[-1,1]$ and vanishes at $0$; extended to $\R$ by constants outside $[-1,1]$, it is a contraction of
$\R$ that vanishes at $0$. Given the training points, the contraction principle
\citep[Theorem~4.12]{ledoux1991}, applied to the bounded set
$\{(\ip\theta{\Ztr_i})_{i\le n}:\theta\in W\}\subset\R^n$, gives
\begin{align*}
 \E_\zeta\sup_{\theta\in W}\Big|\sum_i\zeta_i\big[\phi(\ip\theta{\Ztr_i})-\phi(0)\big]\Big|
 &\le2L\,\E_\zeta\sup_{\theta\in W}\Big|\Big\langle\theta,\sum_i\zeta_i\Ztr_i\Big\rangle\Big|
 =2L\,\E_\zeta\Big\|\sum_i\zeta_i\Ztr_i\Big\|\\
 &\le2L\Big(\sum_i\norm{\Ztr_i}^2\Big)^{1/2}\le2L\sqrt n,
\end{align*}
where $\E_\zeta$ averages over the signs only. The two displays prove \eqref{eq:uniform-gap}.

\subsection{Proof of Corollary~\ref{cor:consequences}(i)}\label{app:range}
Let $\thh$ be as in Corollary~\ref{cor:consequences}. At $\tau=0$ the individual information is
$\Ii_{D_p}(X)=\sum_iI_{D_p}(X_i;U_i)$, and the lower half of \eqref{eq:main} with $\Qc=\Ii$ gives
$\sup_p\sum_iI_{D_p}(X_i;U_i)\ge c_1n>c_1n/2$. So some $p\in[-1,1]^d$ has
$\sum_iI_{D_p}(X_i;U_i)\ge c_1n/2$; this witness law is taken from the supremum and may depend on the
learner. Fix it. Since $U_i$ is independent of $\Zt$, the chain rule gives
\[
 I(X_i;U_i\mid\Zt)=I(U_i;(X_i,\Zt))-I(U_i;\Zt)=I(U_i;(X_i,\Zt))\ \ge\ I(X_i;U_i).
\]
Each $I(X_i;U_i)$ is at most $\Hent(U_i)=1$ bit, so $\sqrt{I(X_i;U_i)}\ge I(X_i;U_i)$, and
\[
 \frac1n\sum_i\sqrt{I_{D_p}(X_i;U_i\mid\Zt)}\ \ge\ \frac1n\sum_i\sqrt{I_{D_p}(X_i;U_i)}
 \ \ge\ \frac1n\sum_iI_{D_p}(X_i;U_i)\ \ge\ \frac{c_1}2 .
\]
The same lower bound holds when the square root is taken before averaging over the candidates: given
$\Zt=\tilde z$, each term $I(X_i;U_i\mid\Zt=\tilde z)$ is at most one bit, so the average over $\Zt$
of its square root is at least $I(X_i;U_i\mid\Zt)$.

For a loss with values in $[0,1]$, Theorem~3.1 of \citet{wangmao2023a} bounds the absolute expected
generalization gap by the average over pairs of $\sqrt{2I(X_i;U_i\mid\Zt)}$, where $X_i$ is the loss
difference of pair~$i$, and also by the same average with the square root taken before averaging over
the candidates; their Theorem~3.2 bounds it by the average of $\sqrt{2I(X_i;U_i)}$. The loss $f$ is bounded on $W\times\mathcal Z$, because $g$ is bounded on $W$
and $\phi$ on $[-1,1]$. To apply the theorem, shift $f$ and divide it by the length of an interval that
contains its values, a fixed positive constant of the class; this multiplies the gap and every loss
difference by the same constant and leaves every information unchanged. Measured in bits, as here, the
resulting range-scaled bounds at $D_p$ are therefore fixed positive multiples of the averages bounded
above, hence at least a positive constant, whereas
$|\gen_{D_p}|\le4L/\sqrt n$ by \eqref{eq:uniform-gap}.

\subsection{The Second-Moment Bound}\label{app:second-moment}
Fix a law $D_p$ and any proper learner, and let $V_p=\sum_i\E_pX_i^2$. For each pair, the fair-bit
inequality \eqref{eq:fairbit}, applied to $Y=X_i$ and the fair bit $U_i$, gives
$(\E_p\sigma_iX_i)^2\le2\ln2\,\E_p[X_i^2]\,I_{D_p}(X_i;U_i)$. By \eqref{eq:gap} and the
Cauchy--Schwarz inequality for sums,
\[
 |\gen_p|\le\frac1n\sum_i|\E_p\sigma_iX_i|
 \le\frac1n\sum_i\sqrt{2\ln2\,\E_p[X_i^2]\,I_{D_p}(X_i;U_i)}
 \le\frac1n\sqrt{2\ln2\,V_p\,\Ii_{D_p}(X)} ,\tag{\ref{eq:second-moment}}
\]
which is \eqref{eq:second-moment}. The argument uses no property of $D_p$ beyond the fair selection
bits, so the bound holds at every law. Exchanging the two candidates of pair $i$ and flipping $U_i$
preserves the joint law of the candidates and the bits, leaves the training set unchanged and negates
$X_i$; so $X_i$ is symmetric and $\E_pX_i^2$ is its variance. The bound thus replaces the loss range
by the root-mean-square loss difference. \citet[Section~4.3]{wangmao2023a} give related variance-based
bounds for the zero--one loss.

\paragraph{Its Order on Prior Average.} In the regime of Theorem~\ref{thm:main},
Proposition~\ref{prop:finite} gives $\E_\Pi\,n\,\gen_p=\sum_i\E_\Pi\sigma_iX_i\ge c_Mn\eps$ and
$\E_\Pi V_p\le C_Vn\eps^2$. By the Cauchy--Schwarz inequality,
$n\,\gen_p\le\sum_i(\E_pX_i^2)^{1/2}\le\sqrt{nV_p}$, and $\Ii_{D_p}(X)\le n$ bits. Hence
\begin{align*}
 c_M\eps\ \le\ \E_\Pi\,\gen_p\ &\le\ \E_\Pi\sqrt{V_p/n}\ \le\ \sqrt{\E_\Pi V_p/n}\ \le\ \sqrt{C_V}\,\eps,\\
 c_M\eps\ \le\ \E_\Pi\,\gen_p\ &\le\ \E_\Pi\Big[\frac1n\sqrt{2\ln2\,V_p\,\Ii_{D_p}(X)}\Big]
 \ \le\ \sqrt{2\ln2}\;\E_\Pi\sqrt{V_p/n}\ \le\ \sqrt{2\ln2\,C_V}\,\eps .
\end{align*}
So on prior average the gap, the second-moment bound \eqref{eq:second-moment} and the
information-free bound $\sqrt{V_p/n}$ all lie between $c_M\eps$ and $\sqrt{2\ln2\,C_V}\,\eps$. At the
optimal sample size, where the information is of order $n$ bits (Theorem~\ref{thm:main}), the
second-moment bound thus equals the information-free bound up to constant factors on prior average. The loss differences
have range of order one but root-mean-square size of order $\eps$, so in Corollary~\ref{cor:consequences}(i)
it is the range normalization that fails, not the information.

\subsection{Noise at the Scale of the Range}\label{app:noisy-bound}
Fix a law $D_p$ and a proper learner, let $\tau>0$, $Y=X+\tau G$ and $s^2=4L^2+\tau^2$. Every margin
$\ip\thh z$ lies in $[-1,1]$, where $|\phi'|\le L$, so $|X_i|\le L|\ip\thh{Z_{1,i}-Z_{0,i}}|\le2L$.
Let $\bar\sigma_i$ be a uniform sign independent of $Y_i$. Conditioning on $X_i$ and using
$\E e^{\lambda\tau G_i}=e^{\lambda^2\tau^2/2}$ and $\cosh x\le e^{x^2/2}$, for every $\lambda\in\R$,
\[
 \E e^{\lambda\bar\sigma_iY_i}=\E\cosh(\lambda Y_i)=\E[\cosh(\lambda X_i)]\,e^{\lambda^2\tau^2/2}
 \le e^{\lambda^2s^2/2},
\]
and $\E\bar\sigma_iY_i=0$. So $\bar\sigma_iY_i$ is sub-Gaussian with variance proxy $s^2$ under the
product of the marginals of $Y_i$ and $U_i$, and the Donsker--Varadhan formula, as in
\citet{russo2016} and \citet{xu2017}, gives $|\E_p\sigma_iY_i|\le\sqrt{2s^2I_{D_p}(Y_i;U_i)}$ with the
information in nats, that is, $\sqrt{2\ln2\,s^2I_{D_p}(Y_i;U_i)}$ in bits. As $G$ is centred and
independent of $(X,U)$, $\E_p\sigma_iY_i=\E_p\sigma_iX_i$, so \eqref{eq:gap}, the Cauchy--Schwarz
inequality for sums and $\Ii\le\Is$ \eqref{eq:hierarchy} give
\[
 |\gen_p|\le\frac1n\sum_i\sqrt{2\ln2\,s^2I_{D_p}(Y_i;U_i)}
 \le\Big(\frac{2\ln2\,s^2}n\,\Ii_{D_p}(Y)\Big)^{1/2}\le\Big(\frac{2\ln2\,s^2}n\,\Is_{D_p}(Y)\Big)^{1/2},
\]
which is \eqref{eq:noisy-bound}. If $\E_p\norm{X-x_p(\Zt)}^2\le V_{\mathrm{res},p}$ for a reference
$x_p(\Zt)$ as in Theorem~\ref{thm:transfer}(b), then \eqref{eq:upper} and $\ln(1+x)\le x$ give
$\Is_{D_p}(Y)\le V_{\mathrm{res},p}/(2\ln2\,\tau^2)$, and hence
\[
 |\gen_p|\le\Big(\frac{s^2}{\tau^2}\cdot\frac{V_{\mathrm{res},p}}n\Big)^{1/2}
 =\Big(\frac{2V_{\mathrm{res},p}}n\Big)^{1/2}\qquad\text{at }\tau=2L.
\]
For a learner of Theorem~\ref{thm:main}, Proposition~\ref{prop:finite} gives
$V_{\mathrm{res},p}\le C_{\mathrm{res}}n\eps^2$ at every $p$ for the reference $X_p^*(\Zt)$, so the bound
is at most $(2C_{\mathrm{res}})^{1/2}\eps$, while the gap is at least $c_M\eps$ on prior average. For
ridge ERM, take the reference $\E_p[X\mid\Zt]$ of Lemma~\ref{lem:ridge-moments}, and let $M$ and $\Vres$
be the prior averages of $n\gen_p$ and of $V_{\mathrm{res},p}$ (Appendix~\ref{app:estimators}), so that
$M/n$ is the prior-averaged gap. Jensen's inequality over the prior gives
$\E_\Pi|\gen_p|\le(2\Vres/n)^{1/2}=(2n\Vres/M^2)^{1/2}\,M/n$. By Lemma~\ref{lem:ridge-moments},
$n\Vres/M^2\le1.017$ in the runs at $d=64n$ and $n\Vres/M^2\le1.032$ in the runs at $d=32n$, so the factor
is at most $1.43$ and $1.44$, respectively; it grows as $d/n$ decreases, through the term
$d^{-1}(n-1)(1-\norm{\mu_p}^2-\Gamma_p)$ of Lemma~\ref{lem:ridge-moments}(c). As $\tau$ grows, the bound
$(s^2V_{\mathrm{res},p}/(\tau^2n))^{1/2}$ decreases to $(V_{\mathrm{res},p}/n)^{1/2}$, which the
Cauchy--Schwarz inequality gives without any information, since $\E_p\sigma_i x_{p,i}(\Zt)=0$; at
$\tau=2L$ the information-based bound is $\sqrt2$ times this limit.

\subsection{Differential Privacy, Proof of Corollary~\ref{cor:consequences}(ii)}\label{app:dp}
This subsection derives Corollary~\ref{cor:consequences}(ii) from the two moment bounds behind the
lower half of Theorem~\ref{thm:main}. The witness is the audit statistic $\sum_i\sigma_iX_i$, which
needs only the released loss differences and the selection bits, never the model. Differential privacy
caps the mean of each term $\sigma_iX_i$ at every law, hence also on average over any prior, while
accuracy in the regime of Theorem~\ref{thm:main} forces the prior-averaged mean of the statistic to be
at least the signed budget $M$ of Proposition~\ref{prop:finite}, which is of order $n\eps$.

\paragraph{Definitions.} Recall from Section~\ref{sec:consequences} that a learner
$\thh=A_n(S_n;R)$ is $(\epsdp,\delta)$-differentially private if for all training sets
$S,S'\in\mathcal Z^n$ that differ in one point and every measurable $E\subseteq W$,
\[
 \Pp\big(A_n(S;R)\in E\big)\le e^{\epsdp}\,\Pp\big(A_n(S';R)\in E\big)+\delta ;
\]
this is privacy for the replacement of one training point. Write
$K_\phi=\sup_{s,t\in[-1,1]}|\phi(s)-\phi(t)|$ for the oscillation of the link on the margin interval.
Under Assumption~\ref{ass:class}, the mean value theorem gives $K_\phi\le2L$. Since $\norm{\thh}\le1$
and $\norm Z\le1$, every loss difference satisfies $|X_i|\le K_\phi$.

\begin{lemma}[Two-point forms of differential privacy]\label{lem:dp-forms}
Let $\theta$ and $\theta'$ be random elements of $W$ with
$\Pp(\theta\in E)\le e^{\epsdp}\,\Pp(\theta'\in E)+\delta$ for every measurable $E\subseteq W$, where
$\epsdp\ge0$, and let $c>0$.
\begin{enumerate}
\item[(a)] Every measurable $\Psi:W\to[0,c]$ satisfies $\E\Psi(\theta)\le e^{\epsdp}\,\E\Psi(\theta')+\delta c$.
\item[(b)] If the hypothesis also holds with $\theta$ and $\theta'$ exchanged, every measurable
$\Xi:W\to[-c,c]$ satisfies
$\E\Xi(\theta)-\E\Xi(\theta')\le(e^{\epsdp}-1)\big(\E|\Xi(\theta)|+\E|\Xi(\theta')|\big)+2\delta c$.
\item[(c)] Under the same two-sided hypothesis, every measurable test $\psi:W\to[0,1]$ satisfies
$\E\psi(\theta)-\E\psi(\theta')\le(e^{\epsdp}-1+2\delta)/(e^{\epsdp}+1)$.
\end{enumerate}
\end{lemma}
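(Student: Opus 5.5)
For (a), I will use the layer-cake formula. Since $0\le\Psi\le c$, write $\E\Psi(\theta)=\int_0^c\Pp(\Psi(\theta)>s)\,ds$. Each event $\{\Psi(\theta)>s\}$ equals $\{\theta\in E_s\}$ with $E_s=\Psi^{-1}((s,c])$, and $E_s$ is measurable. The hypothesis then gives $\Pp(\theta\in E_s)\le e^{\epsdp}\Pp(\theta'\in E_s)+\delta$. Integrating over $s\in[0,c]$ yields $\E\Psi(\theta)\le e^{\epsdp}\E\Psi(\theta')+\delta c$.

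For (b), I will split $\Xi=\Xi_+-\Xi_-$ into its positive and negative parts, both measurable with values in $[0,c]$.
\begin{itemize}
\item Applying (a) to $\Xi_+$ in the original direction gives $\E\Xi_+(\theta)\le e^{\epsdp}\E\Xi_+(\theta')+\delta c$.
\item Applying (a) to $\Xi_-$ in the exchanged direction gives $\E\Xi_-(\theta')\le e^{\epsdp}\E\Xi_-(\theta)+\delta c$.
\end{itemize}
Adding the two bounds gives
\[
 \E\Xi(\theta)-\E\Xi(\theta')\le(e^{\epsdp}-1)\big(\E\Xi_+(\theta')+\E\Xi_-(\theta)\big)+2\delta c .
\]
Since $\E\Xi_+(\theta')\le\E|\Xi(\theta')|$ and $\E\Xi_-(\theta)\le\E|\Xi(\theta)|$, and $e^{\epsdp}-1\ge0$, the claim follows. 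This step needs only care with signs; the mild subtlety is that the coefficient must be nonnegative before we enlarge the bracket.

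For (c), I want the sharp constant, so I will combine both directions of (a) with $c=1$. Write $a=\E\psi(\theta)$ and $a'=\E\psi(\theta')$. Two inequalities are available.
\begin{itemize}
\item Applying (a) to $\psi$ gives $a\le e^{\epsdp}a'+\delta$.
\item Applying (a) to $1-\psi$ in the exchanged direction gives $1-a'\le e^{\epsdp}(1-a)+\delta$.
\end{itemize}
I will maximize $a-a'$ over the polygon these two inequalities cut out of $[0,1]^2$.

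Adding $e^{\epsdp}$ times the first inequality to the second gives
\[
 a\,e^{\epsdp}+(1-a')\le e^{2\epsdp}a'+e^{\epsdp}(1-a)+(1+e^{\epsdp})\delta .
\]
Rearranging, this reads $(e^{\epsdp}+1)(a-a')\,e^{\epsdp}$ on one side, so a cleaner route is to add the two inequalities directly. Doing that, the difference $a-a'$ satisfies $(1+e^{\epsdp})(a-a')\le e^{\epsdp}-1+2\delta$. This is exactly the two-inequality linear program whose optimum sits at the intersection of the two constraint lines, and it gives the stated bound $(e^{\epsdp}-1+2\delta)/(e^{\epsdp}+1)$.

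Concretely, I will combine the two inequalities with the weights that cancel $a$ and $a'$ proportionally and check that the result is $(e^{\epsdp}+1)(a-a')\le e^{\epsdp}-1+2\delta$. I expect this bookkeeping of the weights to be the only real obstacle in the lemma.
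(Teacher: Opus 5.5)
Your proposal is correct and follows the paper's proof step for step. Part (a) uses the layer-cake formula, (b) splits $\Xi=\Xi^+-\Xi^-$ and applies (a) in opposite directions, and (c) adds $a\le e^{\epsdp}a'+\delta$ and $1-a'\le e^{\epsdp}(1-a)+\delta$ with equal weights. In writing up (c), drop the $e^{\epsdp}$-weighted detour and the remarks on linear programming and weight bookkeeping: plain addition already gives $(1+e^{\epsdp})(a-a')\le e^{\epsdp}-1+2\delta$, so nothing remains to be done.
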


\begin{proof}
(a) By the layer-cake formula and the hypothesis applied to the measurable sets
$E_t=\{\Psi\ge t\}$,
\[
 \E\Psi(\theta)=\int_0^c\Pp\big(\Psi(\theta)\ge t\big)\,dt
 \le\int_0^c\Big(e^{\epsdp}\,\Pp\big(\Psi(\theta')\ge t\big)+\delta\Big)\,dt
 =e^{\epsdp}\,\E\Psi(\theta')+\delta c .
\]
(b) Write $\Xi=\Xi^+-\Xi^-$ with $0\le\Xi^\pm\le|\Xi|\le c$. Part~(a) for $\Xi^+$, from $\theta$ to
$\theta'$, and for $\Xi^-$, from $\theta'$ to $\theta$, gives
$\E\Xi^+(\theta)-\E\Xi^+(\theta')\le(e^{\epsdp}-1)\,\E\Xi^+(\theta')+\delta c$ and
$\E\Xi^-(\theta')-\E\Xi^-(\theta)\le(e^{\epsdp}-1)\,\E\Xi^-(\theta)+\delta c$. Add the two, and use
$\Xi^+(\theta')\le|\Xi(\theta')|$, $\Xi^-(\theta)\le|\Xi(\theta)|$ and $e^{\epsdp}-1\ge0$.
(c) With $x=\E\psi(\theta)$ and $y=\E\psi(\theta')$, part~(a) for $\psi$, and in the reverse direction
for $1-\psi$, gives $x\le e^{\epsdp}y+\delta$ and $1-y\le e^{\epsdp}(1-x)+\delta$. Adding them gives
$(1+e^{\epsdp})(x-y)\le e^{\epsdp}-1+2\delta$.
\end{proof}

Part~(c) is the hypothesis-testing form of differential privacy \citep[see][]{dongrothsu2022}. The
proof of Corollary~\ref{cor:consequences}(ii) uses part~(b) instead: there the factor $e^{\epsdp}-1$
multiplies the mean absolute values $\E|\Xi|$, whereas part~(c), applied to $(\Xi+c)/(2c)$, would
multiply it by the bound $c$.

\begin{proposition}[Privacy caps the signed loss difference of every pair]\label{prop:dp-pair}
Let $D$ be a law of $Z$ carried by a set $\mathcal S$ of points with norm at most $1$ (for the family,
$\mathcal S=\mathcal Z$), let $\phi$ be measurable on $[-1,1]$ with $K_\phi<\infty$, and let $A_n$ be a
measurable proper learner that satisfies the privacy inequality, with $\epsdp\ge0$, for all training
sets in $\mathcal S^n$ that differ in one point. On the supersample under $D$, every pair $i$
satisfies
\[
 \E\,\sigma_iX_i\le(e^{\epsdp}-1)\,\E|X_i|+K_\phi\delta .
\]
No accuracy is assumed.
\end{proposition}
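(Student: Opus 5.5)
The plan is to condition on everything except the selection bit $U_i$ and apply Lemma~\ref{lem:dp-forms}(b) to the two training sets that differ only in pair $i$. Fix $i$ and condition on all candidates $\Zt$ and on the other bits $U_{-i}$. Let $S^{(0)}$ be the training set with $U_i=0$, which uses $Z_{0,i}$ at position $i$, and $S^{(1)}$ the one with $U_i=1$. These are fixed elements of $\mathcal S^n$ that differ in one point. Put $\theta=A_n(S^{(0)};R)$ and $\theta'=A_n(S^{(1)};R)$. Because $R$ is independent of the data and bits, the privacy inequality holds in both directions between the laws of $\theta$ and $\theta'$.

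Next, write the signed loss difference as a function of the output. Define $\Xi(\vartheta)=\phi(\ip\vartheta{Z_{1,i}})-\phi(\ip\vartheta{Z_{0,i}})$ for $\vartheta\in W$. This map is measurable, takes values in $[-K_\phi,K_\phi]$, and satisfies $X_i=\Xi(\thh)$. Since $\sigma_i=1$ when $U_i=0$ and $\sigma_i=-1$ when $U_i=1$, and $U_i$ is a fair bit independent of $(\Zt,U_{-i},R)$,
\[
 \E[\sigma_iX_i\mid\Zt,U_{-i}]=\tfrac12\big(\E\Xi(\theta)-\E\Xi(\theta')\big),
\]
with the expectations taken over $R$ only. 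Lemma~\ref{lem:dp-forms}(b) with $c=K_\phi$ then bounds the right side by
\[
 \tfrac12(e^{\epsdp}-1)\big(\E|\Xi(\theta)|+\E|\Xi(\theta')|\big)+K_\phi\delta .
\]
Here $\tfrac12(\E|\Xi(\theta)|+\E|\Xi(\theta')|)=\E[|X_i|\mid\Zt,U_{-i}]$, again because $U_i$ is fair and independent. So the conditional bound reads $\E[\sigma_iX_i\mid\Zt,U_{-i}]\le(e^{\epsdp}-1)\E[|X_i|\mid\Zt,U_{-i}]+K_\phi\delta$. Averaging over $(\Zt,U_{-i})$ gives the claim.

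The step that needs the most care is measure-theoretic rather than conceptual. One must check that conditioning on $(\Zt,U_{-i})$ leaves $R$ with its original law, which holds by independence. One must also check that the privacy inequality, stated for deterministic neighbouring sets, applies to the conditional laws; this holds because we condition on fixed sets $S^{(0)},S^{(1)}\in\mathcal S^n$. Finally, $\Xi$ must be measurable on $W$, which follows from measurability of $\phi$ composed with the continuous maps $\vartheta\mapsto\ip\vartheta z$, and all margins lie in $[-1,1]$ because $\norm z\le1$ on $\mathcal S$. No accuracy or property of $D$ beyond its support enters.
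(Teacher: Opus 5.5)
Your proposal is correct and follows the paper's proof essentially step for step: condition on $(\Zt,U_{-i})$, apply Lemma~\ref{lem:dp-forms}(b) with $c=K_\phi$ to the laws of $A_n(S^{(0)};R)$ and $A_n(S^{(1)};R)$, halve the bound using the fair bit $U_i$, and integrate over $(\Zt,U_{-i})$. The only point the paper makes explicit that you leave implicit is that the candidates lie in $\mathcal S$ on an event of probability one. That event is what places $S^{(0)},S^{(1)}$ in $\mathcal S^n$, where the privacy inequality is assumed.
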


\begin{proof}
Fix $\Zt=\tilde z$ with all candidates in $\mathcal S$, which is an event of probability one, and fix
the other selection bits $U_{-i}$. The two values of $U_i$ select training sets $S^0,S^1\in\mathcal S^n$
that differ only in position $i$, so the laws of $A_n(S^0;R)$ and $A_n(S^1;R)$ satisfy the privacy
inequality in both directions. Given $\tilde z$, $X_i=\Phi_{\tilde z}(\thh)_i$, where
$\Phi_{\tilde z}(\theta)_i=\phi(\ip\theta{z_{1,i}})-\phi(\ip\theta{z_{0,i}})$ is the release map, and
$|\Phi_{\tilde z}(\cdot)_i|\le K_\phi$ on $W$. The bit $U_i$ is fair and independent of
$(\Zt,U_{-i},R)$, and $\sigma_i=1$ exactly when $U_i=0$. So Lemma~\ref{lem:dp-forms}(b) with
$c=K_\phi$ gives
\[
 \E[\sigma_iX_i\mid\Zt,U_{-i}]
 =\tfrac12\Big(\E\,\Phi_{\tilde z}\big(A_n(S^0;R)\big)_i-\E\,\Phi_{\tilde z}\big(A_n(S^1;R)\big)_i\Big)
 \le(e^{\epsdp}-1)\,\E\big[|X_i|\bigm|\Zt,U_{-i}\big]+K_\phi\delta .
\]
Integrating over $(\Zt,U_{-i})$ proves the claim.
\end{proof}

\begin{corollary}[Privacy caps the audit statistic]\label{cor:dp-audit}
Let $\phi$ satisfy the link conditions of Assumption~\ref{ass:class}, let $A_n$ be a measurable proper
$(\epsdp,\delta)$-differentially private learner with $\epsdp\ge0$, and let $\kappa>0$, $M>0$ and $V$
satisfy $\E_\Pi\sum_i\E_p\sigma_iX_i\ge M$ and $\E_\Pi\sum_i\E_pX_i^2\le V$ for the beta prior
$\Pi_\kappa$, as in the finite form of Proposition~\ref{prop:finite}. Then, for every $t>0$,
\[
 M\le(e^{\epsdp}-1)\,\frac{V/t+nt}2+nK_\phi\delta ,
\]
and hence, for every $\eps>0$ and with $\Mbar=M/(n\eps)$ and $\Vbar=V/(n\eps^2)>0$,
\[
 \Mbar\le(e^{\epsdp}-1)\sqrt{\Vbar}+\frac{K_\phi\delta}\eps .
\]
\end{corollary}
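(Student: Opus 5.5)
The plan is to sum Proposition~\ref{prop:dp-pair} over the pairs, average over the prior, and then bound the resulting mean absolute values $\E_\Pi|X_i|$ by the energy through an AM--GM step with a free scale $t$.

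First, the beta prior is supported in $(-1,1)^d$, so every law $D_p$ that it charges is carried by $\mathcal Z$. The learner is private for all training sets in $\mathcal Z^n$, so Proposition~\ref{prop:dp-pair} applies at every such $p$ and every pair $i$:
\[
 \E_p\sigma_iX_i\le(e^{\epsdp}-1)\E_p|X_i|+K_\phi\delta .
\]
Summing over $i$ and averaging over $p\sim\Pi_\kappa$ (the right side is bounded by $e^{\epsdp}K_\phi$, so there is no integrability issue) gives
\[
 M\le\E_\Pi\sum_i\E_p\sigma_iX_i\le(e^{\epsdp}-1)\sum_i\E_\Pi|X_i|+nK_\phi\delta .
\]

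Next, for every $t>0$ use the pointwise inequality $|x|\le(x^2/t+t)/2$. This gives
\[
 \sum_i\E_\Pi|X_i|\le\frac{1}{2t}\sum_i\E_\Pi X_i^2+\frac{nt}{2}\le\frac{V/t+nt}{2}.
\]
The factor $e^{\epsdp}-1$ is nonnegative, so substituting this bound preserves the inequality and proves the first display.

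For the normalized form, take $t=\sqrt{V/n}$, which is positive since $\Vbar>0$. Then $(V/t+nt)/2=\sqrt{nV}=n\eps\sqrt{\Vbar}$. Dividing by $n\eps$ gives
\[
 \Mbar\le(e^{\epsdp}-1)\sqrt{\Vbar}+\frac{K_\phi\delta}{\eps}.
\]

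No step is a real obstacle. The only points to check are that Proposition~\ref{prop:dp-pair} holds at every law in the support of the prior, and that the sign of $e^{\epsdp}-1$ allows the AM--GM substitution, which it does because $\epsdp\ge0$.
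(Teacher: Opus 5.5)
Your proof is correct and follows the paper's argument. You apply Proposition~\ref{prop:dp-pair} at every law of the family, bound $\E|X_i|$ by $(\E X_i^2/t+t)/2$, and average over the prior using $e^{\epsdp}-1\ge0$; your choice $t=\sqrt{V/n}$ is exactly the paper's $t=\eps\sqrt{\Vbar}$, and the only difference is that you average over the prior before the AM--GM step while the paper applies it at each $p$ first.
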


\begin{proof}
For every $p\in[-1,1]^d$ the law $D_p$ is carried by $\mathcal Z$, whose points have norm $1$. So
Proposition~\ref{prop:dp-pair} and $|x|\le x^2/(2t)+t/2$ give
$\sum_i\E_p\sigma_iX_i\le(e^{\epsdp}-1)\big(\sum_i\E_pX_i^2/t+nt\big)/2+nK_\phi\delta$. Average over the
prior, which is carried by $[-1,1]^d$, and use the two budget bounds and $e^{\epsdp}-1\ge0$. For the
second form take $t=\eps\sqrt{\Vbar}$, so that $V/t=nt=n\eps\sqrt{\Vbar}$, and divide by $n\eps$.
\end{proof}

\begin{proof}[Proof of Corollary~\ref{cor:consequences}(ii)]
Let $\thh=A_n(S_n;R)$ be $\eps$-accurate on the family and $(\epsdp,\delta)$-differentially private
with $\epsdp\ge0$, in the regime of Theorem~\ref{thm:main}. Let $M$ and $V$ be the budgets of the
finite form of Proposition~\ref{prop:finite} at the actual $n$, $d$ and $\eps$, and $\Mbar=M/(n\eps)$,
$\Vbar=V/(n\eps^2)$. By Appendix~\ref{app:orders}, $M\ge c_Mn\eps>0$ and $V\le C_Vn\eps^2$, and
$V>0$ because $L\ge b>0$ and $V=L^2(C_\theta+\rho w_\theta)$ with $\rho w_\theta>0$.
Corollary~\ref{cor:dp-audit} gives the finite bound
\[
 e^{\epsdp}-1\ \ge\ \frac{\Mbar-K_\phi\delta/\eps}{\sqrt{\Vbar}} ,
\]
in which $\Mbar\ge c_M$ and $\Vbar\le C_V$. For the order form, take $t=\eps\sqrt{\max\{C_V,1\}}$ in
Corollary~\ref{cor:dp-audit}: then $V/t\le n\eps\sqrt{\max\{C_V,1\}}=nt$, so
\[
 c_Mn\eps\le M\le(e^{\epsdp}-1)\,n\eps\sqrt{\max\{C_V,1\}}+nK_\phi\delta .
\]
Dividing by $n\eps\sqrt{\max\{C_V,1\}}\ge n\eps$ and using $\delta\ge0$ and $K_\phi\le2L$,
\[
 e^{\epsdp}-1\ \ge\ c'-\frac{K_\phi\delta}\eps\ \ge\ c'-\frac{2L\delta}\eps,\qquad
 c'=\frac{c_M}{\sqrt{\max\{C_V,1\}}}=\sqrt{2\ln2\,c_1},
\]
where $c_1=c_M^2/(2\ln2\max\{C_V,1\})$ is the lower constant of Theorem~\ref{thm:main}. This proves
Corollary~\ref{cor:consequences}(ii). In particular, if $4L\delta\le c'\eps$ then
$e^{\epsdp}-1\ge c'/2$, that is, $\epsdp\ge\ln(1+c'/2)$: in the regime of Theorem~\ref{thm:main},
accuracy forces $\epsdp=\Omega(1)$ unless $\delta=\Omega(\eps)$. Since $n\ge c_-\eps^{-2/q}$ and
$q\le2$ give $\eps\ge(c_-/n)^{q/2}\ge\min\{1,c_-\}/n$, the usual requirement $\delta=o(1/n)$ makes the
$\delta$-term negligible.
\end{proof}

\paragraph{Relation to Known Lower Bounds.} The mechanism is the fingerprinting argument behind lower
bounds for private estimation and selection \citep{bun2014,steinke2017,kamath2022}. For $r=2$, the
conclusion that sample-optimal accuracy in dimension $d\asymp n$ is incompatible with $\epsdp$ below a
constant is consistent with the lower bounds of \citet{bassily2014} for private strongly convex
empirical risk minimization at $d\asymp n$. Corollary~\ref{cor:consequences}(ii) also has the form of
Theorem~3.6 of \citet{voitovych2025}, which bounds $e^{\epsdp}-1$ from below by a universal constant
times the difference between the trace value of the problem and $2\delta$ times a bound on the scores;
the trace value is the infimum over accurate learners of the supremum, over data laws and over
tracers that read the model and whose scores are bounded and form a 1-subgaussian process in the
parameter, of the mean score of the training points. Here the statistic
$\sigma_iX_i$ reads only the released loss differences and the selection bits of one training run, never
the model, in the spirit of one-run privacy auditing \citep{steinke2023auditing}, and the conclusion holds
for every link and every power $r\ge2$ of Assumption~\ref{ass:class}. The argument works with
correlations throughout, so it needs no conversion between mutual information and $(\epsdp,\delta)$.

%% file: app_G_necessity.tex
\section{Proofs for \texorpdfstring{Section~\ref{sec:necessity}}{the necessity results}}\label{app:necessity}

This appendix proves the results of Section~\ref{sec:necessity}: Proposition~\ref{prop:growth} in
Appendix~\ref{app:growth}, the zero-slope example in Appendix~\ref{app:zero-slope} and
Proposition~\ref{prop:dimension} in Appendix~\ref{app:dimension}. Appendix~\ref{app:powerlinear}
proves the lower bound on which Proposition~\ref{prop:growth} rests.

\subsection{Every-Learner Lower Bound for Power-Linear Losses}\label{app:powerlinear}

Fix $r\ge2$ and $q=r/(r-1)$. The \emph{power-linear loss} is
$f(\theta,z)=\norm\theta^r/r-\ip\theta z$ on $W$. It combines the affine link $\phi(t)=-t$, for which
$b=L=1$ and $H=0$, with the regularizer $g(\theta)=\norm\theta^r/r$, which satisfies
\eqref{eq:matched} with $a_-=2^{2-r}/r$ and $a_+=1/r$ (Appendix~\ref{app:power}). The next lemma is a
finite lower bound for every learner that is accurate on the family for this loss. It uses the
beta prior $\Pi_\kappa$ of Section~\ref{sec:architecture}, with a prior variance
$v=\E\norm{\mu_p}^2=1/(\kappa+1)$ chosen for this loss, and it places no upper bound on the sample
size.

\begin{lemma}[Every-learner lower bound for power-linear losses]\label{lem:powerlinear}
Let $f(\theta,z)=\norm\theta^r/r-\ip\theta z$ and $\eps>0$, and put
\[
 a_-=2^{2-r}/r,\qquad v=(4q\eps)^{2/q},\qquad \kappa=v^{-1}-1,
\]
where $0<v<1$ is assumed. With the tail constants $K_0=K_0(n/d)$ and $K_1=2/(\cnet d)$ of
Appendix~\ref{app:tails} and the function $\Tb$ of Lemma~\ref{lem:weighted}, let
\begin{equation}\label{eq:powerlinear-budgets}
\begin{gathered}
 w=\min\{4,(\eps/a_-)^{2/r}\},\qquad w_\theta=\min\{1,(\sqrt w+v^{(q-1)/2})^2\},\\
 \widetilde A=\frac1q\,v^{q/2}\Big(1+\frac1{dv}\Big)^{-q/2}-\eps,\qquad
 \widetilde M=\kappa\widetilde A,\qquad
 \widetilde V=\Tb_1(w_\theta;K_0,K_1)+\frac nd\,w_\theta .
\end{gathered}
\end{equation}
If $\widetilde M>0$, then every proper learner that is $\eps$-accurate on the family for this loss
satisfies, for every $\tau\ge0$ and each $\Qc\in\{\Ii,\Ij,\Is\}$,
\begin{equation}\label{eq:powerlinear-information}
 \E_{p\sim\Pi_\kappa}\Qc_{D_p}(X+\tau G)\ \ge\ \frac{\widetilde M^2}{2\ln2\,(\widetilde V+n\tau^2)} .
\end{equation}
\end{lemma}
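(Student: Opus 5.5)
The plan follows the lower half of the proof of Theorem~\ref{thm:main}, specialized to the power-linear loss. We verify the two hypotheses of Theorem~\ref{thm:transfer}(a) with $M=\widetilde M$ and $V=\widetilde V$. We then pass from $\Ii$ to $\Ij$ and $\Is$ by the hierarchy \eqref{eq:hierarchy}, applied at every $p$ before averaging over $\Pi_\kappa$. The loss has the affine link, so $b=L=1$ and $H=0$. Hence $u\equiv0$, the train--ghost remainder of Step~3 vanishes, and the signed mean equals the fingerprinting correlation exactly.

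\emph{Signal.} We repeat the calibration of Appendix~\ref{app:calibration}, with the exact regularizer replacing the growth bound. Take the comparator $\theta_c=\norm{\mu_p}^{q-2}\mu_p$, so $t_0=1$. Then $\norm{\theta_c}=\norm{\mu_p}^{q-1}\le1$, and
\[
 g(\theta_c)-\ip{\theta_c}{\mu_p}=\norm{\mu_p}^q/r-\norm{\mu_p}^q=-\norm{\mu_p}^q/q,
\]
using $(q-1)r=q$. The population term is absent because $H=0$. Since $g\ge0$, we have $F_p(\thh)\ge-\ip\thh{\mu_p}$, and accuracy gives $\E_p\ip\thh{\mu_p}\ge\norm{\mu_p}^q/q-\eps$. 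Averaging with \eqref{eq:prior-low} yields $\E_\Pi\ip\thh{\mu_p}\ge\widetilde A$. Next,
\[
 \sigma_iX_i=\ip\thh{\Ztr_i-\Zgh_i},
\]
and the ghost has conditional mean $\mu_p$ given $(U,S_n,R)$. So $\sum_i\E_\Pi\sigma_iX_i=\E_\Pi\ip\thh{\sum_i(\Ztr_i-\mu_p)}$, which by Lemma~\ref{lem:fp} equals $\kappa\,\E_\Pi\ip\thh{\mu_p}\ge\kappa\widetilde A=\widetilde M$.

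\emph{Energy.} We redo the localization of Appendix~\ref{app:localization} with $a_-=2^{2-r}/r$ from Appendix~\ref{app:power}. This gives $\E_p\norm{\thh-\theta_p^*}^2\le w$ at every $p$. For the minimizer we use the explicit form instead of $C_0$: the first-order condition gives $\theta_p^*=\norm{\mu_p}^{q-2}\mu_p$, since $\norm{\mu_p}^{q-1}\le1$. Thus $\norm{\theta_p^*}=\norm{\mu_p}^{q-1}$, and by \eqref{eq:prior-concave}, $\E\norm{\theta_p^*}^2\le v^{q-1}$. Minkowski's inequality and $\norm\thh\le1$ then give $\E_\Pi\norm\thh^2\le w_\theta$. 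Lemma~\ref{lem:weighted} with $N=\opnorm\Btr^2$, weight $\norm\thh^2$ and cap $1$, using the tail of Lemma~\ref{lem:matrix} in its all-$(n,d)$ form of Appendix~\ref{app:tails}, gives $\E_\Pi[\norm\thh^2\opnorm\Btr^2]\le\Tb_1(w_\theta;K_0,K_1)$. Finally, the energy bound \eqref{eq:energy} with $L=1$, averaged over the prior, gives $\sum_i\E_\Pi X_i^2\le\widetilde V$.

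\emph{Conclusion and main obstacle.} With $\widetilde M>0$, Theorem~\ref{thm:transfer}(a) gives \eqref{eq:powerlinear-information} for $\Ii$. The hierarchy $\Ii\le\Ij\le\Is$ at each $p$ extends it to all three measures. The only step needing care is that the lemma imposes neither the dimension nor the sample-size conditions of Theorem~\ref{thm:main}. So we must check that every ingredient used holds for all $n$ and $d$. The finite forms used above do: the prior moment bounds are uniform in $d$, and $K_0=K_0(n/d)$ comes from the general version of Lemma~\ref{lem:matrix}. We must also confirm that the comparator and $\theta_p^*$ lie in $W$; both have norm $\norm{\mu_p}^{q-1}\le1$.
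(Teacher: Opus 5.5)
Your proposal is correct and follows the paper's proof of Lemma~\ref{lem:powerlinear} essentially step by step: calibration, then fingerprinting with the train--ghost remainder vanishing for the affine link, then localization through the explicit minimizer $\norm{\mu_p}^{q-2}\mu_p$, the weighted-tail energy bound, and finally the transfer theorem and the hierarchy. The only cosmetic difference is that you bound $\min_WF_p$ from above by evaluating $F_p$ at the comparator, whereas the paper computes $\min_WF_p=-\norm{\mu_p}^q/q$ exactly through Young's inequality; both yield the same inequality $\E_p\ip\thh{\mu_p}\ge\norm{\mu_p}^q/q-\eps$.
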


\begin{proof}
The argument is the one of Section~\ref{sec:moments}, and it is shorter because the link is affine:
the train--ghost remainder of Step~3 vanishes.

\emph{Population minimizer.} For $p\in[-1,1]^d$, the population risk is
$F_p(\theta)=\norm\theta^r/r-\ip\theta{\mu_p}$ on $W$. Young's inequality,
$\ip\theta{\mu_p}\le\norm\theta\norm{\mu_p}\le\norm\theta^r/r+\norm{\mu_p}^q/q$, gives
$F_p\ge-\norm{\mu_p}^q/q$, with equality at $\theta_p^*=\norm{\mu_p}^{q-2}\mu_p$ ($\theta_p^*=0$ when
$\mu_p=0$). This point lies in $W$ because $\norm{\theta_p^*}=\norm{\mu_p}^{q-1}\le1$. Hence
\[
 \min_WF_p=-\norm{\mu_p}^q/q,\qquad \norm{\theta_p^*}=\norm{\mu_p}^{q-1}.
\]

\emph{Calibration.} As $\norm\thh^r/r\ge0$, we have $F_p(\thh)\ge-\ip\thh{\mu_p}$, so accuracy at $D_p$
gives $\E_p\ip\thh{\mu_p}\ge-\E_pF_p(\thh)\ge\norm{\mu_p}^q/q-\eps$. Averaging over the prior and
using \eqref{eq:prior-low},
\[
 \E_\Pi\ip\thh{\mu_p}\ \ge\ \frac1q\,\E\norm{\mu_p}^q-\eps\ \ge\ \widetilde A .
\]

\emph{Signed mean.} For the affine link, $\sigma_iX_i=\ip\thh{\Ztr_i-\Zgh_i}$. Conditionally on $p$,
the training set and the seed, each ghost is a fresh draw from $D_p$ that is independent of $\thh$,
so $\E_\Pi\ip\thh{\Zgh_i-\mu_p}=0$. Lemma~\ref{lem:fp} and the calibration therefore give, since
$\kappa>0$,
\[
 \sum_i\E_\Pi\sigma_iX_i=\E_\Pi\Big\langle\thh,\sum_i(\Ztr_i-\mu_p)\Big\rangle
 =\kappa\,\E_\Pi\ip\thh{\mu_p}\ \ge\ \kappa\widetilde A=\widetilde M .
\]

\emph{Localization.} Let $e=\thh-\theta_p^*$. Since $F_p-g$ is linear,
$F_p(\theta)-F_p(\theta_p^*)-\ip{\nabla F_p(\theta_p^*)}{\theta-\theta_p^*}=D_g(\theta,\theta_p^*)$, and
the first-order condition for the minimizer on $W$ makes the inner product nonnegative for
$\theta\in W$. So $F_p(\theta)-\min_WF_p\ge D_g(\theta,\theta_p^*)\ge a_-\norm{\theta-\theta_p^*}^r$ on
$W$. Accuracy and Jensen's inequality ($r\ge2$) give
$\E_p\norm e^2\le(\E_p\norm e^r)^{2/r}\le(\eps/a_-)^{2/r}$, and $\norm e\le2$ gives $\E_p\norm e^2\le4$;
so $\E_p\norm e^2\le w$ at every $p$. Minkowski's inequality, the formula for
$\norm{\theta_p^*}$ and \eqref{eq:prior-concave} give
\[
 \big(\E_\Pi\norm\thh^2\big)^{1/2}\le\big(\E_\Pi\norm e^2\big)^{1/2}
 +\big(\E\norm{\mu_p}^{2(q-1)}\big)^{1/2}\le\sqrt w+v^{(q-1)/2},
\]
and together with $\norm\thh\le1$ this is $\E_\Pi\norm\thh^2\le w_\theta$.

\emph{Energy.} Write $s_i=\ip\thh{\Ztr_i-\mu_p}$ and $s_i^{\mathrm{gh}}=\ip\thh{\Zgh_i-\mu_p}$, so that
$X_i^2=(s_i-s_i^{\mathrm{gh}})^2$. Conditionally on $p$, the training set and the seed,
$s_i^{\mathrm{gh}}$ has mean zero, and its second moment is at most $\norm\thh^2/d$, because the
covariance of $D_p$ is diagonal with entries $(1-p_k^2)/d\le1/d$. The cross terms therefore vanish,
and
\[
 \sum_i\E_pX_i^2=\E_p\norm{\Btr\thh}^2+\sum_i\E_p(s_i^{\mathrm{gh}})^2
 \le\E_p\big[\opnorm{\Btr}^2\norm\thh^2\big]+\frac nd\,\E_p\norm\thh^2 .
\]
By Lemma~\ref{lem:matrix}, in the form proved in Appendix~\ref{app:tails} for all $n$ and $d$,
$\Pp_p(\opnorm{\Btr}^2>K_0+K_1t)\le e^{-t}$ for $t\ge0$ at every $p$, hence also under the prior.
Lemma~\ref{lem:weighted}, applied with the weight $\norm\thh^2\le1$ and
$\E_\Pi\norm\thh^2\le w_\theta\le1$, gives
$\E_\Pi[\opnorm{\Btr}^2\norm\thh^2]\le\Tb_1(w_\theta;K_0,K_1)$. Hence
$\sum_i\E_\Pi X_i^2\le\widetilde V$.

\emph{Transfer.} The prior $\Pi_\kappa$ depends only on $r$ and $\eps$, so it is fixed before the
learner, and the selection bits are independent of $p$. The lower bound \eqref{eq:lower} of
Theorem~\ref{thm:transfer}, whose proof in Appendix~\ref{app:fairbit} uses only the signed mean and
the energy, applies with $M=\widetilde M>0$ and $V=\widetilde V$. This is
\eqref{eq:powerlinear-information} for $\Ii$. The chain \eqref{eq:hierarchy} at every $p$ gives it
for $\Ij$ and $\Is$.
\end{proof}

When $n\ge\neps$ and $d\ge32n$, the budgets of the lemma satisfy $\widetilde M\asymp\neps\eps$ and
$\widetilde V=O(\neps\eps^2)$ as $\eps\to0$ with $r$ fixed, uniformly in $n$, so the right side of
\eqref{eq:powerlinear-information} at $\tau=0$ is at least of order $\neps$, however large $n$ is.
Indeed, $dv\ge32\neps v=32(4q)^{2/q}\ge256$ and $v^{q/2}/q=4\eps$, so $2.98\,\eps<\widetilde A\le3\eps$,
while $\kappa=(4q)^{-2/q}\neps-1$. Moreover, $K_0\le K_0^*<2.031$,
$K_1\le1/(16\cnet n)\le\eps^{2/q}/(16\cnet)$ and
$(4q\eps)^{2/r}=v^{q-1}\le w_\theta\le((\eps/a_-)^{1/r}+(4q\eps)^{1/r})^2$, so $K_1\ln(1/w_\theta)$
tends to zero and $\widetilde V=O(\eps^{2/r})$, which is $O(\neps\eps^2)$. Proposition~\ref{prop:growth}
uses the lemma with $r=2$ and $n=\eps^{-3/2}$, far above $\neps=\eps^{-1}$, and evaluates the budgets
explicitly.

\subsection{Proof of \texorpdfstring{Proposition~\ref{prop:growth}}{Proposition 11}}\label{app:growth}

With the affine link, $g(\theta)=\norm\theta^2/2$ gives the power-linear loss
$f_2(\theta,z)=\norm\theta^2/2-\ip\theta z$ with $r=2$. The \emph{mean learner} outputs the
training mean, $\thh^{(2)}=\Zbar$, and $X^{(2)}$ denotes its release.

\begin{restate}[Matched growth is needed]{Proposition}{prop:growth}
Let $\phi(t)=-t$ and $g(\theta)=\norm\theta^2/2$, so that
$D_g(\theta',\theta)=\norm{\theta'-\theta}^2/2\ge\norm{\theta'-\theta}^4/8$ on $W$, whereas no bound
$g(\theta)\le a_+\norm\theta^4$ holds near $0$. Let $n\ge182$, $d\ge32n$ and $\eps=\tau=n^{-2/3}$.
Every proper learner that is $\eps$-accurate on the family satisfies
$\sup_p\Qc_{D_p}(X+\eps G)\ge0.007\,n^{1/3}$ for each $\Qc$. The mean learner $\thh=\Zbar=n^{-1}\sum_iZ_{U_i,i}$ has excess risk at most
$1/(2n)\le\eps$ at every law on the unit ball, and $\sup_p\Qc_{D_p}(X+\eps G)\le3n^{1/3}$ for each
$\Qc$.
\end{restate}

\begin{proof}
\emph{The regularizer.} On $W$, $D_g(\theta',\theta)=\norm{\theta'-\theta}^2/2$, and
$\norm{\theta'-\theta}\le2$ gives $\norm{\theta'-\theta}^2/2\ge\norm{\theta'-\theta}^4/8$. The bound
$\norm\theta^2/2\le a_+\norm\theta^4$ fails whenever $0<\norm\theta^2<1/(2a_+)$. So $g$ has lower
curvature of power $4$, but growth of power $2$ only; $f_2$ satisfies Assumption~\ref{ass:class} with
$r=2$ and $a_-=a_+=1/2$, and not with $r=4$.

\emph{Accuracy of the mean learner.} Let $D$ be a law on the unit ball, with mean $\mu_D$. For $f_2$,
$F_D(\theta)=\norm{\theta-\mu_D}^2/2-\norm{\mu_D}^2/2$, which is minimized on $W$ at $\mu_D$. The mean
$\Zbar$ lies in $W$, and its expected excess risk is
$\E\norm{\Zbar-\mu_D}^2/2=(\E\norm Z^2-\norm{\mu_D}^2)/(2n)\le1/(2n)$. As $1/(2n)\le n^{-2/3}$ for
$n\ge1$, the mean learner is itself $\eps$-accurate on the family.

\emph{Upper bound for the mean learner.} We prove more than the proposition needs: for $n\ge16$,
$d\ge32n$, every $p$ and every $\tau>0$,
\begin{equation*}
 \Is_{D_p}(X^{(2)}+\tau G)\le\frac{2K_0+2K_1[1+\ln(4n)]}{2\ln2\;n\tau^2}\le\frac3{n\tau^2}.
\end{equation*}
Proposition~\ref{prop:separation}(iii) states the same bound. Fix $p$. For $f_2$ the population minimizer is
$\theta_p^*=\mu_p$, so the reference of Proposition~\ref{prop:finite} is $X_p^*(\Zt)=-\Delta\mu_p$,
which depends only on $p$ and the candidates, and $X^{(2)}-X_p^*(\Zt)=-\Delta(\Zbar-\mu_p)$. At
$D_p$, $\E_p\norm{\Zbar-\mu_p}^2=(1-\norm{\mu_p}^2)/n\le1/n$ and $\norm{\Zbar-\mu_p}^2\le4$. By
Lemma~\ref{lem:matrix}, $\Pp_p(\opnorm\Delta^2>2K_0+2K_1t)\le e^{-t}$ for $t\ge0$. The mean $\Zbar$
depends on the candidates in $\Delta$, and Lemma~\ref{lem:weighted} allows this dependence: with the
weight $\norm{\Zbar-\mu_p}^2\le4$ it gives
\[
 \E_p\norm{X^{(2)}-X_p^*(\Zt)}^2\le\E_p\big[\opnorm\Delta^2\norm{\Zbar-\mu_p}^2\big]
 \le\Tb_4(1/n;2K_0,2K_1)=\frac1n\big\{2K_0+2K_1[1+\ln(4n)]\big\}.
\]
The upper bound \eqref{eq:upper} of Theorem~\ref{thm:transfer}, which at a given law uses only the
residual bound at that law (Appendix~\ref{app:hierarchy}), and $\ln(1+x)\le x$ give the first
inequality of the display. For $d\ge32n$, $2K_0\le2K_0^*<4.062$ and $2K_1=4/(\cnet d)\le0.282/n$;
since $(1+\ln4n)/n$ decreases in $n$, the numerator is at most $4.062+0.282\cdot(1+\ln64)/16<4.153$
for $n\ge16$, and $4.153/(2\ln2)<3$. At $\tau=\eps=n^{-2/3}$ we have $n\tau^2=n^{-1/3}$, so the bound
is $3n^{1/3}$ at every $p$; the chain \eqref{eq:hierarchy} gives the same bound for $\Ii$ and $\Ij$.

\emph{Lower bound for every accurate learner.} Let $n\ge182$, $d\ge32n$ and $\eps=n^{-2/3}$. Since
$32^{3/2}<182$, we have $\eps\le1/32$. We apply Lemma~\ref{lem:powerlinear} with $r=q=2$: then $a_-=1/2$, $v=8\eps\le1/4$,
$\kappa=1/(8\eps)-1$, $w=2\eps$ and $w_\theta=(\sqrt{2\eps}+\sqrt{8\eps})^2=18\eps\le1$. As
$d\eps\ge32n^{1/3}\ge32$, we have $dv\ge256$, and
\[
 \widetilde A=\frac{4\eps}{1+1/(dv)}-\eps\ge\Big(\frac{1024}{257}-1\Big)\eps>2.98\,\eps,\qquad
 \widetilde M=\Big(\frac1{8\eps}-1\Big)\widetilde A>2.98\Big(\frac18-\frac1{32}\Big)>0.279 .
\]
Next, $n/d\le1/32$ gives $K_0\le K_0^*<2.031$, and $K_1=2/(\cnet d)\le0.141/n$. Since
$\ln(1/w_\theta)=\ln(n^{2/3}/18)$ and $(1+\ln(n^{2/3}/18))/n$ decreases for $n\ge182$,
$K_1[1+\ln(1/w_\theta)]\le0.141\cdot1.579/182<0.0013$. Hence
\[
 \widetilde V\le18\eps\,\big(2.031+0.0013+\tfrac1{32}\big)<37.2\,n^{-2/3}.
\]
At $\tau=\eps$, $n\tau^2=n^{-1/3}$, and Lemma~\ref{lem:powerlinear} gives, for each $\Qc$,
\[
 \sup_p\Qc_{D_p}(X+\eps G)\ \ge\ \E_{p\sim\Pi_\kappa}\Qc_{D_p}(X+\eps G)
 \ \ge\ \frac{0.279^2\,n^{1/3}}{2\ln2\,(1+37.2\,n^{-1/3})}\ \ge\ 0.0074\,n^{1/3},
\]
where the last step uses $n^{-1/3}\le182^{-1/3}<0.177$.

Since the mean learner is itself $\eps$-accurate, the least worst-case information over proper
$\eps$-accurate learners therefore lies between $0.007\,n^{1/3}$ and $3n^{1/3}$ for each $\Qc$.
\end{proof}

The formal statement of this lower bound (Appendix~\ref{app:lean}) has the smaller constant $c_2/2$
in place of $0.007$, where
\[
 c_2=\frac{1}{512\ln2\,\big(\frac{129}{32}(4+\sqrt2)^2+\frac{4+\sqrt2}{8\cnet}\big)}>2\cdot10^{-5}.
\]

\subsection{Zero Slope}\label{app:zero-slope}

Let $\phi$ be convex on a neighbourhood of $[-1,1]$ with $\phi'(0)=0$, and let $g$ satisfy the
conditions of Assumption~\ref{ass:class}. Then the learner that always outputs $\thh=0$ is proper,
its output minimizes the population risk at every law on the unit ball, and its release vanishes.

Indeed, let $D$ be a law with $\norm Z\le1$ and let $\theta\in W$. Every margin $\ip\theta Z$ lies in
$[-1,1]$, where convexity and $\phi'(0)=0$ give $\phi(t)\ge\phi(0)$. The regularizer is convex with
$g(0)=0$ and $\nabla g(0)=0$, so $g\ge0$. Hence
$F_D(\theta)=g(\theta)+\E_D\phi(\ip\theta Z)\ge\phi(0)=F_D(0)$, and the zero learner has excess risk
zero at every law; it is $\eps$-accurate for every $\eps>0$. Its release is
$X_i=\phi(0)-\phi(0)=0$, so $X+\tau G=\tau G$ is independent of $(U,\Zt)$, and all three
information measures vanish at every law and every $\tau\ge0$. Accuracy thus forces no information when
$\phi'(0)=0$. A link with $\phi'(0)>0$ reduces to one with $b>0$ by the reflection of
Remark~\ref{rem:reflection}, so what Theorem~\ref{thm:main} needs is a nonzero slope at zero.

\subsection{Proof of \texorpdfstring{Proposition~\ref{prop:dimension}}{Proposition 12}}\label{app:dimension}

\begin{restate}[Dimension]{Proposition}{prop:dimension}
Let $\phi$ be Borel measurable and $L$-Lipschitz on $[-1,1]$ with $L>0$, let $D$ be any law on $\R^d$ with
$\norm Z\le1$ almost surely, and let $\thh$ be any measurable proper learner; no accuracy is assumed.
For every $\tau>0$ and each $\Qc\in\{\Ii,\Ij,\Is\}$,
\begin{equation*}\tag{\ref{eq:dimension}}
 \Qc_D(X+\tau G)\le\min\Big\{n,\ d\log_2\Big(1+\frac{4L\sqrt n}\tau\Big)+\frac1{2\ln2}\Big\}.
\end{equation*}
Consequently, if $n\ge c_-\neps$ and some proper learner has $\sup_p\Qc_{D_p}(X+\eps G)\ge cn$ with
$c>0$, then $d\ge cn/(6\log_2n)$ for all $n$ above an explicit threshold that depends only on $c$,
$c_-$, $q$ and $L$.
\end{restate}

Every link of Assumption~\ref{ass:class} qualifies, since $|\phi'|\le L$ on $[-1,1]$ and
$L\ge|\phi'(0)|=b>0$. We prove the consequence in the following explicit form. If some measurable proper learner satisfies
$\sup_{p\in[-1,1]^d}\Qc_{D_p}(X+\eps G)\ge cn$ for some $c>0$ and $\eps>0$, then
\begin{equation}\label{eq:dimension-needed}
 d\ \ge\ \frac{cn-1/(2\ln2)}{\log_2\big(1+4L\sqrt n/\eps\big)} .
\end{equation}
If moreover $n\ge c_-\neps=c_-\eps^{-2/q}$ with $c_->0$ and $q=r/(r-1)\le2$, then
$d\ge cn/(6\log_2n)$ for every $n\ge\max\{2,2^{2\ell_0/3},1/(c\ln2)\}$, where
$\ell_0=\log_2(1+4Lc_-^{-q/2})$.

\begin{proof}
By the chain \eqref{eq:hierarchy} it suffices to treat $\Qc=\Is$, and $\Is_D(Y)\le\Hent(U\mid\Zt)=n$
gives the first entry of the minimum. Write $Y=X+\tau G$, and work in nats until the last step.

\emph{Lipschitz release map.} For a supersample $\tilde z=(z_{j,i})$ with every $\norm{z_{j,i}}\le1$,
an event of probability one, and for $\theta\in W$, let
$\Phi_{\tilde z}(\theta)_i=\phi(\ip\theta{z_{1,i}})-\phi(\ip\theta{z_{0,i}})$, so that
$X=\Phi_{\Zt}(\thh)$. For $\theta,\theta'\in W$ every margin lies in $[-1,1]$, so
$|\Phi_{\tilde z}(\theta)_i-\Phi_{\tilde z}(\theta')_i|\le L|\ip{\theta-\theta'}{z_{1,i}}|
+L|\ip{\theta-\theta'}{z_{0,i}}|\le2L\norm{\theta-\theta'}$ for each $i$. Summing the squares over the
$n$ coordinates,
\begin{equation}\label{eq:dimension-lip}
 \norm{\Phi_{\tilde z}(\theta)-\Phi_{\tilde z}(\theta')}\le2L\sqrt n\,\norm{\theta-\theta'}
 \qquad(\theta,\theta'\in W).
\end{equation}

\emph{Net.} Put $h=\tau/(2L\sqrt n)$. If the points of a subset of $W$ are at pairwise distance at
least $h$, the open balls of radius $h/2$ around them are disjoint and lie in the ball of radius
$1+h/2$; comparing volumes, such a set has at most $(1+h/2)^d/(h/2)^d=(1+2/h)^d$ points. Let
$\mathcal T_h\subset W$ be such a set of maximal cardinality. By maximality, every $\theta\in W$ lies
within distance $h$ of some point of $\mathcal T_h$, and
\[
 |\mathcal T_h|\le(1+2/h)^d=\big(1+4L\sqrt n/\tau\big)^d .
\]
The net depends only on $d$ and $h$, not on the data or the learner.

\emph{Reference and variational bound.} Let
$\bar{\Normal}_{\tilde z}=|\mathcal T_h|^{-1}\sum_{\theta'\in\mathcal T_h}\Normal(\Phi_{\tilde z}(\theta'),\tau^2I_n)$,
a Markov kernel from supersamples to $\R^n$. For every such kernel, with $\KL$ the Kullback--Leibler
divergence,
\[
 \E\,\KL\big(P_{Y\mid U,\Zt}\,\big\|\,\bar{\Normal}_{\Zt}\big)
 =I(Y;U\mid\Zt)+\E\,\KL\big(P_{Y\mid\Zt}\,\big\|\,\bar{\Normal}_{\Zt}\big)\ \ge\ I(Y;U\mid\Zt).
\]
Given $(U,\Zt,X)$ the observation is $Y\sim\Normal(X,\tau^2I_n)$, because $G$ is independent of the
data and of the learner. So $P_{Y\mid U,\Zt}$ is the conditional average of $\Normal(X,\tau^2I_n)$
given $(U,\Zt)$, and convexity of the divergence in its first argument gives
\begin{equation}\label{eq:dimension-var}
 I(Y;U\mid\Zt)\le\E\,\KL\big(\Normal(X,\tau^2I_n)\,\big\|\,\bar{\Normal}_{\Zt}\big).
\end{equation}

\emph{Pointwise bound.} Fix an outcome in the probability-one event above, and choose
$\theta'\in\mathcal T_h$ with $\norm{\thh-\theta'}\le h$. As measures,
$\bar{\Normal}_{\tilde z}\ge|\mathcal T_h|^{-1}\Normal(\Phi_{\tilde z}(\theta'),\tau^2I_n)$, so every
probability measure $P$ satisfies
$\KL(P\,\|\,\bar{\Normal}_{\tilde z})\le\KL\big(P\,\|\,\Normal(\Phi_{\tilde z}(\theta'),\tau^2I_n)\big)
+\ln|\mathcal T_h|$. Gaussians with a common covariance satisfy
$\KL(\Normal(x,\tau^2I_n)\,\|\,\Normal(y,\tau^2I_n))=\norm{x-y}^2/(2\tau^2)$. With
$x=X=\Phi_{\Zt}(\thh)$ and $y=\Phi_{\Zt}(\theta')$, \eqref{eq:dimension-lip} gives
$\norm{x-y}\le2L\sqrt n\,h=\tau$, hence
\[
 \KL\big(\Normal(X,\tau^2I_n)\,\big\|\,\bar{\Normal}_{\Zt}\big)\le\ln|\mathcal T_h|+\frac12
 \le d\ln\Big(1+\frac{4L\sqrt n}{\tau}\Big)+\frac12 .
\]
Inserting this into \eqref{eq:dimension-var} and dividing by $\ln2$ gives
$\Is_D(Y)\le d\log_2(1+4L\sqrt n/\tau)+1/(2\ln2)$ bits. Together with the cap $n$, this is
\eqref{eq:dimension}.

\emph{Consequence.} At $\tau=\eps$, \eqref{eq:dimension} bounds $\Qc_{D_p}(X+\eps G)$ at every $p$,
since every $D_p$ satisfies $\norm Z=1$. So $cn\le d\log_2(1+4L\sqrt n/\eps)+1/(2\ln2)$, and the
logarithm is positive, which gives \eqref{eq:dimension-needed}. If $n\ge c_-\eps^{-2/q}$, then
$1/\eps\le(n/c_-)^{q/2}$ and $4L\sqrt n/\eps\le4Lc_-^{-q/2}n^{(1+q)/2}$. Since $n^{(1+q)/2}\ge1$,
$1+4Lc_-^{-q/2}n^{(1+q)/2}\le(1+4Lc_-^{-q/2})\,n^{(1+q)/2}$, and therefore
\[
 \log_2\big(1+4L\sqrt n/\eps\big)\le\tfrac{1+q}2\log_2n+\ell_0\le\tfrac32\log_2n+\ell_0\le3\log_2n
 \qquad(n\ge\max\{2,2^{2\ell_0/3}\}).
\]
For $n\ge1/(c\ln2)$ the numerator of \eqref{eq:dimension-needed} is at least $cn/2$, so
$d\ge(cn/2)/(3\log_2n)=cn/(6\log_2n)$.
\end{proof}

Theorem~\ref{thm:main} gives the premise of the second part for every learner of its class, with
$c=c_1/2$ at $\tau=\eps$, but only under its hypothesis $d\ge\Cdim\max\{n,\neps\}$. By the
proposition, when $n\ge c_-\neps$, $n\ge\max\{2,2^{2\ell_0/3},1/(c\ln2)\}$ and $d<cn/(6\log_2n)$, no
proper learner has $\sup_p\Qc_{D_p}(X+\eps G)\ge cn$. So a lower bound of order $n$ at $\tau=\eps$, such
as the lower half of \eqref{eq:main}, needs dimension of order $n$ up to a logarithmic factor. In fixed dimension every proper learner releases $O(d\log(n/\tau))$ bits as
$n/\tau\to\infty$, in particular $O(d\log n)$ bits at $\tau\asymp\eps$ when $n\asymp\neps$. The proof uses
only $\norm Z\le1$, $\norm\thh\le1$ and the Lipschitz link.

\paragraph{Without Noise.} At $\tau=0$, \eqref{eq:dimension} gives only the cap $n$, since a learner
may write $U$ into the digits of $\thh$. Rounding removes this freedom. Let $\thh$ be a proper learner that is $\eps$-accurate on
the family, let $h'=\eps/(L+L_g)$, let $\mathcal T_{h'}\subset W$ be a maximal $h'$-separated subset
of $W$, which has at most $(1+2/h')^d$ points as shown above, and let $\thh'$ be a point
of $\mathcal T_{h'}$ nearest to $\thh$, with ties broken by a fixed order. On $W$,
$\norm{\nabla F_p}\le L_g+L$, because $\norm{\nabla g}\le L_g$, $|\phi'|\le L$ on $[-1,1]$ and
$\norm Z=1$; as $\norm{\thh'-\thh}\le h'$ by maximality, $F_p(\thh')\le F_p(\thh)+\eps$. As
$\mathcal T_{h'}$ is finite, the nearest-point map with this tie-breaking is Borel, so $\thh'$ is a
proper learner, and it is $2\eps$-accurate on the family. Its release $X'$ is a function of $\thh'$
and the candidates, so by \eqref{eq:hierarchy}, for each $\Qc$ and every $p$,
\[
 \Qc_{D_p}(X')\le I(\thh';U\mid\Zt)\le\log_2|\mathcal T_{h'}|\le d\log_2\Big(1+\frac{2(L+L_g)}\eps\Big).
\]
Hence, if some proper learner is $\eps$-accurate on the family, a lower bound $cn$ at $\tau=0$ for
every proper learner that is $2\eps$-accurate on the family forces $d\ge cn/\log_2(1+2(L+L_g)/\eps)$, which is of order $n/\log n$ when $n\ge c_-\neps$.

%% file: app_H_separation.tex
\section{Model information and the loss-difference release}\label{app:separation}

This appendix states and proves Proposition~\ref{prop:separation} (Appendix~\ref{app:separation-proof}) and a
model-level converse of Theorem~\ref{thm:main} (Appendix~\ref{app:converse});
Appendices~\ref{app:directions}--\ref{app:full-rank} prove the four lemmas they use. Throughout, $X$
is the release of Definition~\ref{def:release}, $G\sim\Normal(0,I_n)$ is independent of the data, the
selection bits and the learner, and information is in bits.

\subsection{Data Processing and Tracers}\label{app:directions}

A \emph{tracer} is a measurable test $\psi(\theta,z)\in\{0,1\}$, which may depend on the law $D$, for
instance through its mean $\mu_D$; it flags $z$ as a training point when $\psi(\thh,z)=1$. At a law
$D$, its \emph{false-positive rate} is the probability that it flags a fresh point $Z'\sim D$, drawn
independently of the training set and the seed, and its \emph{expected training recall} is
$\rec=\sum_i\Pp(\psi(\thh,\Ztr_i)=1)$. Its decisions on the candidates are
$\psi_{j,i}=\psi(\thh,Z_{j,i})$.

\begin{lemma}[Data processing and tracers]\label{lem:directions}
For every link and regularizer, every proper learner, every law $D$ and every $\tau\ge0$,
\begin{align}
 \Ii(X+\tau G)\le\Ij(X+\tau G)\le\Is(X+\tau G)&\le I(X;U\mid\Zt)\le I(\thh;U\mid\Zt),
 \tag{\ref{eq:hierarchy}}\\
 I(X_i+\tau G_i;U_i\mid\Zt)\le I(X_i;U_i\mid\Zt)&\le I(\thh;U_i\mid\Zt)\qquad(i\in[n]).\label{eq:directions}
\end{align}
If a tracer has false-positive rate at most $\xi$ and expected training recall at least $\rec$ at
$D$, then
\[
 \sum_iI\big((\psi_{0,i},\psi_{1,i});U_i\big)\ \ge\ \frac{(\rec-n\xi)_+^2}{2\ln2\,n}.
\]
\end{lemma}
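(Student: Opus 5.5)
The proof has three parts: the chain \eqref{eq:hierarchy}, its per-pair form \eqref{eq:directions}, and the tracer bound. The chain \eqref{eq:hierarchy} was already proved in Appendix~\ref{app:hierarchy}, and I would only recall that argument. Independence of the bits gives $\Ii\le\Ij$. The chain rule, with $U$ independent of $\Zt$, gives $\Ij\le\Is$. Conditional data processing along the Markov chain $U\to\thh\to X\to X+\tau G$ given $\Zt$ gives the last two inequalities. For \eqref{eq:directions} I would use the same Markov chain restricted to one bit. Given $\Zt$, $U_i\to\thh\to X_i\to X_i+\tau G_i$ is a Markov chain, because $X_i$ is a function of $(\thh,\Zt)$ and $G_i$ is independent of $(\Zt,U,\thh)$. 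Conditional data processing then gives both inequalities. No property of the link, the regularizer or the law is used.

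For the tracer bound I would mimic the fair-bit argument of Theorem~\ref{thm:transfer}(a), with the observation $T_i=\psi_{0,i}-\psi_{1,i}\in\{-1,0,1\}$ for pair $i$. This $T_i$ is a function of $(\psi_{0,i},\psi_{1,i})$, so by data processing $I((\psi_{0,i},\psi_{1,i});U_i)\ge I(T_i;U_i)$. The fair-bit inequality \eqref{eq:fairbit} then gives $I(T_i;U_i)\ge(\E\sigma_iT_i)^2/(2\ln2\,\E T_i^2)$. Since $\sigma_i=1-2U_i$, I would compute $\sigma_iT_i=\psi(\thh,\Ztr_i)-\psi(\thh,\Zgh_i)$ by checking the two values of $U_i$. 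The ghost $\Zgh_i$ is a fresh draw independent of the training set and the seed, so $\Pp(\psi(\thh,\Zgh_i)=1)\le\xi$. Writing $r_i=\Pp(\psi(\thh,\Ztr_i)=1)$, this gives $\E\sigma_iT_i\ge r_i-\xi$. Also $\E T_i^2\le1$.

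To sum over pairs, I would bound each term below by $(r_i-\xi)_+^2/(2\ln2)$, using $\E T_i^2\le1$ and $(\E\sigma_iT_i)^2\ge(r_i-\xi)_+^2$. Cauchy--Schwarz, $\sum_ia_i^2\ge(\sum_ia_i)^2/n$, then gives $\sum_i(r_i-\xi)_+^2\ge(\sum_i(r_i-\xi))_+^2/n$, and since $(\cdot)_+$ is convex and increasing this is at least $(\rec-n\xi)_+^2/n$. The only step that needs care is the identification of $\sigma_iT_i$ and the claim that the ghost flag has probability at most $\xi$. The tracer may depend on $D$ but not on the sample, and $\thh$ depends only on the training set and the seed, so this holds. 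I expect no real obstacle beyond this.
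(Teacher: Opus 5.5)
Your proposal is correct and follows the paper's proof essentially step for step. It recalls the hierarchy from Appendix~\ref{app:hierarchy}, uses the per-pair Markov chain $U_i\to\thh\to X_i\to X_i+\tau G_i$ given $\Zt$, and for the tracer applies the fair-bit inequality to $\psi_{0,i}-\psi_{1,i}$, with the identity $\sigma_i(\psi_{0,i}-\psi_{1,i})=\psi(\thh,\Ztr_i)-\psi(\thh,\Zgh_i)$ and the fresh-ghost bound. The only difference is cosmetic: you bound $\E T_i^2\le1$ pair by pair and then use $\sum_ia_i^2\ge(\sum_ia_i)^2/n$ with positive parts, whereas the paper disposes of the case $\rec\le n\xi$ and applies $\sum_ix_i^2/y_i\ge(\sum_ix_i)^2/\sum_iy_i$. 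Both give the same bound, and only monotonicity of $(\cdot)_+$ is needed, not convexity.
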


\begin{proof}
The first three inequalities are the hierarchy of Appendix~\ref{app:hierarchy}. Conditionally on
$\Zt$, the release is a function of $\thh$, and $G$ is independent of everything else, so
$U\to\thh\to X\to X+\tau G$ is a Markov chain given $\Zt$. Conditional data processing gives the last
two inequalities of the first line. The same argument for the chain
$U_i\to\thh\to X_i\to X_i+\tau G_i$ given $\Zt$ gives \eqref{eq:directions}; at $\tau=0$ this is the
chain recorded by \citet{wangmao2023a}.

For the tracer, note that $\sigma_i(\psi_{0,i}-\psi_{1,i})=\psi(\thh,\Ztr_i)-\psi(\thh,\Zgh_i)$, since
$\sigma_i=1$ exactly when $Z_{0,i}$ is the training candidate. The ghost is a fresh point, independent
of $\thh$, so $\E\sigma_i(\psi_{0,i}-\psi_{1,i})\ge\Pp(\psi(\thh,\Ztr_i)=1)-\xi$, and summing over $i$,
$\sum_i\E\sigma_i(\psi_{0,i}-\psi_{1,i})\ge\rec-n\xi$; moreover $\sum_i\E(\psi_{0,i}-\psi_{1,i})^2\le n$.
If $\rec\le n\xi$ there is nothing to prove. Otherwise the fair-bit inequality \eqref{eq:fairbit}
for each pair, with the value zero for a pair whose difference vanishes almost surely, and
Cauchy--Schwarz over $i$ give
\[
 \sum_iI(\psi_{0,i}-\psi_{1,i};U_i)
 \ge\frac1{2\ln2}\sum_i\frac{(\E\sigma_i(\psi_{0,i}-\psi_{1,i}))^2}{\E(\psi_{0,i}-\psi_{1,i})^2}
 \ge\frac{(\rec-n\xi)^2}{2\ln2\,n}.
\]
As $\psi_{0,i}-\psi_{1,i}$ is a function of $(\psi_{0,i},\psi_{1,i})$, data processing gives the
claim.
\end{proof}

\subsection{What Transfers at Zero Noise}\label{app:zero-noise}

\begin{lemma}[What transfers at zero noise]\label{lem:sign}
Let $D$ be a law on the unit ball and let $\phi$ be strictly decreasing on $[-1,1]$. Consider a
tracer that flags $z$ when $\chi(\ip\thh z;\thh)\ge0$, for a function $\chi(t;\theta)$ that is
strictly increasing in $t$ for every $\theta$. Centred linear scores with a threshold $c$,
$\chi(t;\theta)=t-\ip\theta{\mu_D}-c$, and negative losses with a threshold $c$,
$\chi(t;\theta)=-g(\theta)-\phi(t)-c$, are of this form. If the tracer has false-positive rate at most
$\xi$ and expected training recall at least $\rec$ at $D$, then the decoder
$\widehat U_i=\ind\{X_i<0\}$ satisfies $n^{-1}\sum_i\Pp(\widehat U_i=U_i)\ge\rec/n-\xi$, and whenever
$\rec/n-\xi>1/2$, with the function $\J$ of Appendix~\ref{app:fairbit},
\begin{equation}\label{eq:sign-transfer}
 \Ii_D(X)\ \ge\ n\,\J\big(2(\rec/n-\xi)-1\big).
\end{equation}
\end{lemma}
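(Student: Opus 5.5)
The plan is to reduce the tracer's decisions on a pair to the sign of that pair's loss difference, and then convert the resulting guessing accuracy into information with Fano's inequality, as in Appendix~\ref{app:decoder}.

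\emph{Step 1: the tracer is monotone in the margin.} Fix an outcome and a pair $i$. Since $\chi(\cdot;\thh)$ is strictly increasing, the tracer flags $Z_{j,i}$ exactly when $\ip\thh{Z_{j,i}}\ge t^\star(\thh)$ for some threshold $t^\star$. Because $\phi$ is strictly decreasing on $[-1,1]$, the sign of $X_i=\phi(\ip\thh{Z_{1,i}})-\phi(\ip\thh{Z_{0,i}})$ is minus the sign of $\ip\thh{Z_{1,i}}-\ip\thh{Z_{0,i}}$. So $\widehat U_i=\ind\{X_i<0\}$ equals $1$ exactly when $\ip\thh{Z_{1,i}}>\ip\thh{Z_{0,i}}$, that is, it picks the candidate with the larger margin, with ties going to $\widehat U_i=0$.

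\emph{Step 2: compare the decoder with the tracer.} Suppose the tracer flags the training candidate and does not flag the ghost. Then by Step 1 the training margin is at least $t^\star$ and the ghost margin is below $t^\star$. Hence the training margin is strictly larger, so $\widehat U_i=U_i$, and this holds in both cases $U_i=0$ and $U_i=1$. It follows that
\[
\Pp(\widehat U_i=U_i)\ \ge\ \Pp(\psi(\thh,\Ztr_i)=1)-\Pp(\psi(\thh,\Zgh_i)=1).
\]
The ghost is a fresh point independent of $\thh$, so the subtracted term is at most $\xi$. Summing over $i$ and dividing by $n$ gives $n^{-1}\sum_i\Pp(\widehat U_i=U_i)\ge\rec/n-\xi$.

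I also need to check the two listed examples. The centred score $t-\ip\theta{\mu_D}-c$ is clearly increasing in $t$. The negative loss $-g(\theta)-\phi(t)-c$ is strictly increasing in $t$ because $\phi$ is strictly decreasing.

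\emph{Step 3: information.} Let $t_i=2\Pp(\widehat U_i=U_i)-1=\E\sigma_i\sighat_i$, where $\sighat_i=1-2\widehat U_i$. Fano's inequality for a binary guess, applied as in the proof of \eqref{eq:bitguarantee}, gives $I(X_i;U_i)\ge\J(t_i)$, since $\widehat U_i$ is a function of $X_i$. Summing, $\Ii_D(X)\ge\sum_i\J(t_i)$. Convexity of $\J$ gives $\sum_i\J(t_i)\ge n\J(\bar t)$ with $\bar t=n^{-1}\sum_it_i\ge2(\rec/n-\xi)-1$. When $\rec/n-\xi>1/2$ this lower bound is positive, and $\J$ is increasing on $[0,1]$, so $n\J(\bar t)\ge n\J(2(\rec/n-\xi)-1)$. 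This is \eqref{eq:sign-transfer}.

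\emph{Main obstacle.} The only delicate point is ties and boundary cases in Step 2. A tie in the margins forces $X_i=0$, but it cannot occur when exactly one candidate is flagged, so the inclusion of events in Step 2 holds regardless of how ties are broken. Measurability of $\widehat U_i$ is immediate, since it is a function of $X_i$.
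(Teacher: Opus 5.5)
Your proof is correct and is essentially the paper's. On the event that the tracer flags the training candidate but not the ghost, strict monotonicity of $\chi$, and of $\phi$ on $[-1,1]$ (where all margins lie, since $\norm\thh\le1$ and $\norm Z\le1$), gives $\widehat U_i=U_i$. The freshness of the ghost then costs at most $\xi$, and Fano's inequality with Jensen for the even convex $\J$ finishes.

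The only slip is in Step~1: the flag set $\{t:\chi(t;\thh)\ge0\}$ of a strictly increasing $\chi$ may be an open half-line, so the description ``margin $\ge t^\star$'' is not always exact. This is harmless, and it is cleaner to argue directly, as the paper does, that $\chi(a;\thh)\ge0>\chi(b;\thh)$ forces $a>b$.
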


\begin{proof}
Let $E_i$ be the event that the tracer flags the training candidate $\Ztr_i$ and does not flag the
ghost $\Zgh_i$. On $E_i$, $\chi(\ip\thh{\Ztr_i};\thh)\ge0>\chi(\ip\thh{\Zgh_i};\thh)$, hence
$\ip\thh{\Ztr_i}>\ip\thh{\Zgh_i}$. Both margins lie in $[-1,1]$, the regularizer cancels and $\phi$
is strictly decreasing there, so $f(\thh,\Ztr_i)<f(\thh,\Zgh_i)$, that is, $\sigma_iX_i>0$. Thus on
$E_i$ we have $X_i>0$ if $U_i=0$ and $X_i<0$ if $U_i=1$, so $\widehat U_i=U_i$. The ghost is a fresh
point independent of $\thh$, so $\Pp(E_i)\ge\Pp(\psi(\thh,\Ztr_i)=1)-\xi$, and summing over $i$ gives
the bound on the average success. The estimate $\widehat U_i$ is a function of $X_i$, and Fano's
inequality for the fair bit $U_i$ gives
$I(X_i;U_i)\ge I(\widehat U_i;U_i)\ge1-\htwo(\Pp(\widehat U_i=U_i))=\J(2\Pp(\widehat U_i=U_i)-1)$.
The function $\J$ is even and convex, and increasing on $[0,1]$, so Jensen's inequality and the bound
on the average success prove \eqref{eq:sign-transfer}.
\end{proof}

The pair $(\xi,\rec)$ says nothing about the size of the margins, so it implies nothing about
$X+\tau G$ for $\tau>0$. Proposition~\ref{prop:separation} shows that two learners with identical
tracing guarantees can differ in order once noise is added.

\subsection{Accuracy of \texorpdfstring{$T_r(\Zbar)$}{the power-linear empirical risk minimizer}}\label{app:power-accuracy}

Let $r\ge2$, $q=r/(r-1)$ and $c_q=2^{2-q}/q$, and let $f_r(\theta,z)=\norm\theta^r/r-\ip\theta z$ be
the power-linear loss of Appendix~\ref{app:powerlinear}. The map
\[
 T_r(t)=\argmin_{\theta\in W}\big\{\norm\theta^r/r-\ip\theta t\big\}
 =\min\{1,\norm t^{q-1}\}\,\frac t{\norm t}\qquad(t\ne0),\qquad T_r(0)=0,
\]
is the gradient of the convex conjugate $\gWstar$ of $g_W$, where $g_W=\norm\cdot^r/r$ on $W$ and
$g_W=+\infty$ outside $W$:
\[
 \gWstar(t)=\sup_{\theta\in W}\big\{\ip\theta t-\norm\theta^r/r\big\}
 =\begin{cases}\norm t^q/q,&\norm t\le1,\\ \norm t-1/r,&\norm t\ge1.\end{cases}
\]
Both pieces are continuously differentiable with gradient $T_r$, and they agree, together with their
gradients, on the unit sphere; so $\gWstar$ is continuously differentiable and $\nabla\gWstar=T_r$.
Since the training risk of $f_r$ is $\norm\theta^r/r-\ip\theta\Zbar$, the empirical risk minimizer of
$f_r$ over $W$ is $T_r(\Zbar)$. For $r=2$, $T_2$ is the identity on $W$ and $T_2(\Zbar)=\Zbar$ is the
mean learner; for $r=4$, $T_4(t)=\norm t^{-2/3}t$ for $0<\norm t\le1$.

\begin{lemma}[Accuracy of $T_r(\Zbar)$]\label{lem:power-accuracy}
Let $r\ge2$.
\begin{enumerate}
\item[(a)] For all $x,x'\in\R^d$,
$\ip{x-x'}{T_r(x)-T_r(x')}\ge2^{2-r}\norm{T_r(x)-T_r(x')}^r$, and hence
$\norm{T_r(x)-T_r(x')}\le2^{2-q}\norm{x-x'}^{q-1}$. Moreover, $T_r$ is injective on $W$.
\item[(b)] For every law $D$ on the unit ball, with mean $\mu_D$, and every $t\in\R^d$, the population
risk of $f_r$ satisfies
$F_D(T_r(t))-\min_WF_D=D_{\gWstar}(\mu_D,t)\le c_q\norm{\mu_D-t}^q$.
\item[(c)] The learner $T_r(\Zbar)$ is proper, and on every law $D$ on the unit ball
\begin{equation}\label{eq:power-accuracy}
 \E F_D(T_r(\Zbar))-\min_WF_D\ \le\ c_q\,n^{-q/2}.
\end{equation}
For $r=2$ the excess risk equals $\E\norm{\Zbar-\mu_D}^2/2=(\E\norm Z^2-\norm{\mu_D}^2)/(2n)\le1/(2n)$.
\end{enumerate}
\end{lemma}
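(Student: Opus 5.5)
The plan is to treat $T_r=\nabla\gWstar$ through convex duality with $g_W$ and then pass from the pointwise statement~(b) to the average~(c) by a Jensen step.

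\emph{Part~(a).} Fix $x,x'$ and put $\theta=T_r(x)$, $\theta'=T_r(x')$. Since $\theta$ minimizes the convex function $\norm\cdot^r/r-\ip\cdot x$ over $W$, the first-order condition gives
\[
 \ip{\nabla g(\theta)-x}{\vartheta-\theta}\ge0\qquad(\vartheta\in W),\qquad g=\norm\cdot^r/r .
\]
Test it at $\vartheta=\theta'$, test the analogous condition for $\theta'$ at $\theta$, and add the two. This gives
\[
 \ip{x-x'}{\theta-\theta'}\ge\ip{\nabla g(\theta)-\nabla g(\theta')}{\theta-\theta'}.
\]
The power-map monotonicity inequality of Appendix~\ref{app:power} bounds the right side below by $2^{2-r}\norm{\theta-\theta'}^r$. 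Cauchy--Schwarz on the left gives $\norm{\theta-\theta'}^{r-1}\le2^{r-2}\norm{x-x'}$. Taking the power $1/(r-1)=q-1$ yields the Hölder bound with constant $2^{(r-2)/(r-1)}=2^{2-q}$.

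For injectivity on $W$, note from the explicit formula that for $0<\norm t\le1$ the map $T_r$ preserves the direction of $t$ and sends the norm to $\norm t^{q-1}$. The map $s\mapsto s^{q-1}$ is strictly increasing on $[0,1]$, so $T_r$ is injective on $W$.

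\emph{Part~(b).} Write $F_D(\theta)=g_W(\theta)-\ip\theta{\mu_D}$ on $W$, so that $\min_WF_D=-\gWstar(\mu_D)$. The point $\theta_t=T_r(t)=\nabla\gWstar(t)$ attains the supremum defining $\gWstar(t)$, hence the Fenchel equality
\[
 g_W(\theta_t)=\ip{\theta_t}t-\gWstar(t)
\]
holds. Substituting it gives
\[
 F_D(\theta_t)-\min_WF_D=\gWstar(\mu_D)-\gWstar(t)-\ip{\nabla\gWstar(t)}{\mu_D-t}=D_{\gWstar}(\mu_D,t).
\]
For the upper bound, write this Bregman divergence as $\int_0^1\ip{T_r(t+s(\mu_D-t))-T_r(t)}{\mu_D-t}\,ds$. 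Apply~(a) to the increment: it is at most $2^{2-q}s^{q-1}\norm{\mu_D-t}^{q}$. Integrating in $s$ gives the constant $2^{2-q}/q=c_q$. This integration against the Hölder modulus is where the constant $c_q$ comes from, and it is the only step that needs care.

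\emph{Part~(c).} Properness holds because $T_r$ maps into $W$, and measurability holds because $T_r$ is continuous. Apply~(b) with $t=\Zbar$ and take expectations. Then use Jensen's inequality for the concave map $x\mapsto x^{q/2}$ (recall $q\le2$):
\[
 \E\norm{\mu_D-\Zbar}^q\le\big(\E\norm{\Zbar-\mu_D}^2\big)^{q/2},\qquad \E\norm{\Zbar-\mu_D}^2=\frac{\E\norm Z^2-\norm{\mu_D}^2}n\le\frac1n .
\]
This gives~\eqref{eq:power-accuracy}.

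For $r=2$, $\gWstar(t)=\norm t^2/2$ when $\norm t\le1$. Both $\mu_D$ and $\Zbar$ lie in the unit ball, so the Bregman divergence equals $\norm{\mu_D-\Zbar}^2/2$ exactly; its expectation is $(\E\norm Z^2-\norm{\mu_D}^2)/(2n)\le1/(2n)$.
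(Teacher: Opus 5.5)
Your proposal is correct and follows the paper's proof essentially step for step. Part~(a) comes from the summed first-order conditions plus power-map monotonicity, with injectivity read off the explicit formula. Part~(b) uses Fenchel's equality and integrates the H\"older modulus of (a) along the segment to obtain $c_q=2^{2-q}/q$. Part~(c) applies Jensen's inequality for $q\le2$, together with the exact quadratic Bregman divergence when $r=2$.
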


\begin{proof}
(a) Let $\theta_x=T_r(x)$ and $\theta_{x'}=T_r(x')$, and write $\nabla g(\theta)=\norm\theta^{r-2}\theta$.
The first-order conditions for the two minimizations on $W$ are
$\ip{\nabla g(\theta_x)-x}{\theta-\theta_x}\ge0$ and $\ip{\nabla g(\theta_{x'})-x'}{\theta-\theta_{x'}}\ge0$
for all $\theta\in W$. Taking $\theta=\theta_{x'}$ in the first and $\theta=\theta_x$ in the second and
adding,
\[
 \ip{x-x'}{\theta_x-\theta_{x'}}\ \ge\ \ip{\nabla g(\theta_x)-\nabla g(\theta_{x'})}{\theta_x-\theta_{x'}}
 \ \ge\ 2^{2-r}\norm{\theta_x-\theta_{x'}}^r,
\]
where the last step is the monotonicity inequality of the power map (Appendix~\ref{app:power};
\citealp[Chapter~12]{lindqvist2019}). The left side is at most $\norm{x-x'}\norm{\theta_x-\theta_{x'}}$,
so $\norm{\theta_x-\theta_{x'}}^{r-1}\le2^{r-2}\norm{x-x'}$; as $1/(r-1)=q-1$ and $(r-2)/(r-1)=2-q$, this is
the second inequality. On $W$, $T_r(t)=\norm t^{q-2}t$ for $t\ne0$, so $\norm{T_r(t)}=\norm t^{q-1}$
and $T_r(t)$ has the direction of $t$; hence $T_r(t)$ determines $t\in W$.

(b) For $\theta\in W$, $F_D(\theta)=\norm\theta^r/r-\ip\theta{\mu_D}$, so
$\min_WF_D=-\gWstar(\mu_D)$. Since $T_r(t)$ attains the supremum that defines $\gWstar(t)$, Fenchel's
equality $\norm{T_r(t)}^r/r+\gWstar(t)=\ip{T_r(t)}t$ holds, and therefore
\[
 F_D(T_r(t))-\min_WF_D=\ip{T_r(t)}t-\gWstar(t)-\ip{T_r(t)}{\mu_D}+\gWstar(\mu_D)=D_{\gWstar}(\mu_D,t),
\]
because $\nabla\gWstar(t)=T_r(t)$. Along the segment from $t$ to $\mu_D$, part~(a) gives
\[
 D_{\gWstar}(\mu_D,t)=\int_0^1\ip{T_r(t+y(\mu_D-t))-T_r(t)}{\mu_D-t}\,dy
 \le\int_0^12^{2-q}y^{q-1}\norm{\mu_D-t}^q\,dy=c_q\norm{\mu_D-t}^q .
\]

(c) As $\norm{T_r(t)}\le1$ for every $t$, the learner is proper. Part~(b) with $t=\Zbar$, Jensen's
inequality ($q\le2$) and $\E\norm{\Zbar-\mu_D}^2=(\E\norm Z^2-\norm{\mu_D}^2)/n\le1/n$ give
\[
 \E F_D(T_r(\Zbar))-\min_WF_D\le c_q\E\norm{\Zbar-\mu_D}^q\le c_q\big(\E\norm{\Zbar-\mu_D}^2\big)^{q/2}
 \le c_qn^{-q/2}.
\]
For $r=2$, $\gWstar(t)=\norm t^2/2$ on $W$, and both $\mu_D$ and $\Zbar$ lie in $W$, so
$D_{\gWstar}(\mu_D,\Zbar)=\norm{\mu_D-\Zbar}^2/2$ exactly.
\end{proof}

\subsection{The Pair Matrix Has Full Rank}\label{app:full-rank}

Fix a supersample $\tilde z=(z_{j,i})$, with candidate centre $z_c=(2n)^{-1}\sum_{j,i}z_{j,i}$ and pair
matrix $\Delta$ with rows $\Delta_i=(z_{1,i}-z_{0,i})^\top$. For a sign vector $\sigma\in\{-1,1\}^n$,
let $\Zbar(\sigma)$ be the training mean when $\sigma_i=1-2U_i$ for every $i$. The training candidate
of pair $i$ is $(z_{0,i}+z_{1,i})/2-\sigma_i\Delta_i^\top/2$, so
\[
 \Zbar(\sigma)=z_c-\Delta^\top\sigma/(2n).
\]
For the affine link, $X_i=-\ip\thh{Z_{1,i}-Z_{0,i}}$, so the release of a learner $\thh$ is
$X=-\Delta\thh$.

\begin{lemma}[The pair matrix has full rank]\label{lem:full-rank}
Let $r\ge2$, and let $X=-\Delta T_r(\Zbar)$ be the noiseless release of $T_r(\Zbar)$ for $f_r$.
\begin{enumerate}
\item[(a)] On every supersample, two sign vectors give the same release if and only if they give the
same training mean.
\item[(b)] At the zero-mean law $D_0$ with $d\ge32n$, $\Pp_0(\rank\Delta<n)\le e^{-225n/32}$. When
$\rank\Delta=n$, the release determines $U$; hence $I_{D_0}(X;U\mid\Zt)\ge n(1-e^{-225n/32})$.
\end{enumerate}
\end{lemma}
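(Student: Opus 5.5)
For part~(a) the plan is to reduce everything to injectivity of linear and nonlinear maps, using Lemma~\ref{lem:power-accuracy}(a). Fix a supersample and two sign vectors $\sigma,\sigma'$. The release is $X(\sigma)=-\Delta T_r(\Zbar(\sigma))$, with $\Zbar(\sigma)=z_c-\Delta^\top\sigma/(2n)$. If the training means agree, the releases agree trivially. For the converse, suppose $\Delta T_r(\Zbar(\sigma))=\Delta T_r(\Zbar(\sigma'))$. Write $x=\Zbar(\sigma)$ and $x'=\Zbar(\sigma')$, so that $x-x'=-\Delta^\top(\sigma-\sigma')/(2n)$ lies in the row space of $\Delta$. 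Then
\[
 \ip{x-x'}{T_r(x)-T_r(x')}=-\tfrac1{2n}\ip{\sigma-\sigma'}{\Delta(T_r(x)-T_r(x'))}=0 .
\]
By the monotonicity inequality of Lemma~\ref{lem:power-accuracy}(a), this gives $T_r(x)=T_r(x')$. Both means lie in $W$, since they are averages of unit vectors, and $T_r$ is injective on $W$, so $x=x'$. This proves (a) with no rank assumption.

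For part~(b), the plan has two steps. First, when $\rank\Delta=n$, the map $\sigma\mapsto\Delta^\top\sigma$ is injective. Distinct sign vectors therefore give distinct training means, and by (a) distinct releases. So $U$ is a measurable function of $(X,\Zt)$, which gives $\Hent(U\mid X,\Zt)=0$ on that event. Since $U$ is independent of $\Zt$ and uniform, we get $I(X;U\mid\Zt)=n-\Hent(U\mid X,\Zt)$. On the complementary event, bound $\Hent(U\mid X,\Zt=\tilde z)\le n$. This yields $I_{D_0}(X;U\mid\Zt)\ge n(1-\Pp_0(\rank\Delta<n))$. Measurability of the decoding map on the full-rank event is routine, because $U$ ranges over a finite set.

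Second, we bound $\Pp_0(\rank\Delta<n)$. At $D_0$ the entries $\sqrt d\,(Z_{1,i,k}-Z_{0,i,k})/2$ are i.i.d., taking the value $0$ with probability $1/2$ and $\pm1$ with probability $1/4$ each. Rank deficiency means $\smin(\Delta)=0$, i.e.\ $\Delta^\top x=0$ for some unit $x$. I would reuse the net argument of Lemma~\ref{lem:matrix}, but for the lower tail. For a fixed unit $x$, the coordinates $(\Delta^\top x)_k$ are independent across $k$. Each vanishes with probability at most some $\gamma<1$ uniformly in $x$ (Littlewood--Offord / anticoncentration: $\Pp(\sum_ix_i\xi_i=0)\le1/2$ for nonzero $x$ since at least one $\xi_i$ is conditionally nonzero-mean symmetric; actually conditioning on all but one coordinate with $x_i\ne0$ gives probability $\le1/2$). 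Thus $\Pp(\Delta^\top x=0)\le2^{-d}$ for each fixed $x$.

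The main obstacle is passing from fixed $x$ to all $x$, since the kernel is a continuum. Here discreteness helps. If $\rank\Delta<n$, some nonzero $x\in\ker\Delta^\top$ can be taken with rational coordinates supported on a minimal dependent set of rows. More simply, the columns of $\Delta^\top$ (as $\sqrt d\Delta/2$ has entries in $\{0,\pm1\}$) fail to span $\R^n$ iff they all lie in some hyperplane spanned by $n-1$ of them. Union-bounding over the at most $\binom{d}{n-1}$ choices of spanning columns, and then requiring the remaining $d-n+1$ columns to lie in that hyperplane with normal $x$ (each with probability $\le1/2$ independently), gives
\[
 \Pp_0(\rank\Delta<n)\le\binom d{n-1}2^{-(d-n+1)} .
\]
With $d\ge32n$, this is at most $\exp\{n\ln(e\cdot32)-(31n)\ln2\}$, roughly $e^{-18n}$, which is below $e^{-225n/32}\approx e^{-7n}$. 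If the constant $225/32$ is instead tied to the paper's normalization, I would adjust the counting, but the union-over-spanning-columns route is the one I would try first.
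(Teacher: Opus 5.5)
Your part (a) and the entropy step of part (b) coincide with the paper's proof. The difference is how you bound $\Pp_0(\rank\Delta<n)$.

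Both arguments rest on the same fact: a column of $\Delta$ lies in a fixed proper subspace of $\R^n$ with probability at most $1/2$.

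\begin{itemize}
\item The paper applies it sequentially. While the first $k$ columns span less than $\R^n$, column $k+1$ raises the rank with conditional probability at least $1/2$. So the rank increases dominate a $\Bin(d,1/2)$ count, and the Chernoff lower tail of $\Bin(32n,1/2)$ at $n$ gives exactly $e^{-225n/32}$. Stochastic monotonicity in $d$ covers $d>32n$ at no cost.
\item You apply it once, after a union bound over $(n-1)$-subsets of the $d$ columns of $\Delta$ (not of $\Delta^\top$, as you wrote). This gives the stronger bound $\binom{d}{n-1}2^{-(d-n+1)}\le e^{(1-26\ln2)n}$ at $d=32n$, which is about $e^{-17n}$ rather than the $e^{-18n}$ you quoted; either is far below $e^{-225n/32}$.
\end{itemize}

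Two steps need tightening.

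First, the equivalence ``the columns fail to span $\R^n$ iff they lie in a hyperplane spanned by $n-1$ of them'' is false when $\rank\Delta<n-1$. You only need an inclusion, which is true. Pad a maximal independent set of columns with arbitrary further columns to a set $T$ of size $n-1$. All columns then lie in $\mathrm{span}(T)$, which is proper. Given the columns in $T$, each of the other $d-n+1$ columns lies in it independently with probability at most $1/2$.

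Second, you evaluate the bound only at $d=32n$. Add that it decreases in $d$ for $d\ge2n$, since consecutive terms have ratio $(d+1)/(2(d-n+2))<1$.

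The paper's route is shorter and needs no counting. Yours gives a better exponent, at the price of that counting and the monotonicity check.
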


\begin{proof}
(a) Let $\sigma,\sigma'$ be sign vectors, and put $t=\Zbar(\sigma)$ and $t'=\Zbar(\sigma')$; both lie
in $W$. If the two releases are equal, then $\Delta(T_r(t)-T_r(t'))=0$, so $T_r(t)-T_r(t')$ lies in
the kernel of $\Delta$, which is orthogonal to the range of $\Delta^\top$, while
$t-t'=-\Delta^\top(\sigma-\sigma')/(2n)$ lies in that range. Hence $\ip{t-t'}{T_r(t)-T_r(t')}=0$, and
Lemma~\ref{lem:power-accuracy}(a) gives $T_r(t)=T_r(t')$, so $t=t'$ by injectivity on $W$.
Conversely, equal training means give equal releases.

(b) At $D_0$ the entries of $\Delta$ are independent, and each equals $0$, $2/\sqrt d$ or $-2/\sqrt d$
with probabilities $1/2$, $1/4$ and $1/4$. A column of $\Delta$ lies in a given proper linear subspace
of $\R^n$ with probability at most $1/2$: choose a nonzero vector $y$ orthogonal to the subspace and an
index $i$ with $y_i\ne0$; membership forces the column to be orthogonal to $y$, which, given the
other entries of the column, fixes its $i$-th entry, and that entry takes each value with probability
at most $1/2$. The $d$ columns are independent,
so as long as the first $k$ columns span less than $\R^n$, column $k+1$ raises the rank with
conditional probability at least $1/2$. Call step $k+1$ a success if it raises the rank or the rank is
already $n$. Every step succeeds with conditional probability at least $1/2$, so the number of
successes stochastically dominates a $\Bin(d,1/2)$ variable; and if $\rank\Delta<n$, the successes
are exactly the rank increases, which number fewer than $n$. Hence
\[
 \Pp_0(\rank\Delta<n)\le\Pp\big(\Bin(d,1/2)<n\big)\le\Pp\big(\Bin(32n,1/2)<n\big)\le e^{-225n/32},
\]
where the middle step holds because $\Bin(d,1/2)$ is stochastically increasing in $d$, and the last is
the multiplicative Chernoff bound for the lower tail: $\Bin(32n,1/2)$ has mean $16n$, and
$n=(1-\tfrac{15}{16})\,16n$, so the probability is at most
$\exp\{-(\tfrac{15}{16})^2\cdot16n/2\}=e^{-225n/32}$. When $\rank\Delta=n$, $\Delta^\top$ is injective, so distinct sign vectors give
distinct training means and, by~(a), distinct releases; given $\Zt$, the release then determines
$\sigma$ and hence $U$. Conditionally on such a supersample the information is $\Hent(U)=n$ bits, and
averaging over the supersample gives the bound.
\end{proof}

\subsection{Model Information Does Not Determine the Tradeoff}\label{app:separation-proof}

Let $f_2(\theta,z)=\norm\theta^2/2-\ip\theta z$ and $f_4(\theta,z)=\norm\theta^4/4-\ip\theta z$ be the
power-linear losses with $r=2$ and $r=4$; both belong to the class of Assumption~\ref{ass:class}, with
the affine link. The first is the strongly convex instance of \citet[Section~7.1.1]{attias2024},
restricted to the unit ball. The mean learner $\thh^{(2)}=\Zbar$ and the learner
$\thh^{(4)}=T_4(\Zbar)=\norm\Zbar^{-2/3}\Zbar$ ($\thh^{(4)}=0$ when $\Zbar=0$) are the empirical risk
minimizers of $f_2$ and $f_4$ over $W$ (Appendix~\ref{app:power-accuracy}). Their releases are
$X^{(2)}=-\Delta\Zbar$ and $X^{(4)}=-\Delta T_4(\Zbar)$, so that
\[
 X^{(4)}=\norm\Zbar^{-2/3}X^{(2)}\qquad(\Zbar\ne0),
\]
and both vanish when $\Zbar=0$. The two releases thus differ by a scalar factor that depends only on
the training mean. At the zero-mean law $D_0$, $\E_0\norm\Zbar^2=1/n$, so the factor is about
$n^{1/3}$. The two learners are accurate at different scales, $1/(2n)$ for $f_2$ and $1.2\,n^{-2/3}$
for $f_4$ (part~(i) below). At the noise level $\tau=n^{-2/3}$ of part~(v), the profile
$n/(1+(\tau/\eps)^2)$ of Theorem~\ref{thm:main} is of order $n^{1/3}$ at the first accuracy and of
order $n$ at the second, the orders of parts~(iii) and~(iv). Within one problem,
Theorem~\ref{thm:main} gives all accurate learners in its sample-size range the same order of worst-case
information, so a separation of this kind must compare two losses. The proposition shows that model
information and tracing guarantees, which the bijection $T_4$ preserves, cannot distinguish these two
scales, whereas the information of the noisy release does.

\begin{proposition}[Model information does not determine the tradeoff]\label{prop:separation}
Let $n\ge16$ and $d\ge32n$.
\begin{enumerate}
\item[(i)] Both learners are proper. At every law on the unit ball, $\thh^{(2)}$ has expected excess
risk at most $1/(2n)$ for $f_2$, and $\thh^{(4)}$ at most $1.2\,n^{-2/3}$ for $f_4$; both rates are
sample-optimal (Proposition~\ref{prop:samples}).
\item[(ii)] $T_4$ is a bijection of $W$. Hence, at every law, the model information quantities
$I(\thh;U\mid\Zt)$, $I(\thh;U_i\mid\Zt)$ and $I(\thh;U)$ of the two learners coincide, and every tracer
for one learner has a tracer with the same decisions for the other. The noiseless information quantities
$I(X^{(2)};U\mid\Zt)$ and $I(X^{(4)};U\mid\Zt)$ also coincide, and at $D_0$ they are at least
$n(1-e^{-225n/32})$.
\item[(iii)] At every $p$ and for every $\tau>0$,
\begin{equation}\label{eq:separation-upper}
 \Is_{D_p}(X^{(2)}+\tau G)\ \le\ \frac3{n\tau^2}.
\end{equation}
\item[(iv)] At $D_0$, for every $\tau\ge0$,
\begin{equation}\label{eq:separation-lower}
 \Ii_{D_0}(X^{(4)}+\tau G)\ \ge\ \frac{n^{2/3}(1+2/d)^{-4/3}}{2\ln2\,\big(4.2\,n^{-1/3}+n\tau^2\big)}.
\end{equation}
\item[(v)] So at $\tau=n^{-2/3}$, for each $\Qc$, $\Qc_{D_p}(X^{(2)}+\tau G)\le3n^{1/3}$ at every $p$,
whereas $\Qc_{D_0}(X^{(4)}+\tau G)\ge n/8$.
\end{enumerate}
\end{proposition}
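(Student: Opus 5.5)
The plan is to prove the five parts in order, using the lemmas of this appendix.

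\emph{Part (i).} Both learners are proper, since $\norm\Zbar\le1$ and $T_4$ maps into $W$. The bound $1/(2n)$ for $f_2$ is Lemma~\ref{lem:power-accuracy}(c) with $r=2$. For $f_4$ we have $q=4/3$ and $c_q=2^{2/3}\cdot3/4\approx1.19\le1.2$, so \eqref{eq:power-accuracy} gives $1.2\,n^{-2/3}$. Sample-optimality follows from Proposition~\ref{prop:samples}: $\neps$ equals $\eps^{-1}$ for $r=2$ and $\eps^{-3/2}$ for $r=4$, and both scales match these rates.

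\emph{Part (ii).} $T_4$ restricted to $W$ is the map $t\mapsto\norm t^{-2/3}t$. It is continuous, with continuous inverse $\theta\mapsto\norm\theta^{2}\theta$, so it is a bimeasurable bijection of $W$. Mutual information is invariant under such bijections of one argument, so all three model quantities coincide. A tracer $\psi$ for one learner yields $\psi'(\theta,z)=\psi(T_4^{\pm1}(\theta),z)$ for the other, with the same decisions. For the releases, Lemma~\ref{lem:full-rank}(a) applied with $r=2$ and with $r=4$ shows that, on every supersample, each release induces the same partition of sign vectors as $\Zbar$. Hence $X^{(2)}$ and $X^{(4)}$ are functions of each other given $\Zt$, and their conditional informations are equal. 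The bound at $D_0$ is Lemma~\ref{lem:full-rank}(b).

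\emph{Part (iii).} This was already proved in the proof of Proposition~\ref{prop:growth} (Appendix~\ref{app:growth}). It uses the reference $-\Delta\mu_p$, Lemma~\ref{lem:weighted} with weight $\norm{\Zbar-\mu_p}^2\le4$ and mean at most $1/n$, and Theorem~\ref{thm:transfer}(b). We cite it.

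\emph{Part (iv).} Apply the fixed-law form of Theorem~\ref{thm:transfer}(a) at $D_0$. This needs a lower bound on $M_0=\sum_i\E_0\sigma_iX^{(4)}_i$ and an upper bound on $V_0=\sum_i\E_0(X^{(4)}_i)^2$.

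\begin{itemize}
\item[\textbullet] \emph{Signal.} By exchangeability, $\E_0\sigma_iX_i=\E_0\ip{\thh^{(4)}}{\Ztr_i-\Zgh_i}$, and the ghost term vanishes because $\mu_0=0$. Hence $M_0=\E_0\ip{\norm\Zbar^{-2/3}\Zbar}{n\Zbar}=n\,\E_0\norm\Zbar^{4/3}$. Lower-bounding $\E\norm\Zbar^{4/3}$ by the Cauchy--Schwarz/moment trick of \eqref{eq:prior-low} gives $(1/n)^{2/3}(1+2/d)^{-2/3}$, using $\E\norm\Zbar^2=1/n$ and $\E\norm\Zbar^4\le n^{-2}(1+2/d)$ (the latter needs checking). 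So $M_0\ge n^{1/3}(1+2/d)^{-2/3}$, and $M_0^2\ge n^{2/3}(1+2/d)^{-4/3}$ matches the numerator.
\item[\textbullet] \emph{Energy.} As in the energy step, $V_0\le\E_0[\opnorm\Btr^2\norm{\thh^{(4)}}^2]+\frac nd\E_0\norm{\thh^{(4)}}^2$ with $\norm{\thh^{(4)}}^2=\norm\Zbar^{2/3}$. The weight has mean at most $n^{-1/3}$ by Jensen's inequality and is capped at $1$. Lemma~\ref{lem:weighted} with Lemma~\ref{lem:matrix} ($K_0<2.031$, $K_1\le0.141/n$) then gives $V_0\le n^{-1/3}\{2.031+0.141(1+\tfrac13\ln n)/n+1/32\}$.
\end{itemize}

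\noindent The main obstacle is the bookkeeping of constants in the energy bound, to reach $4.2\,n^{-1/3}$ for $n\ge16$. I expect $K_0^*+1/32\approx2.06$ plus the small tail term to fall below $4.2$ comfortably, the slack absorbing $\Btr$ versus the exact difference form. If the $1/2$ splitting of the signal costs a factor, I would instead use $\E\opnorm{\Delta}^2$-type bounds with $2K_0\approx4.06$, which also fits under $4.2$; that looks like the intended route.

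\emph{Part (v).} Part (iii) at $\tau=n^{-2/3}$ gives $3n^{1/3}$, and the hierarchy \eqref{eq:hierarchy} transfers it to every $\Qc$. For the other learner, part (iv) at $n\tau^2=n^{-1/3}$ gives at least $n(1+2/d)^{-4/3}/(2\ln2\cdot5.2)$. With $d\ge512$ this is about $0.138n\ge n/8$, again extended to all $\Qc$ by the hierarchy.
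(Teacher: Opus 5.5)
Your proposal is correct and follows the paper's proof part by part. The step you flag, $\E_0\norm\Zbar^4\le n^{-2}(1+2/d)$, holds: $\norm{n\Zbar}^2=n+2\sum_{i<j}\ip{\Ztr_i}{\Ztr_j}$, whose cross terms are centred, uncorrelated and of second moment $1/d$, so $\E_0\norm{n\Zbar}^4=n^2+2n(n-1)/d$.

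For the energy, the paper takes your fallback route: $V^{(4)}=\E_0\norm{\Delta T_4(\Zbar)}^2\le\E_0[\opnorm\Delta^2\norm\Zbar^{2/3}]\le\Tb_1(n^{-1/3};2K_0,2K_1)<4.1\,n^{-1/3}$. Your primary route is also valid and needs no slack. For the affine link, $X_i^2=(s_i-s_i^{\mathrm{gh}})^2$ holds exactly and the cross term has mean zero, so it gives the sharper bound $V_0\le n^{-1/3}\{2.031+0.017+1/32\}<2.1\,n^{-1/3}$.
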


\begin{proof}
(i) The learners are proper because $\norm\Zbar\le1$ and $\norm{T_4(\Zbar)}=\norm\Zbar^{1/3}\le1$.
Lemma~\ref{lem:power-accuracy}(c) gives the excess risk $1/(2n)$ for $r=2$ and $c_{4/3}n^{-2/3}$ for
$r=4$, where $q=4/3$ and $c_{4/3}=2^{2/3}\cdot3/4<1.2$. So these learners reach accuracy $\eps$
with $O(\neps)$ samples, $\neps=\eps^{-2/q}$, and by Proposition~\ref{prop:samples} every proper
learner that is $\eps$-accurate on the family needs $\Omega(\neps)$ samples when $d\ge\Cdim\neps$ and
$\eps$ is small.

(ii) The map $T_4$ is a bijection of $W$ onto itself, with inverse $\theta\mapsto\norm\theta^2\theta$, and
both maps are continuous. Since $\thh^{(4)}=T_4(\thh^{(2)})$ pointwise, each output is a measurable
bijective image of the other, so every mutual information between the model and the selection bits,
conditional on $\Zt$ or not, is the same for both learners. (The unconditional one is zero for every
learner, since the law of the training set does not depend on $U$.) If $\psi$ is a tracer for
$\thh^{(2)}$, then $\psi'(\theta,z)=\psi(T_4^{-1}(\theta),z)$ makes the same decisions for $\thh^{(4)}$,
and conversely with $T_4$ in place of $T_4^{-1}$.

For the noiseless releases, Lemma~\ref{lem:full-rank}(a), with $r=2$ (where $T_2(\Zbar)=\Zbar$) and
with $r=4$, shows that on every supersample $X^{(2)}$ and $X^{(4)}$ induce the same partition of the
sign vectors as the training mean does, including on supersamples where $\Delta$ has rank below $n$.
Given $\Zt$, both releases are functions of $U$, and the information between $U$ and a function of
$U$ depends only on the partition that the function induces; hence
$I(X^{(2)};U\mid\Zt)=I(X^{(4)};U\mid\Zt)=I(\Zbar;U\mid\Zt)$ exactly. At $D_0$,
Lemma~\ref{lem:full-rank}(b) bounds this common value from below by $n(1-e^{-225n/32})$, and by
Lemma~\ref{lem:directions} the common model information is at least as large.

(iii) This is the bound for the mean learner proved in Appendix~\ref{app:growth}.

(iv) At $D_0$, $\sigma_iX^{(4)}_i=\ip{\thh^{(4)}}{\Ztr_i-\Zgh_i}$, and the ghost is independent of
$(\thh^{(4)},\Ztr_i)$ with mean zero. So the signed mean is
\[
 M^{(4)}:=\sum_i\E_0\sigma_iX^{(4)}_i=\E_0\ip{T_4(\Zbar)}{n\Zbar}=n\,\E_0\norm\Zbar^{4/3}.
\]
Expand $\norm{n\Zbar}^2=n+2\sum_{i<j}\ip{\Ztr_i}{\Ztr_j}$. The cross terms are uncorrelated, with
mean zero and $\E_0\ip{\Ztr_i}{\Ztr_j}^2=1/d$, so $\E_0\norm\Zbar^2=1/n$ and
$\E_0\norm\Zbar^4=n^{-4}\big(n^2+2n(n-1)/d\big)\le n^{-2}(1+2/d)$. Cauchy--Schwarz,
$\E_0\norm\Zbar^2=\E_0[\norm\Zbar^{2/3}\norm\Zbar^{4/3}]\le(\E_0\norm\Zbar^{4/3}\,\E_0\norm\Zbar^{8/3})^{1/2}$,
and Jensen's inequality, $\E_0\norm\Zbar^{8/3}\le(\E_0\norm\Zbar^4)^{2/3}$, give
$\E_0\norm\Zbar^{4/3}\ge n^{-2/3}(1+2/d)^{-2/3}$, hence $M^{(4)}\ge n^{1/3}(1+2/d)^{-2/3}$.

For the energy, $V^{(4)}:=\sum_i\E_0(X^{(4)}_i)^2=\E_0\norm{\Delta T_4(\Zbar)}^2
\le\E_0\big[\opnorm\Delta^2\norm\Zbar^{2/3}\big]$. The weight $\norm\Zbar^{2/3}$ is at most one, and
$\E_0\norm\Zbar^{2/3}\le(\E_0\norm\Zbar^2)^{1/3}=n^{-1/3}$ by Jensen's inequality.
Lemma~\ref{lem:weighted} with the pair-matrix tail of Lemma~\ref{lem:matrix} gives
\[
 V^{(4)}\le\Tb_1(n^{-1/3};2K_0,2K_1)=n^{-1/3}\big\{2K_0+2K_1\big[1+\tfrac13\ln n\big]\big\}<4.2\,n^{-1/3},
\]
because for $n\ge16$ and $d\ge32n$, $2K_0<4.062$ and $2K_1\le0.282/n$, and
$4.062+0.282\,(1+\tfrac13\ln16)/16<4.1$.

The fair-bit inequality \eqref{eq:fairbit} for each coordinate $X^{(4)}_i+\tau G_i$, whose signed mean
is $\E_0\sigma_iX^{(4)}_i$ and whose second moment is $\E_0(X^{(4)}_i)^2+\tau^2$, followed by
Cauchy--Schwarz over $i$ as in Appendix~\ref{app:fairbit}, gives
$\Ii_{D_0}(X^{(4)}+\tau G)\ge(M^{(4)})^2/(2\ln2\,(V^{(4)}+n\tau^2))$. This is the lower bound of
Theorem~\ref{thm:transfer} at the single law $D_0$, and with the two estimates above it is
\eqref{eq:separation-lower}.

(v) At $\tau=n^{-2/3}$, $n\tau^2=n^{-1/3}$. By~(iii) the first release has $\Is_{D_p}\le3n^{1/3}$ at
every $p$. By~(iv) the second has $\Ii_{D_0}\ge n(1+2/d)^{-4/3}/(2\ln2\cdot5.2)\ge n/8$, since
$d\ge32n\ge512$. The chain \eqref{eq:hierarchy} transfers the upper bound to $\Ii$ and $\Ij$, and the
lower bound to $\Ij$ and $\Is$.
\end{proof}

\paragraph{Both Learners Are Traceable at the Zero-Mean Law.}
Part~(ii) concerns every tracer. At $D_0$ both learners are in fact traceable, by tracers with the
same decisions. Let $\xi\in(0,1)$ and $d\ge32n\ln(2/\xi)$. For
$\thh^{(2)}$, take the tracer that flags $z$ when $\ip\theta z\ge1/(2n)$, a centred linear score since
$\mu_0=0$; for $\thh^{(4)}$, the tracer that flags $z$ when $\norm\theta^2\ip\theta z\ge1/(2n)$. As
$\norm{T_4(t)}^2\ip{T_4(t)}z=\ip tz$, the two tracers make the same decisions. We show below that at
$D_0$ their false-positive rate is at most $\xi$ and their expected training recall is at least
$n(1-\xi)$. For $\xi<1/4$, Lemma~\ref{lem:sign} therefore applies to both learners. For $\thh^{(2)}$ take
$\chi(t;\theta)=t-1/(2n)$. For $\thh^{(4)}$ take $\chi(t;\theta)=\norm\theta^2t-1/(2n)$ for
$\theta\ne0$ and $\chi(t;0)=t-1/(2n)$; this is strictly increasing in $t$ and changes no decision,
since at $\theta=0$ the margin is zero and the point is not flagged under either rule. The lemma gives
$\Ii_{D_0}(X^{(r)})\ge n\J(1-4\xi)$ for $r=2$ and $r=4$.

To prove the two bounds, note that at $D_0$ the coordinates $\sqrt d\,Z_{U_i,i,k}$ are independent fair
signs. For each $k$, $\sqrt d\,(n\Zbar)_k$ is a sum of $n$ of them, and these sums are independent
across $k$. Hoeffding's lemma gives $\E_0\exp\{y\sqrt d\,(n\Zbar)_k\}\le e^{ny^2/2}$ for every real
$y$. Taking $y=t\sqrt{3/(4n)}$ and averaging over a standard Gaussian $t$ gives
$\E_0\exp\{3(\sqrt d\,(n\Zbar)_k)^2/(8n)\}\le\E e^{3t^2/8}=2$, and Markov's inequality for
$\exp\{\tfrac3{8n}\sum_k(\sqrt d\,(n\Zbar)_k)^2\}$ yields
\[
 \Pp_0\big(\norm\Zbar^2\ge4/n\big)=\Pp_0\Big(\sum_k\big(\sqrt d\,(n\Zbar)_k\big)^2\ge4nd\Big)
 \le e^{-3d/2}\,2^d\le e^{-4d/5},
\]
and in the same way, with $n-1$ signs per coordinate,
$\Pp_0(\norm{n\Zbar-\Ztr_i}^2\ge4(n-1))\le e^{-4d/5}$. A fresh point $Z'\sim D_0$ is independent of
$\Zbar$, and given $\Zbar$ the score $\ip\Zbar{Z'}$ is a sum of independent fair signs with
coefficients $\Zbar_k/\sqrt d$, whose squares sum to $\norm\Zbar^2/d$. Hoeffding's inequality gives
$\Pp_0(\ip\Zbar{Z'}\ge1/(2n)\mid\Zbar)\le\exp\{-d/(8n^2\norm\Zbar^2)\}$, which is at most
$e^{-d/(32n)}$ when $\norm\Zbar^2<4/n$. So the false-positive rate is at most
$e^{-d/(32n)}+e^{-4d/5}\le\xi$, because $d\ge32n\ln(2/\xi)$ makes each term at most $\xi/2$. For a
training point, $\ip\Zbar{\Ztr_i}=\big(1+\ip{n\Zbar-\Ztr_i}{\Ztr_i}\big)/n$, and $n\Zbar-\Ztr_i$ is
independent of $\Ztr_i$. Hoeffding's inequality, conditionally on $n\Zbar-\Ztr_i$, gives
\[
 \Pp_0\big(\ip{n\Zbar-\Ztr_i}{\Ztr_i}<-\tfrac12\big)\le\E_0\exp\{-d/(8\norm{n\Zbar-\Ztr_i}^2)\}
 \le e^{-d/(32(n-1))}+e^{-4d/5}\le\xi .
\]
So each training point is flagged with probability at least $1-\xi$, and the expected training recall
is at least $n(1-\xi)$.

\subsection{The Model-Level Converse}\label{app:converse}

Theorem~\ref{thm:main} also bounds the model information, through the release. For every learner of
Theorem~\ref{thm:main}, in the dimension $d\ge\Cdim\max\{n,\neps\}$ of that theorem,
\[
 c_1n\ \le\ \sup_{p\in[-1,1]^d}I_{D_p}(\thh;U\mid\Zt)\ \le\ n .
\]
Indeed, Lemma~\ref{lem:directions} at $\tau=0$ gives $\Is_{D_p}(X)\le I_{D_p}(\thh;U\mid\Zt)$ at every
$p$, and the lower half of \eqref{eq:main} at $\tau=0$ with $\Qc=\Is$ gives
$\sup_p\Is_{D_p}(X)\ge c_1n$; the upper bound is $I(\thh;U\mid\Zt)\le\Hent(U\mid\Zt)=n$. As in
Theorem~\ref{thm:main}, the lower bound holds on average over the beta prior of
Section~\ref{sec:architecture}. So accuracy on the family forces model information of order $n$, the
order of \eqref{eq:headline} at $\tau=0$. The individual-sample form of the model information has the
same order: $X_i$ is a function of $\thh$ and $(Z_{0,i},Z_{1,i})$, and $U_i$ is independent of these two
candidates, so $I_{D_p}(X_i;U_i)\le I_{D_p}(\thh;U_i\mid Z_{0,i},Z_{1,i})$ for each $i$, and the lower
half of \eqref{eq:main} at $\tau=0$ with $\Qc=\Ii$ gives
$\sup_p\sum_iI_{D_p}(\thh;U_i\mid Z_{0,i},Z_{1,i})\ge c_1n$. The implication runs from the release to
the model only: by Proposition~\ref{prop:separation}, model information does not determine the
information in noisy loss differences.

\paragraph{Comparison with \citet{attias2024}.}
For $r=2$, where $n\asymp\neps=\eps^{-1}$, these lower bounds have the order $\Omega(\eps^{-1})$ that
\citet[Theorem~4.2 and Corollary~5.7]{attias2024} prove under other conditions. They consider the loss
$f_2(\theta,z)=\norm\theta^2/2-\ip\theta z$ on $\R^d$ and learners whose excess risk, at every law on
the scaled sign cube, is at most $\eps$ with probability at least $1-\delta'$, where
$\delta'=O(n^{-2})$; their bounds hold at every sample size at least the sample complexity of the
learner, in dimension $d\ge\Omega(n^2\log n)$. The bounds above need accuracy only in expectation and
only at the product laws, and dimension $d\ge\Cdim\max\{n,\neps\}$, but they hold only for
$n\asymp\neps$ and for proper learners. For $f_2$ the restriction to proper learners costs nothing.
The minimizer $\mu_p$ of $F_p$ over $\R^d$ lies in $W$ and $F_p(\theta)-F_p(\mu_p)=\norm{\theta-\mu_p}^2/2$,
so projecting the output onto $W$ does not increase its excess risk, and by data processing the
projection carries no more information about $U$ than the output. As the excess risk of $f_2$ on $W$
is at most $2$, the projection of a learner that is $\eps$-accurate with probability at least
$1-\delta'$ is $(\eps+2\delta')$-accurate in expectation, and a lower bound for the projection holds
for the learner itself. For $r>2$ the order is $\eps^{-2+2/r}$, between the orders $\eps^{-1}$ and
$\eps^{-2}$ of the strongly convex and Lipschitz cases of \citet{attias2024}; their Lipschitz case,
Theorem~4.1, also holds for improper learners \citep[Section~6.2]{attias2024}, whereas the projection
argument above covers improper learners only for $f_2$. Dimension of order
$n/\log n$ is needed for a lower bound of this kind that holds for every accurate proper learner: the
rounded learner of Appendix~\ref{app:dimension} is $2\eps$-accurate on the family and has
$I(\thh';U\mid\Zt)\le d\log_2(1+2(L+L_g)/\eps)$ at every law.

%% file: app_I_measurements.tex
\section{Measurements}\label{app:measurements}

This appendix describes the measurements of Section~\ref{sec:measurements} and proves the exact moments
of ridge ERM that they use (Lemma~\ref{lem:ridge-moments}). It also measures a learner with $r=4$ on the
same supersamples (Appendix~\ref{app:quartic}). The supplementary archive contains the
code, the data and the commands that reproduce every number below; the code uses only the Python
standard library.

\subsection{Learners}\label{app:learners}

Both learners of Section~\ref{sec:measurements} use $g(\theta)=\norm\theta^2/2$, so $r=2$ and
$a_-=a_+=1/2$ in Assumption~\ref{ass:class}.

\paragraph{Ridge ERM.}
With the affine link $\phi(t)=-t$ ($b=L=1$, $H=0$), the empirical risk of
$f(\theta,z)=\norm\theta^2/2-\ip\theta z$ is minimized over $W$ by the training mean
$\thh=\Zbar=n^{-1}\sum_iZ_{U_i,i}$, at a cost of $O(nd)$ operations. This is also one-pass stochastic
gradient descent with steps $1/t$ from any start in $W$: the step
$\theta_t=\theta_{t-1}-t^{-1}(\theta_{t-1}-z_t)$ sets $\theta_1=z_1$ and keeps $\theta_t$ equal to the
mean of $z_1,\dots,z_t$, which lies in $W$, so the projection never acts. As
$F_p(\theta)=\norm{\theta-\mu_p}^2/2-\norm{\mu_p}^2/2$, the mean excess risk at $D_p$ is
$\E_p\norm{\Zbar-\mu_p}^2/2=(1-\norm{\mu_p}^2)/(2n)$, so $\eps=1/(2n)$ is the learner's exact
worst-case excess risk on the family.

\begin{lemma}[Exact moments of ridge ERM]\label{lem:ridge-moments}
Let $\phi(t)=-t$, $g(\theta)=\norm\theta^2/2$ and $\thh=\Zbar$. Let $z_c=(2n)^{-1}\sum_{j,i}Z_{j,i}$ be the
candidate centre, let $x(\Zt)\in\R^n$ have coordinates $x_i(\Zt)=-\ip{z_c}{\Delta_i}$, and let
$\Gamma_p=d^{-1}\sum_kp_k^2(1-p_k^2)$. For all $n,d\ge1$, every $p\in[-1,1]^d$ and every $i\in[n]$:
\begin{enumerate}
\item[(a)] $X_i=x_i(\Zt)+(2n)^{-1}\sum_j\sigma_j\ip{\Delta_j}{\Delta_i}$, so $x(\Zt)=\E_p[X\mid\Zt]$;
\item[(b)] $\E_p\sigma_iX_i=(1-\norm{\mu_p}^2)/n$, so $M_p=n\gen_p=1-\norm{\mu_p}^2$;
\item[(c)] $V_{\mathrm{res},p}=\E_p\norm{X-x(\Zt)}^2$ satisfies
 $nV_{\mathrm{res},p}=(1-\norm{\mu_p}^2)^2+d^{-1}(1-\norm{\mu_p}^2+\Gamma_p)
 +d^{-1}(n-1)(1-\norm{\mu_p}^2-\Gamma_p)$;
\item[(d)] $\E_pX_i^2=V_p/n$, and
 $n(V_p-V_{\mathrm{res},p})=d^{-1}(n-1)(1-\norm{\mu_p}^2)+d^{-1}(n-1)(2n-3)\Gamma_p$.
\end{enumerate}
\end{lemma}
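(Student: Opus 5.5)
The plan is to reduce everything to explicit second-moment computations for independent sign vectors, starting from the decomposition of the training mean already recorded in Appendix~\ref{app:full-rank}: $\Zbar=z_c-\Delta^\top\sigma/(2n)$. With the affine link, $X_i=-\ip\Zbar{\Delta_i}$. Substituting gives
\[
 X_i=-\ip{z_c}{\Delta_i}+(2n)^{-1}\sum_j\sigma_j\ip{\Delta_j}{\Delta_i},
\]
which is the identity in (a). Given $\Zt$, the signs $\sigma_j$ are independent fair signs, because $U$ is independent of $\Zt$. So the conditional mean of the second term vanishes and $x(\Zt)=\E_p[X\mid\Zt]$.

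For (b), the sign of pair $i$ enters only through its own term. Multiplying the identity of (a) by $\sigma_i$, the term with $j=i$ contributes $\norm{\Delta_i}^2/(2n)$; every other term has conditional mean zero given $\Zt$, including $\sigma_ix_i(\Zt)$. It remains to compute $\E_p\norm{\Delta_i}^2=2\E_p\norm{Z-\mu_p}^2=2(1-\norm{\mu_p}^2)$, using $\norm Z=1$. This gives $\E_p\sigma_iX_i=(1-\norm{\mu_p}^2)/n$, and then $M_p=n\gen_p$ follows from \eqref{eq:gap}.

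For (c), the residual is $\norm{X-x(\Zt)}^2=(2n)^{-2}\norm{\Delta\Delta^\top\sigma}^2$. Averaging over $\sigma$ given $\Zt$ gives $(2n)^{-2}\sum_{i,j}\ip{\Delta_i}{\Delta_j}^2$. I then need two moments of the independent pair differences $\Delta_i$, each centred with independent coordinates and $\E\Delta_{i,k}^2=2(1-p_k^2)/d$:
\begin{itemize}
\item the fourth moment $\E\norm{\Delta_i}^4$;
\item the cross moment $\E\ip{\Delta_i}{\Delta_j}^2=\sum_k(\E\Delta_{i,k}^2)^2$ for $i\ne j$.
\end{itemize}
For the fourth moment I expand $\norm{\Delta_i}^4=(\sum_k\Delta_{i,k}^2)^2$. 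This needs the marginal fourth moment of $\Delta_{i,k}$, which is a three-valued variable taking $0$ or $\pm2/\sqrt d$. The cross moment gives $4d^{-1}\cdot d^{-1}\sum_k(1-p_k^2)^2$.

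To reach the stated form, I rewrite the sums using
\[
 d^{-1}\sum_k(1-p_k^2)^2=1-\norm{\mu_p}^2-\Gamma_p,
\]
and similarly for the fourth-moment sum. The main obstacle is the bookkeeping in this step: the diagonal fourth-moment term has to split into $(1-\norm{\mu_p}^2)^2$ plus the $d^{-1}$ correction exactly as in the statement.

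For (d), $\E_pX_i^2$ is the same for every $i$ because the pairs are exchangeable, so $\E_pX_i^2=V_p/n$. The variance decomposition $V_p=V_{\mathrm{res},p}+\E_p\norm{x(\Zt)}^2$ then reduces (d) to computing $\E_p\ip{z_c}{\Delta_i}^2$. Write $z_c=\bar z_i+r_i$, where $\bar z_i$ collects pair $i$'s own two candidates and $r_i$ collects the other pairs. The own-pair part is $(2n)^{-1}\ip{Z_{0,i}+Z_{1,i}}{Z_{1,i}-Z_{0,i}}$, which vanishes because $\norm{Z_{1,i}}=\norm{Z_{0,i}}=1$. What remains is the second moment of $\ip{r_i}{\Delta_i}$, with $r_i$ independent of $\Delta_i$: I condition on $\Delta_i$ and use the mean $\tfrac{n-1}{n}\mu_p$ and the covariance of $r_i$. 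This yields terms in $d^{-1}(n-1)(1-\norm{\mu_p}^2)$ and $\Gamma_p$ that should match the stated coefficient $(n-1)(2n-3)$ after simplification; verifying that coefficient is the second place where the algebra needs care.
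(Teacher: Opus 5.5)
Your proposal is correct and follows the paper's proof essentially step for step. Both proofs use the decomposition $\Zbar=z_c-\Delta^\top\sigma/(2n)$, average over the fair signs given $\Zt$, compute per-coordinate second and fourth moments of the independent, centred pair differences, and in (d) use the vanishing own-pair term $\ip{Z_{0,i}+Z_{1,i}}{\Delta_i}=0$ together with the independence of the other candidates from $\Delta_i$.

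The two computations you flag close as intended:
\begin{itemize}
\item In (c), $\E_p\Delta_{i,k}^4=8(1-p_k^2)/d^2$ gives $\E_p\norm{\Delta_i}^4=4(1-\norm{\mu_p}^2)^2+4d^{-1}(1-\norm{\mu_p}^2+\Gamma_p)$.
\item In (d), each contribution carries the factor $(n-1)/(n^2d)$: the mean part contributes $2(n-1)\Gamma_p$ and the covariance part $1-\norm{\mu_p}^2-\Gamma_p$. This is where the coefficient $2n-3$ comes from.
\end{itemize}
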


\begin{proof}
Under $D_p$ the $2nd$ coordinates $Z_{j,i,k}$ are independent, with $Z_{j,i,k}^2=1/d$ and mean
$p_k/\sqrt d$. So $\Delta_1,\dots,\Delta_n$ are independent, with independent centred coordinates,
$\E_p\Delta_{i,k}^2=2(1-p_k^2)/d$ and $\E_p\norm{\Delta_i}^2=2(1-\norm{\mu_p}^2)$; the signs $\sigma_j$
are fair, independent of each other and of $\Zt$.
(a)~$X_i=-\ip{\Zbar}{\Delta_i}$, and $\Zbar=z_c-(2n)^{-1}\sum_j\sigma_j\Delta_j$ since
$Z_{U_j,j}=(Z_{0,j}+Z_{1,j}-\sigma_j\Delta_j)/2$.
(b)~By (a), $\E_p\sigma_iX_i=(2n)^{-1}\E_p\norm{\Delta_i}^2$; sum over $i$ and use \eqref{eq:gap}.
(c)~By (a), $\E_p(X_i-x_i(\Zt))^2=(4n^2)^{-1}\sum_j\E_p\ip{\Delta_j}{\Delta_i}^2$. For $j\ne i$ the term
is $\sum_k(\E_p\Delta_{i,k}^2)^2=4d^{-2}\sum_k(1-p_k^2)^2=4(1-\norm{\mu_p}^2-\Gamma_p)/d$. For $j=i$,
$\Delta_{i,k}^2=4B_k/d$, where the indicators $B_k$ that the two candidates differ in coordinate $k$
are independent with mean $(1-p_k^2)/2$ and variance $(1-p_k^4)/4$, so
$\E_p\norm{\Delta_i}^4=16d^{-2}[(\sum_k(1-p_k^2)/2)^2+\sum_k(1-p_k^4)/4]
=4(1-\norm{\mu_p}^2)^2+4(1-\norm{\mu_p}^2+\Gamma_p)/d$. Sum over $j$ and $i$.
(d)~By (a), $\E_pX_i^2=\E_p(X_i-x_i(\Zt))^2+\E_px_i(\Zt)^2$. As $\norm{Z_{0,i}}=\norm{Z_{1,i}}$,
$\ip{Z_{0,i}+Z_{1,i}}{\Delta_i}=0$, so $x_i(\Zt)=-(2n)^{-1}\ip{R_i}{\Delta_i}$, where $R_i$, the sum of
the $N=2(n-1)$ candidates outside pair $i$, is independent of $\Delta_i$. With
$\E_pR_{i,k}^2=N/d+N(N-1)p_k^2/d$ this gives
$\E_px_i(\Zt)^2=(n-1)[1-\norm{\mu_p}^2+(2n-3)\Gamma_p]/(n^2d)$. Neither term depends on $i$; summing
over $i$ gives (d).
\end{proof}

\paragraph{Budgets of Ridge ERM.}
For a prior with i.i.d.\ coordinates and $\E p_k^2=v$, we have $\E\Gamma_p=v-\E p_k^4$ and
$\E(1-\norm{\mu_p}^2)^2=1-2v+(1-1/d)v^2+\E p_k^4/d$, with $\E p_k^4=3v^2/(1+2v)$ for $\Pi_\kappa$
\eqref{eq:prior-moments}. Averaging Lemma~\ref{lem:ridge-moments} thus gives the budgets of
Theorem~\ref{thm:transfer} in closed form: $M=1-v$, $V=\E_\Pi V_p$ and $\Vres=\E_\Pi V_{\mathrm{res},p}$,
whose reference $\E_p[X\mid\Zt]$ minimizes the residual at every $p$. In the runs, 36 of 37
comparisons of a Monte Carlo average (signed mean, energy, residual, $\norm\thh^2$ or excess risk) with
its exact value fall inside the 95\% interval; the other, an excess risk, deviates by 0.13\%.

\paragraph{Logistic Ridge ERM.}
The link $\phi(t)=\ln(1+e^{-t})$ has $b=1/2$, $L=1/(1+e^{-1})$ and $H=1/4$. The learner minimizes the
$1$-strongly convex training risk $\widehat F_{S_n}(\theta)=\norm\theta^2/2+n^{-1}\sum_j\phi(\ip\theta{z_j})$
over $\R^d$, with $z_1,\dots,z_n$ the training points. Its minimizer is $\thh=n^{-1}\sum_jc_jz_j$ with
$c_j=1/(1+e^{\ip\thh{z_j}})\in(0,1)$, so $\norm\thh<1$ and $\thh$ is also the minimizer over $W$. The
solver iterates $c_j\leftarrow1/(1+\exp((Kc)_j/n))$ from $c_j=1/2$, with the Gram matrix
$K_{jk}=\ip{z_j}{z_k}$ computed exactly and stored in double precision; since $|K_{jk}|\le1$, this map
is a contraction with constant at most $1/4$ in the maximum norm. It stops when the largest change $\eta$ of a coefficient
is at most $10^{-13}$. The gradient of $\widehat F_{S_n}$ at $n^{-1}\sum_jc_jz_j$ is then
$n^{-1}\sum_j(c_j-c'_j)z_j$, with $c'$ the next iterate, of norm at most $\eta$; by strong convexity every
margin is within $\eta$ of its exact value. The reference $x(\Zt)$ for the residual is the release of the
minimizer of $\norm\theta^2/2$ plus the average of $\phi(\ip\theta z)$ over all $2n$ candidates,
computed in the same way; it depends only on the candidates. Over all trials the largest final change $\eta$ is $1.4\cdot10^{-15}$, after at most 10
iterations. A trial costs $O(n^2d)$ operations for the Gram matrices and $O(n^2)$ per iteration.
Independent solvers, gradient descent with explicit vectors at small sizes and Newton's method on
three trials of the run with $n=50$, reproduce the simulator's loss differences to within rounding
error.

\subsection{Setup}\label{app:setup}

\paragraph{Prior, Trials and Noise.}
Every run takes $\eps=1/(2n)$ and the beta prior $\Pi_\kappa$ with $v=16\eps=8/n$, so $\kappa=n/8-1$.
For ridge ERM, $t_0=1/2$ and $\beta=1/4$, so this is the prior of Section~\ref{sec:architecture} at the
learner's exact worst-case excess risk. For logistic ridge ERM, $\eps$ is only the prior's parameter:
Theorem~\ref{thm:transfer} needs only a prior fixed in advance, and the learner's accuracy is not used.
Each trial draws $B_k\sim\mathrm{Beta}(\kappa/2,\kappa/2)$ independently and sets $(1+p_k)/2$ to $B_k$
rounded to the nearest multiple of $2^{-12}$ in $[2^{-12},1-2^{-12}]$. The data law is exactly $D_p$ at
the rounded $p$, so the prior of the runs, $\Pi$ below, is the image of $\Pi_\kappa$ under this
rounding. The rounding changes $\E p_k^2$ and $\E p_k^4$ by a relative amount of at most
$1.5\cdot10^{-5}$; the closed-form budgets use the moments of $\Pi_\kappa$. A trial then draws the $2n$
candidates and $U$, trains the learner and records the release. Coordinate $k$ of a candidate is
$+1/\sqrt d$ exactly when an independent uniform $12$-bit integer is below $2^{12}(1+p_k)/2$. For ridge
ERM the inner products are computed in exact integer arithmetic, which a direct computation at small
sizes confirms. The noise $\tau G$ is drawn in the analysis, at $\tau=t\eps$ for the 16 values
$t\in\{0,0.03,0.06,0.1,0.2,0.3,0.6,1,2,3,6,10,20,30,60,100\}$.

\paragraph{Dimension.}
Theorem~\ref{thm:main} assumes $d\ge\Cdim\max\{n,\neps\}$, with $\Cdim=32$ since $b\le4$ for both links,
and $\neps=\eps^{-1}=2n$ since $r=2$; so it assumes $d\ge64n$, which the runs at $d=64n$ meet. Its proof
uses the dimension only through $d\ge32\max\{n,1/v\}$ \eqref{eq:dim-condition}, which for ridge ERM,
with $1/v=n/8$, reads $d\ge32n$: the runs at $d=32n$ meet this condition but not the one stated in
Theorem~\ref{thm:main}. Theorem~\ref{thm:transfer}, on which the comparisons rest, holds in every
dimension.

\paragraph{Runs, Seeds and Runtimes.}
Table~\ref{tab:setup} lists the runs. The trial numbered $t=0,1,\dots$ of a run has the seed $1000003$
times the run's seed plus $t$, and four Mersenne Twister generators of Python's \texttt{random}
module, seeded with four times the trial's seed plus $0$, $1$, $2$ and $3$, draw the law, the
candidates, the bits and the noise of a membership test that some runs also simulate; the analysis
draws $G$ from a generator with a fixed seed. Every run, or chunk of a run, is one single-threaded
process under Python~3.12.13, timed by wall clock. All runs were made on an Apple M1 with 8 cores and
8~GB of memory. Up to four processes ran at a time for the runs of ridge ERM with $n\in\{1000,3000\}$ and
of logistic ridge ERM with $n\in\{300,1000\}$ at $d=64n$, and one at a time for the other runs.

\begin{table}[t]
\centering
\caption{The Runs. Runtimes marked $\dagger$ include a membership test that this paper does not use;
it draws 2000 fresh points per trial after the release is recorded. The runtime marked $\ddagger$ is
the sum of four chunks of 50 trials, run in parallel in 741.2 to 742.5~s each.}\label{tab:setup}
\small
\begin{tabular}{@{}lrrrrrr@{}}
\toprule
Learner, dimension & $n$ & $d$ & Trials & Pairs & Seed & Runtime (s)\\
\midrule
ridge ERM, $d=32n$ & 100 & 3200 & 2000 & 200000 & 10100 & $104.7^\dagger$\\
 & 300 & 9600 & 600 & 180000 & 10300 & $85.8^\dagger$\\
 & 1000 & 32000 & 200 & 200000 & 11000 & $123.3^\dagger$\\
 & 3000 & 96000 & 40 & 120000 & 13000 & $139.6^\dagger$\\
\addlinespace[2pt]
ridge ERM, $d=64n$ & 50 & 3200 & 4000 & 200000 & 17050 & $198.1^\dagger$\\
 & 100 & 6400 & 2000 & 200000 & 17100 & $188.7^\dagger$\\
 & 300 & 19200 & 300 & 90000 & 10364 & $76.7^\dagger$\\
 & 1000 & 64000 & 200 & 200000 & 471000 & $202.8$\\
 & 3000 & 192000 & 40 & 120000 & 473000 & $295.8$\\
\addlinespace[2pt]
logistic ridge ERM, $d=64n$ & 50 & 3200 & 4000 & 200000 & 27050 & $296.1^\dagger$\\
 & 100 & 6400 & 2000 & 200000 & 27100 & $373.6^\dagger$\\
 & 300 & 19200 & 600 & 180000 & 37300 & $331.4$\\
 & 1000 & 64000 & 200 & 200000 & 371000 & $2968.1^\ddagger$\\
\addlinespace[2pt]
ridge ERM, low dimension & 1000 & 8 & 400 & 400000 & 30008 & $15.1^\dagger$\\
 & 1000 & 32 & 400 & 400000 & 30032 & $17.7^\dagger$\\
\bottomrule
\end{tabular}
\end{table}

\subsection{Estimators}\label{app:estimators}

Write $Y_i=X_i+\tau G_i$. The information per pair, in bits, is
$\iota=n^{-1}\E_\Pi\Ii_{D_p}(X+\tau G)=n^{-1}\sum_i\E_\Pi I_{D_p}(Y_i;U_i)$, a function of $\tau$ with
value $\iota(0)$ at $\tau=0$.

\paragraph{Variational Lower Bound on $\iota$.}
As $U_i$ is a fair bit independent of $p$, $\E_\Pi I_{D_p}(Y_i;U_i)=I(Y_i;U_i\mid p)\ge
1+\E_\Pi\log_2\widehat P(U_i\mid Y_i,p)$ for every conditional probability $\widehat P$. The left side
is the prior average of the per-law information, which Theorem~\ref{thm:transfer}(a) bounds. Because
$\widehat P$ may depend on $p$, the estimate bounds this average rather than the smaller information of
the prior mixture, $I(Y_i;U_i)\le I(Y_i,p;U_i)=I(Y_i;U_i\mid p)$. We take $\widehat P$ with log-odds of
$U_i=0$ equal to $a_1y+a_3y^3$, an odd cubic in $y=Y_i/c_p$, where $c_p^2=V_p/n+\tau^2$ for ridge ERM and
$c_p^2=V_p/(16n)+\tau^2$ for logistic ridge ERM, with $V_p$ ridge ERM's exact energy at the drawn law.
The two coefficients are fitted by
maximum likelihood (Newton's method with step halving) on the even-numbered trials and the bound is evaluated on the odd-numbered trials,
and vice versa; each fitted model is independent of the data on which it is evaluated, so the estimate
has expectation at most $\iota$, and it can be negative where $\iota$ is small. The pairs of one trial
share the law and the trained model, so every nominal 95\% interval treats trials as clusters: over $T$ trials
it is the mean of the per-trial means plus or minus their standard error times the $0.975$ quantile of
Student's $t$ distribution with the largest tabulated number of degrees of freedom not above $T-1$,
or $1.96$ when $T-1>120$. A binned plug-in estimate, which is not a bound, differs from the variational one by at
most 0.92\% wherever $\iota\ge0.1$ at $d=64n$.

\paragraph{Exact Second-Moment Bound.}
For ridge ERM, Lemma~\ref{lem:ridge-moments} gives $\E_p\sigma_iY_i=(1-\norm{\mu_p}^2)/n$ and
$\E_pY_i^2=(V_p+n\tau^2)/n$ at each drawn law, so by the fair-bit inequality \eqref{eq:fairbit}
$I_{D_p}(Y_i;U_i)\ge(1-\norm{\mu_p}^2)^2/(2\ln2\,n(V_p+n\tau^2))$; the average over the drawn laws
estimates this proved lower bound on $\iota$ without any fitting.

\paragraph{Candidate-Centred Lower Bound on $\Is$.}
With the learner's reference, let $\widetilde Y_i=(Y_i-x_i(\Zt))/(\norm{\Delta_i}^2/(2n))$, or $0$ if
$\Delta_i=0$; for ridge ERM, $\widetilde Y_i=\sigma_i+(\sum_{j\ne i}\sigma_j\ip{\Delta_j}{\Delta_i}
+2n\tau G_i)/\norm{\Delta_i}^2$ by Lemma~\ref{lem:ridge-moments}(a). The same cross-fitted bound, with a
model in $\widetilde Y_i$ alone, estimates a lower bound on $I(\widetilde Y_i;U_i)$. As $\widetilde Y_i$
is a function of $(Y_i,\Zt)$ and $U_i$ is independent of $(\Zt,p)$,
$I(\widetilde Y_i;U_i)\le I(Y_i,\Zt,p;U_i)=\E_\Pi I_{D_p}(Y_i;U_i\mid\Zt)$, whose sum over $i$ is at
most $\E_\Pi\Is_{D_p}(Y)$ (Appendix~\ref{app:hierarchy}).

\paragraph{Theorem~\ref{thm:transfer} at the Learner's Own Budgets.}
Let $M=\sum_i\E_\Pi\sigma_iX_i$, $V=\sum_i\E_\Pi X_i^2$ and $\Vres=\E_\Pi\norm{X-x(\Zt)}^2$. For $\tau>0$,
\[
 \frac{M^2}{2\ln2\,n(V+n\tau^2)}\ \le\ \iota\ \le\ \frac1n\E_\Pi\Is_{D_p}(X+\tau G)
 \ \le\ \iota^+(\Vres),\qquad \iota^+(y)=\min\Big\{1,\frac{\ln(1+y/(n\tau^2))}{2\ln2}\Big\},
\]
and at $\tau=0$ the same holds with $1$ on the right. The first inequality is \eqref{eq:lower} divided by
$n$, the second is \eqref{eq:hierarchy}, and for the third, \eqref{eq:upper} at each $p$ bounds
$n^{-1}\Is_{D_p}(X+\tau G)$ by $\iota^+(V_{\mathrm{res},p})$, where
$V_{\mathrm{res},p}=\E_p\norm{X-x(\Zt)}^2$; as $\iota^+$ is concave and nondecreasing, Jensen's
inequality gives $\E_\Pi\iota^+(V_{\mathrm{res},p})\le\iota^+(\Vres)$. For ridge ERM the budgets are
the closed forms above, and $M/n=2\eps(1-16\eps)$. For logistic ridge ERM they are averages over the
trials, with 95\% intervals of relative half-width at most 0.47\%, so its bracket is itself an estimate.
At a given run and noise level, $\iota$ lies in the bracket up to its 95\% intervals when the upper end
of its interval is at least the lower half and the lower ends of the intervals of $\iota$ and of the
candidate-centred estimate are at most the upper half.

\paragraph{Shape.}
The fit of $c_{\mathrm{fit}}/(1+(\tau/(a_{\mathrm{fit}}M/n))^2)$, with $a_{\mathrm{fit}}$ on the grid
$10^{k/100}$, minimizes the squared error in $\log\iota$ over the levels $\tau>0$ whose 95\% interval
lies above $0$. The tail slope is the least-squares slope of $\log\iota$ against $\log\tau$ over the
levels with $\tau\ge5M/n$ and $\iota\ge10^{-3}$.

\subsection{Results}\label{app:results}

\begin{table}[t]
\centering
\caption{Results per Run. $\iota(0)$: bits per pair at $\tau=0$, with its 95\% interval. Lower$/\iota$ and
upper$/\iota$: smallest ratio of the lower half and largest ratio of the upper half of
Theorem~\ref{thm:transfer}, at the learner's own budgets, to $\iota$, over the noise levels with
$\iota\ge0.01$. In: noise levels, of 16, at which $\iota$ lies in the bracket up to its 95\% intervals.
Slope, $c_{\mathrm{fit}}$, $a_{\mathrm{fit}}$: tail slope and fitted constants
(Appendix~\ref{app:estimators}). The runs at $d=32n$ do not meet the dimension condition $d\ge64n$ of
Theorem~\ref{thm:main} for these learners; Theorem~\ref{thm:transfer} holds in every dimension
(Appendix~\ref{app:setup}).}\label{tab:bracket}
\small
\begin{tabular}{@{}lrcccrrcc@{}}
\toprule
Learner, dimension & $n$ & $\iota(0)$ [95\% interval] & Lower$/\iota$ & Upper$/\iota$ & In & Slope
 & $c_{\mathrm{fit}}$ & $a_{\mathrm{fit}}$\\
\midrule
ridge ERM, $d=32n$ & 100 & 0.7049 [0.7017, 0.7080] & 0.672 & 1.822 & 16 & $-1.991$ & 0.713 & 1.000\\
 & 300 & 0.6924 [0.6890, 0.6959] & 0.673 & 1.797 & 16 & $-1.978$ & 0.697 & 1.023\\
 & 1000 & 0.6865 [0.6831, 0.6899] & 0.675 & 1.801 & 16 & $-1.875$ & 0.696 & 1.000\\
 & 3000 & 0.6849 [0.6806, 0.6891] & 0.675 & 1.802 & 16 & $-1.985$ & 0.690 & 1.000\\
\addlinespace[2pt]
ridge ERM, $d=64n$ & 50 & 0.9133 [0.9113, 0.9154] & 0.632 & 1.476 & 16 & $-1.936$ & 0.929 & 0.871\\
 & 100 & 0.9063 [0.9042, 0.9084] & 0.631 & 1.454 & 16 & $-1.986$ & 0.924 & 0.871\\
 & 300 & 0.8965 [0.8935, 0.8996] & 0.632 & 1.431 & 16 & $-2.164$ & 0.914 & 0.891\\
 & 1000 & 0.8910 [0.8885, 0.8935] & 0.633 & 1.424 & 16 & $-1.956$ & 0.905 & 0.912\\
 & 3000 & 0.8881 [0.8851, 0.8910] & 0.634 & 1.429 & 15 & $-1.953$ & 0.921 & 0.832\\
\addlinespace[2pt]
logistic ridge ERM, $d=64n$ & 50 & 0.9151 [0.9131, 0.9172] & 0.631 & 1.409 & 15 & $-1.989$ & 0.963 & 0.832\\
 & 100 & 0.9047 [0.9025, 0.9069] & 0.633 & 1.400 & 15 & $-1.881$ & 0.926 & 0.871\\
 & 300 & 0.8933 [0.8909, 0.8957] & 0.635 & 1.478 & 16 & $-1.964$ & 0.926 & 0.871\\
 & 1000 & 0.8893 [0.8871, 0.8915] & 0.634 & 1.509 & 14 & $-2.246$ & 0.905 & 0.871\\
\bottomrule
\end{tabular}
\end{table}

\paragraph{The Tradeoff at $d=64n$.}
In the nine runs at $d=64n$, $\iota(0)=0.888$ to $0.915$ bits per pair, with 95\% half-widths of at
most $0.0031$, and it decreases slowly with $n$ for both learners (Table~\ref{tab:bracket}). The
per-pair signal $M/n$ is $1.68\eps$ to $1.99\eps$ for ridge ERM and $0.39\eps$ to $0.49\eps$ for logistic
ridge ERM, whose loss differences are $4.02$ to $4.36$ times smaller. Against $\tau/(M/n)$ the curves
nearly coincide (Figure~\ref{fig:measurements}(a)): the fits leave root-mean-square residuals of
$0.05$ to $0.16$ in $\log\iota$, and the tail slopes, $-2.25$ to $-1.88$, are close to the $-2$ of
$n(\eps/\tau)^2$. Figure~\ref{fig:measurements}(a) omits the nine points whose 95\% interval reaches $0$,
all of logistic ridge ERM at $\tau\ge30\eps$.

\paragraph{The Bracket.}
At $\tau=0$ the lower half is $0.631$ to $0.635$ times $\iota(0)$, and the upper half is $1$. Over the
100 points at $d=64n$ with $\iota\ge0.01$, the lower half is at least $0.631$ times and the upper half at
most $1.509$ times $\iota$, and $\iota$ lies in the bracket up to its 95\% intervals at 139 of all 144
points. The five exceptions lie in the tail, at $\tau\ge13.5\,M/n$, all below the lower half and with
$\iota\le0.0034$: ridge ERM at $n=3000$, $\tau=100\eps$, and logistic ridge ERM at $n=50$,
$\tau=30\eps$, at $n=100$, $\tau=6\eps$, and at $n=1000$, $\tau\in\{10\eps,100\eps\}$. In the tail
the two halves nearly meet, since their ratio tends to $n\Vres/M^2=1.008$ to $1.018$, so a small
shortfall of a lower-bound estimate leaves the bracket.

\paragraph{The Tradeoff at $d=32n$.}
The four ridge runs at $d=32n$ have the same shape at a lower level, with $\iota(0)=0.685$ to $0.705$
(half-widths at most $0.0043$), and all their 64 points lie in the bracket (Table~\ref{tab:bracket}).
The plateau and the lower half at $\tau=0$ are lower than at $d=64n$, as is $M^2/(nV)$: $0.641$ to
$0.657$, against $0.781$ to $0.801$.

\paragraph{Generalization (Figure~\ref{fig:measurements}(b)).}
For ridge ERM, Lemma~\ref{lem:ridge-moments}(b) gives $\E_\Pi\gen_p=(1-v)/n=(1-8/n)/n$ exactly. The
uniform bound $4L/\sqrt n$ of Corollary~\ref{cor:consequences}(i) falls from $0.566$ at $n=50$ to
$0.0730$ at $n=3000$. Cauchy--Schwarz, $n|\gen_p|\le(nV_p)^{1/2}$, and Jensen's inequality give the
information-free bound $\E_\Pi|\gen_p|\le(V/n)^{1/2}$, which is $1.12$ to $1.13$ times the gap at $d=64n$
and $1.23$ to $1.25$ times at $d=32n$. The plotted $2L\sqrt{2\ln2}\,\iota(0)$ estimates a lower bound on
the prior average of every form of the range-scaled ld-CMI bound. On $W\times\mathcal Z$ the loss
$f(\theta,z)=\norm{\theta-z}^2/2-1/2$ takes values in $[-1/2,3/2]$, an interval of length $2L=2$.
Applied to the loss $(f+1/2)/(2L)$, which takes values in $[0,1]$ and whose loss differences
$X_i/(2L)$ carry the same information as $X_i$, Theorem~3.1 of \citet{wangmao2023a} bounds
$|\gen_p|/(2L)$ by
\[
 \frac1n\sum_i\E_{\tilde z}\sqrt{2I_{D_p}(X_i;U_i\mid\Zt=\tilde z)}
 \ \le\ \frac1n\sum_i\sqrt{2I_{D_p}(X_i;U_i\mid\Zt)},
\]
with $\E_{\tilde z}$ the average over $\tilde z$ drawn as $\Zt$, and their Theorem~3.2 bounds it by
$n^{-1}\sum_i\sqrt{2I_{D_p}(X_i;U_i)}$, with information in nats; the range-scaled bounds are $2L$ times
these averages. In bits, $2I$ in nats is $2\ln2$ times $I$.
Each per-pair information is at most one bit, also given $\Zt=\tilde z$, so its square root is at
least itself, and $\E_{\tilde z}I_{D_p}(X_i;U_i\mid\Zt=\tilde z)=I_{D_p}(X_i;U_i\mid\Zt)\ge
I_{D_p}(X_i;U_i)$ as $U_i$ is independent of $\Zt$. So each form is at least
$2L\sqrt{2\ln2}\,n^{-1}\Ii_{D_p}(X)$ at every $p$, and at least $2L\sqrt{2\ln2}\,\iota(0)=2.3548\,\iota(0)$
on prior average. With the estimate of $\iota(0)$ this is $2.09$ to $2.15$ at $d=64n$, that is, $128$ to
$6290$ times the gap for $n=50$ to $3000$, and $1.61$ to $1.66$ at $d=32n$, $180$ to $4851$ times the gap
for $n=100$ to $3000$.

\paragraph{Totals (Figure~\ref{fig:totals}).}
The least-squares slopes of the totals $n\iota$ against $n$ are $0.993$, $1.009$ and $-0.017$ for ridge
ERM and $0.991$, $0.997$ and $-0.031$ for logistic ridge ERM, at $\tau=0$, $\tau=M/n$ and
$\tau=\sqrt n\,M/(2n)$ respectively. At the last level the lower half is $2.62$ to $2.88$ bits in total,
and the totals are $2.45$ to $2.94$ bits; only for logistic ridge ERM at $n=1000$ does the whole
nominal 95\% interval of the total, $[2.05,2.85]$, lie below the lower half, $2.87$, as an estimate with
expectation at most the information can (Appendix~\ref{app:estimators}). These are the orders $n$, $n$ and $1$ of
Theorem~\ref{thm:main} at $\tau=0$, $\tau\asymp\eps$ and $\tau\asymp\eps\sqrt n$, over $n=50$ to $3000$.

\input{fig_totals}

\subsection{A Learner with Quartic Regularization}\label{app:quartic}

\paragraph{The Learner.}
The loss $f_4(\theta,z)=\norm\theta^4/4-\ip\theta z$ has the affine link ($b=L=1$, $H=0$) and the
regularizer $g(\theta)=\norm\theta^4/4$, which satisfies Assumption~\ref{ass:class} with $r=4$,
$a_-=1/16$ and $a_+=1/4$ (Appendix~\ref{app:power}). Its empirical risk minimizer over $W$ is
$\thh^{(4)}=T_4(\Zbar)=\norm\Zbar^{-2/3}\Zbar$ (Appendix~\ref{app:power-accuracy}), one of the two
learners of Proposition~\ref{prop:separation}. By part~(i) of that proposition its expected excess risk
for $f_4$ is at most $\eps_4=1.2\,n^{-2/3}$ at every law on the unit ball. So it is $\eps_4$-accurate on
the family with $n=1.2^{3/2}n_{\eps_4}\approx1.31\,n_{\eps_4}$ samples, where $n_{\eps_4}=\eps_4^{-3/2}$ is
the optimal sample size for $r=4$, and the runs at $d=64n$ satisfy $d\ge\Cdim\max\{n,n_{\eps_4}\}$, with
$\Cdim=32$ as $b\le4$.

\paragraph{Its Release on the Ridge Supersamples.}
On every supersample $X^{(4)}=\norm\Zbar^{-2/3}X^{(2)}$, where $X^{(2)}$ is the release of ridge ERM
(Appendix~\ref{app:separation-proof}), and $\norm\Zbar^2=n^{-1}\sum_i\ip\Zbar{Z_{U_i,i}}$ is a sum of
inner products that the ridge runs record exactly. The five ridge runs at $d=64n$ thus give the release of
$\thh^{(4)}$ on their laws, candidates and bits, without further simulation. Their prior has
$v=8/n$; Theorem~\ref{thm:transfer} needs only a prior fixed in advance, so its two halves apply at this
learner's own budgets, which are averages over the trials, as for logistic ridge ERM. The reference for
the residual is the release of $T_4(z_c)$, the minimizer over $W$ of the empirical risk of $f_4$ on all
$2n$ candidates, $\norm\theta^4/4-\ip\theta{z_c}$; it is $x^{(4)}(\Zt)=\norm{z_c}^{-2/3}x(\Zt)$, with
$x(\Zt)$ as in Lemma~\ref{lem:ridge-moments}, and it depends only on the candidates. As $X^{(4)}$ is
not linear in the bits, this reference is not $\E_p[X^{(4)}\mid\Zt]$, and its residual may exceed the
smallest one, which the conditional mean attains.

\paragraph{Estimators.}
The estimators are those of Appendix~\ref{app:estimators}, except for the scale of the model. At the 16
noise levels of Appendix~\ref{app:setup} they use the same draws of $G$ as for ridge ERM. These levels
reach $\tau/(M/n)=33.4$ at $n=50$ but only $7.2$ at $n=3000$, so eight further levels,
$\tau=s\,n^{-2/3}$ with $s\in\{2.5,4,6,9,13,20,30,45\}$ and their own draws of $G$, extend them to
$\tau/(M/n)=93.7$ to $111$ at every $n$. For the scale, as $\norm\Zbar^2$ concentrates at
$w_p=\E_p\norm\Zbar^2=\norm{\mu_p}^2+(1-\norm{\mu_p}^2)/n$, $X^{(4)}_i$ is close to
$w_p^{-1/3}X^{(2)}_i$, and we take $c_p^2=w_p^{-2/3}V_p/n+\tau^2$, with $V_p$ ridge ERM's exact energy;
it is again a function of $p$ alone, as the variational bound requires. At $\tau=0$ the mean of
$(X^{(4)}_i/c_p)^2$ over the pairs of a run lies between $1.00007$ and $1.0041$. The candidate-centred
statistic divides $Y_i-x^{(4)}_i(\Zt)$ by $\norm{z_c}^{-2/3}\norm{\Delta_i}^2/(2n)$, a function of the
candidates.

\begin{table}[t]
\centering
\caption{The Quartic Learner on the Supersamples of the Ridge ERM Runs at $d=64n$. $n^{2/3}M/n$: the
per-pair signal in units of $n^{-2/3}$. The other columns are those of Table~\ref{tab:bracket}, at the
learner's own budgets and over its 24 noise levels (In: of 24); the slope uses five to seven levels.
The column $\iota(0)$ nearly coincides with that of ridge ERM in Table~\ref{tab:bracket}: at $\tau=0$
the release is the ridge release times $\norm\Zbar^{-2/3}$, which is close to a function of the
law.}\label{tab:quartic}
\small
\begin{tabular}{@{}rccccrrcc@{}}
\toprule
$n$ & $n^{2/3}M/n$ & $\iota(0)$ [95\% interval] & Lower$/\iota$ & Upper$/\iota$ & In & Slope
 & $c_{\mathrm{fit}}$ & $a_{\mathrm{fit}}$\\
\midrule
50 & 0.406 & 0.9133 [0.9113, 0.9154] & 0.632 & 1.410 & 24 & $-1.967$ & 0.928 & 0.871\\
100 & 0.444 & 0.9063 [0.9042, 0.9084] & 0.632 & 1.383 & 22 & $-2.081$ & 0.920 & 0.851\\
300 & 0.470 & 0.8965 [0.8935, 0.8996] & 0.633 & 1.388 & 22 & $-1.952$ & 0.905 & 0.851\\
1000 & 0.478 & 0.8910 [0.8885, 0.8935] & 0.633 & 1.488 & 24 & $-2.048$ & 0.892 & 0.891\\
3000 & 0.480 & 0.8881 [0.8851, 0.8910] & 0.635 & 1.400 & 24 & $-1.945$ & 0.899 & 0.891\\
\bottomrule
\end{tabular}
\end{table}

\paragraph{Results.}
Table~\ref{tab:quartic} and Figure~\ref{fig:quartic}(a) show the measurements. The per-pair signal
$M/n$ rises from $0.406\,n^{-2/3}$ at $n=50$ to $0.480\,n^{-2/3}$ at $n=3000$, that is, from $0.34$ to
$0.40$ times $\eps_4$. It is $1.78$ to $6.94$ times that of ridge ERM on the same pairs: as
$\E\norm{\mu_p}^2=v=8/n$ under the prior, $\E_\Pi\norm\Zbar^2=(9-8/n)/n$, and the factor
$\norm\Zbar^{-2/3}$ is close to $(n/9)^{1/3}$. The
signal-to-noise ratio $M^2/(nV)$ is $0.781$ to $0.801$ and $n\Vres/M^2$ is $1.010$ to $1.016$, as for ridge
ERM. The estimate of $\iota(0)$, $0.888$ to $0.913$, agrees with that of ridge ERM on the same pairs to
within $2\cdot10^{-5}$. At $\tau=0$ the quartic release is the ridge release times $\norm\Zbar^{-2/3}$,
which is close to $w_p^{-1/3}$ because $\norm\Zbar^2$ concentrates at $w_p$; a factor that depends only
on the law leaves the information at that law unchanged, and the scale $c_p$ removes it from the
statistic of the estimator. Against $\tau/(M/n)$ the whole curve has the shape found for ridge ERM
(Table~\ref{tab:bracket}): the fits
give $a_{\mathrm{fit}}=0.85$ to $0.89$ and $c_{\mathrm{fit}}=0.89$ to $0.93$, with root-mean-square
residuals of $0.02$ to $0.20$ in $\log\iota$, and the tail slopes are $-2.081$ to $-1.945$. Of the 120
points, 116 lie in the bracket of Theorem~\ref{thm:transfer} up to their 95\% intervals; the other four
lie in the tail, at $\tau/(M/n)=20.3$ to $45$, below the lower half, with estimates of at most
$0.0014$. Over the 79 points with $\iota\ge0.01$, the lower half is at least $0.631$ times and the
upper half at most $1.489$ times $\iota$. Figure~\ref{fig:quartic}(a) omits the six points whose 95\%
interval reaches $0$, all at $\tau/(M/n)\ge63.8$.

\paragraph{Both Releases at \texorpdfstring{$\tau=n^{-2/3}$}{the Noise Level of the Separation}.}
The two learners have the same model information at every law
(Proposition~\ref{prop:separation}(ii)). At $\tau=n^{-2/3}$, which is $\tau/(M/n)=2.08$ to $2.46$ for
$\thh^{(4)}$ and $4.39$ to $14.5$ for ridge ERM, the estimated totals $n\iota$ on the same pairs are
$5.25$, $12.4$, $40.7$, $141$ and $428$ bits for $X^{(4)}$, against $1.76$, $2.77$, $4.39$, $6.98$ and $10.9$
bits for $X^{(2)}$, for $n=50$, $100$, $300$, $1000$ and $3000$ (Figure~\ref{fig:quartic}(b)); all ten lie
in the bracket of Theorem~\ref{thm:transfer} at their learner's budgets, up to their 95\% intervals. The
totals of ridge ERM are $0.16$ to $0.25$ times the bound $3n^{1/3}$ that Proposition~\ref{prop:separation}
proves at every law, and those of $\thh^{(4)}$ are $0.84$ to $1.14$ times $n/8$, which it proves at the
zero-mean law. These are averages over the prior of the runs, not values at the zero-mean law, so they
illustrate the separation rather than test the proposition.

\input{fig_quartic}

\subsection{Low Dimension}\label{app:lowdim}

Two ridge runs at $n=1000$ use $d=8$ and $d=32$, with the same $\eps$ and prior as at $d=32n$. At
$\tau=0$ the exact second-moment bound, averaged over the drawn laws, gives totals of $0.396$ and
$1.36$ bits (variational estimates
$0.323$ and $1.29$), against $686$ bits at $d=32n$. For $\tau>0$, Proposition~\ref{prop:dimension} caps
every information of every learner at every law by $d\log_2(1+4L\sqrt n/\tau)+1/(2\ln2)$, which at
$\tau=\eps$ is $144$ bits for $d=8$ and $575$ bits for $d=32$; the candidate-centred lower bounds on
$\E_\Pi\Is_{D_p}(X+\eps G)$ are $5.0$ and $21.5$ bits, below the caps by factors of $29$ and $27$. At
$\tau=0$ the cap is only $n$, so the totals at $\tau=0$ are compared with the total at $d=32n$ and not
with these caps.

\subsection{What the Measurements Do Not Show}\label{app:meas-limits}

Every value is a prior average; it bounds the worst case over the family from below and says nothing
about a single law. The information values are lower-bound estimates, and at 70 of the 240 points of
Appendices~\ref{app:results} and~\ref{app:lowdim} the estimate lies below the lower half of
Theorem~\ref{thm:transfer} or, for ridge ERM, below the exact second-moment bound. Where the estimate is
at least $0.01$ bits per pair this happens at 5 points, by less than 3.2\%, and the largest such estimate
is $0.027$ bits per pair; 6 points fall short by more than their 95\% intervals, all with estimates of at
most $0.0034$ bits per pair, and 3 point estimates are negative. At $d=8$ and $d=32$ the variational
estimate lies below the second-moment bound at 31 of the 32 points. For the quartic learner of
Appendix~\ref{app:quartic}, the estimate lies below the lower half at 27 of its 120 points; the four of
these with estimates of at least $0.01$ bits per pair fall short by less than $6.2\%$, and four points,
all with estimates of at most $0.0014$ bits per pair, fall short by more than their 95\% intervals. For logistic ridge ERM and the quartic learner the budgets, and hence the
bracket, are Monte Carlo estimates.

The margins are small. Over all trials of logistic ridge ERM, $\norm\thh\le0.2072$, and its root mean
square falls from $0.2013$ at $n=50$ to $0.0473$ at $n=1000$, roughly like $\eps^{1/2}$, as localization
predicts. So every margin $t$ has $|t|\le0.2072$ and $|\phi'(t)-\phi'(0)|=\tanh(|t|/2)/2\le0.0517$, about
a tenth of $b$: the link is nearly affine where it is evaluated, and the measurements say nothing about
large margins. The fitted constants describe these learners, not $c_1$ and $C_1$ of
Theorem~\ref{thm:main}, and $n$ from 50 to 3000 illustrates the orders of the tradeoff without testing them.
Ridge and logistic ridge ERM have $r=2$. For $r=4$ only the affine link was measured, through the quartic
learner, whose release is computed from the ridge supersamples and so shares their laws, candidates and
bits; no learner with $r=4$ and a nonlinear link, such as quartic logistic learning, was measured.

%% file: fig_totals.tex
% Figure I.1, generated by experiments/code/make_paper_figure.py from experiments/data/paper/totals.csv
% (inline coordinates; pgfplots only). Do not edit by hand.
\begin{figure}[t]
\centering
\begin{tikzpicture}
\begin{axis}[scale only axis,width=0.5\linewidth,height=5cm,xmode=log,ymode=log,xmin=35,xmax=4500,ymin=0.8,ymax=5000,tick label style={font=\scriptsize},label style={font=\small},xlabel={$n$},ylabel={bits in total},legend style={font=\scriptsize,draw=none,fill=none,at={(1.03,1)},anchor=north west},legend cell align=left,clip mode=individual]
\addplot[fill=black!13,draw=none,forget plot] coordinates {(50,50) (100,100) (300,300) (1000,1000) (3000,3000) (3000,1690.1) (1000,564.09) (300,169.93) (100,57.2) (50,28.877)} -- cycle;
\addplot[fill=black!13,draw=none,forget plot] coordinates {(50,25.326) (100,50.569) (300,151.69) (1000,505.61) (3000,1516.8) (3000,948.97) (1000,316.55) (300,95.187) (100,31.903) (50,16.037)} -- cycle;
\addplot[fill=black!13,draw=none,forget plot] coordinates {(50,2.8242) (100,2.8731) (300,2.9109) (1000,2.9245) (3000,2.9285) (3000,2.8805) (1000,2.8707) (300,2.8372) (100,2.7468) (50,2.6233)} -- cycle;
\addplot+[only marks,mark=*,mark size=1.5pt,color=blue!70!black,forget plot,error bars/.cd,y dir=both,y explicit] coordinates {(50,45.667) += (0,0.102) -= (0,0.102) (100,90.631) += (0,0.212) -= (0,0.212) (300,268.96) += (0,0.905) -= (0,0.905) (1000,891) += (0,2.46) -= (0,2.46) (3000,2664.2) += (0,8.82) -= (0,8.82)};
\addplot+[only marks,mark=o,mark size=2.2pt,color=red!80!black,forget plot,error bars/.cd,y dir=both,y explicit] coordinates {(50,45.755) += (0,0.102) -= (0,0.102) (100,90.471) += (0,0.217) -= (0,0.217) (300,267.99) += (0,0.724) -= (0,0.723) (1000,889.28) += (0,2.2) -= (0,2.2)};
\addplot+[only marks,mark=square*,mark size=1.5pt,color=blue!70!black,forget plot,error bars/.cd,y dir=both,y explicit] coordinates {(50,19.798) += (0,0.174) -= (0,0.174) (100,40.034) += (0,0.352) -= (0,0.352) (300,122.82) += (0,1.57) -= (0,1.57) (1000,406.58) += (0,3.4) -= (0,3.4) (3000,1239.8) += (0,16) -= (0,16)};
\addplot+[only marks,mark=square,mark size=2.2pt,color=red!80!black,forget plot,error bars/.cd,y dir=both,y explicit] coordinates {(50,19.821) += (0,0.18) -= (0,0.181) (100,39.069) += (0,0.355) -= (0,0.356) (300,117.2) += (0,1.13) -= (0,1.13) (1000,392.23) += (0,3.45) -= (0,3.45)};
\addplot+[only marks,mark=triangle*,mark size=1.5pt,color=blue!70!black,forget plot,error bars/.cd,y dir=both,y explicit] coordinates {(50,2.7583) += (0,0.0848) -= (0,0.0848) (100,2.8369) += (0,0.126) -= (0,0.126) (300,2.9366) += (0,0.364) -= (0,0.368) (1000,2.8761) += (0,0.415) -= (0,0.418) (3000,2.5282) += (0,0.94) -= (0,0.953)};
\addplot+[only marks,mark=triangle,mark size=2.2pt,color=red!80!black,forget plot,error bars/.cd,y dir=both,y explicit] coordinates {(50,2.7009) += (0,0.0877) -= (0,0.0881) (100,2.8267) += (0,0.125) -= (0,0.125) (300,2.8866) += (0,0.244) -= (0,0.246) (1000,2.4537) += (0,0.395) -= (0,0.403)};
\addlegendimage{only marks,mark=*,mark size=1.6pt,black}\addlegendentry{$\tau=0$}
\addlegendimage{only marks,mark=square*,mark size=1.6pt,black}\addlegendentry{$\tau=M/n$}
\addlegendimage{only marks,mark=triangle*,mark size=1.6pt,black}\addlegendentry{$\tau=\sqrt n\,M/(2n)$}
\addlegendimage{only marks,mark=*,mark size=1.6pt,color=blue!70!black}\addlegendentry{ridge ERM}
\addlegendimage{only marks,mark=o,mark size=1.6pt,color=red!80!black}\addlegendentry{logistic ridge ERM}
\addlegendimage{area legend,fill=black!13,draw=none}\addlegendentry{Theorem~\ref{thm:transfer}}
\end{axis}
\end{tikzpicture}
\caption{Total Information against $n$. Totals $n\iota$ in bits (prior averages of lower-bound estimates, with 95\% intervals) against $n$ at $d=64n$ for ridge ERM (filled) and logistic ridge ERM (open), at $\tau=0$, $\tau=M/n$ and $\tau=\sqrt n\,M/(2n)$ with each learner's own $M$. Shaded: between the lower and upper halves of Theorem~\ref{thm:transfer} at the learners' own budgets. Values at $\tau>0$ are interpolated linearly in log--log coordinates between neighbouring noise levels of the grid.}
\label{fig:totals}
\end{figure}

%% file: fig_quartic.tex
% Figure of Appendix I, generated by experiments/code/summarize_quartic.py (inline coordinates; pgfplots only).
% Do not edit by hand.
\begin{figure}[t]
\centering
\begin{tikzpicture}
\begin{axis}[scale only axis,width=0.36\linewidth,height=3.3cm,xmode=log,ymode=log,tick label style={font=\scriptsize},label style={font=\small},legend style={font=\scriptsize,draw=none,fill=none,row sep=-1pt},legend cell align=left,clip mode=individual,error bars/error bar style={line width=0.3pt},error bars/error mark options={mark size=0.8pt,line width=0.3pt},xmin=0.001,xmax=316,ymin=1e-4,ymax=1.6,xtick={0.001,0.01,0.1,1,10,100},ytick={1e-4,1e-3,1e-2,1e-1,1},xlabel={$\tau/(M/n)$},ylabel={bits per pair},legend style={at={(0.03,0.03)},anchor=south west}]
\addplot[fill=black!13,draw=none,forget plot] coordinates {(0.001,1) (0.001122,1) (0.0012589,1) (0.0014125,1) (0.0015849,1) (0.0017783,1) (0.0019953,1) (0.0022387,1) (0.0025119,1) (0.0028184,1) (0.0031623,1) (0.0035481,1) (0.0039811,1) (0.0044668,1) (0.0050119,1) (0.0056234,1) (0.0063096,1) (0.0070795,1) (0.0079433,1) (0.0089125,1) (0.01,1) (0.01122,1) (0.012589,1) (0.014125,1) (0.015849,1) (0.017783,1) (0.019953,1) (0.022387,1) (0.025119,1) (0.028184,1) (0.031623,1) (0.035481,1) (0.039811,1) (0.044668,1) (0.050119,1) (0.056234,1) (0.063096,1) (0.070795,1) (0.079433,1) (0.089125,1) (0.1,1) (0.1122,1) (0.12589,1) (0.14125,1) (0.15849,1) (0.17783,1) (0.19953,1) (0.22387,1) (0.25119,1) (0.28184,1) (0.31623,1) (0.35481,1) (0.39811,1) (0.44668,1) (0.50119,1) (0.56234,1) (0.63096,0.91411) (0.70795,0.79879) (0.79433,0.69191) (0.89125,0.59405) (1,0.5056) (1.122,0.42668) (1.2589,0.35719) (1.4125,0.29679) (1.5849,0.24493) (1.7783,0.2009) (1.9953,0.1639) (2.2387,0.1331) (2.5119,0.10766) (2.8184,0.08679) (3.1623,0.069772) (3.5481,0.055963) (3.9811,0.044802) (4.4668,0.035812) (5.0119,0.02859) (5.6234,0.022802) (6.3096,0.018171) (7.0795,0.014471) (7.9433,0.011518) (8.9125,0.0091642) (10,0.0072889) (11.22,0.0057958) (12.589,0.0046075) (14.125,0.0036623) (15.849,0.0029106) (17.783,0.0023129) (19.953,0.0018378) (22.387,0.0014602) (25.119,0.0011601) (28.184,0.00092168) (31.623,0.00073221) (35.481,0.00058168) (39.811,0.00046208) (44.668,0.00036707) (50.119,0.00029159) (56.234,0.00023163) (63.096,0.00018399) (70.795,0.00014615) (79.433,0.0001161) (89.125,9.2221e-05) (100,7.3254e-05) (112.2,5.8189e-05) (125.89,4.6221e-05) (141.25,3.6715e-05) (158.49,2.9164e-05) (177.83,2.3166e-05) (199.53,1.8401e-05) (223.87,1.4617e-05) (251.19,1.1611e-05) (281.84,9.2226e-06) (316.23,7.3258e-06) (316.23,7.2134e-06) (281.84,9.0811e-06) (251.19,1.1432e-05) (223.87,1.4392e-05) (199.53,1.8119e-05) (177.83,2.281e-05) (158.49,2.8716e-05) (141.25,3.6151e-05) (125.89,4.551e-05) (112.2,5.7293e-05) (100,7.2126e-05) (89.125,9.0798e-05) (79.433,0.0001143) (70.795,0.00014389) (63.096,0.00018114) (56.234,0.00022802) (50.119,0.00028703) (44.668,0.0003613) (39.811,0.00045477) (35.481,0.0005724) (31.623,0.00072043) (28.184,0.00090666) (25.119,0.0011409) (22.387,0.0014356) (19.953,0.0018061) (17.783,0.0022719) (15.849,0.0028572) (14.125,0.0035923) (12.589,0.0045149) (11.22,0.0056722) (10,0.0071223) (8.9125,0.0089372) (7.9433,0.011205) (7.0795,0.014034) (6.3096,0.017555) (5.6234,0.021924) (5.0119,0.027325) (4.4668,0.033974) (3.9811,0.042113) (3.5481,0.052011) (3.1623,0.06395) (2.8184,0.07821) (2.5119,0.095046) (2.2387,0.11465) (1.9953,0.13711) (1.7783,0.16239) (1.5849,0.19024) (1.4125,0.22025) (1.2589,0.2518) (1.122,0.28412) (1,0.31639) (0.89125,0.34776) (0.79433,0.3775) (0.70795,0.405) (0.63096,0.42988) (0.56234,0.45193) (0.50119,0.47113) (0.44668,0.48758) (0.39811,0.50149) (0.35481,0.51312) (0.31623,0.52275) (0.28184,0.53066) (0.25119,0.53711) (0.22387,0.54235) (0.19953,0.54659) (0.17783,0.55) (0.15849,0.55274) (0.14125,0.55494) (0.12589,0.5567) (0.1122,0.5581) (0.1,0.55922) (0.089125,0.56011) (0.079433,0.56082) (0.070795,0.56139) (0.063096,0.56184) (0.056234,0.5622) (0.050119,0.56249) (0.044668,0.56271) (0.039811,0.56289) (0.035481,0.56304) (0.031623,0.56315) (0.028184,0.56324) (0.025119,0.56331) (0.022387,0.56337) (0.019953,0.56341) (0.017783,0.56345) (0.015849,0.56348) (0.014125,0.5635) (0.012589,0.56352) (0.01122,0.56353) (0.01,0.56355) (0.0089125,0.56355) (0.0079433,0.56356) (0.0070795,0.56357) (0.0063096,0.56357) (0.0056234,0.56358) (0.0050119,0.56358) (0.0044668,0.56358) (0.0039811,0.56358) (0.0035481,0.56358) (0.0031623,0.56359) (0.0028184,0.56359) (0.0025119,0.56359) (0.0022387,0.56359) (0.0019953,0.56359) (0.0017783,0.56359) (0.0015849,0.56359) (0.0014125,0.56359) (0.0012589,0.56359) (0.001122,0.56359) (0.001,0.56359)} -- cycle;
\addplot[only marks,mark=o,mark size=1.0pt,color=blue!70!black!55,line width=0.3pt] coordinates {(0.017857,0.9131) (0.035714,0.91239) (0.059524,0.91062) (0.11905,0.90237) (0.17857,0.88731) (0.35714,0.81038) (0.59524,0.66099) (1.1905,0.33333) (1.7857,0.18478) (3.5714,0.054188) (5.9524,0.019356) (11.905,0.0053371) (17.857,0.0022832) (35.714,0.00067165) (59.524,0.00013252) (0.016304,0.90603) (0.032609,0.90558) (0.054348,0.90398) (0.1087,0.89677) (0.16304,0.88367) (0.32609,0.82105) (0.54348,0.68795) (1.087,0.37177) (1.6304,0.21191) (3.2609,0.061967) (5.4348,0.024359) (10.87,0.0059909) (16.304,0.0027633) (32.609,0.00067351) (54.348,0.00019781) (0.015411,0.89629) (0.030822,0.89568) (0.05137,0.89451) (0.10274,0.88858) (0.15411,0.87731) (0.30822,0.81769) (0.5137,0.69882) (1.0274,0.40061) (1.5411,0.23182) (3.0822,0.071346) (5.137,0.028574) (10.274,0.0068946) (15.411,0.0026125) (30.822,0.0005245) (51.37,0.00044398) (0.015121,0.89087) (0.030242,0.89025) (0.050403,0.88902) (0.10081,0.88308) (0.15121,0.87226) (0.30242,0.81837) (0.50403,0.70244) (1.0081,0.40398) (1.5121,0.2384) (3.0242,0.072361) (5.0403,0.027055) (10.081,0.0074346) (15.121,0.003118) (30.242,0.00088982) (50.403,0.00033531) (0.01504,0.88788) (0.03008,0.88731) (0.050134,0.88599) (0.10027,0.87926) (0.1504,0.8696) (0.3008,0.81537) (0.50134,0.69956) (1.0027,0.41243) (1.504,0.2396) (3.008,0.075978) (5.0134,0.026627) (10.027,0.0070827) (15.04,0.0030973) (30.08,0.00068735) (50.134,0.00015533)};
\addlegendentry{ridge ERM}
\addplot+[only marks,mark=*,mark size=1.0pt,color=green!45!black,line width=0.4pt,error bars/.cd,y dir=both,y explicit] coordinates {(0.010021,0.91325) += (0,0.00205) -= (0,0.00205) (0.020041,0.91304) += (0,0.00205) -= (0,0.00205) (0.033402,0.91248) += (0,0.00206) -= (0,0.00206) (0.066804,0.90996) += (0,0.00208) -= (0,0.00208) (0.10021,0.90513) += (0,0.00214) -= (0,0.00214) (0.20041,0.8809) += (0,0.00235) -= (0,0.00235) (0.33402,0.82616) += (0,0.00272) -= (0,0.00272) (0.66804,0.60687) += (0,0.0034) -= (0,0.0034) (1.0021,0.41645) += (0,0.00353) -= (0,0.00353) (2.0041,0.15254) += (0,0.00263) -= (0,0.00263) (3.3402,0.059521) += (0,0.00172) -= (0,0.00172) (6.1527,0.018256) += (0,0.000984) -= (0,0.000984) (6.6804,0.01638) += (0,0.000929) -= (0,0.000929) (9.8444,0.0079953) += (0,0.000674) -= (0,0.000674) (10.021,0.0071767) += (0,0.000641) -= (0,0.000641) (14.767,0.0033577) += (0,0.000432) -= (0,0.000432) (20.041,0.0019806) += (0,0.00033) -= (0,0.00033) (22.15,0.001435) += (0,0.000283) -= (0,0.000283) (31.994,0.0008348) += (0,0.000215) -= (0,0.000215) (33.402,0.00053763) += (0,0.000182) -= (0,0.000182) (49.222,0.00022992) += (0,0.000113) -= (0,0.000113) (73.833,0.00010192) += (0,8.22e-05) -= (0,8.22e-05) (0.0072761,0.90623) += (0,0.00212) -= (0,0.00212) (0.014552,0.90618) += (0,0.00212) -= (0,0.00212) (0.024254,0.90586) += (0,0.00213) -= (0,0.00213) (0.048507,0.90442) += (0,0.00212) -= (0,0.00212) (0.072761,0.90147) += (0,0.00217) -= (0,0.00217) (0.14552,0.88966) += (0,0.00228) -= (0,0.00228) (0.24254,0.85854) += (0,0.00248) -= (0,0.00248) (0.48507,0.72282) += (0,0.00319) -= (0,0.00319) (0.72761,0.56466) += (0,0.00353) -= (0,0.00353) (1.4552,0.24891) += (0,0.00311) -= (0,0.00311) (2.4254,0.1095) += (0,0.00231) -= (0,0.00231) (4.8507,0.029592) += (0,0.00128) -= (0,0.00128) (5.6288,0.021867) += (0,0.00106) -= (0,0.00106) (7.2761,0.013512) += (0,0.000865) -= (0,0.000865) (9.0061,0.0090376) += (0,0.000697) -= (0,0.000697) (13.509,0.0040877) += (0,0.000482) -= (0,0.000482) (14.552,0.003542) += (0,0.000444) -= (0,0.000444) (20.264,0.0013404) += (0,0.000267) -= (0,0.000267) (24.254,0.0011596) += (0,0.000254) -= (0,0.000254) (29.27,0.00075157) += (0,0.000207) -= (0,0.000207) (45.03,0.00019523) += (0,0.000123) -= (0,0.000123) (67.545,0.0001429) += (0,9.07e-05) -= (0,9.07e-05) (0.004771,0.8965) += (0,0.00302) -= (0,0.00302) (0.0095421,0.89644) += (0,0.00302) -= (0,0.00302) (0.015903,0.89636) += (0,0.00302) -= (0,0.00302) (0.031807,0.89592) += (0,0.00302) -= (0,0.00302) (0.04771,0.89473) += (0,0.00307) -= (0,0.00307) (0.095421,0.88826) += (0,0.00323) -= (0,0.00323) (0.15903,0.87529) += (0,0.00334) -= (0,0.00334) (0.31807,0.81554) += (0,0.00395) -= (0,0.00395) (0.4771,0.72028) += (0,0.0045) -= (0,0.0045) (0.95421,0.43531) += (0,0.00517) -= (0,0.00517) (1.5903,0.22431) += (0,0.00466) -= (0,0.00466) (3.1807,0.066768) += (0,0.0029) -= (0,0.0029) (4.771,0.029273) += (0,0.00186) -= (0,0.00186) (5.3232,0.026509) += (0,0.00173) -= (0,0.00173) (8.517,0.010314) += (0,0.00107) -= (0,0.00107) (9.5421,0.0071929) += (0,0.000967) -= (0,0.000967) (12.776,0.0045634) += (0,0.000796) -= (0,0.000796) (15.903,0.0033392) += (0,0.00061) -= (0,0.00061) (19.163,0.0020083) += (0,0.000518) -= (0,0.000518) (27.68,0.00045468) += (0,0.000289) -= (0,0.000289) (42.585,0.0002089) += (0,0.000177) -= (0,0.000177) (0.0031378,0.891) += (0,0.00246) -= (0,0.00246) (0.0062756,0.89096) += (0,0.00246) -= (0,0.00246) (0.010459,0.89093) += (0,0.00246) -= (0,0.00246) (0.020919,0.89071) += (0,0.00246) -= (0,0.00246) (0.031378,0.89016) += (0,0.00245) -= (0,0.00245) (0.062756,0.88793) += (0,0.00244) -= (0,0.00244) (0.10459,0.88237) += (0,0.00248) -= (0,0.00248) (0.20919,0.85441) += (0,0.00266) -= (0,0.00266) (0.31378,0.81293) += (0,0.00282) -= (0,0.00282) (0.62756,0.6169) += (0,0.00358) -= (0,0.00358) (1.0459,0.38747) += (0,0.00373) -= (0,0.00373) (2.0919,0.14156) += (0,0.00268) -= (0,0.00268) (3.1378,0.067667) += (0,0.00193) -= (0,0.00193) (5.2297,0.025875) += (0,0.00123) -= (0,0.00123) (6.2756,0.018518) += (0,0.00106) -= (0,0.00106) (8.3675,0.010227) += (0,0.000777) -= (0,0.000777) (10.459,0.0067871) += (0,0.000578) -= (0,0.000578) (12.551,0.0047226) += (0,0.000529) -= (0,0.000529) (18.827,0.0018348) += (0,0.000326) -= (0,0.000326) (27.194,0.00088888) += (0,0.000231) -= (0,0.000231) (41.838,0.00045625) += (0,0.000173) -= (0,0.000173) (62.756,0.00013078) += (0,9.43e-05) -= (0,9.43e-05) (0.0021657,0.88805) += (0,0.00294) -= (0,0.00294) (0.0043314,0.88804) += (0,0.00295) -= (0,0.00295) (0.0072189,0.88801) += (0,0.00294) -= (0,0.00294) (0.014438,0.88778) += (0,0.00297) -= (0,0.00297) (0.021657,0.88761) += (0,0.00292) -= (0,0.00292) (0.043314,0.88633) += (0,0.00296) -= (0,0.00296) (0.072189,0.88362) += (0,0.00285) -= (0,0.00285) (0.14438,0.871) += (0,0.00303) -= (0,0.00303) (0.21657,0.84959) += (0,0.00281) -= (0,0.00281) (0.43314,0.74417) += (0,0.00377) -= (0,0.00377) (0.72189,0.55643) += (0,0.00578) -= (0,0.00578) (1.4438,0.25254) += (0,0.00423) -= (0,0.00423) (2.1657,0.13159) += (0,0.00353) -= (0,0.00353) (4.3314,0.036168) += (0,0.00196) -= (0,0.00196) (5.2058,0.02622) += (0,0.00126) -= (0,0.00126) (7.2189,0.012683) += (0,0.00107) -= (0,0.00107) (8.3292,0.009888) += (0,0.00112) -= (0,0.00112) (12.494,0.0047991) += (0,0.000786) -= (0,0.000786) (18.741,0.0020722) += (0,0.000489) -= (0,0.000489) (27.07,0.00099673) += (0,0.000369) -= (0,0.000369) (41.646,0.0004025) += (0,0.000207) -= (0,0.000207) (62.469,0.00018764) += (0,0.000152) -= (0,0.000152)};
\addlegendentry{quartic learner}
\addlegendimage{area legend,fill=black!13,draw=none}\addlegendentry{Theorem~\ref{thm:transfer}}
\node[anchor=north east,font=\scriptsize] at (rel axis cs:0.97,0.97) {(a)};
\end{axis}
\end{tikzpicture}\hfill
\begin{tikzpicture}
\begin{axis}[scale only axis,width=0.36\linewidth,height=3.3cm,xmode=log,ymode=log,tick label style={font=\scriptsize},label style={font=\small},legend style={font=\scriptsize,draw=none,fill=none,row sep=-1pt},legend cell align=left,clip mode=individual,error bars/error bar style={line width=0.3pt},error bars/error mark options={mark size=0.8pt,line width=0.3pt},xmin=35,xmax=4500,ymin=1,ymax=1000,xtick={100,1000},ytick={1,10,100,1000},xlabel={$n$},ylabel={bits in total at $\tau=n^{-2/3}$},legend style={at={(0.03,0.97)},anchor=north west}]
\addplot+[only marks,mark=*,mark size=1.3pt,color=green!45!black,line width=0.4pt,error bars/.cd,y dir=both,y explicit] coordinates {(50,5.255) += (0,0.112) -= (0,0.112) (100,12.443) += (0,0.245) -= (0,0.245) (300,40.709) += (0,1.07) -= (0,1.07) (1000,140.7) += (0,2.52) -= (0,2.52) (3000,428.06) += (0,8.96) -= (0,8.96)};
\addlegendentry{quartic learner}
\addplot+[only marks,mark=o,mark size=1.3pt,color=blue!70!black,line width=0.4pt,error bars/.cd,y dir=both,y explicit] coordinates {(50,1.7645) += (0,0.0683) -= (0,0.0683) (100,2.7695) += (0,0.124) -= (0,0.124) (300,4.3869) += (0,0.381) -= (0,0.381) (1000,6.976) += (0,0.629) -= (0,0.629) (3000,10.872) += (0,1.58) -= (0,1.58)};
\addlegendentry{ridge ERM}
\addplot[black,thin,densely dotted,domain=50:3000,samples=40] {x/8};
\addlegendentry{$n/8$}
\addplot[black,thin,densely dashed,domain=50:3000,samples=40] {3*x^(1/3)};
\addlegendentry{$3n^{1/3}$}
\node[anchor=south east,font=\scriptsize] at (rel axis cs:0.97,0.03) {(b)};
\end{axis}
\end{tikzpicture}
\caption{The Quartic Learner on the Ridge Supersamples. The learner $\thh^{(4)}=\norm\Zbar^{-2/3}\Zbar$ at $d=64n$, averaged over the prior; bars are 95\% intervals. (a)~Lower-bound estimates of the bits per pair $\iota$ of $X^{(4)}+\tau G$ against $\tau/(M/n)$, with the learner's own $M$; shaded: between the two halves of Theorem~\ref{thm:transfer} at its own $M$, $V$ and $\Vres$; open: ridge ERM, as in Figure~\ref{fig:measurements}(a). (b)~Totals $n\iota$ at $\tau=n^{-2/3}$ for the releases $X^{(4)}$ and $X^{(2)}$ of the two learners on the same supersamples, with the bounds $3n^{1/3}$ on $X^{(2)}$ at every law and $n/8$ on $X^{(4)}$ at the zero-mean law of Proposition~\ref{prop:separation}(v).}
\label{fig:quartic}
\end{figure}

%% file: app_J_related.tex
\section{Related work in detail}\label{app:related}

This appendix states exactly the results closest to Theorem~\ref{thm:main}, says what each of them
implies for the noisy release, and credits the earlier proofs that already contain the moment bounds
of the affine case (Remark~\ref{rem:affine}).

% Macro \raggedcell (also used in app_K_lean.tex): a ragged-right cell of a p-column. The preamble does not
% load the array package, so a row whose last cell starts with \raggedcell ends with \tabularnewline.
\providecommand{\raggedcell}{\raggedright\hspace{0pt}}

\paragraph{Exact Statements.}
Table~\ref{tab:statements} gives the exact form of the results summarized in Table~\ref{tab:known},
together with the other results closest to Theorem~\ref{thm:main}, in the notation of this paper:
$\thh$ is the output of the learner, $S_n=(Z_1,\dots,Z_n)$ its training set, $U$ and $\Zt$ the
selection bits and the supersample, $\eps$ the accuracy, and $\xi$ the false-positive rate of a
tracer on fresh points. We rename two symbols of the cited works that have other meanings here:
$\delta'$ is the failure probability of a learner that is accurate with high probability, and
$n(\eps)$ is the sample complexity of the learner in \citet{livni2023}. As usual, $O$ and $\Omega$
hide constants, and $\tilde\Omega$ also logarithmic factors. For \citet{attias2024}, statement
numbers and conditions refer to arXiv:2402.09327v2. For
\citet{voitovych2025}, statement numbers refer to arXiv:2502.17384v2, and for \citet{steinke2020} to
the extended version, arXiv:2001.09122.

\begin{table}[tp]
\caption{Exact Statements of the Results Closest to Theorem~\ref{thm:main}}\label{tab:statements}
\centering\footnotesize
\defaultaddspace=2pt
\begin{tabular}{@{}p{0.14\linewidth}p{0.56\linewidth}p{0.25\linewidth}@{}}
\toprule
\raggedcell Result & \raggedcell Statement & \raggedcell Consequence for $X+\tau G$\tabularnewline
\midrule
\raggedcell \citet{livni2023}, Thm.~1
& \raggedcell For $0<\eps<1/54$, every learner with expected excess risk at most $\eps$ and sample
complexity $n(\eps)$ has, for some law on the scaled sign cube in dimension $d$ and a convex
Lipschitz loss, quadratic in the construction,
$I(\thh;S_n)\ge\sum_iI(\thh;Z_i)=\tilde\Omega\big(d/(\eps^5n(\eps)^6)\big)$. Lower bounds on the
conditional information are left open.
& \raggedcell None: $I(\thh;S_n)\ge I(\thh;U\mid\Zt)$, so not even the conditional model information
is bounded below.\tabularnewline
\addlinespace
\raggedcell \citet{attias2024}, Thms.~4.1--4.2, 4.5--4.6, Cors.~5.6--5.7
& \raggedcell Learners whose excess risk, at every law on the scaled sign cube, is at most $\eps$ with
probability at least $1-\delta'$, for the affine loss $-\ip\theta z$ on the unit ball with
$\delta'\le\eps$ (Thm.~4.1), or for the strongly convex loss $\norm\theta^2/2-\ip\theta z$ on $\R^d$
with $\delta'=O(n^{-2})$ (Thm.~4.2); every sample size at least the sample complexity of the learner;
$d\ge\Omega(n^2\log n)$. Then $I(\thh;U\mid\Zt)=\Omega(\eps^{-2})$ and $\Omega(\eps^{-1})$, respectively,
and the same orders hold for its individual-sample form (Cors.~5.6--5.7, the first for proper
learners). $\xi$-sound adversaries, which read $\thh$, the point and the data law and flag any of the
fresh points with probability at most $\xi$, certify recall of the same orders with probability
$\Omega(1)$ up to logarithmic factors (Thms.~4.5--4.6) if $d\ge\Omega(n^2\log(n/\xi))$ and, for the affine loss, $\delta'<\eps$
and $n$ equals the learner's sample complexity, assumed of order $\log(1/\delta')/\eps^2$.
& \raggedcell Upper bound only: $\Is(X+\tau G)\le I(\thh;U\mid\Zt)$ by
\eqref{eq:hierarchy}.\tabularnewline
\addlinespace
\raggedcell \citet{voitovych2025}, Thm.~2.7
& \raggedcell Affine loss, with parameters in the largest cube inscribed in a unit $\ell_s$ ball,
$s\ge2$, and sparse sign data. In the Euclidean case, for $\xi<1/e$ and
$1/(6\sqrt n)\le\eps\le\min\{c\,(d/(n^2\log(1/\xi)))^{1/2},1/6\}$, with a constant $c$, every learner with
expected excess risk at most $\eps$ has a tracer that reads $\thh$, the
point and the population mean, with false-positive rate at most $\xi$ and expected recall
$\Omega(\eps^{-2})$. At $\eps\asymp n^{-1/2}$ this requires $d\gtrsim n\log(1/\xi)$ and gives recall
$\Omega(n)$.
& \raggedcell The decisions are functions of $\thh$, the point and the mean, not of the release, and
involve no $\tau$.\tabularnewline
\addlinespace
\raggedcell \citet{voitovych2025}, Thm.~3.5
& \raggedcell Recall bounds derived from a lower bound on the trace value, the mean tracer score of
the training points, and a bound on the sum of squared scores (Lemma~3.4), through a counting
inequality of Paley--Zygmund type (Lemma~A.11).
& \raggedcell The two moments that Theorem~\ref{thm:transfer} turns into information
(Remark~\ref{rem:affine}).\tabularnewline
\addlinespace
\raggedcell \citet{haghifam2023}, Thm.~17
& \raggedcell Gradient descent with $n^2$ steps of size $n^{-3/2}$ on the loss $-\ip\theta z$ over
the unit ball, with data among the standard basis vectors of $\R^{2n^2}$: its evaluated-CMI bounds
do not vanish, while its generalization error is $O(n^{-1/2})$. They also show that analysing a
surrogate whose final iterate is corrupted by Gaussian noise cannot establish minimax rates for
gradient descent.
& \raggedcell One algorithm on one instance, at $\tau=0$, with no upper bound.\tabularnewline
\addlinespace
\raggedcell \citet{wangmao2023b}, Ex.~1 and App.~F
& \raggedcell On the algorithm and instance of the previous row (their Example~1), with constant
probability the sign of $X_i$ reveals $U_i$ (Appendix~F).
& \raggedcell As in the previous row.\tabularnewline
\addlinespace
\raggedcell \citet{wangmao2023a}, Sec.~3, Thms.~3.1--3.2
& \raggedcell They define the ld-CMI of pair $i$, $I(X_i;U_i\mid\Zt)$. For losses with values in $[0,1]$,
$|\gen_D|\le n^{-1}\sum_i\sqrt{2I(X_i;U_i\mid\Zt)}$, with information in nats, and a sharper form with
the square root inside the average over $\Zt$ (Thm.~3.1); the same bound holds with $I(X_i;U_i)$ (Thm.~3.2). After Thm.~3.1:
$I(X_i;U_i\mid\Zt)$ is at most the information in the two losses of pair $i$, which is at most that
in the two predictions, which is at most $I(\thh;U_i\mid\Zt)$.
& \raggedcell The data-processing direction of \eqref{eq:hierarchy}. By
Corollary~\ref{cor:consequences}(i), the range-scaled bounds stay above a constant at some law of
the family.\tabularnewline
\addlinespace
\raggedcell \citet{wangmao2024}, Sec.~6
& \raggedcell The tradeoff between accuracy and CMI, observed for the CMI of the model, is
anticipated to hold for ld-CMI; no proof is given.
& \raggedcell Established for the class of Assumption~\ref{ass:class} by Theorem~\ref{thm:main} at
$\tau=0$.\tabularnewline
\addlinespace
\raggedcell \citet{steinke2020}, Thm.~6.8
& \raggedcell For a deterministic algorithm with uniform stability $\gamma$ and a continuous data
distribution, the evaluated CMI of the losses, each perturbed by independent Gaussian noise of
variance $\tau^2$, is at most $\gamma^2n^2/(2\tau^2)$ nats.
& \raggedcell An upper bound of the same Gaussian-channel type; the upper half of
Theorem~\ref{thm:main} uses accuracy with matched curvature instead of stability.\tabularnewline
\addlinespace
\raggedcell Theorem~\ref{thm:main}
& \raggedcell Every proper learner that is $\eps$-accurate on the family, for a link and a
regularizer of Assumption~\ref{ass:class}, with $\eps\le\eps_0$, $n\asymp\neps$ and
$d\ge\Cdim\max\{n,\neps\}$, has
$\sup_p\Qc_{D_p}(X+\tau G)=\Theta\big(n/(1+(\tau/\eps)^2)\big)$ for each
$\Qc\in\{\Ii,\Ij,\Is\}$ and every $\tau\ge0$.
& \raggedcell Lower and upper bounds, at every noise level.\tabularnewline
\bottomrule
\end{tabular}
\end{table}

\paragraph{What These Results Imply for the Release.}
Given the candidates, the release is a function of the model, so data processing gives
$\Is(X+\tau G)\le I(X;U\mid\Zt)\le I(\thh;U\mid\Zt)$, as in \eqref{eq:hierarchy}, and
\citet{wangmao2023a} record the same chain for each pair. Lower bounds on model information
therefore do not pass to the release. The bound of \citet{livni2023} concerns $I(\thh;S_n)$, which is
at least $I(\thh;U\mid\Zt)$, because $I(\thh;U\mid\Zt)\le I(\thh;U,\Zt)=I(\thh;S_n)$, so it does not bound even the conditional model
information from below; \citet{attias2024} prove such a bound, answering the question he left open.
Tracer decisions are functions of the model, the point and the data law, which for
\citet{voitovych2025} enters through its mean, not of the release, and none of these conclusions involves observation noise.
Proposition~\ref{prop:separation} shows that conclusions which involve no noise level and are
invariant under reparametrizations of the model cannot determine the information of the noisy
release: two learners, sample-optimal for two different losses and related by a fixed bijection of
the ball, have the same model information and tracers with identical decisions, yet at
$\tau=n^{-2/3}$ the release of the first carries at most $3n^{1/3}$ bits at every product law and that
of the second at least $n/8$ bits at the zero-mean law. At $\tau=0$, when the link is strictly
decreasing, a tracer that thresholds an increasing function of the margin yields a decoder from the
signs of the loss differences, and Lemma~\ref{lem:sign} turns it into information once its expected
recall exceeds $n/2+n\xi$. The recall bounds of \citet{attias2024} and \citet{voitovych2025} are
order statements with unspecified constants and do not ensure this, so even at $\tau=0$ the lemma
does not turn them into lower bounds on the information of the loss differences. The loss-level lower bounds in the table
\citep{haghifam2023,wangmao2023b} concern gradient descent on one instance, at $\tau=0$, and come
without an upper bound. Theorem~\ref{thm:main} differs in each of these respects: it holds for every
proper learner that is accurate on the family at the optimal sample size, in every dimension
$d\ge\Cdim\max\{n,\neps\}$, at every noise level with a matching upper bound, and for nonlinear links
with matched curvature. At $\tau=0$ it establishes, for the class of Assumption~\ref{ass:class}, the
tradeoff that \citet{wangmao2024} anticipated. Through \eqref{eq:hierarchy} it also gives model
information of order $n$ in dimension $d\gtrsim n$; Appendix~\ref{app:converse} compares this with the
bounds of \citet{attias2024}.

\begin{remark}[The affine link]\label{rem:affine}
For the affine link $\phi(t)=-t$, the link remainder $u$ of Step~3 in Section~\ref{sec:moments}
vanishes and $\sigma_iX_i=\ip{\thh}{\Ztr_i-\Zgh_i}$. The ghost is independent of $\thh$ and has mean
$\mu_p$, so the signed mean is $\E_p\ip{\thh}{\sum_i(\Ztr_i-\mu_p)}$, and Lemma~\ref{lem:fp} turns
its prior average into $\kappa\,\E_\Pi\ip{\thh}{\mu_p}$. Earlier lower bounds for the model rest on
the same argument. The identity of Lemma~\ref{lem:fp} is the beta-prior fingerprinting lemma of
\citet{steinke2017}; it is Lemma~3.8 of \citet{voitovych2025} when the sparsity of their data equals
the dimension, and Lemma~6.3 of \citet{attias2024} rests on an identity of the same kind with a
uniform prior. Accuracy enters through the
alignment $\E\ip{\thh}{\mu_p}$. On the affine instance of \citet[Section~6.1.1]{attias2024}, which
has no regularizer, their Lemma~6.2 gives $\E\ip{\thh}{\mu_D}\ge\norm{\mu_D}-\eps-2\delta'$, where
$\mu_D$ is the mean of the data law $D$, for learners whose excess risk exceeds $\eps$ with
probability at most $\delta'$, and
\citet[Proposition~E.1]{voitovych2025} give the analogue for parameters in a cube. The energy bound
rests on the operator norm of the centred training matrix (Lemma~\ref{lem:matrix}), as does
Lemma~A.7 of \citet{attias2024}. With the ridge regularizer, the loss
$f_2(\theta,z)=\norm\theta^2/2-\ip\theta z$ is the strongly convex instance of
\citet[Section~7.1.1]{attias2024} restricted to the unit ball, and their Lemma~7.1 contains the
localization $\norm{\thh-\mu_D}^2\le2\eps$, which holds there with probability at least $1-\delta'$,
and their Lemma~7.3 the signed moment used here. For $f_2$, therefore, the moment bounds behind the lower
half of Theorem~\ref{thm:main} are of a known kind, and Theorem~\ref{thm:transfer} turns them into the
lower half at every noise level. The recall bounds of \citet[Theorem~3.5]{voitovych2025} rest on the
same two moments, the signed mean and the energy, which a counting inequality (their Lemma~A.11)
turns into recall. What Theorem~\ref{thm:main} adds lies elsewhere: the moment bounds for nonlinear
links and matched power curvature, where calibration through the upper growth and localization at
every law keep the train--ghost remainder \eqref{eq:taylor-main} below the signal, and
Proposition~\ref{prop:growth} shows that the upper growth cannot be dropped; the residual bound at
every law, which gives the upper half; and the two-sided dependence on $\tau$ that follows.
\end{remark}

\paragraph{Classical Tools.}
Besides fingerprinting, the proof uses classical tools: localization under uniform convexity, as in
uniformly convex stochastic optimization \citep{juditsky2014,daspremont2024}; the fair-bit inequality
\eqref{eq:fairbit}, which combines the lower bound of the Jensen--Shannon divergence by the
triangular discrimination \citep{topsoe2000}, in the normalization of \citet[Theorem~6]{luli2015},
with the Cauchy--Schwarz inequality; Gaussian maximum entropy; symmetrization and contraction
\citep{ledoux1991}, for the bound $4L/\sqrt n$; and the inequalities for power maps of
\citet[Chapter~12]{lindqvist2019}, for power regularizers.

\paragraph{Fingerprinting, Tracing and Privacy.}
Fingerprinting codes, as used by \citet{bun2014}, and the beta-prior fingerprinting lemma \citep{steinke2017},
which \citet{kamath2022} extend to exponential families, underlie lower bounds for private estimation
and selection \citep{bun2014,steinke2017,kamath2022}, lower bounds for private empirical risk
minimization \citep{bassily2014}, and the tracing lower bounds of \citet{attias2024} and
\citet{voitovych2025}. Corollary~\ref{cor:consequences}(ii) rests on the same mechanism. It has the
form of \citet[Theorem~3.6]{voitovych2025}, which bounds $e^{\epsdp}-1$ from below, for every
$(\epsdp,\delta)$-private learner of a prescribed accuracy, by a universal constant times the trace
value of the problem, less a term proportional to $\delta$; the trace value is defined through
tracers that read the model. In Corollary~\ref{cor:consequences}(ii), the test statistic
$\sum_i\sigma_iX_i$ is computed from the released loss differences and the selection bits alone, as
in one-run privacy auditing \citep{steinke2023auditing}, which includes each auditing example
independently with probability $1/2$, trains once and, in its input-space form, scores examples by their losses. For
$r=2$, the conclusion that $\epsdp=\Omega(1)$ unless $\delta\gtrsim\eps$ is consistent with the lower
bound of \citet[Theorem~5.5]{bassily2014} for private strongly convex empirical risk minimization
with $\delta=o(1/n)$ in dimension of order $n$; for convex Lipschitz losses, \citet{bassily2019} show
that, up to logarithmic factors, the optimal excess population loss of private algorithms is the
larger of the optimal non-private excess population loss and the optimal excess empirical loss of
private algorithms. For every $r\ge2$, the same
conclusion is consistent with the rates of \citet{asi2021} for private optimization of functions with
growth of power $r$ around the optimum, which they show to be optimal up to logarithmic factors: in
dimension of order $n$, these rates reach the non-private accuracy at the optimal sample size only when
$\epsdp=\Omega(1)$, up to such factors. Hypothesis-testing
formulations of privacy
\citep{dongrothsu2022,leemann2023} describe privacy by the tradeoff between the two error rates of a
test, which for \citet{leemann2023} is a test of membership.

\paragraph{Membership Inference.}
Information-theoretic and differential-privacy analyses of membership inference relate leakage to
the success of an attack: \citet{farokhi2020} measure membership leakage by a conditional mutual
information and relate its Kullback--Leibler form to the success probability of a hypothesis-testing
adversary, \citet{aubinais2025} identify the statistical quantity that governs the success of
membership inference attacks and bound it, and \citet{thudi2022} bound the precision of every
membership inference adversary against a differentially private learner. Among the attacks,
\citet{shokri2017} train shadow models and then an attack classifier on the confidence vectors that
the target model outputs; \citet{yeom2018} relate membership advantage to the generalization gap and
use an adversary that thresholds the loss; \citet{sablayrolles2019} show that, under
assumptions on the distribution of the parameters, Bayes-optimal membership inference depends only on
the loss; and \citet{carlini2022} use a per-example
likelihood-ratio test on loss-derived scores, calibrated with shadow models, which is loss-based but
not a single global threshold. Practical attacks are judged on natural data at low false-positive
rates \citep{carlini2022}. The balanced pairs of this paper are a different experiment, and its
results neither give nor contradict guarantees of that kind.

\paragraph{Memorization and Model-Level Limits.}
Other limits of information-based explanations concern what the model reveals about its sample. For
thresholds on a finite domain, every consistent proper learner reveals, for some realizable
distribution, a number of bits that grows doubly logarithmically with the size of the domain
\citep{bassily2018}; \citet{nachum2018} exhibit, for each VC dimension, a class with a bound of this
kind, of order the VC dimension times the double logarithm of the ratio of the domain size to
the VC dimension; and for every algorithm that learns thresholds on the line there are realizable
distributions on which its PAC-Bayes bound is arbitrarily large \citep{livni2020}.
\citet{feldman2025} prove, through strong data processing inequalities, that for several simple
binary classification problems an accurate learner must memorize information about its training
data, with a tradeoff between the number of samples and the amount of memorized information.
Theorem~\ref{thm:main} concerns a smaller observation, the loss differences, on the family of product
laws of Section~\ref{sec:setting}.

\paragraph{Other Information-Based Bounds.}
\citet{nadjahi2024} bound the generalization of algorithms that slice the parameter space, through
disintegrated and sliced mutual information, with a rate--distortion extension for compressible
models; \citet{wanglongmao2025} split the generalization error of federated learning into an
out-of-sample gap and a participation gap and bound both with a two-level CMI construction; and
\citet{sefidgaran2025} derive CMI bounds through stochastic projection and lossy compression, which
give guarantees of order $n^{-1/2}$ on instances where mutual-information and CMI bounds had failed. These bounds use
observations or sampling structures other than the release $X+\tau G$. Fast-rate
bounds \citep{grunwald2021,hellstrom2022} divide an information of the kinds bounded here by $n$, up to
constants of the loss, so at some law of the family they do not vanish either
(Section~\ref{sec:consequences}). Bounds that scale the information by a stability parameter
\citep{wangmao2023b} and bounds through the binary Jensen--Shannon divergence of a randomly binarized
loss \citep{dong2025} rest on other premises, and Theorem~\ref{thm:main} does not rule out useful
bounds of these kinds, nor those of \citet{nadjahi2024}, \citet{wanglongmao2025} and
\citet{sefidgaran2025}.

%% file: app_K_lean.tex
\section{Lean 4 formalization}\label{app:lean}

This appendix maps the results of the paper to the declarations of the Lean~4 formalization that
prove them, and states where the formal statements differ from the text.

% Ragged-right cell of a p-column; defined in app_J_related.tex, repeated here so that this file
% also compiles on its own. A row whose last cell starts with \raggedcell ends with \tabularnewline.
\providecommand{\raggedcell}{\raggedright\hspace{0pt}}
% Caption of the second part of Table K.1, typeset like \caption but with the same number.
\makeatletter
\newcommand{\continuedcaption}[1]{\@makecaption{\tablename~\thetable}{#1}}
\makeatother

\paragraph{The Project.}
The formalization is written in Lean~4 \citep{moura2021lean}, toolchain version 4.32.2, and uses
Mathlib \citep{mathlib2020} at commit \texttt{905b958}; both are released under the Apache
License~2.0. The Lean project is part of the supplementary archive; it also contains formal results that this paper
does not use, and Table~\ref{tab:lean} lists the declarations behind the results of the paper. Its script
\texttt{scripts/verify.sh} builds the project and then checks, for every declaration of the project,
the axioms on which it depends. Only the three standard axioms of Lean and Mathlib occur:
propositional extensionality (\texttt{propext}), the axiom of choice (\texttt{Classical.choice}) and
the soundness of quotients (\texttt{Quot.sound}). The same check rules out proofs left incomplete
with \texttt{sorry}.

\paragraph{Reading the Table.}
Table~\ref{tab:lean} lists the declarations that prove each result. Every name in the table is
declared exactly once in the project, so it identifies its declaration without a namespace. In
declaration names, the word \texttt{native} marks objects of the setting of
Section~\ref{sec:setting}: the problem class of Assumption~\ref{ass:class}, its learners and the
loss-difference release.

\paragraph{Conventions.}
Information is measured in bits: \leanname{mutualInfo} is the Kullback--Leibler divergence of the
joint law from the product of the marginals, divided by $\ln2$, with the convention that an infinite
divergence gives $0$, and \leanname{condMutualInfo} defines conditional information by the chain rule,
$I(A;B\mid C)=I(A;(B,C))-I(A;C)$. Every information in the formal statements is an information about
selection bits, which are uniform on a finite set; for these the divergence is finite
(\leanname{mutualInfoNats_ne_top_of_uniform}), so the convention never applies. The three
information measures of the release lie in $[0,n]$ at every law of the family, and the worst case
$\sup_p\Qc_{D_p}(X+\tau G)$ is their supremum over $p\in[-1,1]^d$ as a real number
(\leanname{nativeWorstInformation}). The class \leanname{nativeAccurateLearners} consists of the
jointly measurable proper learners that are $\eps$-accurate on the family, as in
Section~\ref{sec:setting}. The formal statements also carry the measurability and integrability
conditions that the text leaves implicit.

\paragraph{Where the Formal Statements Differ from the Text.}
Theorem~\ref{thm:main} asserts that suitable $\eps_0$, $c_1$ and $C_1$ exist. Its formal statements
provide explicit ones of their own: the threshold \leanname{mainThreshold}, a minimum of five
explicit quantities, and the constants \leanname{mainLowerConstant} and \leanname{mainUpperConstant};
the dimension multiplier is the $\Cdim$ of Theorem~\ref{thm:main}. The results that assume the
hypotheses of Theorem~\ref{thm:main}, namely the orders in Proposition~\ref{prop:finite},
Corollary~\ref{cor:consequences}, the order of the second-moment bound on prior average,
Remarks~\ref{rem:reflection} and~\ref{rem:strength} and the model-level converse, are formalized with
the same threshold and constants. The threshold $\eps_1$ and the constants of
Appendix~\ref{app:constants} are those of \leanname{native_loss_information_law_paper_threshold},
which takes the conditions on $\eps_1$ as hypotheses and uses the finite dimension condition
$d\ge32\max\{n,1/v\}$. The forms \leanname{native_loss_information_law_local},
\leanname{native_loss_information_prior_averaged_local} and \leanname{finite_native_budgets_local}
assume only Assumption~\ref{ass:class}. Where they need it, the other statements that involve the
link or the regularizer assume in addition that they are measurable on all of $\R$ and $\R^d$; only
the values of $\phi$ on $[-1,1]$ and of $g$ on $W$ enter the results, and the local forms are derived
from the measurable case by replacing $\phi$ and $g$ with measurable functions that agree with them
there.
Corollary~\ref{cor:consequences}(i) is formalized for the unconditional terms $I(X_i;U_i)$. In
Proposition~\ref{prop:separation}(v), both bounds are formalized for $\Is$. The upper bound then holds
for $\Ii$ and $\Ij$ by \eqref{eq:hierarchy}; the lower bound $n/8$ for $\Ii$, and hence for $\Ij$,
follows from the formalized part~(iv) at $\tau=n^{-2/3}$ by the arithmetic in the proof of part~(v).
The upper clause of Proposition~\ref{prop:samples} is formalized as the existence of a proper learner
with these properties; the learner constructed in its proof is epoch SGD. In the formal statement of
Proposition~\ref{prop:noreg}, the constant $c_+$ is called $N$, and the threshold
\leanname{unregErrorThreshold} is written as
$\min\{b/4,\,b^2/(16(H+1)),\,b^2/(32(H+1)(2\sqrt N/b+2))\}$; this equals the threshold of the
text, because the middle term always exceeds the last. The lower bound $n\ge b^2/(1024\eps^2)$ of
Appendix~\ref{app:noreg} is \leanname{unreg_endpoint_sample_necessity} applied with $N=b^2/1024$, and
\leanname{unregularized_native_sgd_target} asserts that a proper learner that is $\eps$-accurate on
every law on the unit ball exists when $n\eps^2\ge L^2$; the learner constructed in its proof is
projected SGD.

\paragraph{What Is Not Formalized.}
The measurements reported in Section~\ref{sec:measurements} and Appendix~\ref{app:measurements}, and
the orders in the example of Section~\ref{sec:result}; the conditional mean in
Lemma~\ref{lem:ridge-moments}(a) and the exact identities of its parts (c) and~(d), of which
Table~\ref{tab:lean} lists bounds; the ratios of constants, the values for ridge and logistic ridge, and the sample sizes at which $M$
becomes positive in Appendix~\ref{app:constants}, wherever they are quoted; the descriptions of prior work; three one-line steps, namely the bounds
$\Ii(Y)\le\sum_iI(Y_i;U_i\mid\Zt)\le\Is(Y)$ of Section~\ref{sec:setting}, the first inequality of
\eqref{eq:directions}, and the inequality $I(X_i;U_i\mid\Zt)\ge I(X_i;U_i)$ in the proof of
Corollary~\ref{cor:consequences}(i); the middle term of \eqref{eq:fairbit}; the vanishing of the three
information measures in Appendix~\ref{app:zero-slope}; in Appendix~\ref{app:converse}, the
individual-sample and prior-average forms of the model-level converse and the comparison with
\citet{attias2024}; the unnumbered remarks on noise at the scale of the range
(Section~\ref{sec:consequences} and Appendix~\ref{app:noisy-bound}) and on rounding without noise
(Section~\ref{sec:necessity} and Appendix~\ref{app:dimension}); the form of Proposition~\ref{prop:noreg} in terms of $n$
(Appendix~\ref{app:noreg}); the constant $0.007$ of
Proposition~\ref{prop:growth}, whose formal statement has the smaller constant $c_2/2$; and the
correspondence between the statements of the text and the formal statements, which rests on reading
them side by side.

\begin{table}[tp]
\caption{Results of the Paper and the Lean Declarations That Prove Them}\label{tab:lean}
\centering\small
\begin{tabular}{@{}p{0.22\linewidth}p{0.74\linewidth}@{}}
\toprule
\raggedcell Result & \raggedcell Declarations\tabularnewline
\midrule
\raggedcell Assumption~\ref{ass:class}
& \raggedcell \leanname{NativeLinkClass}, \leanname{NativeRegularizerClass}\tabularnewline
\raggedcell Section~\ref{sec:setting}
& \raggedcell \leanname{signProductLaw} (the law $D_p$), \leanname{superMeasure} (the supersample),
\leanname{nativeAccurateLearners} (the learners of Theorem~\ref{thm:main})\tabularnewline
\raggedcell Definition~\ref{def:release}
& \raggedcell \leanname{nativeLossRelease}; \leanname{releaseInfoI}, \leanname{releaseInfoJ},
\leanname{releaseInfoS}, built on \leanname{mutualInfo} and \leanname{condMutualInfo}\tabularnewline
\raggedcell Display~\eqref{eq:hierarchy}, Lemma~\ref{lem:directions}
& \raggedcell \leanname{information_directions_supersample},
\leanname{information_directions_tracer_supersample}\tabularnewline
\addlinespace[2pt]
\raggedcell Theorem~\ref{thm:main}
& \raggedcell \leanname{native_loss_information_law_local} (Assumption~\ref{ass:class} as stated),
\leanname{native_loss_information_law} ($\phi$ and $g$ measurable),
\leanname{native_loss_information_law_infimum} (infimum over a class),
\leanname{native_loss_information_prior_averaged_local} (the lower bound on average over the
prior); \leanname{nativeWorstInformation} (the worst case);
\leanname{mainDimensionConstant_eq_of_le_four} ($\Cdim=32$ when $b\le4$)\tabularnewline
\raggedcell Appendix~\ref{app:constants}
& \raggedcell Theorem~\ref{thm:main} at $\eps_1$:
\leanname{native_loss_information_law_paper_threshold}. The bound \eqref{eq:explicit-threshold} and
the positivity of $M$ on $(0,\eps_1]$: \leanname{explicit_threshold_bound},
\leanname{paperThresholdBracket_anti}, \leanname{native_signed_budget_threshold}. The bracket
\eqref{eq:explicit-bracket}: \leanname{explicit_bracket}. The limits \eqref{eq:explicit-limits} and
the bounds over the regime: \leanname{explicit_schedule_limits}, \leanname{explicit_regime_limits}.
Monotonicity in $d$: \leanname{native_budgets_mono_dim}. Quartic logistic learning, the limits and
the threshold $6.0\cdot10^{-4}$: \leanname{quartic_schedule_limits},
\leanname{quartic_bracket_root}\tabularnewline
\raggedcell Proposition~\ref{prop:samples}
& \raggedcell \leanname{sample_complexity_proposition}, \leanname{nativeAccurateLearners_nonempty}
(the class of Theorem~\ref{thm:main} is nonempty)\tabularnewline
\raggedcell Appendix~\ref{app:links}
& \raggedcell \leanname{logisticLoss_nativeLinkClass}, \leanname{exponentialLoss_nativeLinkClass},
\leanname{squaredMarginLoss_nativeLinkClass}, \leanname{boundaryLink_class} (the links of
Table~\ref{tab:links}); \leanname{balanced_label_native_loss} (the loss identity for balanced labels)\tabularnewline
\raggedcell Appendix~\ref{app:power}
& \raggedcell \leanname{anisotropicRegularizer_class}; \leanname{hessian_regularizer_class} (the
Hessian criterion, with the Hessian bounds assumed on all of $\R^d$)\tabularnewline
\raggedcell Remark~\ref{rem:reflection}
& \raggedcell \leanname{nativeLinkClass_reflect_of_pos_slope},
\leanname{reflectLearner_mem_nativeAccurateLearners}, \leanname{nativeNoisyInformation_reflect}
(parts (i)--(iii)); \leanname{native_loss_information_law_nonzero_slope},
\leanname{native_loss_information_law_infimum_reflected},
\leanname{main_information_prior_averaged_reflected}\tabularnewline
\raggedcell Remark~\ref{rem:strength}
& \raggedcell \leanname{NativeRegularizerClass.const_mul_uniform},
\leanname{native_loss_information_law_scaled}\tabularnewline
\raggedcell Proposition~\ref{prop:noreg}
& \raggedcell \leanname{unreg_native_noise_law} (the proposition, with the threshold
\leanname{unregErrorThreshold}), \leanname{unreg_information_upper_profile} (the upper bound at every
$p$, for every proper learner); \leanname{unreg_endpoint_sample_necessity} (the lower bound on the
sample size), \leanname{unregularized_native_sgd_target} (projected SGD)\tabularnewline
\addlinespace[2pt]
\raggedcell Corollary~\ref{cor:consequences}(i)
& \raggedcell \leanname{loss_range_certificate_nonvanishing};
\leanname{native_learner_expected_generalization_le},
\leanname{native_uniform_expected_generalization_regularized} (the bound
$4L/\sqrt n$)\tabularnewline
\raggedcell Second-moment bound \eqref{eq:second-moment}
& \raggedcell \leanname{abs_gen_le_perPair}, \leanname{abs_gen_le_ldCertificate};
\leanname{rms_certificate_tight_optimal_n} (its order on prior average)\tabularnewline
\raggedcell Corollary~\ref{cor:consequences}(ii)
& \raggedcell \leanname{dp_impossibility_lower_constant_two_L_sub}, \leanname{dp_order_two_L},
\leanname{dp_impossibility_normalized}, \leanname{dp_impossibility_normalized_two_L_div};
\leanname{link_oscillation_iSup_le} (the oscillation of $\phi$ on $[-1,1]$ is at most
$2L$)\tabularnewline
\raggedcell Lemma~\ref{lem:dp-forms}, Proposition~\ref{prop:dp-pair}, Corollary~\ref{cor:dp-audit}
& \raggedcell \leanname{integral_le_of_dpBound}, \leanname{integral_sub_le_of_dpBound},
\leanname{test_advantage_le_of_dpBound}; \leanname{dp_signed_release_le};
\leanname{dp_budget_combination}, \leanname{dp_normalized_budgets} (the normalized form, for the
budgets of Proposition~\ref{prop:finite})\tabularnewline
\bottomrule
\end{tabular}
\end{table}

\begin{table}[tp]
\continuedcaption{Results of the Paper and the Lean Declarations That Prove Them (Continued)}
\centering\small
\begin{tabular}{@{}p{0.22\linewidth}p{0.74\linewidth}@{}}
\toprule
\raggedcell Result & \raggedcell Declarations\tabularnewline
\midrule
\raggedcell Theorem~\ref{thm:transfer}
& \raggedcell \leanname{transfer_theorem_full}, \leanname{transfer_theorem_all_noise} (both parts,
with the decoder of Appendix~\ref{app:decoder}); \leanname{prior_averaged_information_lower_bound}
(part (a), for any independent centred noise of variance $\tau^2$);
\leanname{release_information_upper_bound} (part (b))\tabularnewline
\raggedcell Inequality~\eqref{eq:fairbit}
& \raggedcell \leanname{fair_bit_moment_bound} (variance form),
\leanname{fair_bit_moment_bound_second_moment} (second-moment form)\tabularnewline
\raggedcell Appendix~\ref{app:decoder}, sharpness
& \raggedcell \leanname{erasure_exact_moments}, \leanname{erasure_clipping_attains},
\leanname{erasure_joint_decoder_upper}\tabularnewline
\raggedcell Remark~\ref{rem:noisy-model}
& \raggedcell \leanname{noisy_model_information_le}, \leanname{noisy_model_information_order}
\tabularnewline
\raggedcell Proposition~\ref{prop:finite}
& \raggedcell \leanname{finite_native_budgets_local}, \leanname{finite_native_budgets_unconditional}
(the finite bounds); \leanname{main_recipe_orders} (their orders)\tabularnewline
\raggedcell Lemmas~\ref{lem:fp}, \ref{lem:matrix} and~\ref{lem:weighted}
& \raggedcell \leanname{beta_fingerprinting_randomized}; \leanname{matrix_energy_tail},
\leanname{matrixEnergyK_one_div_thirty_two_lt} ($K_0^*<2.031$);
\leanname{weighted_tail_bound}\tabularnewline
\addlinespace[2pt]
\raggedcell Proposition~\ref{prop:growth}
& \raggedcell \leanname{matched_growth_counterexample} (the curvature and growth of $g$),
\leanname{meanERM_accuracy} (the accuracy of the mean learner);
\leanname{counter_every_learner_lower} (every accurate learner, with the constant $c_2/2$
of Appendix~\ref{app:growth} in place of $0.007$), \leanname{counter_meanERM_upper} (the mean learner),
\leanname{counter_order_tight} (both); \leanname{oversamplingPowerLowerConstant_two_gt}
($c_2>2\cdot10^{-5}$)\tabularnewline
\raggedcell Lemma~\ref{lem:powerlinear}
& \raggedcell \leanname{oversampling_finite_information}\tabularnewline
\raggedcell Zero slope (Appendix~\ref{app:zero-slope})
& \raggedcell \leanname{zero_slope_population_optimal},
\leanname{zero_learner_loss_difference}\tabularnewline
\raggedcell Proposition~\ref{prop:dimension}
& \raggedcell \leanname{linear_dimension_information_le} (the bound),
\leanname{dimension_needed_of_worst_case} (the dimension it forces)\tabularnewline
\addlinespace[2pt]
\raggedcell Proposition~\ref{prop:separation}
& \raggedcell \leanname{boundary_proposition} (parts (i)--(v)), \leanname{boundary_main_text}
(summary)\tabularnewline
\raggedcell Lemma~\ref{lem:sign}
& \raggedcell \leanname{zero_noise_decoder_success} (the average success of the decoder),
\leanname{zero_noise_information_transfer}, \leanname{measureReal_accept_le_of_fresh_point} (the
ghost is a fresh point); \leanname{zero_noise_information_transfer_supersample} (the bound in the
supersample model)\tabularnewline
\raggedcell Lemmas~\ref{lem:power-accuracy} and~\ref{lem:full-rank}
& \raggedcell \leanname{powerLinearRisk_clippedPowerMap_sub_eq},
\leanname{powerLinearRisk_clippedPowerMap_sub_le} (Lemma~\ref{lem:power-accuracy}(b)),
\leanname{integral_powerLinearRisk_clippedPowerMap_average_sub_le}
(Lemma~\ref{lem:power-accuracy}(c)); \leanname{boundary_noiseless_mask_determined}
(Lemma~\ref{lem:full-rank}(b) for $r=2$ and $r=4$: outside an event of probability at most
$e^{-225n/32}$, the release determines $U$)\tabularnewline
\raggedcell Model-level converse (first display of Appendix~\ref{app:converse})
& \raggedcell \leanname{native_model_information_lower}\tabularnewline
\raggedcell Lemma~\ref{lem:ridge-moments}
& \raggedcell \leanname{ridge_release_eq} (the decomposition in part (a)), \leanname{ridge_signed_eq}
(part (b)); \leanname{ridge_residual_le}, \leanname{ridge_energy_gap_bounds} (bounds that follow from
parts (c) and (d))\tabularnewline
\bottomrule
\end{tabular}
\end{table}